\title{Scaling Laws in Linear Regression:\\ Compute, Parameters, and Data}
\date{\today}
\author{
 Licong Lin\thanks{UC Berkeley. Email: \texttt{liconglin@berkeley.edu}}
  \and
  Jingfeng Wu\thanks{UC Berkeley. Email: \texttt{uuujf@berkeley.edu}}
  \and
  Sham M. Kakade\thanks{Harvard University.   Email: \texttt{sham@seas.harvard.edu}}
  \and
  Peter L. Bartlett\thanks{UC Berkeley and Google DeepMind. Email: \texttt{peter@berkeley.edu}}
   \and
  Jason D. Lee\thanks{Princeton University. Email: \texttt{jasonlee@princeton.edu}}
}
\begin{document}
\doparttoc 
\faketableofcontents 

\maketitle

\begin{abstract}

Empirically, large-scale deep learning models often satisfy a neural scaling law: the test error of the trained model improves polynomially as the model size and data size grow. However, conventional wisdom suggests the test error consists of approximation, bias, and variance errors, where the variance error increases with model size. This disagrees with the general form of neural scaling laws, which predict that increasing model size monotonically improves performance.

We study the theory of scaling laws in an infinite dimensional linear regression setup. Specifically, we consider a model with $M$ parameters as a linear function of sketched covariates. The model is trained by one-pass stochastic gradient descent (SGD) using $N$ data. Assuming the optimal parameter satisfies a Gaussian prior and the data covariance matrix has a power-law spectrum of degree $a>1$, we show that the reducible part of the test error is $\Theta(M^{-(a-1)} + N^{-(a-1)/a})$. The variance error, which increases with $M$, is dominated by the other errors 
due to the implicit regularization of SGD, thus disappearing from the bound.
Our theory is consistent with the empirical neural scaling laws and verified by numerical simulation.
\end{abstract}

\section{Introduction}

Deep learning models, particularly those on a large scale, are pivotal in advancing the state-of-the-art across various fields. Recent empirical studies have shed light on the so-called \emph{neural scaling laws} \citep[see][for example]{kaplan2020scaling,hoffmann2022training}, which suggest that the generalization performance of these models improves polynomially as both model size, denoted by $M$, and data size, denoted by $N$, increase. 
The neural scaling law quantitatively describes the population risk as:
\begin{equation}\label{eq:empircal-scaling-law}
    \risk(M, N) \approx \risk^*+ \frac{c_1}{M^{a_1}} + \frac{c_2}{N^{a_2}},
\end{equation}
where $\risk^*$ is a positive irreducible risk and $c_1$, $c_2$, $a_1$, $a_2$ are positive constants independent of $M$ and $N$. 
For instance, by fitting the above formula with empirical measurements in standard large-scale language benchmarks, \citet{hoffmann2022training} estimated $a_1 \approx 0.34$ and $a_2\approx 0.28$, while \citet{besiroglu2024chinchilla} estimated that $a_1 \approx 0.35$ and $a_2 \approx 0.37$.
Though the exact exponents depend on the tasks, neural scaling laws in \Cref{eq:empircal-scaling-law} are observed consistently in practice and are used as principled guidance to build state-of-the-art models, especially under a compute budget \citep{hoffmann2022training}. 

From the perspective of statistical learning theory, \Cref{eq:empircal-scaling-law} is rather intriguing. 
Standard statistical learning bounds \citep[see][for example]{mohri2018foundations,wainwright2019high} often decompose the population risk into the sum of irreducible error,
approximation error, bias error, and variance error (some theory replaces bias and variance errors by optimization and generalization errors, respectively) as in the form of 
\begin{equation}\label{eq:theoretical-bounds}
    \risk(M, N) = {\risk^*} + \underbrace{\Ocal\bigg(\frac{1}{M^{a_1}} \bigg)}_{\text{approximation}} +\underbrace{\Ocal\bigg(\frac{1}{N^{a_2}}\bigg)}_{\text{bias}} +  \underbrace{\Ocal\bigg(\frac{c(M)}{N^{a_3}}\bigg)}_{\text{variance}},
\end{equation}
where $a_1,a_2,a_3$ are positive constants and $c(M)$ is a measure of \emph{model complexity} that typically increases with the model size $M$. 
In \Cref{eq:theoretical-bounds}, the approximation error is induced by the mismatch of the best-in-class predictor and the best possible predictor, hence decreasing with the model size $M$. 
The bias error is induced by the mismatch of the expected algorithm output and the best-in-class predictor, hence decreasing with the data size $N$. 
The variance error measures the uncertainty of the algorithm output, which decreases with the data size $N$ but increases with the model size $M$ (since the model complexity $c(M)$ increases). 

\paragraph{A mystery.}
The empirical neural scaling law \Cref{eq:empircal-scaling-law} is incompatible with the typical statistical learning theory bound \Cref{eq:theoretical-bounds}.
While the two error terms in the neural scaling law \Cref{eq:empircal-scaling-law} can be explained by the approximation and bias errors in the theoretical bound \Cref{eq:theoretical-bounds} respectively, it is not clear why \emph{the variance error is unobservable when fitting the neural scaling law empirically}.
This difference must be reconciled, otherwise, the statistical learning theory and the empirical scaling law make conflict predictions: as the model size $M$ increases, the theoretical bound \Cref{eq:theoretical-bounds} predicts an increase of variance error that eventually causes an increase of the population risk, but the neural scaling law \Cref{eq:empircal-scaling-law} predicts a decrease of the population risk.
In other words, it remains unclear when to follow the prediction of the empirical scaling law \Cref{eq:empircal-scaling-law} and when to follow that of the statistical learning bound \Cref{eq:theoretical-bounds}. 

Certain prior works provided risk upper bounds that do not grow with model size (see for example~\citep{rudi2017generalization,carratino2018learning}). Still, their results are insufficient for studying scaling law as those bounds require a large model size such that the approximation error is ignorable. Moreover, they do not provide instance-wise matching lower bounds to verify the tightness of the upper bounds. See a detailed discussion in \Cref{sec:related}.

\begin{figure}
    \centering
   \includegraphics[width=0.45\linewidth]{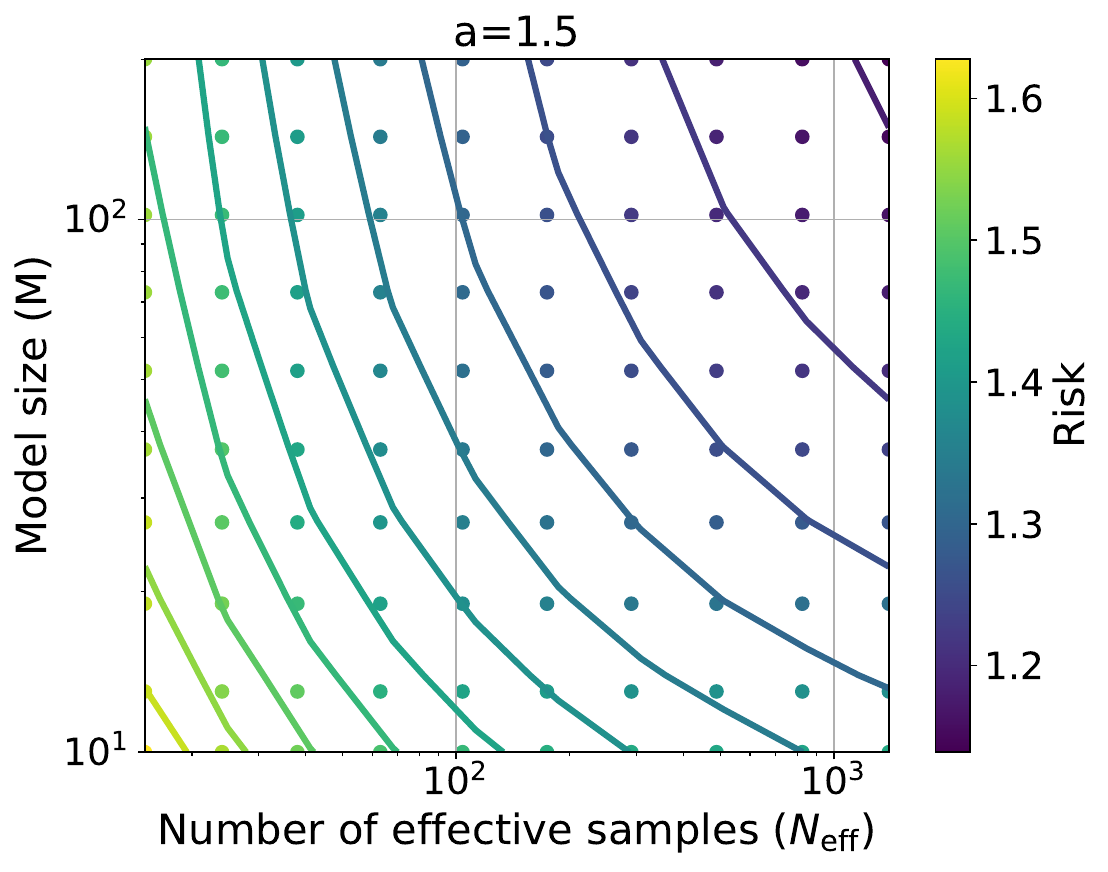}
   \includegraphics[width=0.45\linewidth]{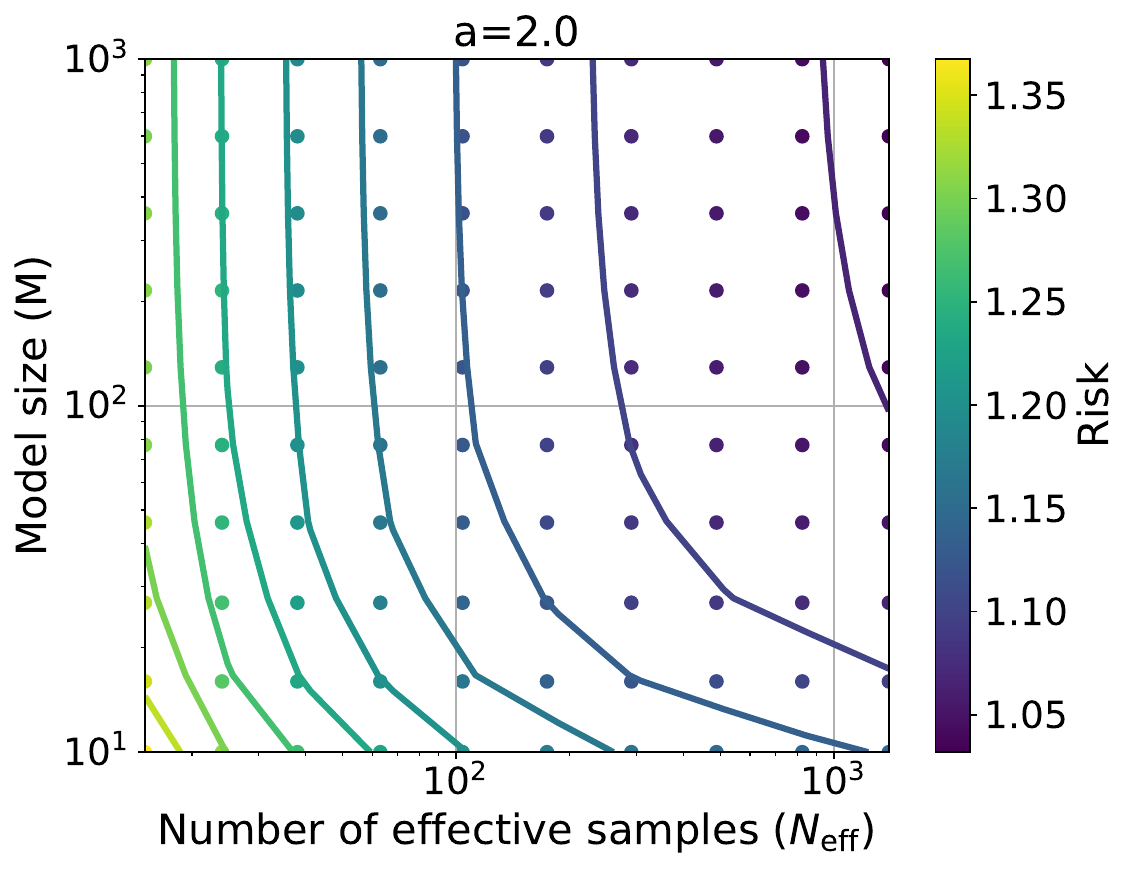}
    \caption{\small
    The expected risk (Risk) of the last iterate of \eqref{eq:sgd} versus the effective sample size $\Neff$ and the model size $M$ for different power-law degrees $a$.  The expected risk is computed by averaging over $1000$ independent samples of $(\wB^*,\SB)$. We fit the expected risk using the formula $\text{Risk}\sim\sigma^2+c_1/M^{\Hpower_1}+c_2/N^{\Hpower_2}$ via minimizing the Huber loss as in~\cite{hoffmann2022training}. Parameters: $\sigma=1,\gamma =0.1$. Left: For $a=1.5$, $d=20000$, the fitted exponents are $(\Hpower_1,\Hpower_2)=(0.54,0.34)\approx(0.5,0.33)$. 
    Right: For $a=2$, $d=2000$, the fitted exponents are $(\Hpower_1,\Hpower_2)=(1.07,0.49)\approx(1.0,0.5)$.
    Note that the values of $(\Hpower_1,\Hpower_2)$ are close to our theoretical predictions $(\Hpower-1,1-1/\Hpower)$ in both cases, verifying the sharpness of our risk bounds. 
    More details can be found in \Cref{sec:scaling_law_examples,sec:exp}.
    }\label{fig:2d_intro}
    \vspace{-2mm}
\end{figure}

\paragraph{Our explanation.}
We investigate this issue in an infinite dimensional linear regression setup.
We only assume access to $M$-dimensional sketched covariates given by a fixed Gaussian sketch and their responses.  
We consider a linear predictor with $M$ trainable parameters,
which is trained by one-pass \emph{stochastic gradient descent} (SGD) with geometrically decaying stepsizes using $N$ sketched data. 
Assuming that the spectrum of the data covariance matrix satisfies a power-law of degree $a>1$ and that the optimal model parameters satisfy a Gaussian prior, we derive matching upper and lower bounds on the population risk achieved by the SGD output (see \Cref{thm:scaling-law}).
Specifically, we show that 
\begin{equation*}
    \risk(M, N) = \risk^* + \underbrace{\Theta\bigg(\frac{1}{M^{a-1}}\bigg) + \tilde{\Theta}\bigg(\frac{1}{(N\gamma)^{(a-1)/a}}\bigg)}_{\text{leading order given by the sum of $\approximation$ and $\bias$}},\quad 
    \variance = \underbrace{\tilde\Theta\bigg(\frac{\min\{M, (N\gamma)^{1/a}\}}{N}\bigg)}_{\text{higher order, thus unobservable}},
\end{equation*}
where $\gamma = \Ocal(1)$ is the initial stepsize used in SGD and $\tilde\Theta(\cdot)$ hides $\log (N)$ factors.
In our bound, the sum of the approximation and bias errors determines the order of the excess risk, while the variance error is of a strictly higher order and is therefore nearly unobservable when fitting $\risk(M,N)$ as a function of $M$ and $N$ empirically.
In addition, our analysis reveals that the small variance error is due to the implicit regularization effect of one-pass SGD \citep{zou2021benefits}.
Our theory suggests that the empirical neural scaling law \Cref{eq:empircal-scaling-law} is a simplification of the statistical learning bound \Cref{eq:theoretical-bounds} in a special regime when strong regularization (either implicit or explicit) is employed. 

Moreover, we generalize the above scaling law to (1) constant stepsize SGD with iterate average (see Theorem~\ref{thm:ave_powerlaw}), (2) cases where the optimal model parameter satisfies an anisotropic prior (see Theorem~\ref{cor:source_cond}), and (3) where the spectrum of the data covariance matrix satisfies a logarithmic power law (see Theorem~\ref{thm:logpower_cond}).




\paragraph{Emprical evidence.}
Based on our theoretical results, we conjecture that the clean neural scaling law \Cref{eq:empircal-scaling-law} observed in practice is due to the disappearance of variance error caused by strong regularization. 
Two pieces of empirical evidence to support our understanding. 
First, large language models that follow the scaling law \Cref{eq:empircal-scaling-law} are often \emph{underfitted}, as the models are trained over a single pass or a few passes over the data~\citep {komatsuzaki2019one,muennighoff2023scaling,brown2020language,touvron2023llama}. 
When models are underfitted, the variance error tends to be smaller. 
Second, when language models are trained with multiple passes (up to $7$ passes), \citet{muennighoff2023scaling} found that the clean scaling law in \Cref{eq:empircal-scaling-law} no longer holds and they proposed a more sophisticated scaling law to explain their data.
This can be explained by a relatively large variance error caused by multiple passes.

\paragraph{Notation.}
For two positive-valued functions $f(x)$ and $g(x)$, we write  $f(x)\lesssim g(x)$ (and $f(x)= \Ocal (g(x))$) or $f(x)\gtrsim g(x)$ (and $f(x) = \Omega(g(x))$) if $f(x) \le cg(x)$ or $f(x) \ge cg(x)$ holds for some absolute (if not otherwise specified) constant $c>0$ respectively. 
We write $f(x) \eqsim g(x)$ (and $f(x)=\Theta(g(x))$) if $f(x) \lesssim g(x) \lesssim f(x)$.
For two vectors $\uB$ and $\vB$ in a Hilbert space, we denote their inner product by $\la\uB, \vB\ra$ or  $\uB^\top \vB$.
For two matrices $\AB$ and $\BB$ of appropriate dimensions, we define their inner product by$\langle \AB, \BB \rangle := \tr(\AB^\top \BB)$.
We use $\|\cdot\|$ to denote the operator norm for matrices and $\ell_2$-norm for vectors.
For a positive semi-definite (PSD) matrix $\AB$ and a vector $\vB$ of appropriate dimension, we write $\|{\vB}\|_{\AB}^2 := \vB^\top \AB \vB$.
For a symmetric matrix $\AB$, we use $\mu_j(\AB)$ to refer to the $j$-th eigenvalue of $\AB$ and $r(\AB)$ to refer to its rank.
Finally, $\log (\cdot )$ refers to logarithm base $2$.

\section{Related work}\label{sec:related}
\paragraph{Empirical scaling laws.}
In recent years, the scaling laws of deep neural networks in compute, sample size, and model size have been widely studied across different models and domains~\citep{hestness2017deep,rosenfeld2019constructive,kaplan2020scaling,henighan2020scaling,hoffmann2022training,zhai2022scaling,muennighoff2023scaling}.  The early work by \citet{kaplan2020scaling} first proposed the neural scaling laws of transformer-based models. They observed that the test loss exhibits a power-law decay in quantities including the amount of compute, sample size, and model size, and provided joint formulas in these quantities to predict the test loss. The proposed formulas were later generalized and refined in subsequent works~\citep{henighan2020scaling,hoffmann2022training,alabdulmohsin2022revisiting,caballero2022broken,muennighoff2023scaling}. Notably,~\citet{hoffmann2022training} proposed the Chinchilla law, that is, \Cref{eq:empircal-scaling-law} with $a_1\approx 0.34$ and $a_2 \approx 0.28$. 
The empirical observation guided them to allocate data and model size under a given compute budget. 
The Chinchilla law is further revised by \citet{besiroglu2024chinchilla}.
Motivated by the Chinchilla law, \citet{muennighoff2023scaling} considered the effect of multiple passes over training data and empirically fitted a more sophisticated scaling law that takes account of the effect of data reusing.

\paragraph{Theory of scaling laws.}
Although neural scaling laws have been empirically observed over a broad spectrum of problems, there is a relatively limited literature on understanding these scaling laws from a theoretical perspective~\citep{sharma2020neural,bahri2021explaining,maloney2022solvable,hutter2021learning,wei2022more,michaud2024quantization,jain2024scaling,bordelon2024dynamical,atanasov2024scaling,nam2024exactly,dohmatob2024tale}. Among these works,~\cite{sharma2020neural} showed that the test loss scales as $N^{4/d}$ for regression on data with intrinsic dimension $d$. \citet{hutter2021learning} studied a toy problem under which a non-trivial power of $N$ arises in the test loss. 
\citet{jain2024scaling} considered scaling laws in data selection. 
\citet{bahri2021explaining} considered a linear teacher-student model under a power-law spectrum assumption on the covariates, and they showed that the test loss of the ordinary least square estimator 
decreases following a power law in sample size $N$ (resp. model size $M$) when the model size $M$  (resp. sample size $N$) is infinite.
\citet{bordelon2024dynamical} considered a linear random feature model and analyzed the test loss of the solution found by (batch) gradient flow. They focused on the bottleneck regimes where two of the quantities $N, M$, $T$ (training steps) are infinite and showed that the risk has a power-law decay in the remaining quantity. 
The problem in \citet{bahri2021explaining,bordelon2024dynamical} can be viewed as a sketched linear regression model similar to ours. 
It should be noted that both \citet{bahri2021explaining}~and~\citet{bordelon2024dynamical} only derived the dependence of population risk on one of the data size, model size, or training steps in the asymptotic regime where the remaining quantities go to infinity, 
and their derivations are based on statistical physics heuristics.
In comparison, we prove matching (ignoring constant factors) upper and lower risk bounds jointly depending on the finite model size $M$ and data size $N$.



\paragraph{Implicit regularization of SGD.}
One-pass SGD in linear regression has been extensively studied in both the classical finite-dimensional setting \citep{polyak1992acceleration,bach2013non,defossez2015averaged,dieuleveut2017harder,jain2017markov,jain2017parallelizing,ge2019step} and the modern high-dimensional setting \citep{DieuleveutB15,berthier2020tight,zou2023benign,zou2021benefits,wu2022iterate,wu2022power,varre2021last}.
In particular, \citet{zou2021benefits} showed that SGD induces an implicit regularization effect that is comparable to, and in certain cases even more preferable than, the explicit regularization effect induced by ridge regression. This is one of the key motivations of our scaling law interpretation. 
From a technical perspective, we utilize the sharp finite-sample and dimension-free analysis of SGD developed by \citet{zou2023benign,wu2022iterate,wu2022power}.
Different from them, we consider a sequence of linear regression models with an increasing number of trainable parameters given by data sketch. 
Our main technical innovation is to sharply control the effect of data sketch. Some of our intermediate results, for example, tight bounds on the spectrum of the sketched data covariance under the power law (see \Cref{lm:power-law-main}), might be of independent interest.

{
Prior works investigated linear regression with random features \citep{rudi2017generalization,carratino2018learning}, which can be viewed as a kind of sketched features via random coordinate selection.
They mainly focused on the small approximation error regime, where the model size (or the number of features) is much larger than the data size. In comparison, we treat both model size and data size as free variables. Moreover, we provide matching upper and lower bounds while prior works mainly focused on upper bounds. These two innovations are crucial for studying scaling laws that predict test error as a function of both model size and data size.
Finally, in the comparable regimes with small or zero approximation error, our excess risk bounds recover the bounds in prior works \citep{rudi2017generalization,carratino2018learning,pillaud2018statistical,DieuleveutB15,defilippis2024dimension}.
 }

\section{Setup}\label{sec:setup}
We denote a feature vector by $\xB \in \Hbb$, where $\Hbb$ is a Hilbert space that is either finite-dimensional or countably infinite-dimensional, with dimension $d := \dim(\Hbb)$. The corresponding label is denoted by $y \in \Rbb$.
In linear regression, we measure the population risk of a parameter $\wB\in\Hbb$ by the mean squared error,
\begin{align*}
    \risk (\wB) &:= \Ebb \big( \la \xB, \wB\ra - y \big)^2,\quad \wB\in\Hbb,
\end{align*}
where the expectation is over $(\xB, y)\sim \Dist$ for some distribution $\Dist$ on $\Hbb\times\R$.

\begin{definition}[Data covariance and optimal parameter]
    \label{defi:covariante-and-w}
Let $\HB:= \Ebb [\xB\xB^\top] $ be the data covariance.
Assume that $\tr(\HB)$ and all entries of $\HB$ are finite. 
Let $(\lambda_i)_{i\ge 0}$ be the eigenvalues of $\HB$ sorted in non-increasing order.
Let $\wB^* \in \arg\min_{\wB} \risk(\wB)$ be the optimal model parameter\footnote{If $\arg\min \risk(\cdot)$ is not unique, we choose $\wB^*$ to be the minimizer with minimal $\HB$-norm.}. 
Assume that $\|\wB^*\|^2_{\HB}:= (\wB^*)^\top \HB \wB^*$ is finite.
\end{definition}

We only assume access to $M$-dimensional sketched covariates and their responses, that is, $(\SB \xB, y)$, where $\SB \in \Rbb^{M} \times \Hbb $ is a fixed \emph{sketch} matrix. We focus on the Gaussian sketch matrix\footnote{Our results can be extended to other sketching methods \citep[see][for example]{woodruff2014sketching}.}, that is, 
entries of $\SB$ are independently sampled from  $\Ncal\big(0, 1/M \big).$
We then consider linear predictors with $M$ trainable parameters given by 
\begin{equation*}
    f_{\vB} : \Hbb \to \Rbb,\quad \xB \mapsto \la \vB, \SB\xB \ra,
\end{equation*}
where $\vB \in \Rbb^M$ are the trainable parameters.
Varying $M$ should be viewed as a linear analog of varying the neural network model size.
Our sketched linear regression setting is comparable to the teacher-student setting considered by \citet{bahri2021explaining,bordelon2024dynamical}.



We consider the training of $f_\vB$ via one-pass \emph{stochastic gradient descent} (SGD), that is,
\begin{equation}\label{eq:sgd}\tag{\text{SGD}}
\begin{aligned}
\vB_{t} 
&:= \vB_{t-1} - \gamma_t \big( f_{\vB_{t-1}} (\xB_t) - y_t \big) \nabla_{\vB} f_{\vB_{t-1}}(\xB_{t}) \\
&:= \vB_{t-1} - \gamma_t \big( \xB_t^\top \SB^\top  \vB_{t-1} - y_t \big)   \SB \xB_t, && t=1,\dots,N,
\end{aligned}
\end{equation}
where $(\xB_t, y_t)_{t=1}^N$ are independent samples from  $\Dist$ and $(\gamma_t)_{t=1}^N$ are the stepsizes. 
We consider a popular geometric decaying stepsize scheduler \citep{ge2019step,wu2022iterate}, 
\begin{equation}\label{eq:lr}
\text{for $t=1,\dots,N$},\    \gamma_t := \gamma / 2^\ell, \ \text{where}\ \ell  = \lfloor t /(N/ \log(N)) \rfloor.
\end{equation}
Here, the initial stepsize $\gamma$ is a hyperparameter for the SGD algorithm.
Without loss of generality, we assume the initial parameter is $\vB_0 = 0$. 
The output of the SGD algorithm is the last iterate $\vB_N$. Our proof techniques apply to other stepsize schedulers (e.g., polynomial decay) as well, but we focus on geometric decay as it is known to achieve near minimax-optimal excess risk for the last iterate of SGD \citep{ge2019step}.

Conditioning on a sketch matrix $\SB \in \Rbb^M\times \Hbb$, each parameter $\vB \in \Rbb^M$ induces a sketched predictor through $\xB \mapsto \la\SB^\top \vB, \xB \ra$, and we denote its risk by 
\begin{align*}
    \vRisk (\vB):= \risk(\SB^\top \vB) = \Ebb\big( \la \SB\xB, \vB\ra - y \big)^2,\quad \vB\in\Rbb^M.
\end{align*}
By increasing $M$ and $N$, we have a sequence of datasets and trainable parameters of increasing sizes, respectively.
This prepares us to study the scaling law \Cref{eq:empircal-scaling-law} in the sketched linear regression problem, that is, to understand $\vRisk(\vB_N)$ as a function of both $M$ and $N$.

\paragraph{Risk decomposition.}
In a standard way, we decompose the risk achieved by $\vB_N$, the last iterate of \eqref{eq:sgd}, to the sum of \emph{irreducible risk}, \emph{approximization error}, and \emph{excess risk} as follows,
\begin{equation}\label{eq:approx-excess-decomp}
    \risk_M(\vB_N) = \underbrace{\min \risk(\cdot)}_{\mathsf{Irreducible}} + \underbrace{\min \risk_M(\cdot) - \min \risk(\cdot)}_{\approximation} + \underbrace{\vRisk(\vB_N) - \min \vRisk(\cdot)}_{\excessRisk}.
\end{equation}
We emphasize that the irreducible risk is independent of $M$ and $N$ and thus can be viewed as a constant; 
the approximation error is determined by the sketch matrix $\SB$, thus depends on $M$ but is independent of $N$; 
the excess risk depends on both $M$ and $N$ as it is determined by the algorithm. 


\section{Scaling laws}\label{sec:scaling_law_examples}

We first demonstrate a scaling-law behavior when the data spectrum satisfies a power law.
\begin{assumption}[Distributional conditions]
    \label{assump:simple}
Assume the following about the data distribution.
\begin{assumpenum}[leftmargin=*]
\item \label{assump:simple:data} \textbf{Gaussian design.}~
Assume that $\xB \sim \Ncal(0,\HB)$.
\item \label{assump:simple:noise} \textbf{Well-specified model.}~ 
Assume that $\Ebb [ y | \xB] = \xB^\top \wB^*$. 
Define $\sigma^2 := \Ebb (y -\xB^\top \wB^* )^2$.
\item \label{assump:simple:param} \textbf{Parameter prior.}~ 
Assume that $\wB^*$ satisfies a prior such that $\Ebb (\wB^*)^{\otimes 2} = \IB$.
\end{assumpenum}
\end{assumption}

\begin{assumption}[Power-law spectrum]
\label{assump:power-law}
There exists $a>1$ such that the eigenvalues of $\HB$ satisfy $\lambda_i \eqsim i^{-a}$, $i>0$.
\end{assumption}

\begin{theorem}[Scaling law]\label{thm:scaling-law}
Suppose that \Cref{assump:simple,assump:power-law} hold.
Consider an $M$-dimensional sketched predictor trained by \Cref{eq:sgd} with $N$ samples.  
Let $\Neff := N/\log(N)$ and
recall the risk decomposition in \Cref{eq:approx-excess-decomp}.
Then there exists some  $a$-dependent constant $c>0$ such that when the initial stepsize $\gamma\leq c$, with probability at least $1-e^{-\Omega(M)}$ over the randomness of the sketch matrix $\SB$, we have 
\begin{enumerate}[leftmargin=*]
\item $\irreducible :=\risk(\wB^*)=\sigma^2$.
\item $
    \E_{\wB^*}\approximation \eqsim M^{1-a}.$
\item Suppose in addition $\sigma^2\gtrsim1$. The expected excess risk ($\excessRisk$) can be decomposed into a bias error ($\bias$) and a variance error ($\variance$), namely, 
\begin{align*}
\Ebb\excessRisk \eqsim \bias + \sigma^2 \variance,
\end{align*}
where the expectation is over the randomness of $\wB^*$ and $(\xB_i,y_i)_{i=1}^N$. 
 Moreover, $\bias$ and $\variance$ satisfy
\begin{align*}\bias &\lesssim \max\big\{ M^{1-a},\ (\Neff \gamma)^{1/a-1}\big\},\\
 \bias &\gtrsim (\Neff \gamma)^{1/a-1}\text { when } (\Neff \gamma)^{1/a}\leq  M/c \text{\quad for some constant }c>0, \\
\variance &\eqsim {\min\big\{ M, \ (\Neff \gamma)^{1/a}\big\}}/{\Neff}.
\end{align*}
\end{enumerate}
In all results, the hidden constants only depend on the power-law degree $a$. 
As a direct consequence, when $\sigma^2\eqsim 1$, it holds with probability at least $1-e^{-\Omega(M)}$ over the randomness of the sketch matrix $\SB$ that
\begin{align*}
    \Ebb\vRisk(\vB_N) = \sigma^2 + \Theta\bigg( \frac{1}{M^{a-1}} \bigg) + \Theta\bigg(\frac{1}{(\Neff \gamma)^{(a-1)/a}}\bigg),
\end{align*}
where the expectation is over the randomness of $\wB^*$ and $(\xB_i,y_i)_{i=1}^N$. 
\end{theorem}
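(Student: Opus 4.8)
The plan is to prove the three parts separately and then combine them. The irreducible risk part is immediate from \Cref{assump:simple:noise}, since $\risk(\wB^*) = \Ebb(y - \xB^\top\wB^*)^2 = \sigma^2$ by definition. For the approximation error, I would first rewrite $\min\risk_M(\cdot) - \min\risk(\cdot)$ explicitly: the best $M$-dimensional sketched predictor fits $\SB^\top\vB$ to $\wB^*$ in $\HB$-norm, so the approximation error equals $\|\wB^*\|_{\HB}^2 - \|\wB^*\|_{\HB\SB^\top(\SB\HB\SB^\top)^{-1}\SB\HB}^2$, i.e., the squared $\HB$-norm of the component of $\wB^*$ not captured by the row space of $\SB\HB$. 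Taking the expectation over the Gaussian prior on $\wB^*$ turns this into a trace expression involving $\HB$ and the projection induced by $\SB$; I would then invoke the sharp spectral control of the sketched covariance $\SB\HB\SB^\top$ under the power law (the analog of \Cref{lm:power-law-main} mentioned in the paper) to show this trace is $\Theta(M^{1-a})$. Intuitively, a Gaussian sketch of width $M$ captures the top $\Theta(M)$ directions of $\HB$ and loses the tail, whose $\HB$-weight is $\sum_{i>M} i^{-a} \eqsim M^{1-a}$; making this rigorous with matching lower bound is where the sketch-specific concentration (holding with probability $1 - e^{-\Omega(M)}$) enters.

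For the excess risk, I would condition on $\SB$ and view the problem as one-pass SGD on the $M$-dimensional linear regression with covariance $\tHB := \SB\HB\SB^\top$, target parameter $\tilde\wB := (\tHB)^{-1}\SB\HB\wB^*$, and effective noise level that combines $\sigma^2$ with the approximation residual. I would then apply the known bias-variance decomposition for geometrically-decaying-stepsize SGD (from \citet{wu2022iterate} or \citet{zou2023benign}): the bias term is governed by $\|\tilde\wB\|$ filtered through the spectrum of $\tHB$ at the effective "cutoff" scale $k^* \eqsim (\Neff\gamma)^{1/a}$, and the variance term is $\sigma^2 \cdot \#\{\text{effective dimensions}\}/\Neff \eqsim \sigma^2 \min\{M, (\Neff\gamma)^{1/a}\}/\Neff$. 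Both require knowing that the eigenvalues of $\tHB$ still obey $\mu_j(\tHB) \eqsim j^{-a}$ for $j \lesssim M$ — again the power-law-preservation lemma — plus appropriate control of how $\tilde\wB$ distributes across eigendirections under the Gaussian prior (this is where $\Ebb(\wB^*)^{\otimes2} = \IB$ is used, making $\tilde\wB$'s coordinates in the $\tHB$-eigenbasis roughly i.i.d. with the right scaling). The bias upper bound splits into the "within-sketch" SGD bias $\eqsim (\Neff\gamma)^{1/a-1}$ and the propagated approximation error $\eqsim M^{1-a}$, giving the $\max$; the lower bound on the bias holds once the SGD cutoff sits inside the sketched dimension, i.e. $(\Neff\gamma)^{1/a} \lesssim M$.

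The main obstacle is the spectral control of the sketched covariance $\tHB = \SB\HB\SB^\top$: I need two-sided bounds $\mu_j(\tHB) \eqsim j^{-a}$ uniformly over $j \le M$ (not just for the operator norm), holding with exponentially high probability in $M$, together with enough control on the eigenvectors to track how $\wB^*$ and the approximation residual project onto them. Standard random matrix results handle the bulk, but the tail eigenvalues of $\tHB$ and the interaction between the sketch's null-space on $\HB$ and the Gaussian prior require the delicate argument the authors flag as their "main technical innovation." Once that lemma is in hand, the SGD bias-variance bounds are a matter of plugging the power-law spectrum into the existing dimension-free SGD analysis and evaluating the resulting sums, and the final display follows by observing that under $\sigma^2 \eqsim 1$ the variance term $\min\{M,(\Neff\gamma)^{1/a}\}/\Neff$ is dominated by $\max\{M^{1-a}, (\Neff\gamma)^{(1-a)/a}\}$ (since $M/\Neff \lesssim M^{1-a}$ fails only when $M \gtrsim \Neff^{1/a}$, in which case the cutoff $(\Neff\gamma)^{1/a} \lesssim \Neff^{1/a}$ controls the variance and $(\Neff\gamma)^{1/a}/\Neff \eqsim (\Neff\gamma)^{(1-a)/a}\cdot\gamma^{1/a}/\log\text{-factors}$ is of the claimed higher order), so only the approximation and bias terms survive at leading order.
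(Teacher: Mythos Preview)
Your high-level plan matches the paper's: part~1 is immediate from \Cref{assump:simple:noise}; part~2 computes $\Ebb_{\wB^*}\approximation$ as a trace and bounds it via the power-law preservation $\mu_j(\SB\HB\SB^\top)\eqsim j^{-a}$; part~3 conditions on $\SB$, applies the dimension-free SGD bias--variance decomposition of \citet{wu2022iterate,wu2022power}, and evaluates the resulting expressions using the same spectral control. The variance computation and the final domination argument (variance $\le (\Neff\gamma)^{1/a}/\Neff \le (\Neff\gamma)^{1/a-1}$ since $\gamma\lesssim 1$) are fine.

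The one place your proposal diverges from the paper, and where it is likely to get stuck, is the mechanism for controlling the bias term. You suggest understanding ``how $\tilde\wB$ distributes across eigendirections'' of $\tHB=\SB\HB\SB^\top$, hoping the prior $\Ebb(\wB^*)^{\otimes 2}=\IB$ makes those coordinates roughly i.i.d.\ with the right scale. But $\Ebb_{\wB^*}[\tilde\wB\tilde\wB^\top]=\tHB^{-1}\SB\HB^2\SB^\top\tHB^{-1}$, and $\SB\HB^2\SB^\top$ does not commute with $\tHB$, so there is no clean coordinate-wise description in the $\tHB$-eigenbasis. The paper never attempts this. Instead, for the \emph{upper} bound on the bias it splits $\SB\HB\wB^*=\SB_{0:k}\HB_{0:k}\wB^*_{0:k}+\SB_{k:\infty}\HB_{k:\infty}\wB^*_{k:\infty}$ for a free index $k$, uses the Woodbury identity to show $\|(\SB\HB\SB^\top)^{-1}\SB_{0:k}\HB_{0:k}\|\lesssim \mu_{M/2}(\AB_k)/\mu_M(\AB_k)$ with $\AB_k=\SB_{k:\infty}\HB_{k:\infty}\SB_{k:\infty}^\top$, and proves this ratio is $O(1)$ under the power law; the second piece is bounded trivially by $\|\wB^*_{k:\infty}\|^2_{\HB_{k:\infty}}$. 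Optimizing over $k$ gives the $\max\{M^{1-a},(\Neff\gamma)^{1/a-1}\}$. For the \emph{lower} bounds (both on $\approximation$ and on $\bias$), the paper bypasses eigenvector analysis entirely via Von~Neumann's trace inequality, which reduces everything to eigenvalues of $\SB\HB\SB^\top$ and $\SB\HB^2\SB^\top$ separately---and the latter also obeys a power law $\mu_j(\SB\HB^2\SB^\top)\eqsim j^{-2a}$ by the same concentration argument. So the missing ingredient is not ``eigenvector control'' but rather this Woodbury-splitting trick for the upper bound and the Von~Neumann trace inequality for the lower bounds.
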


\Cref{thm:scaling-law}
shows a sharp (up to constant factors) scaling law risk bound under an isotroptic prior assumption and the power-law spectrum assumption. 
We emphasize that the scaling law bound in 
\Cref{thm:scaling-law} holds for every $M,N\ge 1$.
We also remark that the sum of approximization and bias errors dominates $\Ebb \vRisk(\vB_N) - \sigma^2$, whereas the variance error is of strict higher order in terms of both $M$ and $N$, and is thus disappeared in the population risk bound. 

\paragraph{Optimal stepsize.}
Based on the tight scaling law in \Cref{thm:scaling-law}, we can calculate the optimal stepsize that minimizes the risk.
Specifically, the optimal stepsize is \(\gamma\eqsim 1 \) when $\Neff\lesssim M^{a} $ and can be anything such that
\( {M^a}/{\Neff}\lesssim \gamma \lesssim 1\)
when $\Neff\gtrsim M^\Hpower$.
In both cases, choosing $\gamma\eqsim 1$ is optimal.
When the sample size is large such that $\Neff\gtrsim M^\Hpower$, the optimal stepsize is relatively robust and can be chosen from a range.

\paragraph{Allocation of data and model sizes.}
Following \citet{hoffmann2022training}, we measure the compute complexity by $MN$ as \Cref{eq:sgd} queries $M$-dimensional gradients for $N$ times.
Given a total compute budget of $MN = C$, from Theorem~\ref{thm:main:exact_decomp} and $\Neff := N/\log(N)$, we see that the best population risk is achieved by setting
\(\gamma= \Theta(1)\),
\( M =\tilde\Theta(C^{1/(a+1)})\),
and \(N = \tilde\Theta(C^{a/(a+1)})\).
Our theory suggests setting a data size slightly larger than the model size when the compute budget is the bottleneck. 


\paragraph{Comparison with \citep{bordelon2024dynamical}.}
The work by \citet{bordelon2024dynamical} considered the scaling law of batch gradient descent (or gradient flow) on a teacher-student model (see their equation (14)). Their teacher-student model can be viewed as our sketched linear regression model. 
However, we consider one-pass SGD, therefore in our setting the number of gradient steps is equivalent to the data size. 
When we equalize the number of gradient steps and the data size in their equation (14) and set the parameter prior as \Cref{assump:simple:param}, their prediction is consistent with ours. However, our analysis shows the computational advantage of SGD over batch GD since each iteration requires only $1/N$ the compute. \citet{bordelon2024dynamical} obtained the limit of the population risk as two out of the data size, model size, and the number of gradient steps go to infinity based on statistical physics heuristics. 
In comparison, we obtain upper and lower risk bounds that hold for any finite $M$ and $N$ and match ignoring a constant factor depending only on the spectrum power-law degree $a$.



\paragraph{Average of the SGD iterates}
Results similar to \Cref{thm:scaling-law} can also be established for the average of the iterates of online SGD with constant stepsize \citep{polyak1992acceleration,dieuleveut2017harder,jain2017markov,jain2017parallelizing,zou2023benign}. 
All results will be the same once replacing the effective sample size $\Neff$ in \Cref{thm:scaling-law} to the sample size $N$. 
For more details see \Cref{thm:ave_powerlaw} in \Cref{sec:average_iterate}.

\subsection{Scaling law under source condition}
The isotropic parameter prior condition (\Cref{assump:simple:param}) in  Theorem~\ref{thm:scaling-law} can be generalized to the following anisotropic version
\citep{caponnetto2007optimal}. 
\begin{assumption}[Source condition]
\label{assump:source}
Let $(\lambda_i, \vB_i)_{i> 0}$ be the eigenvalues and eigenvectors of $\HB$.
Assume $\wB^*$ satisfies a prior such that 
\[
\text{for $i\neq j$},\ \ \E \la \vB_i, \wB^*\ra \la \vB_j, \wB^*\ra =0;\ \  
\text{and for $i> 0$},\ \ 
\E \lambda_i \la \vB_i, \wB^{*}\ra^2 \eqsim i^{-b},\ \text{ for some $b>1$}.\]
\end{assumption}
A larger exponent $b$ implies a faster decay of signal $\wB^*$ and thus corresponds to a simpler task \citep{caponnetto2007optimal}.
Note that \Cref{assump:simple:param} satisfies \Cref{assump:source} with $b=a$.

\begin{theorem}[Scaling law under source condition]\label{cor:source_cond}
    In Theorem~\ref{thm:scaling-law}, suppose Assumption~\ref{assump:simple:param} is replaced by Assumption~\ref{assump:source} with $1<b< a+1$.
    Then there exists some $a$-dependent constant $c>0$ such that when $\gamma\leq c$,  with probability at least $1-e^{-\Omega(M)}$ over the randomness of the sketch matrix $\SB$, we have
\begin{align*}
    \Ebb\vRisk(\vB_N) = \sigma^2 + \underbrace{\Theta\bigg(\frac{1}{M^{b-1}}  \bigg) + \Theta\bigg(\frac{1}{(\Neff \gamma)^{(b-1)/a}}\bigg)}_{\approximation+\bias}+ \underbrace{\Theta\bigg( \frac{\min\big\{ M, \ (\Neff \gamma)^{1/a}\big\}}{\Neff}\bigg)}_{\variance}.
\end{align*}
where the expectation is over the randomness of $\wB^*$ and $(\xB_i,y_i)_{i=1}^N$, and $\Theta(\cdot)$ hides constants that may depend on $(a,b)$.
\end{theorem}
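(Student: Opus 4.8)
The plan is to reduce the anisotropic-prior case to the already-established isotropic machinery by a diagonalization-plus-reweighting argument, since \Cref{thm:scaling-law} is essentially the special case $b=a$. First I would observe that, conditioned on the sketch $\SB$, the triple $(\SB\xB, y)$ is again a well-specified linear model in $\R^M$: the sketched covariance is $\tHB := \SB\HB\SB^\top$, and the sketched optimal parameter is $\tvB^* = \tHB^{\dagger}\SB\HB\wB^*$ (with the minimum-$\tHB$-norm convention matching \Cref{defi:covariante-and-w}). The approximation error $\min\vRisk_M(\cdot)-\min\risk(\cdot) = \|\wB^* - \SB^\top\tvB^*\|_{\HB}^2$ depends only on $\SB$ and $\wB^*$; I would compute $\E_{\wB^*}$ of it under \Cref{assump:source} using the same spectral estimates on $\tHB$ that drive the isotropic proof — in particular the tight control of $\mu_j(\tHB)\eqsim \lambda_j$ for $j\lesssim M$ and the tail behavior from \Cref{lm:power-law-main} — but now weighting the $j$-th spectral contribution by $\E\la\vB_j,\wB^*\ra^2 \eqsim \lambda_j^{-1} j^{-b} \eqsim j^{a-b}$ instead of the isotropic $\E\la\vB_j,\wB^*\ra^2 \eqsim 1$. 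Summing $\sum_{j\gtrsim M} \lambda_j \cdot j^{a-b} \eqsim \sum_{j\gtrsim M} j^{-b}\eqsim M^{1-b}$ gives the claimed $\Theta(M^{1-b})$ approximation term; the condition $b>1$ makes this sum converge and $b<a+1$ keeps the signal $\sum_j j^{a-b}\cdot\lambda_j$-type quantities from misbehaving relative to the variance scale.

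Next, for the excess risk of the SGD iterate $\vB_N$ I would invoke the sharp bias/variance decomposition for geometric-decay one-pass SGD (the finite-sample, dimension-free bounds of \citet{zou2023benign,wu2022iterate,wu2022power} used in \Cref{thm:scaling-law}), applied to the sketched model with covariance $\tHB$ and target $\tvB^*$. The variance term there is a functional of $\tHB$, the stepsize, $\Neff$, and $\sigma^2$ only — it does not see the prior — so it remains $\Theta(\min\{M,(\Neff\gamma)^{1/a}\}/\Neff)$ verbatim, reusing the spectral estimates on $\tHB$ exactly as in Theorem~\ref{thm:scaling-law}; this is why $\variance$ is unchanged. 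The bias term is a quadratic form in $\tvB^*$ against a kernel built from $\tHB$ and the stepsize schedule; taking $\E_{\wB^*}$ turns it into $\sum_j (\text{schedule kernel at }\mu_j(\tHB))\cdot \E\la\vB_j,\tvB^*\ra^2$. Here I would need to transfer the source condition from $\wB^*$ to $\tvB^*$: show $\E\la\vB_j(\tHB),\tvB^*\ra^2 \eqsim \mu_j(\tHB)^{-1} j^{-b}\eqsim j^{a-b}$ for $j\lesssim M$, which follows from the near-isometry of the Gaussian sketch on the top-$M$ eigenspace of $\HB$ plus a perturbation bound on how $\SB$ distorts the eigenbasis (the same estimates already proved for the isotropic case, just carried through the reweighting). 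Plugging $\E\la\vB_j,\tvB^*\ra^2\eqsim j^{a-b}$ into the geometric-schedule bias kernel — which effectively cuts off modes with $\mu_j \gtrsim 1/(\Neff\gamma)$, i.e. $j\lesssim (\Neff\gamma)^{1/a}$ — yields $\bias \eqsim \sum_{j\gtrsim (\Neff\gamma)^{1/a}, j\lesssim M} \lambda_j j^{a-b} + (\text{tail}) \eqsim \max\{M^{1-b},(\Neff\gamma)^{(1-b)/a}\}$, with the matching lower bound holding when $(\Neff\gamma)^{1/a}\lesssim M$, exactly parallel to the isotropic statement with $a\mapsto b$ in the numerator exponent.

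Finally I would assemble the three pieces under $\sigma^2\eqsim 1$: irreducible $\sigma^2$, approximation $+$ bias $=\Theta(M^{1-b}+(\Neff\gamma)^{(1-b)/a})$ (the two "max" clauses combine into a sum once we add them, since $a>b-1$ wait — here one must be slightly careful that $(\Neff\gamma)^{(1-b)/a}$ can dominate or be dominated by $M^{1-b}$ depending on the regime, and that in the regime where $M^{1-b}$ dominates the bias lower bound is supplied by the approximation error instead, so the $\Theta$ is still valid overall), and variance $\Theta(\min\{M,(\Neff\gamma)^{1/a}\}/\Neff)$, giving the stated formula. I expect the main obstacle to be the transfer of the source condition through the sketch — controlling $\E_{\wB^*}\la\vB_j(\tHB),\tvB^*\ra^2$ uniformly in $j$ up to constants, since the sketch both rotates the eigenbasis and perturbs the spectrum, and the anisotropic weights $j^{a-b}$ are not uniformly bounded (they grow when $b<a$), so crude operator-norm perturbation bounds on $\SB\HB\SB^\top$ are not enough; one needs the eigenvector-wise and block-wise control of the Gaussian sketch already developed for Theorem~\ref{thm:scaling-law}, checked to be robust under the extra polynomial weighting, together with the upper restriction $b<a+1$ which is exactly what keeps the weighted signal sums comparable to the quantities the sketch concentration results can handle.
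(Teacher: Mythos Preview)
Your overall architecture matches the paper: decompose into irreducible $+$ approximation $+$ bias $+$ variance via the sketched-SGD analysis (Theorem~\ref{thm:main:exact_decomp}), observe that the variance depends only on the spectrum of $\SB\HB\SB^\top$ and is hence unchanged, and re-evaluate approximation and bias under the anisotropic prior with weights $j^{a-b}$. The gap is in the bias argument. You propose to establish $\E\la\vB_j(\SB\HB\SB^\top),\vB^*\ra^2\eqsim j^{a-b}$ via ``a perturbation bound on how $\SB$ distorts the eigenbasis (the same estimates already proved for the isotropic case)''. No such eigenbasis control is proved anywhere in the paper --- only eigenvalue control (\Cref{lm:power-law-main}) --- and under a power-law spectrum the relative eigengaps $(\lambda_j-\lambda_{j+1})/\lambda_j\eqsim 1/j$ vanish, so Davis--Kahan style arguments give nothing useful for individual eigenvectors. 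You correctly flag this as the main obstacle; the point is that the paper sidesteps it entirely rather than solving it.

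For the bias \emph{upper} bound, the paper uses the block splitting of Theorem~\ref{thm:general_upper} (Lemma~\ref{lm:bias_lm_1}): split $\wB^*$ into head $\wB^*_{0:k}$ and tail, bound the head contribution by $\|\wB^*_{0:k}\|^2/(\Neff\gamma)$ times a condition-number ratio of $\AB_k=\SB_{k:\infty}\HB_{k:\infty}\SB_{k:\infty}^\top$ (which is $O(1)$ by Lemma~\ref{lm:power_condition_num}), and the tail by $\|\wB^*_{k:\infty}\|^2_{\HB_{k:\infty}}$; taking $\E_{\wB^*}$ gives $k^{1+a-b}/(\Neff\gamma)+k^{1-b}$ (this is exactly where $b<a+1$ enters, via $\sum_{i\le k}i^{a-b}\eqsim k^{1+a-b}$), and optimizing $k\eqsim\min\{M/3,(\Neff\gamma)^{1/a}\}$ yields the bound --- only operator-norm and eigenvalue control are used. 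For the bias \emph{lower} bound, the paper writes $\E_{\wB^*}\bias(\wB^*)$ as a trace against $\SB\HB\wCov\HB\SB^\top$ with $\wCov=\E[\wB^*\wB^{*\top}]$ diagonal, applies von~Neumann's trace inequality, and observes that $\HB\wCov\HB$ has eigenvalues $\eqsim i^{-(a+b)}$, so $\SB\HB\wCov\HB\SB^\top$ is \emph{another} sketched power-law matrix to which the eigenvalue concentration of \Cref{lm:power-law-main} applies directly, giving $\mu_i(\SB\HB\wCov\HB\SB^\top)\eqsim i^{-(a+b)}$. Neither step needs eigenvector alignment; block splitting for the upper bound and von~Neumann plus eigenvalue concentration on a second sketched power-law matrix for the lower bound is the missing idea.
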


\underline{When $1<b\le a$}, the tasks are relatively hard (compared to when $b=a$), and the variance error is dominated by the sum of approximation and bias errors for all choices of $M$, $N$, and $\gamma\lesssim 1$. 
In this case, \Cref{cor:source_cond} gives the same prediction about optimal stepsize and optimal allocation of data and model sizes under compute budget as \Cref{thm:scaling-law}. 

 \underline{When $a<b<a+1$}, the tasks are relatively easy (compared to when $b=a$), 
and variance remains dominated by the sum of approximation and bias error if the stepsize is optimally tuned. 
Recall that $\gamma \lesssim 1$, thus we can rewrite the risk bound in \Cref{cor:source_cond} as 
\begin{align*}
\Ebb\vRisk(\vB_N)  - \sigma^2 
&\eqsim  \frac{1}{\min\big\{ M, \ (\Neff \gamma)^{1/a}\big\}^{b-1}}+ \frac{\min\big\{ M, \ (\Neff \gamma)^{1/a}\big\}}{\Neff} \\
&\eqsim \begin{dcases}
    {\min\big\{ M, \ (\Neff \gamma)^{1/a}\big\}}/{\Neff} & M\gtrsim \Neff^{1/b}\ \text{and}\ \Neff^{a/b-1}\lesssim\gamma \lesssim 1, \\
    {\min\big\{ M, \ (\Neff \gamma)^{1/a}\big\}^{1-b}} & M\lesssim  \Neff^{1/b}\ \text{or}\ \gamma \lesssim \Neff^{a/b-1}.
\end{dcases}
\end{align*}
Therefore the optimal stepsize and the risk under the optimal stepsize is 
\begin{align*}
     \gamma \eqsim  \Neff^{a/b-1} \text{ if } M\gtrsim \Neff^{1/b}, ~~~~~\text{ and } \
        M^a / \Neff \lesssim \gamma \lesssim 1  \text{ if } M\lesssim  \Neff^{1/b}.
\end{align*} 
Under the \emph{optimally tunned} stepsize, the population risk is in the form of
\begin{align*}
\textstyle \min_{\gamma}\Ebb\vRisk(\vB_N) =
\sigma^2 + \Theta (\Neff^{(1-b)/b} ) + \Theta(M^{1-b}),
\end{align*}
which is again in the scaling law form \Cref{eq:empircal-scaling-law}. This is expected since an optimally tuned stepsize controls the variance error by adjusting the strength of the implicit bias of SGD.  
Under a fixed compute budget $C=MN$, our theory suggests to assign 
\(
M =\tilde\Theta(C^{1/(b+1)})
\)
and \(
N= \tilde\Theta(C^{b/(b+1)})
\),
and set the stepsize to $\gamma \eqsim \tilde\Theta(C^{(a-b)/(b+1)})$.


\underline{When $b\ge a+1$}, the tasks are even simpler. We provide upper and lower bounds in Appendix~\ref{sec:match_bias_bounds}. However, there exists a gap between the bounds, fixing which is left for future work.

Moreover, we note that in  the comparable regimes where $M$ is large, the results in Theorem~\ref{cor:source_cond} match existing bounds on the risk of SGD iterates and ridge estimators~\citep{pillaud2018statistical,rudi2017generalization}.

\subsection{Scaling law under logarithmic power law}
We also derive the risk formula when the data covariance has a logarithmic power-law spectrum \citep{bartlett2020benign}. 
\begin{assumption}[Logarithmic power-law spectrum]
\label{assump:logpower-law}
There exists $a>1$ such that the eigenvalues of $\HB$ satisfy $\lambda_i \eqsim i^{-1}\log^{-a} (i+1)$, $i>0$.
\end{assumption}

\begin{theorem}[Scaling law under logarithmic power spectrum]\label{thm:logpower_cond}
    In Theorem~\ref{thm:scaling-law}, suppose Assumption~\ref{assump:power-law} is replaced by Assumption~\ref{assump:logpower-law}. Then with probability at least $1-e^{-\Omega(M)}$ over the randomness of the sketch matrix $\SB$, we have
\begin{align*}
    \Ebb\vRisk(\vB_N) = \sigma^2 + \Theta\bigg( \frac{1}{\log^{a-1}(M)} \bigg) + \Theta\bigg(\frac{1}{\log^{a-1}(\Neff \gamma)}\bigg),\quad \variance \eqsim  \frac{\min\big\{M ,\ \frac{\Neff\gamma}{\log^{a}(\Neff\gamma)} \big\} }{\Neff},
\end{align*}
where the expectation is over the randomness of $\wB^*$ and $(\xB_i,y_i)_{i=1}^N$.
\end{theorem}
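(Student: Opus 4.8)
The plan is to transcribe the proof of \Cref{thm:scaling-law} line by line, replacing the pure power law by the logarithmic power law everywhere, and tracking how the two summary statistics of the spectrum that drive the SGD analysis — the tail sum $\sum_{j>k}\lambda_j$ and the counting function $|\{j:\lambda_j\ge\tau\}|$ — change. The first step is to prove the analog of the sketched-spectrum lemma \Cref{lm:power-law-main} under \Cref{assump:logpower-law}: for the Gaussian sketch $\SB$, with probability $1-e^{-\Omega(M)}$ the sketched covariance $\SB\HB\SB^\top$ has eigenvalues $\mu_j\eqsim j^{-1}\log^{-a}(j+1)$ for $1\le j\lesssim M$, the approximation error satisfies $\E_{\wB^*}\approximation\eqsim\sum_{j>M}\lambda_j\eqsim\log^{1-a}(M)$, and the effective signal strength along the $j$-th sketched eigendirection is again $\eqsim j^{-1}\log^{-a}(j+1)$. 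Since $a>1$ still guarantees $\tr(\HB)=\sum_i\lambda_i<\infty$, the same concentration arguments used for the power law apply to the Gram matrix of the sketch; only the resulting deterministic estimates change, because now $\sum_{j>k}\lambda_j\eqsim\log^{1-a}(k)$ and the tail decays only logarithmically.

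Next, I would feed this spectrum into the same dimension-free SGD bias–variance machinery used to prove \Cref{thm:scaling-law} (building on \citet{zou2023benign,wu2022iterate,wu2022power}). For the geometrically decaying stepsize the effective regularization level is $1/(\Neff\gamma)$, and the effective dimension is
\[
k^\star:=\big|\{\,j:\mu_j\ge 1/(\Neff\gamma)\,\}\big|\eqsim\min\{\,M,\ \Neff\gamma/\log^a(\Neff\gamma)\,\},
\]
because $j^{-1}\log^{-a}(j+1)\gtrsim 1/(\Neff\gamma)\iff j\log^a(j+1)\lesssim\Neff\gamma$ and $\log(j+1)\eqsim\log(\Neff\gamma)$ in the relevant range. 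This gives $\variance\eqsim k^\star/\Neff$ directly, matching the claim. For the bias, directions below $k^\star$ are suppressed and the remaining tail contributes $\bias\eqsim\sum_{j>k^\star}\mu_j\eqsim\log^{1-a}(k^\star)\eqsim\log^{1-a}(\min\{M,\Neff\gamma\})$, the $\log^a(\Neff\gamma)$ in the denominator of $k^\star$ only producing a $\log\log$ factor that is absorbed into the $\eqsim$. Adding the approximation error $\eqsim\log^{1-a}(M)$ yields $\approximation+\bias\eqsim\log^{1-a}(M)+\log^{1-a}(\Neff\gamma)$. Finally, since $\gamma\lesssim 1$ and $a>1$, the variance obeys $k^\star/\Neff\le\gamma/\log^a(\Neff\gamma)\lesssim\log^{-a}(\Neff\gamma)\lesssim\log^{1-a}(\Neff\gamma)$, so it is always dominated by the bias and disappears from the risk formula (no stepsize tuning is needed, unlike the easy-task source-condition regime); the matching lower bounds follow from the instance-wise SGD lower bounds together with the spectrum lemma, exactly as in \Cref{thm:scaling-law}.

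The main obstacle is the first step — the sketched-spectrum lemma under the slowly varying logarithmic power law. One must show that the Gaussian sketch essentially preserves the top of the spectrum while folding the merely logarithmically decaying tail into the approximation error with the correct constant, and that the prior induced on the sketched optimal parameter retains coordinate-wise effective variance $\eqsim\mu_j$. The delicate point is that $\log^{-a}$ varies too slowly for crude dyadic bucketing to give tight constants, so the estimates of $\sum_{j>k}\lambda_j$, of $k^\star$, and of the sketched eigenvalues have to be done carefully; once that lemma is established, the remainder is a direct copy of the power-law argument.
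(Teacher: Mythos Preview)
Your plan is correct and matches the paper's proof almost step for step: establish the sketched-spectrum analogue of \Cref{lm:power-law-main} (the paper's \Cref{lemma:log-power-law}), feed it into the general bias--variance decomposition of \Cref{thm:main:exact_decomp}, read off $\approximation\eqsim\log^{1-a}(M)$, $\bias\eqsim\log^{1-a}(\Neff\gamma)$, $\variance\eqsim\min\{M,\Neff\gamma/\log^a(\Neff\gamma)\}/\Neff$, and check $\variance\lesssim\log^{-a}(\Neff\gamma)\lesssim\bias$.

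One technical correction to your spectrum lemma: under \Cref{assump:logpower-law} the sketched eigenvalues do \emph{not} satisfy $\mu_j(\SB\HB\SB^\top)\eqsim j^{-1}\log^{-a}(j+1)$ for all $j\lesssim M$. That relation holds only for $j\lesssim M/\log M$; for $M/\log M\lesssim j\le M$ the eigenvalues flatten out to $\eqsim M^{-1}\log^{1-a}(M)$, because the tail sum $\sum_{i>k}\lambda_i\eqsim\log^{1-a}(k)$ divided by $M$ overtakes $\lambda_{k+1}$ already at $k\eqsim M/\log M$ (this is exactly the ``slowly varying'' issue you flag at the end). The flat regime does not alter any of your final estimates---the counting function $k^\star$, the tail sums, and the condition-number ratio $\mu_{M/2}/\mu_M$ used in the bias upper bound all come out the same---but your sketched-spectrum statement needs this amendment to be correct, and the paper's lower-bound argument for $\bias$ does rely on having enough eigenvalues in the non-flat range $j\le M/\log M$ above the cutoff $1/(\Neff\gamma)$.
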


\Cref{thm:logpower_cond} provides a scaling law under the logarithmic power-law spectrum. 
Similar to \Cref{thm:scaling-law}, the variance error is dominated by the approximation and bias errors for all choices of $M$, $N$, and $\gamma$, and thus disappeared from the risk bound. 
Different from \Cref{thm:scaling-law}, here the population risk is a polynomial of $\log(M)$ and $\log(\Neff \gamma)$.

\begin{figure}
   \centering
   \begin{minipage}[b]{0.25\linewidth}
       \centering
       \includegraphics[width=\linewidth]{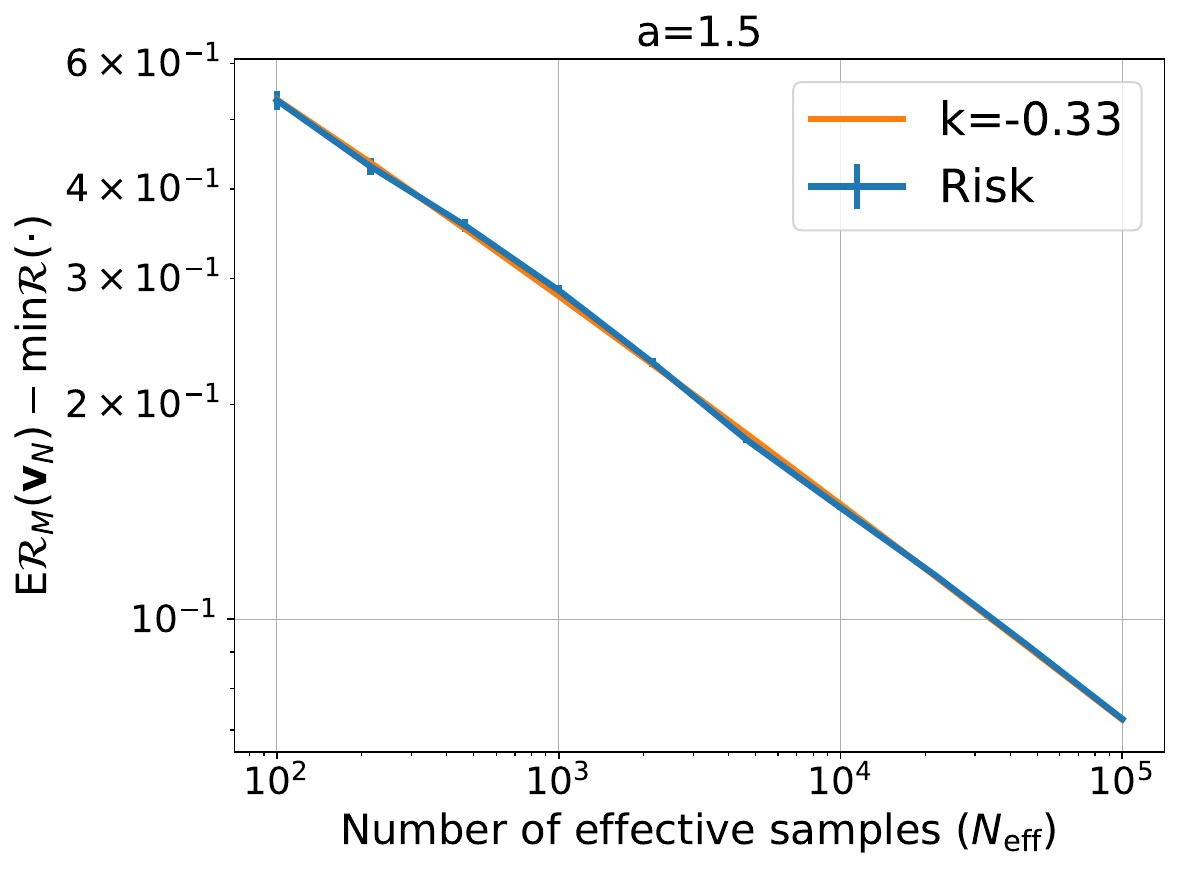}
       \caption*{(a) $a=1.5$}
   \end{minipage}
   \hfill
   \begin{minipage}[b]{0.25\linewidth}
       \centering
       \includegraphics[width=\linewidth]{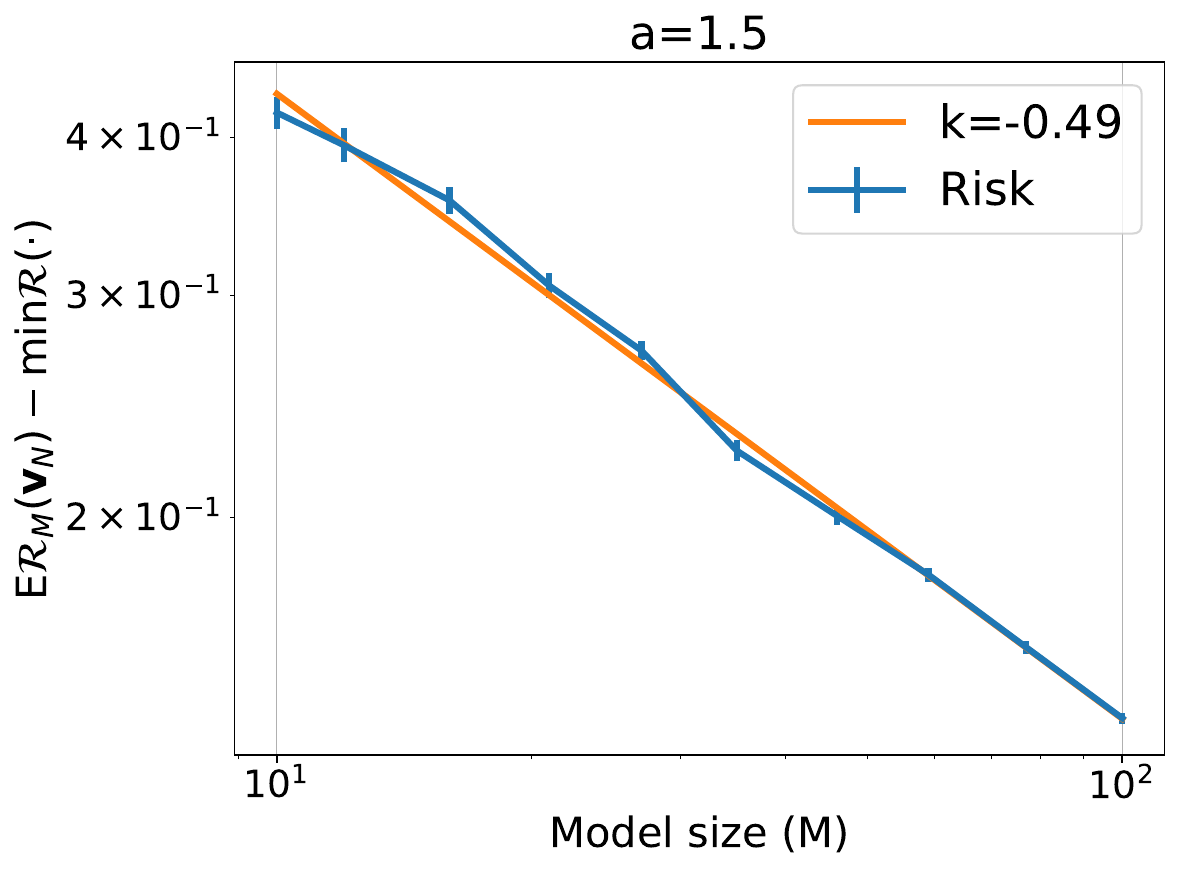}
       \caption*{(b) $a=1.5$}
   \end{minipage}
   \hfill
   \begin{minipage}[b]{0.24\linewidth}
       \centering
       \includegraphics[width=\linewidth]{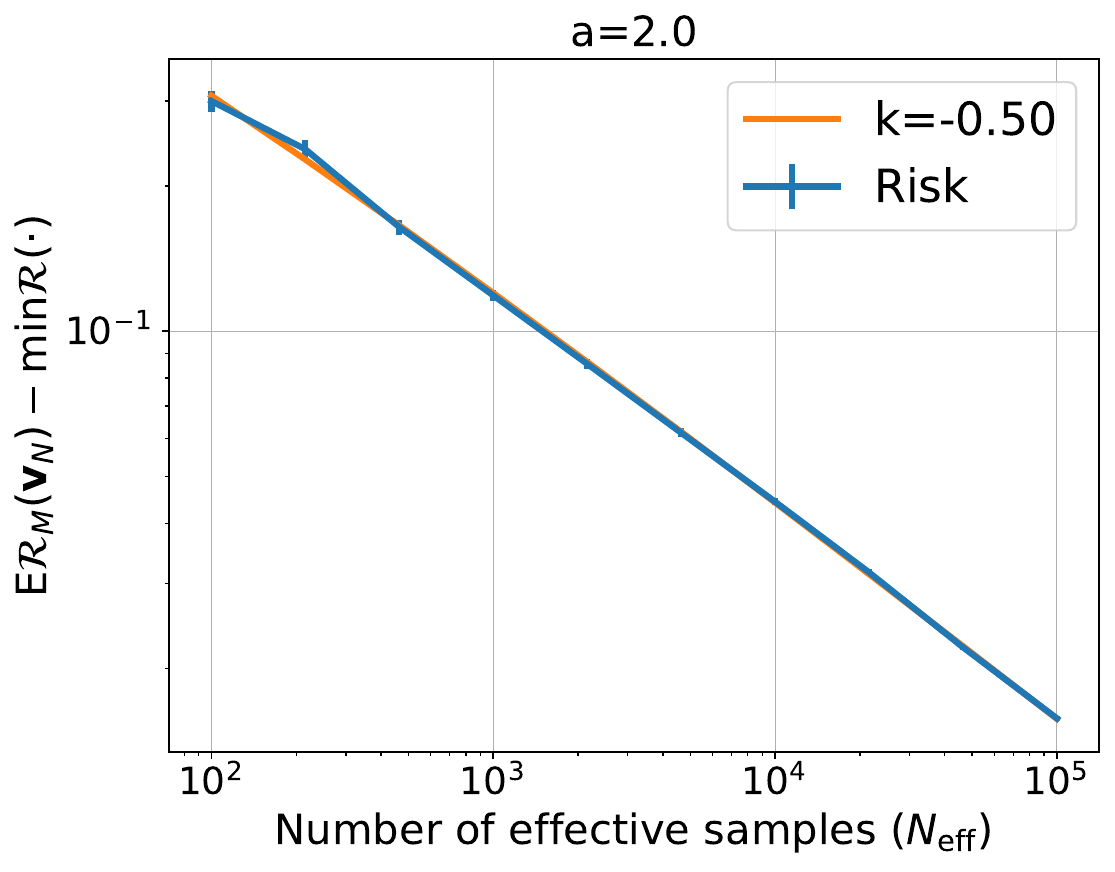}
       \caption*{(c) $a=2$}
   \end{minipage}
   \hfill
   \begin{minipage}[b]{0.24\linewidth}
       \centering
       \includegraphics[width=\linewidth]{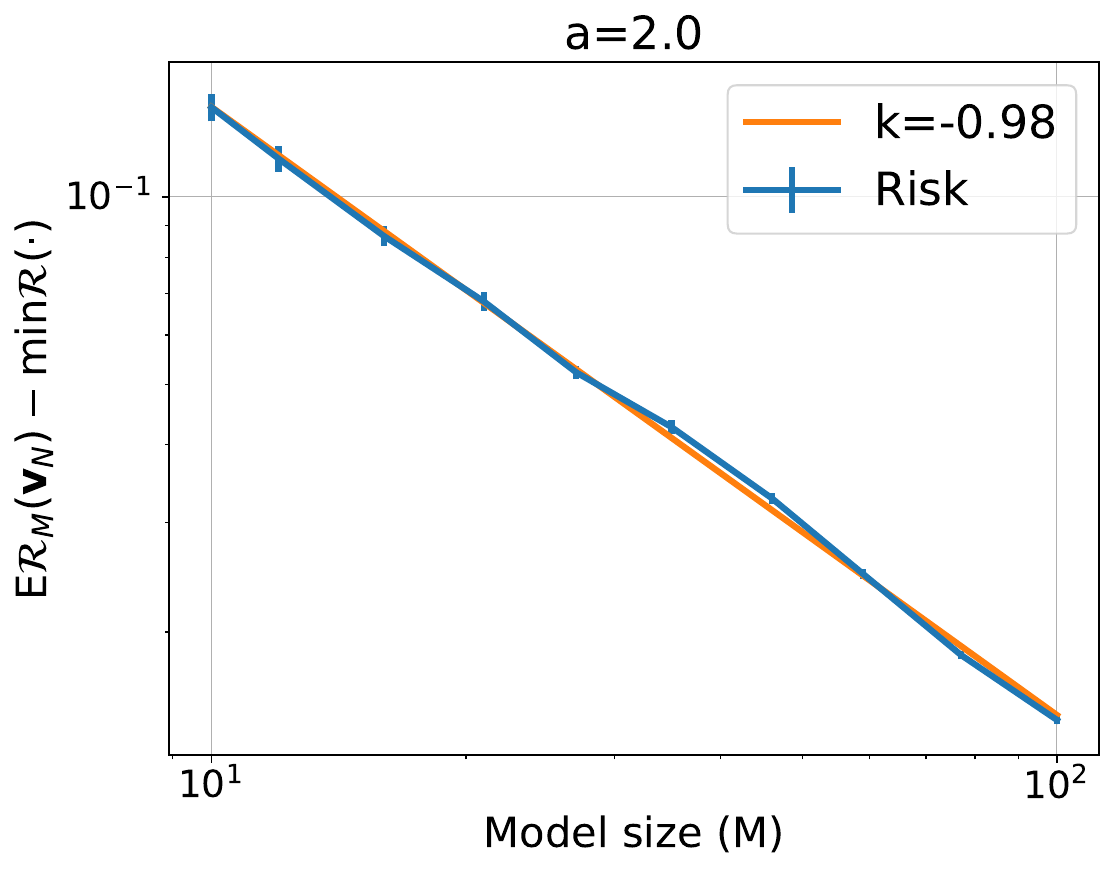}
       \caption*{(d) $a=2$}
   \end{minipage}
   \caption{\small
   The expected risk of the last iterate of~\eqref{eq:sgd} minus the irreducible risk versus the effective sample size and model size. Parameters $\sigma=1,\gamma=0.1$. (a),~(b): $a=1.5,d=10000$; (c),~(d): $a=2,d=1000$. The error bars denote the $\pm1$ standard deviation of estimating the expected risk using $100$ independent samples of $(\wB^*,\SB)$. We use linear functions to fit the expected risk under the log-log scale and report the slope of the fitted lines (denoted by $k$).}
   \label{fig:one_dim_last}
\end{figure}

\section{Experiments}\label{sec:exp}
In this section, we examine the relation between the expected risk of the~\eqref{eq:sgd} output, the data size $N$, and the model size $M$ when the covariates satisfy a power-law covariance spectrum.   
Although our results in Section~\ref{sec:scaling_law_examples} hold with high probability over $\SB$, for simplicity, we assume the expectation of the risk is taken over both $\wB^*$ and $\SB$ in our simulations.  
We adopt the model in Section~\ref{sec:setup} and train it using one-pass~\eqref{eq:sgd} with geometric decaying stepsize~\eqref{eq:lr}. We choose the dimension $d$ sufficiently large to approximate the infinite-dimensional case, and the data are generated so that Assumption~\ref{assump:simple} is satisfied.   
Moreover, we choose the covariance $\HB\in\R^{d\times d}$ to be diagonal with $\HB_{ii}\propto i^\Hpower$ and $\tr(\HB)=1$ for some $\Hpower>1.$ 
From Figure~\ref{fig:2d_intro}, we observe that the risk indeed follows a power-law formula jointly in the number of samples and the number of parameters. 
In addition, the fitted exponents are aligned with our theoretical predictions $(a-1,1-1/a)$ in Theorem~\ref{thm:scaling-law}.  Figure~\ref{fig:one_dim_last} shows the scaling of the expected risk in data size (or model size) when the model size (or data size) is relatively large. 
We see that the expected risk also satisfies a power-law decay with exponents matching our predictions. It is noteworthy that our simulations demonstrate stronger observations than the theoretical results in Theorem~\ref{thm:scaling-law}, which only establishes matching upper and lower bounds up to a constant factor. Additional simulation results on the risk of the average of~\eqref{eq:sgd} iterates can be found in Appendix~\ref{sec:average_iterate}.



\section{Risk bounds under a general spectrum}
In this section, we present some general results on the upper and lower bounds of the risk of the output of~\eqref{eq:sgd}. 
Due to the rotational invariance of the sketched matrix $\SB$, without loss of generality, we assume the covariance $\HB\in\R^{d\times d}$ is diagonal with non-increasing diagonal entries.
Our main results in Section~\ref{sec:scaling_law_examples} are directly built on the general bounds introduced here.

\begin{assumption}[General distributional conditions]
    \label{assump:general}
Assume the following about the data distribution.
\begin{assumpenum}[leftmargin=*]
\item \label{assump:general:data} \textbf{Hypercontractivity.}~ 
There exists $\alpha\geq 1$ such that for every PSD matrix $\AB$ it holds that 
\[\Ebb \xB \xB^\top \AB \xB \xB^\top \preceq \alpha\tr(\HB \AB) \HB.\]
\item \label{assump:general:noise} \textbf{Misspecified model.}~ 
There exists $\sigma^2>0$ such that $\Ebb(y - \xB^\top \wB^*)^2 \xB \xB^\top \preceq  \sigma^2 \HB$.
\end{assumpenum}
\end{assumption}
It is clear that \Cref{assump:simple} implies \Cref{assump:general} with $\alpha=3$.
\vspace{-0.1em}
\paragraph{Excess risk decomposition.}
Conditioning on the sketch matrix $\SB$, the training of the sketched linear predictor can be viewed as an $M$-dimensional linear regression problem. We can therefore invoke existing SGD analysis \citep{wu2022iterate,wu2022power} to sharply control the excess risk by controlling the bias and variance errors. 
Specifically, let us define the ($\wB^*$-dependent) bias error as 
\begin{equation}\label{eq:bias-error}
    \bias(\wB^*) := \bigg\|\prod_{t=1}^N \big(\IB - \gamma_t \SB\HB\SB^\top\big) {\vB^*}\bigg\|^2_{\SB\HB\SB^\top},\quad \text{where}\ \vB^*:= (\SB\HB\SB^\top)^{-1} \SB\HB\wB^*,
\end{equation}
and the variance error as 
\begin{equation}\label{eq:var-error}
    \variance := \frac{\#\{\tilde\lambda_j \ge 1/(\Neff \gamma)\} + (\Neff \gamma)^2 \sum_{\tilde \lambda_j < 1/(\Neff \gamma)}\tilde\lambda_j^2}{\Neff },\quad \Neff := N / \log(N),
\end{equation}
where $\big(\tilde\lambda_j\big)_{j=1}^M$ are eigenvalues of $\SB \HB \SB^\top$. We also let $\bias:= \E \bias(\wB^*)$, where the expectation is over the prior of $\wB^*.$
Using the existing results on the output of~\eqref{eq:sgd} in~\citet{wu2022iterate,wu2022power}, we show that the excess risk in~\eqref{eq:approx-excess-decomp} can be exactly decomposed as the sum of bias and variance errors under weak conditions.

\begin{theorem}[Excess risk decomposition]\label{thm:main:exact_decomp}
Conditioning on the sketch matrix $\SB$, consider the excess risk in \Cref{eq:approx-excess-decomp} induced by the output of \Cref{eq:sgd}.
Assume $\vB_0=0$. Then for any $\wB^*\in\Hbb$,

\begin{enumerate}[leftmargin=*]
    \item Under \Cref{assump:general:data,assump:general:noise} and suppose $
\gamma \leq {1}/{\big(c\alpha\tr(\SB \HB \SB^\top)\big)} $ for some constant $c>0$, we have 
    \[
    \Ebb \excessRisk \lesssim \bias(\wB^*) + \big(\alpha \|\wB^*\|^2_{\HB}+\sigma^2 \big) \variance.\vspace{-0.3em}
    \]
    \item Under the stronger \Cref{assump:simple:data,assump:simple:noise} and suppose $
\gamma \leq {1}/{\big(c\alpha\tr(\SB \HB \SB^\top))} $ for some constant $c>0$, we have 
    \[
    \Ebb \excessRisk \gtrsim \bias(\wB^*) + \sigma^2  \variance .
    \]
\end{enumerate}
 In both results, the expectations of $\excessRisk$ are taken over $(\xB_t,y_t)_{t=1}^N$.
\end{theorem}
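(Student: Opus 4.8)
The plan is to condition on the sketch matrix $\SB$ and recognize \eqref{eq:sgd} as one-pass SGD for the $M$-dimensional linear regression problem with covariate $\zB:=\SB\xB\in\Rbb^M$, response $y$, population covariance $\widetilde\HB:=\SB\HB\SB^\top$, and least-$\widetilde\HB$-norm optimum $\vB^*=(\SB\HB\SB^\top)^{-1}\SB\HB\wB^*$ exactly as in \eqref{eq:bias-error}; note that $\excessRisk=\vRisk(\vB_N)-\min\vRisk(\cdot)$ is precisely the excess risk of this conditional problem. I would then invoke the sharp last-iterate SGD analysis of \citet{wu2022iterate,wu2022power} (see also \citet{zou2023benign}): for the geometrically decaying schedule \eqref{eq:lr} it characterizes $\Ebb\excessRisk$, up to absolute constants and under matching upper- and lower-bound hypotheses, as the sum of a bias term $\|\prod_{t=1}^N(\IB-\gamma_t\widetilde\HB)\vB^*\|_{\widetilde\HB}^2$ and a variance term equal to the effective-dimension quantity in \eqref{eq:var-error} scaled by the covariance level of the gradient noise at $\vB^*$. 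Matching \eqref{eq:bias-error}--\eqref{eq:var-error} then reduces to two tasks: (i) showing the sketched problem inherits the structural hypotheses these results need, and (ii) pinning down the gradient-noise level at $\vB^*$.

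For (i), I would first check that hypercontractivity transfers with the \emph{same} constant: for any PSD $\AB\in\Rbb^{M\times M}$, the matrix $\SB^\top\AB\SB$ is PSD, so \Cref{assump:general:data} gives $\Ebb\,\zB\zB^\top\AB\zB\zB^\top=\SB\big(\Ebb\,\xB\xB^\top(\SB^\top\AB\SB)\xB\xB^\top\big)\SB^\top\preceq\alpha\,\tr(\HB\SB^\top\AB\SB)\,\SB\HB\SB^\top=\alpha\,\tr(\widetilde\HB\AB)\,\widetilde\HB$ by cyclicity of the trace. Combined with the stepsize bound $\gamma\lesssim 1/(\alpha\,\tr\widetilde\HB)$, this is exactly the data-side hypothesis of the SGD upper bound; and under the stronger \Cref{assump:simple:data}, $\zB=\SB\xB\sim\Ncal(0,\widetilde\HB)$ is again Gaussian, which is what the matching lower bound of \citet{wu2022power} requires.

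For (ii), write the gradient noise at $\vB^*$ as $\xi:=y-\la\zB,\vB^*\ra=(y-\xB^\top\wB^*)+\xB^\top\uB$ with $\uB:=\wB^*-\SB^\top\vB^*$. For the upper bound, $\xi^2\le 2(y-\xB^\top\wB^*)^2+2(\xB^\top\uB)^2$, so \Cref{assump:general:noise} and hypercontractivity applied with $\AB=\uB\uB^\top$ give $\Ebb[\xi^2\zB\zB^\top]\preceq(2\sigma^2+2\alpha\|\uB\|_{\HB}^2)\,\widetilde\HB$; since $\wB^*$ minimizes $\risk$ one has the identity $\|\uB\|_{\HB}^2=\min\vRisk(\cdot)-\min\risk(\cdot)\le\vRisk(0)-\min\risk(\cdot)=\|\wB^*\|_{\HB}^2$, so the effective noise level is $\lesssim\sigma^2+\alpha\|\wB^*\|_{\HB}^2$. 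For the lower bound, under \Cref{assump:simple:data,assump:simple:noise} the conditional-mean-zero property kills the cross term, so $\Ebb[\xi^2\zB\zB^\top]=\Ebb[(y-\xB^\top\wB^*)^2\zB\zB^\top]+\Ebb[(\xB^\top\uB)^2\zB\zB^\top]\succeq\sigma^2\widetilde\HB$ (the first summand equals $\sigma^2\widetilde\HB$ for well-specified noise, the second is PSD), giving an effective noise level $\gtrsim\sigma^2$. Plugging these into the cited SGD upper and lower bounds yields $\Ebb\excessRisk\lesssim\bias(\wB^*)+(\alpha\|\wB^*\|_{\HB}^2+\sigma^2)\variance$ and $\Ebb\excessRisk\gtrsim\bias(\wB^*)+\sigma^2\variance$, provided one also verifies that the bias and variance produced by \citet{wu2022iterate,wu2022power} for the schedule \eqref{eq:lr} agree up to constants with \eqref{eq:bias-error}--\eqref{eq:var-error}, which is a direct comparison of formulas.

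The main obstacle is item (ii): the gradient noise of the sketched problem is not the original noise but carries the extra approximation term $\xB^\top(\wB^*-\SB^\top\vB^*)$. Bounding its contribution by $\alpha\|\wB^*\|_{\HB}^2$ --- via hypercontractivity together with the crude bound $\|\uB\|_{\HB}^2\le\|\wB^*\|_{\HB}^2$ --- is what makes the upper bound go through, and for the lower bound the delicate point is that this term must not \emph{cancel} the original noise, which is precisely why the stronger well-specified hypothesis is assumed there. A secondary, more routine obstacle is invoking \citet{wu2022iterate,wu2022power} at the right level of generality (finite $M$, arbitrary spectrum of $\widetilde\HB$, geometrically decaying stepsizes) so that the output bias/variance expressions line up exactly with \eqref{eq:bias-error}--\eqref{eq:var-error}.
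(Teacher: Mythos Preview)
Your plan matches the paper's: condition on $\SB$, view \eqref{eq:sgd} as SGD on the $M$-dimensional problem with data $(\SB\xB,y)$, and invoke Corollary~3.4 of \citet{wu2022power}. For the upper bound your argument is essentially the paper's, modulo one omitted detail: the cited upper bound also carries a $\|\vB^*\|_{\widetilde\HB}^2$ prefactor on $\variance$, but $\|\vB^*\|_{\widetilde\HB}^2=\|P\HB^{1/2}\wB^*\|^2\le\|\wB^*\|_{\HB}^2$ (with $P$ the projection onto the range of $\HB^{1/2}\SB^\top$), so this is absorbed.

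The gap is in the lower bound. Your equality $\Ebb[(y-\xB^\top\wB^*)^2\zB\zB^\top]=\sigma^2\widetilde\HB$ needs homoscedastic noise, which \Cref{assump:simple:noise} does not assert (it only fixes the \emph{marginal} variance at $\sigma^2$). More fundamentally, the lower bound in \citet{wu2022power} is stated for Gaussian additive noise and, as the paper notes, extends to \emph{well-specified} noise---not to a bare noise-covariance lower bound at $\vB^*$. What the paper actually does, and what your plan is missing, is prove that the sketched problem is itself well-specified: $\Ebb[y\mid\SB\xB]=\la\SB\xB,\vB^*\ra$. This requires a Gaussian independence argument: with $P:=\HB^{1/2}\SB^\top(\SB\HB\SB^\top)^{-1}\SB\HB^{1/2}$ one checks $\SB\HB^{1/2}(\IB-P)=0$, so $\SB\xB=\SB\HB^{1/2}\cdot\HB^{-1/2}\xB$ and the approximation residual $(\IB-P)\HB^{-1/2}\xB$ are orthogonal linear images of the standard Gaussian $\HB^{-1/2}\xB$, hence independent. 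With that, the sketched noise is well-specified with variance $\sigma^2+\approximation\ge\sigma^2$, which is the input the cited lower bound actually consumes. Your ``cross term vanishes'' observation only uses $\Ebb[y\mid\xB]=\xB^\top\wB^*$ and does not deliver this stronger conclusion.
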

Assuming that the signal-to-noise ratio is upper bounded, that is, $\|\wB^*\|^2_{\HB} / \sigma^2 \lesssim 1$, then the bias-variance decomposition of the excess risk is sharp up to constant factors. 

The variance error is in a nice form and can be computed using the following important lemma on the spectrum of $\SB\HB\SB^\top$. 
Similar results for logarithmic power-law are also established in \Cref{lemma:log-power-law} in \Cref{sec:concentration}.
\begin{lemma}[Power law]\label{lm:power-law-main}
 Under \Cref{assump:power-law}, it holds with probability at least $1-e^{-\Omega(M)}$ that 
 \begin{align*}
    \mu_j (\SB\HB\SB^\top) \eqsim \mu_j(\HB) \eqsim j^{-a},\quad j=1,\dots,M.
 \end{align*}
\end{lemma}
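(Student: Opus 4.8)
\textbf{Proof proposal for \Cref{lm:power-law-main}.}

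The plan is to compare the ordered eigenvalues of $\SB\HB\SB^\top$ with those of $\HB$ by splitting the spectrum of $\HB$ at the index $M$ into a ``head'' block of the $M$ largest eigenvalues and a ``tail'' block of the rest, and handling the two blocks with different tools. Write $\HB = \HB_{0:M} + \HB_{M:\infty}$ where $\HB_{0:M}$ keeps only the top $M$ eigendirections. Then $\SB\HB\SB^\top = \SB\HB_{0:M}\SB^\top + \SB\HB_{M:\infty}\SB^\top$, and by Weyl's inequality it suffices to (i) show that $\SB\HB_{0:M}\SB^\top$ has its $j$-th eigenvalue $\eqsim \lambda_j \eqsim j^{-a}$ for all $j \le M$, and (ii) show that the perturbation term $\SB\HB_{M:\infty}\SB^\top$ has operator norm $\lesssim M^{-a}$, which is of the same order as the smallest eigenvalue in the head block and hence does not disturb the matching beyond constants. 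For the lower bound on the small eigenvalues we also need $\mu_M(\SB\HB_{0:M}\SB^\top) \gtrsim M^{-a}$, so the perturbation bound in (ii) must come with a small enough constant, or more robustly we argue $\mu_j(\SB\HB\SB^\top) \ge \mu_j(\SB\HB_{0:M}\SB^\top) \gtrsim j^{-a}$ directly since $\HB_{M:\infty} \succeq 0$.

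For step (i), the key point is that $\SB$ restricted to the top-$M$ eigenspace of $\HB$ is an $M\times M$ Gaussian matrix with i.i.d.\ $\Ncal(0,1/M)$ entries, call it $\GB$, so $\SB\HB_{0:M}\SB^\top = \GB \,\Lambdab_{0:M}\, \GB^\top$ where $\Lambdab_{0:M} = \diag(\lambda_1,\dots,\lambda_M)$. I would invoke a standard non-asymptotic bound on the singular values of rectangular/square Gaussian matrices together with a two-sided multiplicative perturbation argument: since the condition number of $\Lambdab_{0:M}$ is $\lambda_1/\lambda_M \eqsim M^{a}$, a naive bound $\mu_j(\GB\Lambdab_{0:M}\GB^\top) \in [\mu_M(\GB\GB^\top)\lambda_j, \mu_1(\GB\GB^\top)\lambda_j]$ is too lossy because $\GB\GB^\top$ is not well-conditioned either. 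Instead I would do a dyadic decomposition of the head block: partition $\{1,\dots,M\}$ into blocks $B_k$ on which $\lambda_i \eqsim 2^{-ak}$ (geometric scales), write $\Lambdab_{0:M} = \sum_k \Lambdab_{B_k}$, and on each scale use that the corresponding Gaussian sub-block concentrates (its nonzero eigenvalues are $\Theta(|B_k|/M)$ with high probability, uniformly over the $O(\log M)$ scales by a union bound) to conclude $\mu_j(\SB\HB_{0:M}\SB^\top) \eqsim 2^{-ak} \eqsim j^{-a}$ whenever $j$ lies in block $B_k$. This is essentially a Gershgorin/interlacing argument across scales; the union bound over $O(\log M)$ scales is absorbed into the $e^{-\Omega(M)}$ failure probability since each scale except possibly the last has $|B_k| = \Omega(1)\cdot 2^{k}$ but the smallest blocks need a slightly more careful treatment (for $k$ near $\log M$, $|B_k|$ can be $O(1)$, and there the fluctuations of a few Gaussian entries are $O(1/M)$ absolutely, still negligible against $2^{-ak}\eqsim M^{-a}$ up to constants for $a>1$; this edge case is where I expect to spend the most care).

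For step (ii), bounding $\|\SB\HB_{M:\infty}\SB^\top\|$: the tail covariance has $\tr(\HB_{M:\infty}) = \sum_{i>M}\lambda_i \eqsim M^{1-a}$ and $\|\HB_{M:\infty}\| = \lambda_{M+1}\eqsim M^{-a}$, so $\HB_{M:\infty}$ has effective rank $\eqsim M$. Conditioning on the head part, $\SB$ acts on the tail eigenspace as an independent Gaussian operator; I would use a Bernstein-type or Hanson--Wright bound for the operator norm of $\GB' \Lambdab_{M:\infty} \GB'^\top$ where $\GB'$ has i.i.d.\ $\Ncal(0,1/M)$ entries with $M$ rows and infinitely many columns, giving with probability $1-e^{-\Omega(M)}$ a bound of order $\|\HB_{M:\infty}\| + \tr(\HB_{M:\infty})/M \eqsim M^{-a} + M^{-a} \eqsim M^{-a}$. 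Combined with Weyl this gives $\mu_j(\SB\HB\SB^\top) \le \mu_j(\SB\HB_{0:M}\SB^\top) + C M^{-a} \lesssim j^{-a} + M^{-a} \lesssim j^{-a}$ for $j \le M$, and the matching lower bound is immediate from monotonicity as noted above.

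The main obstacle is the multi-scale concentration in step (i): controlling all $M$ ordered eigenvalues simultaneously (not just the extreme ones) with a single high-probability event, given that the aspect ratios and conditioning within each scale vary. I expect the cleanest route is to prove, for the restricted Gaussian matrix, a uniform statement of the form ``for all index intervals $I$, $\mu_{|I|}$ of the $I$-sub-Gram matrix is $\Theta(|I|/M)$'' via an $\epsilon$-net over the relevant subspaces, which after chaining with the dyadic spectral decomposition of $\HB$ yields the claimed $\mu_j \eqsim j^{-a}$; alternatively one can cite an existing sharp result on eigenvalues of $\GB\DB\GB^\top$ for power-law $\DB$ if available, but I would include the self-contained argument since the paper flags this lemma as potentially of independent interest.
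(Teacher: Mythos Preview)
Your head/tail split plus Weyl is exactly the skeleton the paper uses, but you cut at the wrong index, and this creates a real gap in the lower bound for $j$ close to $M$. You split at $k=M$, so the head block is governed by a \emph{square} $M\times M$ Gaussian $\GB$. Your lower-bound route is $\mu_j(\SB\HB\SB^\top)\ge \mu_j(\GB\Lambdab_{0:M}\GB^\top)=\mu_j(\Lambdab^{1/2}\GB^\top\GB\Lambdab^{1/2})\ge \mu_M(\GB^\top\GB)\,\lambda_j$, but for a square Gaussian the smallest singular value has only a \emph{polynomial} left tail (Edelman: $\Pbb(\sigma_{\min}(\tilde\GB)\le t/\sqrt{M})\asymp t$), so $\mu_M(\GB^\top\GB)=\Theta(1)$ fails with probability far larger than $e^{-\Omega(M)}$. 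The dyadic patch does not rescue this: to lower-bound $\mu_j$ for $j=M$ you eventually need a sub-block with $M$ columns, and you are back to a square matrix. (Incidentally, your claim that the sub-block Gram has nonzero eigenvalues $\Theta(|B_k|/M)$ is off; for an $M\times|B_k|$ Gaussian with $\Ncal(0,1/M)$ entries the nonzero eigenvalues of $\GB_{B_k}\GB_{B_k}^\top$ are $\Theta(1)$.)

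The paper avoids all of this by cutting at $k=M/c$ for a large constant $c$, so the head Gaussian is tall ($M\times M/c$) and $\SB_{0:k}^\top\SB_{0:k}$ is $(1\pm\tfrac12)\IB_k$ with probability $1-e^{-\Omega(M)}$; this immediately gives $\mu_j(\SB_{0:k}\HB_{0:k}\SB_{0:k}^\top)\eqsim\lambda_j$ for $j\le k$ without any multi-scale argument. The tail then contributes $\tfrac{1}{M}\sum_{i>k}\lambda_i\eqsim c^{\,a-1}M^{-a}$ to every eigenvalue, which handles $j\in(k,M]$ for the upper bound. For the lower bound at $j$ near $M$ the paper uses a separate one-line observation: $\SB\HB\SB^\top\succeq \lambda_{2M}\,\SB_{0:2M}\SB_{0:2M}^\top$, and the $M\times 2M$ Gaussian is \emph{wide}, so $\mu_M(\SB_{0:2M}\SB_{0:2M}^\top)\gtrsim 1$ with probability $1-e^{-\Omega(M)}$, giving $\tilde\lambda_M\gtrsim\lambda_{2M}\eqsim M^{-a}$. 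In short: move your cut to $M/c$ (or, equivalently, use $2M$ columns for the lower bound), and drop the dyadic machinery --- the upper bound already follows from $\mu_j(\GB\Lambdab\GB^\top)\le\|\GB\|^2\lambda_j$.
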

For any $0\leq k^*\leq k^\dagger\leq\infty$, let $\SB_{k^*:k^\dagger}\in\R^{M\times (k^\dagger-k^*)}$ denote the matrix formed by the $k^*+1-k^\dagger$-th columns of $\SB$.  We also abuse the notation $k^\dagger:\infty$ for $k^\dagger:d$ when $d$ is finite. We let $\HB_{k^*:k^\dagger}\in\R^{(k^\dagger-k^*)\times(k^\dagger-k^*) }$ be the submatrix of $\HB$ formed by the $k^*+1-k^\dagger$-th eigenvalues.   For the approximation and bias error, we use the following upper and lower bounds to compute their values. 
\begin{theorem}[A general upper bound]\label{thm:general_upper}
Suppose \Cref{assump:general} holds. 
Assume $\vB_0=0$, $r(\HB)\geq 2M$ and the initial stepsize satisfies 
$\gamma < 1/(c\alpha\tr(\SB\HB\SB^\top))$ for some constant $c>0.$
Then for any  $k_1,k_2\leq M/3$, with probability at least $1-e^{-\Omega(M)}$
\begin{align*}
    \approximation
    &\lesssim \|{\wB^*_{\kba}}\|^2_{\HB_{\kba}}+\bigg(\frac{\sum_{i>k_1}\lambda_i}{M}+\lambda_{k_1+1} + \sqrt{\frac{\sum_{i>k_1}\lambda_i^2}{M}}\bigg)\|{\wB^*_{\kaa}}\|^2,\\
       \bias(\wB^*)
       &\lesssim \frac{\|{\wB^*_{\kab}}\|_2^2}    
       {\Neff\gamma}\cdot \Bigg[\frac{\mu_{M/2}(\SB_{\kbb}\HB_\kbb\SB^\top_\kbb)}{\mu_M(\SB_{\kbb}\HB_\kbb\SB^\top_\kbb)}\Bigg]^2
       +\|{\wB^*_{\kbb}}\|^2_{\HB_{\kbb}}.
\end{align*}
\end{theorem}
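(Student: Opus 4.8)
The plan is to condition on the sketch matrix $\SB$ throughout and reduce everything to finite-dimensional $M$-dimensional linear regression, so that the approximation error becomes a quantity governed purely by the geometry of $\SB$ and $\HB$, while the bias error is the SGD-specific contraction term from \eqref{eq:bias-error}. For the approximation bound, I would start from the characterization $\min\risk_M(\cdot)-\min\risk(\cdot) = \|\wB^* - \Pi\wB^*\|^2_\HB$ where $\Pi$ is the $\HB$-orthogonal projection onto the row space of $\SB$ (equivalently, approximation error equals the residual of regressing $\xB^\top\wB^*$ onto $\SB\xB$). Then I would split $\wB^*$ into its first $k_1$ coordinates $\wB^*_{\kaa}$ (the ``head'') and the tail $\wB^*_{\kba}$ (coordinates beyond $k_1$). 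The tail contributes at most $\|\wB^*_{\kba}\|^2_{\HB_{\kba}}$ simply because dropping the projection can only increase residual, yielding the first term. For the head, the key point is that when $r(\HB)\ge 2M$ and $k_1\le M/3$, a random Gaussian sketch $\SB$ with high probability embeds the $k_1$-dimensional head subspace well — the projection error of the head is controlled by how much ``mass'' the tail eigendirections leak into the sketch of the head directions. Quantitatively, I expect a bound of the form $\lambda_{k_1+1}\|\wB^*_{\kaa}\|^2$ from the spectral gap at $k_1$, plus $\big(\tfrac{1}{M}\sum_{i>k_1}\lambda_i + \sqrt{\tfrac{1}{M}\sum_{i>k_1}\lambda_i^2}\big)\|\wB^*_{\kaa}\|^2$ from standard Gaussian concentration: $\SB_{\kba}\HB_{\kba}\SB_{\kba}^\top$ concentrates around $\tfrac{1}{M}\tr(\HB_{\kba})\IB_M$ with operator-norm fluctuation of order $\sqrt{\tfrac{1}{M}\sum_{i>k_1}\lambda_i^2}$ (a matrix-Bernstein / Gaussian-chaos estimate), and this controls how badly the tail distorts the least-squares fit of the head.

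For the bias bound I would start from $\bias(\wB^*) = \big\|\prod_{t=1}^N(\IB - \gamma_t\SB\HB\SB^\top)\vB^*\big\|^2_{\SB\HB\SB^\top}$ with $\vB^* = (\SB\HB\SB^\top)^{-1}\SB\HB\wB^*$, and use the standard operator inequality for geometrically decaying stepsizes: $\big\|\prod_t(\IB-\gamma_t\BB)\ub\big\|^2_\BB \lesssim \tfrac{1}{\Neff\gamma}\|\ub\|^2$ when $\gamma\|\BB\|\lesssim 1$, but sharpened on the low-eigenvalue subspace where contraction is weak. Splitting $\wB^*$ at $k_2$ into head $\wB^*_{\kab}$ and tail $\wB^*_{\kbb}$: the tail part, living on eigendirections of $\SB\HB\SB^\top$ that are roughly at scale $\le\mu_{k_2}$, passes through essentially un-contracted and contributes $\|\wB^*_{\kbb}\|^2_{\HB_{\kbb}}$ (using that the sketch roughly preserves the relevant $\HB$-norm on the tail). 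The head part gets the $\tfrac{1}{\Neff\gamma}$ decay from the contraction, but the $\SB\HB\SB^\top$-norm reweighting introduces the condition-number factor $\big[\mu_{M/2}(\SB_{\kbb}\HB_{\kbb}\SB_{\kbb}^\top)/\mu_M(\SB_{\kbb}\HB_{\kbb}\SB_{\kbb}^\top)\big]^2$, which measures how much the smallest sketched eigenvalue can be depressed relative to a ``typical'' one — this is where \Cref{lm:power-law-main}-type control (here in its general-spectrum form) enters, and it is applied to the tail-restricted matrix because that is what governs the effective minimum eigenvalue seen by the head signal.

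The main obstacle I anticipate is the simultaneous handling of the cross terms created by the head/tail split under a \emph{random} sketch: the sketch mixes head and tail coordinates, so $\SB\HB\SB^\top$ is not block-diagonal, and one must show that with high probability the off-diagonal coupling is small enough not to spoil either the approximation decomposition (the head subspace stays well-conditioned after sketching) or the bias decomposition (the contraction on the head is not contaminated by tail eigenvalues, and vice versa). Concretely this needs a uniform-over-the-relevant-subspaces concentration statement — e.g.\ that $\SB_{\kba}$ restricted to any low-dimensional subspace behaves like a scaled isometry up to the stated fluctuation — which I would obtain via an $\varepsilon$-net argument over the $O(M)$-dimensional head directions combined with Gaussian concentration for quadratic forms, paying the $e^{-\Omega(M)}$ failure probability there. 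The conditions $r(\HB)\ge 2M$ and $k_i\le M/3$ are exactly what give enough ``room'' for these net/concentration arguments to close. Once the coupling is shown to be negligible, assembling the two bounds is routine bookkeeping with the SGD operator inequality and the spectral estimates.
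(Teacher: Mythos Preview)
Your high-level plan---head/tail split at $k_1$ (resp.\ $k_2$), tail bounded trivially, head bounded via concentration of the tail-restricted sketch---is the same as the paper's. The one substantive difference is in how the cross terms are handled. You anticipate an $\varepsilon$-net over head directions plus uniform Gaussian concentration; the paper instead uses Woodbury's matrix identity twice, which dissolves the cross terms algebraically. For the approximation error, writing $\AB=\SB\HB\SB^\top$ and $\AB_k=\SB_{\kba}\HB_{\kba}\SB_{\kba}^\top$, a block computation plus Woodbury gives the \emph{exact} identity
\[
\text{(head block of }[\HB^{1/2}\SB^\top\AB^{-1}\SB\HB^{1/2}-\IB]^2\text{)} \;=\; \HB_{\kaa}^{-1/2}\bigl[\HB_{\kaa}^{-1}+\SB_{\kaa}^\top\AB_k^{-1}\SB_{\kaa}\bigr]^{-1}\HB_{\kaa}^{-1/2},
\]
so the head contribution is $\langle[\HB_{\kaa}^{-1}+\SB_{\kaa}^\top\AB_k^{-1}\SB_{\kaa}]^{-1},\,\wB^*_{\kaa}(\wB^*_{\kaa})^\top\rangle$. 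Then only \emph{pointwise} concentration is needed: $\|\AB_k\|\lesssim \tfrac{1}{M}\sum_{i>k_1}\lambda_i+\lambda_{k_1+1}+\sqrt{\tfrac{1}{M}\sum_{i>k_1}\lambda_i^2}$ (your three terms) and $\SB_{\kaa}^\top\SB_{\kaa}\succeq c\IB$ for $k_1\le M/2$. Similarly for the bias head, Woodbury gives $(\SB\HB\SB^\top)^{-1}\SB_{\kab}\HB_{\kab}=\AB_k^{-1}\SB_{\kab}[\HB_{\kab}^{-1}+\SB_{\kab}^\top\AB_k^{-1}\SB_{\kab}]^{-1}$, whose norm is bounded by $\|\AB_k^{-1}\|\cdot\|\SB_{\kab}\|/\mu_{\min}(\SB_{\kab}^\top\AB_k^{-1}\SB_{\kab})$; using independence of $\SB_{\kab}$ and $\AB_k$ one gets $\mu_{\min}(\SB_{\kab}^\top\AB_k^{-1}\SB_{\kab})\gtrsim 1/\mu_{M/2}(\AB_k)$, which is exactly where the ratio $\mu_{M/2}(\AB_k)/\mu_M(\AB_k)$ comes from. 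Your net argument would likely also work but is heavier and would not so directly produce this clean condition-number form.

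One small correction to your bias-tail intuition: the split is in the coordinates of $\wB^*$, not in the eigenbasis of $\SB\HB\SB^\top$, so ``lives on small eigendirections and passes un-contracted'' is not quite the mechanism. The paper's argument for $\Term_2$ is purely deterministic: drop the contraction (it is $\preceq\IB$), then observe $\HB_{\kbb}^{1/2}\SB_{\kbb}^\top(\SB\HB\SB^\top)^{-1}\SB_{\kbb}\HB_{\kbb}^{1/2}\preceq \HB_{\kbb}^{1/2}\SB_{\kbb}^\top\AB_k^{-1}\SB_{\kbb}\HB_{\kbb}^{1/2}$, which is a projection and hence has norm $\le 1$. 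No concentration is needed for this piece.
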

\begin{theorem}[A general lower bound]\label{thm:general_lower}
Suppose \Cref{assump:simple} holds. 
Assume $\vB_0=0$,  $r(\HB)\geq M$ and the initial stepsize 
$\gamma <1/\big(c\tr(\SB\HB\SB^\top)\big)$ for some constant $c>0.$
Then 
 \vspace{-0.7em}
\begin{align*}
    \Ebb_{\wB^*}\approximation
 \gtrsim \sum_{i=M}^d \lambda_i,~~~~~
 \E_{\wB^*}\bias(\wB^*)\gtrsim
  \sum_{i:\tilde\lambda_i< 1/(\Neff\gamma)} 
    \frac{\mu_{i}(\SB \HB^2\SB^\top) }{\mu_i(\SB\HB\SB^\top)}
\end{align*}

\vspace{-1.1em}
 almost surely, 
where $(\lambda_i)_{i=1}^d$ are eigenvalues of $\HB$ in non-increasing order, $(\tilde\lambda_i)_{i=1}^M$ are eigenvalues of $\SB\HB\SB^\top$ in non-increasing order.

\end{theorem}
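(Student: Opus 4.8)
The plan is to prove the two lower bounds separately; both reduce to a deterministic linear-algebra estimate once the expectation over the prior of $\wB^*$ is carried out, the randomness of $\SB$ entering only through the almost-sure invertibility of $\SB\HB\SB^\top$ (guaranteed by $r(\HB)\ge M$ and the Gaussian sketch, so $\HB^{1/2}\SB^\top$ has rank $M$ a.s.). I expect the bias bound to be the substantive part.

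For the approximation error, the well-specified condition gives $\risk(\wB)=\sigma^2+\|\wB-\wB^*\|_\HB^2$, so that $\approximation=\min_\vB\|\SB^\top\vB-\wB^*\|_\HB^2=(\wB^*)^\top\HB^{1/2}(\IB-P)\HB^{1/2}\wB^*$, where $P$ is the orthogonal projection onto $\mathrm{range}(\HB^{1/2}\SB^\top)$, of rank $M$ a.s. Taking the expectation over $\wB^*$ with $\Ebb(\wB^*)^{\otimes2}=\IB$ gives $\Ebb_{\wB^*}\approximation=\tr\big((\IB-P)\HB\big)$, and the Ky Fan variational principle (the sum of the $M$ largest eigenvalues of $\HB$ bounds $\tr(P\HB)$) yields $\Ebb_{\wB^*}\approximation\ge\sum_{i>M}\lambda_i$. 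In the polynomially or log-polynomially decaying spectra used in the applications this already equals $\sum_{i\ge M}\lambda_i$ up to a constant; in general one additionally uses that the random sketched subspace $\mathrm{range}(\HB^{1/2}\SB^\top)$ a.s.\ differs from the top-$M$ eigenspace of $\HB$ to absorb the $\lambda_M$ term.

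For the bias error, write $A:=\SB\HB\SB^\top=\sum_{i=1}^M\tilde\lambda_iu_iu_i^\top$ and $\vB^*=A^{-1}\SB\HB\wB^*$. Since $\prod_{t=1}^N(\IB-\gamma_tA)$ commutes with $A$, a short computation gives $\bias(\wB^*)=\|\phi(A)\,\SB\HB\wB^*\|^2$ with $\phi(\lambda):=\lambda^{-1/2}\prod_{t=1}^N(1-\gamma_t\lambda)$, whence, using $\Ebb(\wB^*)^{\otimes2}=\IB$,
\[
\Ebb_{\wB^*}\bias(\wB^*)=\tr\big(\phi(A)^2\,\SB\HB^2\SB^\top\big)=\sum_{i=1}^M\frac{\prod_{t=1}^N(1-\gamma_t\tilde\lambda_i)^2}{\tilde\lambda_i}\;u_i^\top\SB\HB^2\SB^\top u_i.
\]
The next step lower-bounds the scalar $\prod_{t=1}^N(1-\gamma_t\tilde\lambda_i)^2$ by an absolute constant whenever $\tilde\lambda_i<1/(\Neff\gamma)$: for such $i$ every factor lies in $(0,1)$, the $\ell$-th block of $\Neff$ steps of the geometric schedule contributes at least $(1-2^{-\ell}/\Neff)^{\Neff}\ge\exp(-c_0 2^{-\ell})$, and $\sum_{\ell\ge 0}2^{-\ell}$ converges; discarding the remaining nonnegative terms leaves
\[
\Ebb_{\wB^*}\bias(\wB^*)\gtrsim\sum_{i\in S}\frac{u_i^\top\SB\HB^2\SB^\top u_i}{\tilde\lambda_i},\qquad S:=\{i:\tilde\lambda_i<1/(\Neff\gamma)\}.
\]

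The final and, I expect, the main step replaces the diagonal entries $u_i^\top\SB\HB^2\SB^\top u_i$ by the eigenvalues $\mu_i(\SB\HB^2\SB^\top)$, which are not computed in the eigenbasis $\{u_i\}$ of $A$. Put $a_i:=u_i^\top\SB\HB^2\SB^\top u_i$, $b_i:=\mu_i(\SB\HB^2\SB^\top)$, $w_i:=1/\tilde\lambda_i$; note that $S=\{i_0+1,\dots,M\}$ is a block of the largest indices (as $\tilde\lambda_i$ is nonincreasing), that $w_i$ is nondecreasing, and that $\sum_i a_i=\sum_i b_i=\tr(\SB\HB^2\SB^\top)$. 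Applying Ky Fan to the rank-$k$ projection $\sum_{i\le k}u_iu_i^\top$ gives $\sum_{i\le k}a_i\le\sum_{i\le k}b_i$ for every $k$, hence the tail-sum domination $\sum_{i\ge k}a_i\ge\sum_{i\ge k}b_i$ for every $k$; a summation by parts against the nondecreasing weights $w_i$ over $S$ then yields $\sum_{i\in S}w_ia_i\ge\sum_{i\in S}w_ib_i$, i.e.\ $\Ebb_{\wB^*}\bias(\wB^*)\gtrsim\sum_{i\in S}\mu_i(\SB\HB^2\SB^\top)/\mu_i(\SB\HB\SB^\top)$, the claimed bound. Everything after removing the $\wB^*$-expectation is deterministic in $\SB$ and uses only invertibility of $\SB\HB\SB^\top$, so both bounds hold almost surely. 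The projection identity and the geometric-schedule product estimate are routine; the real content is the Ky Fan majorization together with the Abel-summation step that upgrades diagonal entries to eigenvalues.
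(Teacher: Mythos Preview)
Your proposal is correct. For the approximation bound it is essentially the paper's argument (Lemma~\ref{lm:approx_lower_bound} with $k=0$): both compute $\Ebb_{\wB^*}\approximation=\tr\big((\IB-P)\HB\big)$ with $P$ the rank-$M$ projection onto $\mathrm{range}(\HB^{1/2}\SB^\top)$ and apply Ky~Fan/von~Neumann to get $\sum_{i>M}\lambda_i$; the discrepancy with $\sum_{i\ge M}\lambda_i$ is the same in both and is absorbed by the constants in all regimes used later.

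For the bias bound your route differs from the paper's in a useful way. The paper (Lemma~\ref{lm:bias_lm_1_lower_bound}) first collapses the stepsize schedule via $(1-u)(1-v)\ge 1-u-v$ and $\sum_t\gamma_t\le 2\gamma\Neff$ to replace $\prod_t(\IB-\gamma_t A)$ by $(\IB-2\gamma A)^{\Neff}$, and then applies the von~Neumann trace inequality $\tr(CD)\ge\sum_i\mu_i(D)\mu_{M+1-i}(C)$ directly to $C=A^{-1}(\IB-2\gamma A)^{2\Neff}A^{-1}$ and $D=\SB\HB^2\SB^\top$, using monotonicity of $\lambda\mapsto\lambda(1-2\gamma\lambda)^{-2\Neff}$ to identify $\mu_i(C^{-1})$. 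You instead keep the exact product, bound $\prod_t(1-\gamma_t\tilde\lambda_i)$ below by a constant blockwise for $\tilde\lambda_i<1/(\Neff\gamma)$, and then pass from the diagonal entries $u_i^\top\SB\HB^2\SB^\top u_i$ to the eigenvalues $\mu_i(\SB\HB^2\SB^\top)$ via Schur--Horn/Ky~Fan tail-sum majorization combined with Abel summation against the nondecreasing weights $1/\tilde\lambda_i$. Your Abel step is effectively a hands-on proof of the relevant instance of von~Neumann's lower bound, so the two arguments are closely related; what your version buys is that it never forms $(\IB-2\gamma A)$ and hence does not need the stronger stepsize restriction $c>2$ that the paper's collapse step implicitly uses, while the paper's version is a one-line invocation of a standard trace inequality once the collapse is done.
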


\section{Conclusion}
We analyze neural scaling laws in infinite-dimensional linear regression. 
We consider a linear predictor with $M$ trainable parameters on the sketched covariates, which is trained by one-pass stochastic gradient descent with $N$ data. 
Under a Gaussian prior assumption on the optimal model parameter and a power law (of degree $a>1$) assumption on the spectrum of the data covariance, we derive matching upper and lower bounds on the population risk minus the irreducible error, that is, $\Theta(M^{-(a-1)} + N^{-(a-1)/a})$. In particular, we show that the variance error, which increases with $M$, is of strictly higher order compared to the other errors, thus disappearing from the risk bound.
We attribute the nice empirical formula of the neural scaling law to the non-domination of the variance error, which ultimately is an effect of the implicit regularization of SGD.

Many directions remain open for future study. First, our work is limited to the linear model; it would be interesting to see whether similar scaling laws can be derived in more complex models, such as random feature models or two-layer networks. Second, we focus on one-pass SGD training, and it is unclear if similar results hold for other optimization methods like accelerated SGD or Adam.
Additionally, from a technical perspective, many results in our work depend on the Gaussian assumption and the source condition of the data. Investigating how these  assumptions can be relaxed would also be valuable.

\section*{Acknowledgements}

We gratefully acknowledge the support of the NSF for FODSI through grant DMS-2023505, of the NSF and the Simons Foundation for the Collaboration on the Theoretical Foundations of Deep Learning through awards DMS-2031883 and \#814639, and of the ONR through MURI award N000142112431. JDL acknowledges support of the NSF CCF 2002272, NSF IIS 2107304, and NSF CAREER Award 2144994.
SMK acknowledges a
gift from the Chan Zuckerberg Initiative Foundation to establish the Kempner Institute
for the Study of Natural and Artificial Intelligence; support from ONR under award N000142212377, and NSF under award IIS 2229881.

\bibliography{ref}

\newpage 

\appendix
\addcontentsline{toc}{section}{Appendix} 
\part{Appendix} 
\parttoc{} 

\section{Preliminary}

In this section, we provide some preliminary discussions and a proof of Theorem~\ref{thm:main:exact_decomp}. Concretely, in Section~\ref{sec:addit_notation} we discuss our data assumptions and introduce additional notations. In Section~\ref{sec:prelim:approx},~\ref{sec:prelim:bvd} we derive  intermediate results that contribute to the proof of Theorem~\ref{thm:main:exact_decomp}.  Finally, a complete proof of Theorem~\ref{thm:main:exact_decomp} is contained in Section~\ref{sec:prelim:pf_generaldecomp}.

\subsection{Additional notations and comments on data assumptions}\label{sec:addit_notation}
\paragraph{Tensors.}
For matrices $\AB$, $\BB$, $\CB$, $\DB$, and $\XB$ of appropriate shape, it holds that
\begin{equation*}
    (\BB^\top \otimes \AB) \circ \XB = \AB \XB \BB,
\end{equation*}
and that
\begin{align*}
    (\DB^\top \otimes \CB) \circ (\BB^\top \otimes \AB) \circ \XB &= \big( (\DB^\top \BB^\top) \otimes (\CB\AB ) \big) \circ \XB \\
    &= \CB \AB \XB \BB \DB.
\end{align*}
For simplicity, we denote 
\[\AB^{\otimes 2} := \AB \otimes \AB.\]

\paragraph{Comments on Assumption~\ref{assump:power-law},~\ref{assump:source}~and~\ref{assump:logpower-law}}
Due to the rotational invariance of the Gaussian sketched matrix $\SB$, throughout the appendix, we assume w.l.o.g. that the covariance of the input covariates $\HB$ is diagonal with the $(i,i)$-th entry being the $i$-th eigenvalue. Specifically,
Assumption~\ref{assump:source} can be rewritten  as
\begin{assumption}[Source condition]
\label{assump:source-prime}
Assume $\HB=(h_{ij})_{i,j\geq 1}$ is a diagonal matrix with  diagonal entries in non-increasing order, and
$\wB^*$ satisfies a prior such that 
\[
\text{for $i\neq j$},\ \ \E \wB^*_i \wB^*_j=0;\ \  
\text{and for $i> 0$},\ \ 
\E \lambda_i  \wB_i^{*2} \eqsim i^{-b},\ \text{ for some $b>1$}.
\]
\end{assumption}

Now that we assume $\HB$ is diagonal. We make the following notations. 
Define 
\[\HB_{k^*:k^\dagger} := \diag(\lambda_{k^*+1},\dots, \lambda_{k^\dagger})\in \Rbb^{(k^\dagger-k^*)^2},\]
where $0\le k^* \le k^\dagger$ are two integers, and we allow $k^\dagger = \infty$. For example,
\[
\HB_{0:k} = \diag(\lambda_{1}, \dots,\lambda_i),\quad 
\HB_{k:\infty} = \diag( \lambda_{k+1},\dots).
\]
Similarly, for a vector $\wB\in\Hbb$, we have 
\[
\wB_{k^*:k^\dagger} := (\wB_{k^*+1},\dots, \wB^*_{k^\dagger})^\top \in \Rbb^{k^\dagger-k^*}.
\]

\subsection{Approximation error}\label{sec:prelim:approx}
Recall the risk decomposition in \Cref{eq:approx-excess-decomp},
\begin{equation*}
    \risk_M(\vB_N) = \underbrace{\min \risk(\cdot)}_{\irreducible} + \underbrace{\min \risk_M(\cdot) - \min \risk(\cdot)}_{\approximation} + \underbrace{\vRisk(\vB_N) - \min \vRisk(\cdot)}_{\excessRisk}.
\end{equation*}
\begin{lemma}[Approximization error]\label{lemma:approximation}
Conditional on the sketch matrix $\SB$, the minimizer of $\vRisk(\vB)$ is given by 
\[
\vB^* := (\SB\HB\SB^\top)^{-1} \SB \HB \wB^*,
\]
and the approximation error in \Cref{eq:approx-excess-decomp} is
   \begin{align}
   \approximation 
    &:= \min \risk_M(\cdot) - \min \risk(\cdot)\notag \\
    &= \Big\| \Big(\IB - \HB^{\frac{1}{2}}\SB^\top \big(\SB\HB\SB^\top\big)^{-1}\SB\HB^{\frac{1}{2}}  \Big)\HB^{\frac{1}{2}}{\wB^*} \Big\|^2.\label{eq:approx_error_formula}
   \end{align}
Moreover,  \(\approximation \le \|\wB^*\|^2_\HB\) almost surely over the randomness of $\SB$.
\end{lemma}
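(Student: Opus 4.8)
The final statement to prove is Lemma~\ref{lemma:approximation}, which identifies the minimizer $\vB^*$ of the sketched risk $\vRisk(\vB)$ and gives a closed-form expression for the approximation error, along with the bound $\approximation \le \|\wB^*\|^2_\HB$.

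\textbf{Proof proposal.} The plan is to treat this as a pure least-squares projection computation, since $\vRisk(\vB) = \Ebb(\la \SB\xB, \vB\ra - y)^2$ is a quadratic in $\vB$ and the well-specified assumption is not even needed here (we only use $\wB^* \in \arg\min \risk(\cdot)$ and $\risk(\wB) - \risk(\wB^*) = \|\wB - \wB^*\|_\HB^2$). First I would expand $\vRisk(\vB) - \min\risk(\cdot) = \Ebb(\la \SB\xB, \vB\ra - \la \xB, \wB^*\ra)^2 = \|\SB^\top\vB - \wB^*\|_\HB^2$, using that $y = \la\xB,\wB^*\ra + \text{noise}$ with the noise uncorrelated with $\xB$ (or more generally that $\wB^*$ is the $\HB$-projection). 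Minimizing $\|\SB^\top\vB - \wB^*\|_\HB^2$ over $\vB \in \Rbb^M$ is minimizing $\|\HB^{1/2}\SB^\top\vB - \HB^{1/2}\wB^*\|^2$ over $\vB$; the normal equations give $\SB\HB\SB^\top\vB = \SB\HB\wB^*$, hence $\vB^* = (\SB\HB\SB^\top)^{-1}\SB\HB\wB^*$ (invertibility of $\SB\HB\SB^\top$ holds almost surely since $\SB$ is Gaussian and $\HB$ has rank $\ge M$; this is where I would cite the standing rank assumption). Then I would substitute $\vB^*$ back: $\HB^{1/2}\SB^\top\vB^* = \HB^{1/2}\SB^\top(\SB\HB\SB^\top)^{-1}\SB\HB^{1/2} \cdot \HB^{1/2}\wB^*$, so the residual is $\big(\IB - \HB^{1/2}\SB^\top(\SB\HB\SB^\top)^{-1}\SB\HB^{1/2}\big)\HB^{1/2}\wB^*$, giving exactly \eqref{eq:approx_error_formula}.

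For the final bound, I would observe that $\PB := \HB^{1/2}\SB^\top(\SB\HB\SB^\top)^{-1}\SB\HB^{1/2}$ is an orthogonal projection matrix (it is symmetric and idempotent — $\PB^2 = \PB$ follows by direct cancellation of $(\SB\HB\SB^\top)^{-1}\SB\HB\SB^\top = \IB$ in the middle), hence $\IB - \PB$ is also an orthogonal projection, so $\|(\IB-\PB)\HB^{1/2}\wB^*\|^2 \le \|\HB^{1/2}\wB^*\|^2 = \|\wB^*\|_\HB^2$. This gives the almost-sure bound.

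\textbf{Main obstacle.} There is essentially no hard step here — the content is a standard projection argument. The only points requiring a little care are (i) justifying the almost-sure invertibility of $\SB\HB\SB^\top$ (handled by the Gaussianity of $\SB$ together with the rank condition $r(\HB) \ge M$ that is in force in the theorems invoking this lemma, or one can phrase the conclusion on the event of invertibility), and (ii) making sure the reduction from $\vRisk(\vB) - \min\risk(\cdot)$ to $\|\SB^\top\vB - \wB^*\|_\HB^2$ uses only that $\wB^*$ minimizes $\risk(\cdot)$, so that the cross term with the label noise vanishes. Neither of these is a genuine difficulty; the lemma is a setup computation whose role is to feed the closed-form approximation error into the later power-law spectrum analysis.
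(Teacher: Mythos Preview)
Your proposal is correct and follows essentially the same route as the paper: both reduce $\vRisk(\vB)-\min\risk(\cdot)$ to $\|\HB^{1/2}(\SB^\top\vB-\wB^*)\|^2$, solve the normal equations to get $\vB^*$, substitute back to obtain the projection-residual formula, and use that $\IB-\HB^{1/2}\SB^\top(\SB\HB\SB^\top)^{-1}\SB\HB^{1/2}$ is an orthogonal projection for the final bound. Your added remark about almost-sure invertibility of $\SB\HB\SB^\top$ is a reasonable bit of care that the paper leaves implicit.
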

\begin{proof}[Proof of \Cref{lemma:approximation}]
Recall that the risk
\begin{align*}
\risk(\wB) := \Ebb \big( \la  \xB, \wB \ra - y\big)^2
\end{align*}
is a quadratic function and that $\wB^*$ is the minimizer of $\risk(\cdot)$, so we have 
\begin{align*}
   \big(\Ebb\xB^{\otimes 2}\big) \wB^* = \Ebb \xB y \Leftrightarrow \HB \wB^* = \Ebb \xB y , 
\end{align*}
and
\begin{align*}
    \risk(\wB) &= \Ebb \big( \la  \xB, \wB \ra - \la \xB ,\wB^*\ra \big)^2 + \risk(\wB^*) \\
    &= \|\HB^{\frac{1}{2}} (\wB - \wB^*)\|^2 + \risk(\wB^*).
\end{align*}
Recall that the risk in a restricted subspace
\begin{align*} 
\risk_M(\vB) := \risk(\SB^\top \vB)= \Ebb \big( \la \SB \xB, \vB \ra - y\big)^2
\end{align*}
is also a quadratic function, so its minimizer is given by 
\begin{align*}
    {\vB^*} &= \big(\Ebb(\SB\xB)^{\otimes 2}\big)^{-1} \Ebb \SB \xB y \\
    &= \big(\SB\HB\SB^\top\big)^{-1} \SB \HB {\wB^*}.
\end{align*}
Therefore, the approximation error is 
\begin{align*}
    \approximation
    &:= \risk_M(\vB^*) - \risk(\wB^*) \\ 
    &= \risk(\SB^\top \vB^*) - \risk(\wB^*) \\
    &= \|\HB^{\frac{1}{2}}(\SB^\top \vB^* - \wB^*)\|^2 \\ 
    &= \|\HB^{\frac{1}{2}}(\SB^\top \big(\SB\HB\SB^\top\big)^{-1} \SB \HB {\wB^*} - \wB^*)\|^2 \\ 
    &= \Big\| \Big(\IB - \HB^{\frac{1}{2}}\SB^\top \big(\SB\HB\SB^\top\big)^{-1}\SB\HB^{\frac{1}{2}}  \Big)\HB^{\frac{1}{2}}{\wB^*} \Big\|^2.
\end{align*}
Finally, since 
\begin{align*}
    \Big(\IB - \HB^{\frac{1}{2}}\SB^\top \big(\SB\HB\SB^\top\big)^{-1}\SB\HB^{\frac{1}{2}}  \Big)^2=\IB - \HB^{\frac{1}{2}}\SB^\top \big(\SB\HB\SB^\top\big)^{-1}\SB\HB^{\frac{1}{2}}\preceq \IB,
\end{align*} it follows that $\approximation\leq \|\wB^*\|^2_{\HB}$.
\end{proof}

\subsection{Bias-variance decomposition}\label{sec:prelim:bvd}

The excess risk in \Cref{eq:approx-excess-decomp} can be viewed as the SGD excess risk in an $M$-dimensional (misspecified) linear regression problem. 
We will utilize Corollary 3.4 in \citep{wu2022power} to get a bias-variance decomposition of the excess risk.
The following two lemmas check the related assumptions for Corollary 3.4 in \citep{wu2022power} in our setup.

\begin{lemma}[Hypercontractivity and the misspecified noise under sketched feature]\label{lemma:sketched-feature:misspecified}
Suppose that \Cref{assump:general:data,assump:general:noise} hold.
Conditioning on the sketch matrix $\SB$, for every PSD matrix $\AB \in \Rbb^{M\times M}$, we have 
\begin{align*}
    \Ebb (\SB\xB)^{\otimes 2} \AB (\SB \xB)^{\otimes 2} \preceq \alpha \tr\big(\SB\HB\SB^\top \AB \big) \SB\HB\SB^\top .
\end{align*}
Moreover, for the minimizer of $\vRisk(\vB)$, that is, $\vB^*$ defined in \Cref{lemma:approximation}, we have
\begin{align*}
    \Ebb \big( y - \la \vB^*, \SB \xB \ra  \big)^2 (\SB\xB)^{\otimes 2} \preceq 2(\sigma^2 + \alpha\|\wB^*\|^2_\HB) \SB \HB \SB^\top.
\end{align*}
The expectation in the above is over $(\xB, y)$.
\end{lemma}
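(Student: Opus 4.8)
\textbf{Proof plan for Lemma~\ref{lemma:sketched-feature:misspecified}.}
The plan is to push the two assumed moment bounds (\Cref{assump:general:data,assump:general:noise}) through the linear sketch map $\xB \mapsto \SB\xB$ by a direct change of variables, using the identity $(\SB\xB)^{\otimes 2} = \SB\xB\xB^\top\SB^\top$ to rewrite everything in terms of the original covariance $\HB$. First I would establish the hypercontractivity claim: for an arbitrary PSD matrix $\AB\in\R^{M\times M}$, write
\[
\Ebb\,(\SB\xB)^{\otimes 2}\,\AB\,(\SB\xB)^{\otimes 2} = \SB\,\Ebb\big[\xB\xB^\top (\SB^\top\AB\SB)\,\xB\xB^\top\big]\,\SB^\top,
\]
observe that $\SB^\top\AB\SB$ is PSD, apply \Cref{assump:general:data} with that matrix in place of $\AB$, so the inner expectation is $\preceq \alpha\,\tr(\HB\SB^\top\AB\SB)\,\HB$, and finally conjugate by $\SB$ and use $\tr(\HB\SB^\top\AB\SB) = \tr(\SB\HB\SB^\top\AB)$ by cyclicity of the trace. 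This gives exactly $\alpha\,\tr(\SB\HB\SB^\top\AB)\,\SB\HB\SB^\top$, as claimed. Conjugation preserves the PSD ordering, so the inequality survives.

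Next I would handle the misspecified-noise bound for $\vB^*$. The natural step is to split the residual at $\vB^*$ into the residual at $\wB^*$ plus the approximation gap:
\[
y - \la\vB^*,\SB\xB\ra = \big(y - \xB^\top\wB^*\big) + \big(\xB^\top\wB^* - \la\SB^\top\vB^*,\xB\ra\big),
\]
and then use $(u+v)^2 \le 2u^2 + 2v^2$ to bound the matrix $\Ebb(y-\la\vB^*,\SB\xB\ra)^2(\SB\xB)^{\otimes 2}$ by twice the sum of two terms. The first term, $\Ebb(y-\xB^\top\wB^*)^2(\SB\xB)^{\otimes 2} = \SB\,\Ebb[(y-\xB^\top\wB^*)^2\xB\xB^\top]\,\SB^\top$, is $\preceq \sigma^2\,\SB\HB\SB^\top$ by \Cref{assump:general:noise} and conjugation. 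The second term involves $\big((\wB^*-\SB^\top\vB^*)^\top\xB\big)^2(\SB\xB)^{\otimes 2}$; I would bound it by applying the already-proved hypercontractivity inequality (or \Cref{assump:general:data} directly) with $\AB$ chosen to capture the rank-one direction $\wB^*-\SB^\top\vB^*$, yielding a bound of the form $\alpha\,\tr\!\big(\SB\HB\SB^\top\,\cdot\,\big)\,\SB\HB\SB^\top$ with the trace factor equal to $\|\wB^*-\SB^\top\vB^*\|_\HB^2 = \approximation \le \|\wB^*\|_\HB^2$ by \Cref{lemma:approximation}. Combining the two pieces gives the stated $2(\sigma^2 + \alpha\|\wB^*\|_\HB^2)\,\SB\HB\SB^\top$.

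The only mildly delicate point — and the step I would be most careful about — is the second residual term: one must correctly translate the scalar factor $\big((\wB^*-\SB^\top\vB^*)^\top\xB\big)^2$ into a PSD matrix $\AB$ so that \Cref{assump:general:data} applies cleanly. The clean way is to note that for any fixed vector $\uB\in\Hbb$ and any PSD $\BB$, $\Ebb(\uB^\top\xB)^2\,\xB\xB^\top\BB\xB\xB^\top \preceq$ something controlled by applying hypercontractivity to $\AB = \uB\uB^\top$ first (giving $\Ebb(\uB^\top\xB)^2\xB\xB^\top \preceq \alpha\|\uB\|_\HB^2\HB$) — but since here the outer object is already $(\SB\xB)^{\otimes 2}$ with no extra $\BB$, it suffices to apply hypercontractivity in the sketched space with $\AB = (\wB^*-\SB^\top\vB^*)(\wB^*-\SB^\top\vB^*)^\top$ viewed appropriately, or equivalently apply \Cref{assump:general:data} in $\Hbb$ with $\AB = (\wB^*-\SB^\top\vB^*)^{\otimes 2}$ and then conjugate. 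Either route is routine once the bookkeeping of which space each matrix lives in is kept straight; no new ideas are needed beyond the change-of-variables identity and the two assumed moment bounds.
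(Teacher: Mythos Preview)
Your proposal is correct and follows essentially the same route as the paper: the first part pushes the hypercontractivity assumption through the conjugation $\SB(\cdot)\SB^\top$ and uses cyclicity of the trace, and the second part splits the residual at $\wB^*$, applies $(u+v)^2\le 2u^2+2v^2$, bounds the noise piece via \Cref{assump:general:noise} and the approximation piece via \Cref{assump:general:data} with the rank-one matrix $(\wB^*-\SB^\top\vB^*)^{\otimes 2}$, then invokes $\approximation\le\|\wB^*\|_\HB^2$ from \Cref{lemma:approximation}. The paper carries out the second part entirely in $\Hbb$ before conjugating by $\SB$ at the end, which is one of the two equivalent routes you already identified.
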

\begin{proof}[Proof of \Cref{lemma:sketched-feature:misspecified}]
The first part is a direct application of \Cref{assump:general:data}:
\begin{align*}
    \Ebb (\SB\xB)^{\otimes 2} \AB (\SB \xB)^{\otimes 2} 
    &= \SB \big(\Ebb \xB\xB^\top (\SB^\top \AB \SB ) \xB \xB^\top \big)\SB^\top \\
&\preceq \SB \big(\alpha \tr\big(\HB\SB^\top \AB \SB\big) \HB \big)\SB^\top \\
    &= \alpha \tr\big(\SB\HB\SB^\top \AB \big) \SB\HB\SB^\top .
\end{align*}

For the second part, we first show that
\begin{align*}
    {\Ebb \big( y - \la \vB^*, \SB \xB \ra  \big)^2 \xB^{\otimes 2} } 
    &\preceq 2 \Ebb \big( y - \la \wB^*, \xB \ra  \big)^2 \xB^{\otimes 2}  + 2 \Ebb  \la \wB^* - \SB^\top \vB^*, \xB \ra  ^2 \xB^{\otimes 2} \\
    &\preceq 2 \sigma^2 \HB + 2 \alpha \la \HB, (\wB^* - \SB^\top \vB^*)^{\otimes 2} \ra \HB, 
\end{align*}
where the last inequality is by \Cref{assump:general:data,assump:general:noise}.
From the proof of \Cref{lemma:approximation}, we know that 
\begin{align*}
    \la \HB, (\wB^* - \SB^\top \vB^*)^{\otimes 2} \ra = \approximation \le \|\wB^*\|^2_{\HB},\ \text{almost surely}.
\end{align*}
So we have 
\begin{align*}
    \Ebb \big( y - \la \vB^*, \SB \xB \ra  \big)^2 \xB^{\otimes 2} \preceq 2(\sigma^2 + \alpha\|\wB^*\|^2_\HB) \HB.
\end{align*}
Left and right multiplying both sides with $\SB$ and $\SB^\top$, we obtain the second claim. 
\end{proof}


\begin{lemma}[Gaussianity and well-specified noise under sketched features]\label{lemma:sketched-feature:well-specified}
Suppose that \Cref{assump:simple:data,assump:simple:noise} hold.
Conditional on the sketch matrix $\SB$, we have 
\[
\SB \xB \sim \Ncal(0, \SB\HB\SB^\top).
\]
Moreover, for the minimizer of $\vRisk(\vB)$, that is, $\vB^*$ defined in \Cref{lemma:approximation}, we have 
\begin{align*}
    \Ebb [y | \SB \xB] = \la\SB\xB, \vB^*\ra,\quad 
    \Ebb(y- \la \SB \xB, \vB^*\ra)^2 = \sigma^2 + \approximation \ge \sigma^2.
\end{align*}
\end{lemma}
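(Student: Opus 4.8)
The target is the final statement in the excerpt, namely \Cref{lemma:sketched-feature:well-specified}, which under \Cref{assump:simple:data,assump:simple:noise} claims that conditional on $\SB$ the sketched covariate is Gaussian, $\SB\xB\sim\Ncal(0,\SB\HB\SB^\top)$, and that the conditional mean of $y$ given $\SB\xB$ is the linear predictor $\la\SB\xB,\vB^*\ra$ with $\vB^*=(\SB\HB\SB^\top)^{-1}\SB\HB\wB^*$, while the conditional noise level is $\sigma^2+\approximation$.

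\medskip\noindent
\textbf{Plan.}
The Gaussianity statement is immediate: under \Cref{assump:simple:data} we have $\xB\sim\Ncal(0,\HB)$, and $\SB\xB$ is a fixed linear image of a Gaussian vector, hence Gaussian with mean $0$ and covariance $\Ebb(\SB\xB)(\SB\xB)^\top = \SB\HB\SB^\top$. For the conditional mean, the cleanest route is to decompose $\xB$ (conditionally on $\SB$) into its projection onto $\mathrm{row}(\SB)$ and the orthogonal complement with respect to the $\HB$-geometry. Concretely, I would write $\xB^\top\wB^* = \la\SB\xB,\vB^*\ra + \la\xB,\wB^*-\SB^\top\vB^*\ra$ and observe that, by the normal equations defining $\vB^*$ (established in \Cref{lemma:approximation}), the residual direction $\rB:=\wB^*-\SB^\top\vB^*$ satisfies $\SB\HB\rB = \SB\HB\wB^* - \SB\HB\SB^\top\vB^* = 0$. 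Since $(\xB,\SB\xB)$ is jointly Gaussian, $\la\xB,\rB\ra$ is jointly Gaussian with $\SB\xB$ and $\Cov(\la\xB,\rB\ra,\SB\xB) = \SB\HB\rB = 0$, hence $\la\xB,\rB\ra$ is independent of $\SB\xB$ (joint Gaussianity turns zero covariance into independence). Combining with $\Ebb[y\mid\xB]=\xB^\top\wB^*$ from \Cref{assump:simple:noise} and the tower rule, $\Ebb[y\mid\SB\xB] = \Ebb[\xB^\top\wB^*\mid\SB\xB] = \la\SB\xB,\vB^*\ra + \Ebb[\la\xB,\rB\ra\mid\SB\xB] = \la\SB\xB,\vB^*\ra$, using independence to drop the last term (whose unconditional mean is $0$).

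\medskip\noindent
For the noise level, I would expand
$\Ebb(y-\la\SB\xB,\vB^*\ra)^2 = \Ebb\big((y-\xB^\top\wB^*) + (\xB^\top\wB^* - \la\SB\xB,\vB^*\ra)\big)^2$.
The cross term vanishes: $y-\xB^\top\wB^*$ has conditional mean zero given $\xB$ (by \Cref{assump:simple:noise}) and $\xB^\top\wB^*-\la\SB\xB,\vB^*\ra = \la\xB,\rB\ra$ is $\xB$-measurable, so the cross term is $0$ after conditioning on $\xB$. The first square contributes $\sigma^2$ by definition, and the second square contributes $\Ebb\la\xB,\rB\ra^2 = \rB^\top\HB\rB = \|\wB^*-\SB^\top\vB^*\|_\HB^2 = \approximation$ by the formula for the approximation error in \Cref{lemma:approximation}. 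This gives $\Ebb(y-\la\SB\xB,\vB^*\ra)^2 = \sigma^2+\approximation \ge \sigma^2$, since $\approximation\ge 0$.

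\medskip\noindent
\textbf{Main obstacle.}
There is no serious obstacle here; the only subtlety worth stating carefully is the use of joint Gaussianity of $(\SB\xB,\la\xB,\rB\ra)$ to pass from $\SB\HB\rB=0$ (zero covariance) to genuine conditional independence, which is what licenses dropping $\Ebb[\la\xB,\rB\ra\mid\SB\xB]$. This requires $\xB$ Gaussian (not merely hypercontractive), which is exactly why this lemma is stated under \Cref{assump:simple} rather than the weaker \Cref{assump:general}; the analogous statement fails in general and one only gets the operator-norm domination of \Cref{lemma:sketched-feature:misspecified}. Everything else is the bookkeeping already carried out in the proof of \Cref{lemma:approximation}, reused verbatim.
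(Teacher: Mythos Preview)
Your proposal is correct and follows essentially the same approach as the paper: both decompose $\xB^\top\wB^*=\la\SB\xB,\vB^*\ra+\la\xB,\rB\ra$, use the normal equation $\SB\HB\rB=0$ together with joint Gaussianity to obtain independence of $\la\xB,\rB\ra$ from $\SB\xB$, and then identify the residual variance with $\approximation$. The only cosmetic difference is that the paper rewrites things via $\HB^{-1/2}\xB\sim\Ncal(0,\IB)$ and the projection $\IB-\HB^{1/2}\SB^\top(\SB\HB\SB^\top)^{-1}\SB\HB^{1/2}$, whereas you compute the covariance $\Cov(\la\xB,\rB\ra,\SB\xB)=\SB\HB\rB$ directly; your route is slightly cleaner in that it avoids introducing $\HB^{-1/2}$.
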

\begin{proof}[Proof of \Cref{lemma:sketched-feature:well-specified}]
The first claim is a direct consequence of \Cref{assump:simple:data}.

For the second claim, by \Cref{assump:simple:data} and \Cref{lemma:approximation}, we have 
\begin{align}
    \Ebb[y | \xB] &= \la \xB ,\wB^* \ra  \notag \\ 
    &= \la \xB, \SB^\top \vB^* \ra + \la \xB, \wB^* - \SB^\top \vB^*\ra \notag \\ 
    &= \la \xB, \SB^\top \vB^* \ra + \la \xB, \big[\IB - (\SB\HB\SB^\top)^{-1}\SB\HB \big]\wB^*\ra \notag \\
    &= \la \HB^{-\frac{1}{2}}\xB, \HB^{\frac{1}{2}}\SB^\top \vB^* \ra + \la \HB^{-\frac{1}{2}}\xB, \big[\IB - \HB^{\frac{1}{2}}\SB^\top (\SB\HB\SB^\top)^{-1}\SB\HB^{\frac{1}{2}} \big]\HB^{\frac{1}{2}}\wB^*\ra \notag \\
    &= \la\SB \HB^{\frac{1}{2}} \HB^{-\frac{1}{2}}\xB,  \vB^* \ra + \la \big[\IB - \HB^{\frac{1}{2}}\SB^\top (\SB\HB\SB^\top)^{-1}\SB\HB^{\frac{1}{2}} \big] \HB^{-\frac{1}{2}}\xB, \HB^{\frac{1}{2}}\wB^*\ra  . \label{eq:well-specified:conditional-expectation}
\end{align}
Notice that 
\[
\HB^{-\frac{1}{2}} \xB \sim \Ncal(0, \IB) ,
\]
by \Cref{assump:simple:data}
and that 
\[
\SB \HB^{\frac{1}{2}} \big[\IB - \HB^{\frac{1}{2}}\SB^\top (\SB\HB\SB^\top)^{-1}\SB\HB^{\frac{1}{2}} \big] = 0,
\]
therefore 
\[
\SB\xB = \SB\HB^{\frac{1}{2}}\HB^{-\frac{1}{2}} \xB \ \text{is independent of}\ \big[\IB - \HB^{\frac{1}{2}}\SB^\top (\SB\HB\SB^\top)^{-1}\SB\HB^{\frac{1}{2}} \big] \HB^{-\frac{1}{2}}\xB.
\]
Taking expectation over the second random vector in \Cref{eq:well-specified:conditional-expectation}, we find 
\begin{align*}
    \Ebb[y | \SB \xB] &= \Ebb \Ebb[y | \xB] = \la\SB \HB^{\frac{1}{2}} \HB^{-\frac{1}{2}}\xB,  \vB^* \ra  = \la \SB\xB, \vB^*\ra.
\end{align*}

It remains to show 
\[
\Ebb(y- \la \SB \xB, \vB^*\ra)^2 = \sigma^2 + \approximation.
\]
This follows from the proof of \Cref{lemma:approximation}. Specifically, 
\begin{align*}
 \Ebb(y- \la \SB \xB, \vB^*\ra)^2 &= \risk(\SB^\top \vB^*) \\ 
 &= \approximation + \risk(\wB^*) \\ 
 &= \approximation + \sigma^2 \\
 &\ge \sigma^2,
\end{align*}
where the second equality is by the definition of $\approximation$ and the third equality is by \Cref{assump:simple:noise}.
We have completed the proof.
\end{proof}

\subsection{Proof of Theorem~\ref{thm:main:exact_decomp}}\label{sec:prelim:pf_generaldecomp}

We now use the results in \citep{wu2022iterate,wu2022power} for SGD to obtain the following bias-variance decomposition on the excess risk. 
\begin{theorem}[Excess risk bounds]\label{thm:exact_decomp}
Consider the excess risk in \Cref{eq:approx-excess-decomp} induced by the output of \Cref{eq:sgd}.
Let 
\[
\Neff := N/\log(N),\quad 
\SNR := (\|\wB^*\|^2_{\HB}+\|\vB_0\|_{\SB\HB\SB^\top}^2) / \sigma^2.
\]
Then conditioning on the sketch matrix $\SB$, for any $\wB^*\in\Hbb$

\begin{itemize}[leftmargin=*]
    \item[1.] Under \Cref{assump:general:data,assump:general:noise}, we have 
    \[
    \Ebb \excessRisk \lesssim \bigg\|\prod_{t=1}^N \big(\IB - \gamma_t \SB\HB\SB^\top\big) (\vB_0 - {\vB^*}) \bigg\|^2_{\SB\HB\SB^\top} + (1+\alpha\SNR)\sigma^2 \cdot \frac{\Deff}{\Neff}
    \] when $\gamma \lesssim \frac{1}{c\alpha\tr(\SB \HB \SB^\top )}$ for some constant $c>0.$
    \item[2.] Under \Cref{assump:simple:data,assump:simple:noise}, we have 
    \[
    \Ebb \excessRisk \gtrsim \bigg\|\prod_{t=1}^N \big(\IB - \gamma_t \SB\HB\SB^\top\big) (\vB_0 - {\vB^*}) \bigg\|^2_{\SB\HB\SB^\top} + \sigma^2 \cdot \frac{\Deff}{\Neff} 
    \] when $\gamma \lesssim \frac{1}{c\tr(\SB \HB \SB^\top )}$ for some constant $c>0.$
\end{itemize}
In both results, the expectation is over $(\xB_t,y_t)_{t=1}^N$, and 
\begin{align*}
    \Deff:= \#\{\tilde\lambda_j \ge 1/(\Neff \gamma)\} + (\Neff \gamma)^2 \sum_{\tilde \lambda_j < 1/(\Neff \gamma)}\tilde\lambda_j^2,
\end{align*}
where $(\tilde\lambda_j)_{j=1}^M$ are eigenvalue of $\SB \HB \SB^\top$.
\end{theorem}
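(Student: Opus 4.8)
The plan is to condition on the sketch matrix $\SB$ and observe that, with $\SB$ fixed, running \eqref{eq:sgd} is exactly one-pass SGD for the $M$-dimensional least-squares problem $\min_{\vB}\vRisk(\vB)=\Ebb(\la\SB\xB,\vB\ra-y)^2$ with i.i.d.\ data $(\SB\xB_t,y_t)_{t=1}^N$, stepsize schedule \eqref{eq:lr}, initialization $\vB_0$, and population optimum $\vB^*=(\SB\HB\SB^\top)^{-1}\SB\HB\wB^*$ (\Cref{lemma:approximation}). The quantity $\excessRisk$ in \eqref{eq:approx-excess-decomp} is precisely the excess risk of this SGD instance, so it suffices to apply the last-iterate SGD risk bounds of \citet{wu2022iterate,wu2022power} — specifically Corollary~3.4 of \citet{wu2022power} — to the reduced problem after verifying its hypotheses, and then to simplify the resulting expressions.

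For the upper bound, Corollary~3.4 of \citet{wu2022power} requires: (i) a fourth-moment bound $\Ebb(\SB\xB)^{\otimes 2}\AB(\SB\xB)^{\otimes 2}\preceq\alpha\tr(\SB\HB\SB^\top\AB)\,\SB\HB\SB^\top$ for every PSD $\AB$; (ii) a noise bound $\Ebb(y-\la\vB^*,\SB\xB\ra)^2(\SB\xB)^{\otimes 2}\preceq\tilde\sigma^2\,\SB\HB\SB^\top$ for some $\tilde\sigma^2>0$; and (iii) a stepsize cap $\gamma\lesssim 1/(\alpha\tr(\SB\HB\SB^\top))$. Item (i) is the first part of \Cref{lemma:sketched-feature:misspecified} with the same $\alpha$ as in \Cref{assump:general:data}, item (ii) its second part with $\tilde\sigma^2=2(\sigma^2+\alpha\|\wB^*\|^2_{\HB})$, and item (iii) is exactly the assumed bound on $\gamma$. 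The corollary then yields $\Ebb\excessRisk\lesssim\bigl\|\prod_{t=1}^N(\IB-\gamma_t\SB\HB\SB^\top)(\vB_0-\vB^*)\bigr\|^2_{\SB\HB\SB^\top}+\tilde\sigma^2\,\Deff/\Neff$, where $\Deff$ is the effective dimension in the statement and the effective sample size $\Neff=N/\log(N)$ appears because the scheduler \eqref{eq:lr} consists of $\log(N)$ stages of length $\Neff$. Finally, $\tilde\sigma^2=2\sigma^2\bigl(1+\alpha\|\wB^*\|^2_{\HB}/\sigma^2\bigr)\le 2\sigma^2(1+\alpha\SNR)$ because $\|\vB_0\|^2_{\SB\HB\SB^\top}\ge 0$, which gives the first claim up to an absolute constant.

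For the lower bound, assume the stronger \Cref{assump:simple:data,assump:simple:noise} and invoke the matching lower half of Corollary~3.4 of \citet{wu2022power}. Its hypotheses are: Gaussianity $\SB\xB\sim\Ncal(0,\SB\HB\SB^\top)$, which is the first part of \Cref{lemma:sketched-feature:well-specified}; well-specified noise, $\Ebb[y\,|\,\SB\xB]=\la\SB\xB,\vB^*\ra$ and $\Ebb(y-\la\SB\xB,\vB^*\ra)^2=\sigma^2+\approximation\ge\sigma^2$, its second part; and a stepsize cap $\gamma\lesssim1/\tr(\SB\HB\SB^\top)$, as assumed. This gives $\Ebb\excessRisk\gtrsim\bigl\|\prod_{t=1}^N(\IB-\gamma_t\SB\HB\SB^\top)(\vB_0-\vB^*)\bigr\|^2_{\SB\HB\SB^\top}+(\sigma^2+\approximation)\,\Deff/\Neff$, and dropping $\approximation\ge0$ from the second summand yields the second claim. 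Since \Cref{lemma:approximation,lemma:sketched-feature:misspecified,lemma:sketched-feature:well-specified} carry the real content and are already established, the main thing left is careful bookkeeping against \citet{wu2022iterate,wu2022power}: matching their parametrization of the geometric-decay scheduler and their notion of effective dimension/sample size to \eqref{eq:lr} and \eqref{eq:var-error}, confirming that their last-iterate bounds accommodate misspecified noise of the multiplicative form $\preceq\tilde\sigma^2\,\SB\HB\SB^\top$, and tracking that all hidden constants are universal or depend only on $\alpha$; I expect this translation, rather than any new estimate, to be the only real obstacle.
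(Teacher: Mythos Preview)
Your approach is essentially the paper's: reduce to the $M$-dimensional SGD problem, verify the hypotheses via \Cref{lemma:sketched-feature:misspecified,lemma:sketched-feature:well-specified}, and invoke Corollary~3.4 of \citet{wu2022power}. Two small points are understated, however.

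First, for the upper bound, Corollary~3.4 actually yields a variance coefficient of the form $\bigl(\|\vB^*-\vB_0\|^2_{\SB\HB\SB^\top}+\tilde\sigma^2\bigr)\,\Deff/\Neff$, not just $\tilde\sigma^2\,\Deff/\Neff$; you have dropped the initialization term. To absorb it into $(1+\alpha\SNR)\sigma^2$ you need the short calculation
\[
\|\vB^*-\vB_0\|^2_{\SB\HB\SB^\top}\le 2\|\HB^{1/2}\SB^\top\vB^*\|^2+2\|\vB_0\|^2_{\SB\HB\SB^\top}
=2\|\HB^{1/2}\SB^\top(\SB\HB\SB^\top)^{-1}\SB\HB\wB^*\|^2+2\|\vB_0\|^2_{\SB\HB\SB^\top}
\le 2\|\wB^*\|^2_{\HB}+2\|\vB_0\|^2_{\SB\HB\SB^\top},
\]
the last step using that $\HB^{1/2}\SB^\top(\SB\HB\SB^\top)^{-1}\SB\HB^{1/2}$ is a projection. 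This is precisely why $\|\vB_0\|^2_{\SB\HB\SB^\top}$ enters $\SNR$; in your write-up its presence is incidental.

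Second, for the lower bound, the paper notes that Corollary~3.4's lower half is stated in \citet{wu2022power} under a Gaussian \emph{additive} noise assumption, whereas here the noise $y-\la\SB\xB,\vB^*\ra$ is only well-specified (conditional mean zero) with variance $\sigma^2+\approximation$. One must check that the lower bound argument goes through under this weaker condition; it does, but this is worth flagging explicitly rather than treating as automatic.
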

Theorem~\ref{thm:main:exact_decomp} follows immediately by \Cref{lemma:approximation} and by setting $\vB_0=0$ and plugging the definition of $\bias(\wB^*)$ and $\variance$ into Theorem~\ref{thm:exact_decomp}.
\begin{proof}[Proof of Theorem~\ref{thm:exact_decomp}]
This follows from Corollary 3.4 in \citep{wu2022power} for a linear regression problem with population data given by $(\SB\xB, y)$.
Note that the data covariance becomes $\SB\HB\SB^\top$ and the optimal model parameter becomes $\vB^*$.

For the upper bound, \Cref{lemma:sketched-feature:misspecified} verifies Assumptions 1A and 2 in \citep{wu2022power}, with the noise level being 
\[
\tilde\sigma^2 = 2 (\sigma^2 + \alpha \|\wB^*\|^2_\HB).
\]
Then we can apply the upper bound in Corollary 3.4 in \citep{wu2022power} (setting their index set $\Kbb = \emptyset$) to get 
\begin{align*}
    \Ebb \excessRisk \lesssim \bigg\|\prod_{t=1}^N \big(\IB - \gamma_t \SB\HB\SB^\top\big) (\vB_0 - {\vB^*}) \bigg\|^2_{\SB\HB\SB^\top} + (\|\vB^*-\vB_0\|^2_{\SB\HB\SB^\top} + \tilde\sigma^2)  \frac{\Deff}{\Neff}.
\end{align*}
We verify that 
\begin{align*}
    \|\vB^*-\vB_0\|^2_{\SB\HB\SB^\top} &\leq 2 \|\HB^{\frac{1}{2}} \SB^\top \vB^*\|^2 +2\|\vB_0\|^2_{\SB\HB\SB^\top} \\ 
    &= 2\| \HB^{\frac{1}{2}} \SB^\top (\SB\HB\SB^\top)^{-1} \SB \HB \wB^*\|^2+2\|\vB_0\|^2_{\SB\HB\SB^\top} \\ 
    &\le 2\|\HB^{\frac{1}{2}} \wB^*\|^2+ 2\|\vB_0\|^2_{\SB\HB\SB^\top}\\
    & = 2\|\wB^*\|^2_\HB+2\|\vB_0\|^2_{\SB\HB\SB^\top},
\end{align*}
which implies that 
\begin{align*}
    (\|\vB^*-\vB_0\|^2_{\SB\HB\SB^\top} + \tilde\sigma^2) 
    &\le 2 \|\wB^*\|^2_\HB +2\|\vB_0\|^2_{\SB\HB\SB^\top}+ 2 (\sigma^2 + \alpha \|\wB^*\|^2_\HB) \\
    &\lesssim (1+\alpha\SNR) \sigma^2.
\end{align*}
Substituting, we get the upper bound. 

For the lower bound, \Cref{lemma:sketched-feature:well-specified} shows 
$\SB\xB$ is Gaussian, therefore it satisfies Assumption 1B in \citet{wu2022power} with $\beta = 1$.
Besides, \Cref{lemma:sketched-feature:well-specified} shows
that the linear regression problem is well-specified, with the noise level being  
\[\tilde\sigma^2 = \sigma^2 + \approximation \ge \sigma^2.\]
Although the lower bound in Corollary 3.4 in \citet{wu2022power} is stated for Gaussian additive noise (see their Assumption 2'), it is easy to check that the lower bound holds for any well-specified noise as described by \Cref{lemma:sketched-feature:well-specified}.
Using the lower bound in Corollary 3.4 in \citet{wu2022power}, we obtain
\begin{align*}
    \Ebb \excessRisk \gtrsim \bigg\|\prod_{t=1}^N \big(\IB - \gamma_t \SB\HB\SB^\top\big) (\vB_0 - {\vB^*}) \bigg\|^2_{\SB\HB\SB^\top} + \tilde\sigma^2 \frac{\Deff}{\Neff}.
\end{align*}
Plugging in $\tilde\sigma^2 \ge \sigma^2$ gives the desired lower bound.
\end{proof}
\subsection{Proofs of Lemma~\ref{lm:power-law-main},~Theorem~\ref{thm:general_upper}~and~\ref{thm:general_lower}}

Lemma~\ref{lm:power-law-main} is proved in Lemma~\ref{lm:power_decay_1}. Theorem~\ref{thm:general_upper} follows from Lemma~\ref{lm:approx_lm_1}~and~\ref{lm:bias_lm_1}.
Theorem~\ref{thm:general_lower} follows from Lemma~\ref{lm:approx_lower_bound}~and~\ref{lm:bias_lm_1_lower_bound}.

\section{Proofs in Section~\ref{sec:scaling_law_examples}}
\subsection{Proof of Theorem \ref{thm:scaling-law}}
\paragraph{Proof of part 1.}
By Assumption~\ref{assump:simple:noise} and the definition of $\risk(\cdot)$, we have
\begin{align*}
    \risk(\wB)&=\E(\la\xB,\wB\ra-y)^2= 
    \E(\la\xB,\wB\ra-\E[y\mid\xB])^2+ \E(y-\E[y\mid\xB])^2\\
    &= \E(\la\xB,\wB\ra-\la\xB,\wB^*\ra)^2+\sigma^2\geq\sigma^2.
\end{align*} Note that the equality holds if and only if $\wB=\wB^*.$ Therefore we have $\min\risk(\cdot)=\risk(\wB^*)=\sigma^2.$
\paragraph{Proof of part 2.}
Part 2 of Theorem~\ref{thm:scaling-law} follows immediately from Lemma~\ref{lm:approx_lower_bound}.
\paragraph{Proof of part 3.} 
We choose $\bias(\wB^*),\variance$ as defined in Eq.~\eqref{eq:bias-error}~and~\eqref{eq:var-error} and let $\bias:=\E_{\wB^*}\bias(\wB^*)$. 
Part 3 of Theorem~\ref{thm:scaling-law} follows directly from the decomposition of the excess risk in Theorem~\ref{thm:main:exact_decomp} (note that $\E \|\wB^*\|^2_{\HB}/\sigma^2\lesssim1$), and the matching bounds in Lemma~\ref{lm:power_law_bias_match}~and~\ref{lm:bounds_on_var_powerlaw}. 

It remains to verify the stepsize assumption required in 
Lemma~\ref{lm:power_law_bias_match}. 
Since we have from Lemma~\ref{lm:power_decay_1} that 
\begin{align*}\frac{1}{\tr(\SB\HB\SB^\top)}=\frac{1}{\sum_{i=1}^M \tilde\lambda_i}\geq\frac{c_1}{\sum_{i=1}^M \lambda_i}\geq \frac{c_2}{\sum_{i=1}^M i^{-a}}\geq c_3\end{align*} for some $a$-dependent constants $c_1,c_2,c_3>0$ with probability at least $1-e^{-\Omega(M)}$, 
it follows that for any constant $c>1$, we can choose $\gamma \leq c_0$ for some $a$-dependent $c_0$ such that
$\gamma \leq \frac{1}{c\tr(\SB\HB\SB^\top)}$. Therefore, we have verified the stepsize assumption.

Finally, the last claim in Theorem~\ref{thm:scaling-law} follows directly from combining the previous three parts and Theorem~\ref{thm:main:exact_decomp}, noting $\sigma^2\lesssim1$. and 
\begin{align*}
    \variance\eqsim\frac{\min\{M,\ (\Neff\gamma)^{1/\Hpower}\} }{\Neff}
    &\lesssim \frac{(\Neff\gamma)^{1/\Hpower}}{\Neff} 
    \lesssim (\Neff\gamma)^{1/\Hpower-1}\lesssim \bias+\approximation
\end{align*} under the stepsize assumption $\gamma\lesssim1.$ Here the hidden constants may depend on $a.$


\subsection{Proof of Theorem~\ref{cor:source_cond}}
Similar to the proof of Theorem~\ref{thm:scaling-law},  we have $\min\risk(\cdot)=\sigma^2$ under Assumption~\ref{assump:simple:noise}. Moreover, by \Cref{lm:power_law_source_apprx_match,lm:power_law_bias_source_match,lm:bounds_on_var_powerlaw}, we have with probability at least $1-e^{-\Omega(M)}$ that
\begin{align*}
    \E_{\wB^*}\approximation&\eqsim M^{1-b},\\
    \bias &\lesssim \max\big\{ M^{1-b},\ (\Neff \gamma)^{(1-b)/a}\big\},\\
 \bias &\gtrsim (\Neff \gamma)^{(1-b)/a}\text { when } (\Neff \gamma)^{1/a}\leq  M/3, \\
\variance &\eqsim {\min\big\{ M, \ (\Neff \gamma)^{1/a}\big\}}/{\Neff},
\end{align*} when the stepsize $\gamma\leq c$ for some $a$-dependent constant $c>0.$ Here the hidden constants in the bounds may depend only on $(a,b)$.   Combining the bounds on $\approximation,\bias,\variance$ 
yields \Cref{cor:source_cond}.

\subsection{Proof of Theorem~\ref{thm:logpower_cond}}
Similar to the proof of Theorem~\ref{thm:scaling-law},  we have $\min\risk(\cdot)=\sigma^2$ under Assumption~\ref{assump:simple:noise}. 
Notice that we have $\gamma\lesssim1  $ implies
$\gamma\lesssim 1/(\tr(\SB\HB\SB^\top)) $ with probability at least $1-e^{-\Omega(M)}$ by Lemma~\ref{lemma:log-power-law}. 
It follows from Lemma~\ref{lm:power_law_logpower_apprx_match},~\ref{lm:logpower_law_bias_match}~and~\ref{lm:bounds_on_var_logpowerlaw} that
\begin{align*}
    \E_{\wB^*}\approximation&\eqsim \log^{1-a} M,\\
    \bias &\lesssim \max\big\{ \log^{1-a} M,\ \log^{1-a}  (\Neff \gamma)\big\},\\
 \bias &\gtrsim \log^{1-a}  (\Neff \gamma) \text { when } (\Neff \gamma)^{1/a}\leq  M^c \text { for some small constant } c>0, \\
\variance &\eqsim  \frac{\min\{M ,\ (\Neff\gamma)/\log^{a}(\Neff\gamma) \} }{\Neff}\lesssim \frac{\ (\Neff\gamma)/\log^{a}(\Neff\gamma)  }{\Neff\gamma} =\log^{-a}(\Neff\gamma)
\end{align*}with probability at least $1-e^{-\Omega(M)}$ when the stepsize $\gamma\leq c$ for some $a$-dependent constant $c>0.$
Since $\variance\lesssim\bias$ and $\log^{1-a}  (\Neff \gamma)\lesssim \log^{1-a} M$ when 
$(\Neff \gamma)^{1/a}\gtrsim  M^c$, putting the bounds together gives Theorem~\ref{thm:logpower_cond}.

\section{Approximation error}
In this section, we derive upper and lower bounds for the approximation error in~\eqref{eq:approx-excess-decomp}~(and~\ref{eq:approx_error_formula}). We will also show that the upper and lower bounds match up to constant factors in several examples.
\subsection{An upper bound}
\begin{lemma}[An upper bound on the approximation error]\label{lm:approx_lm_1}
 Given any $k\leq d$ such that $r(\HB)\geq k+M$, the approximation error in~\eqref{eq:approx-excess-decomp}~(and~\ref{eq:approx_error_formula}) satisfies
 \begin{align*}
     \approximation\lesssim \|{\wB^*_{\kb}}\|^2_{\HB_{\kb}}+\la[\HB_{\ka}^{-1}+\SB_{\ka}^\top \AB_k^{-1}\SB_{\ka}]^{-1},{\wB^*_{\ka}}{\wB^*_{\ka}}^\top\ra
 \end{align*}
 almost surely,
 where $\AB_k:=\SB_{\kb}\HB_{\kb}\SB_{\kb}^\top$.   
If in addition  $k\leq M/2$, then  with probability $1-e^{-\Omega(M)}$
\begin{align*}
 \approximation\lesssim \|{\wB^*_{\kb}}\|^2_{\HB_{\kb}}+\Big(\frac{\sum_{i>k}\lambda_i}{M}+\lambda_{k+1} + \sqrt{\frac{\sum_{i>k}\lambda_i^2}{M}}\Big)\|{\wB^*_{\ka}}\|^2,
\end{align*}
where $(\lambda_i)_{i=1}^p$ are eigenvalues of $\HB$ in non-increasing order.
\end{lemma}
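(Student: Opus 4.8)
The plan is to recast the approximation error variationally and then bound it by plugging in a cleverly chosen suboptimal test vector obtained from a head/tail coordinate split. Since $\HB^{1/2}\SB^\top(\SB\HB\SB^\top)^{-1}\SB\HB^{1/2}$ is the orthogonal projection onto the range of $\HB^{1/2}\SB^\top$, formula \eqref{eq:approx_error_formula} equals $\approximation=\min_{\vB\in\R^M}\|\HB^{1/2}(\SB^\top\vB-\wB^*)\|^2$. Splitting the coordinates into the head block $0{:}k$ and the tail block $k{:}\infty$ turns this into $\min_{\vB}\big[\,\|\SB_{0:k}^\top\vB-\wB^*_{0:k}\|_{\HB_{0:k}}^2+\|\SB_{k:\infty}^\top\vB-\wB^*_{k:\infty}\|_{\HB_{k:\infty}}^2\,\big]$. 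I would apply the elementary bound $\|\uB-\vB\|^2\le 2\|\uB\|^2+2\|\vB\|^2$ (in the $\HB_{k:\infty}$-geometry) to the tail term, which peels off $2\|\wB^*_{k:\infty}\|_{\HB_{k:\infty}}^2$ and, using $\|\SB_{k:\infty}^\top\vB\|_{\HB_{k:\infty}}^2=\vB^\top\AB_k\vB$ with $\AB_k=\SB_{k:\infty}\HB_{k:\infty}\SB_{k:\infty}^\top$, leaves up to a constant factor the quadratic program $\min_{\vB}\big[\,\|\SB_{0:k}^\top\vB-\wB^*_{0:k}\|_{\HB_{0:k}}^2+\vB^\top\AB_k\vB\,\big]$.

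Next I would solve this quadratic program exactly. The hypothesis $r(\HB)\ge k+M$ forces $\AB_k\succ 0$ almost surely, so the substitution $\uB=\AB_k^{1/2}\vB$ converts it into a ridge-regression problem $\min_{\uB}\|\BB\uB-\bB\|^2+\|\uB\|^2$ with $\BB=\HB_{0:k}^{1/2}\SB_{0:k}^\top\AB_k^{-1/2}$ and $\bB=\HB_{0:k}^{1/2}\wB^*_{0:k}$, whose optimal value is the familiar $\bB^\top(\IB+\BB\BB^\top)^{-1}\bB$. Since $\BB\BB^\top=\HB_{0:k}^{1/2}\,\SB_{0:k}^\top\AB_k^{-1}\SB_{0:k}\,\HB_{0:k}^{1/2}$, the identity $\CB(\IB+\CB\DB\CB)^{-1}\CB=(\CB^{-2}+\DB)^{-1}$ for invertible symmetric $\CB$ yields $\bB^\top(\IB+\BB\BB^\top)^{-1}\bB=\langle(\HB_{0:k}^{-1}+\SB_{0:k}^\top\AB_k^{-1}\SB_{0:k})^{-1},\ \wB^*_{0:k}{\wB^*_{0:k}}^\top\rangle$. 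Together with the peeled-off $\|\wB^*_{k:\infty}\|_{\HB_{k:\infty}}^2$ term this is exactly the first, almost-sure bound.

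For the high-probability bound with $k\le M/2$, I would bound the above quadratic form by the operator norm, discarding the PSD summand $\HB_{0:k}^{-1}$: $\langle(\HB_{0:k}^{-1}+\SB_{0:k}^\top\AB_k^{-1}\SB_{0:k})^{-1},\wB^*_{0:k}{\wB^*_{0:k}}^\top\rangle\le\|(\SB_{0:k}^\top\AB_k^{-1}\SB_{0:k})^{-1}\|\,\|\wB^*_{0:k}\|^2$, and then $\|(\SB_{0:k}^\top\AB_k^{-1}\SB_{0:k})^{-1}\|\le\|\AB_k\|/\sigma_{\min}(\SB_{0:k})^2$ since $\SB_{0:k}^\top\AB_k^{-1}\SB_{0:k}\succeq \|\AB_k\|^{-1}\sigma_{\min}(\SB_{0:k})^2\,\IB$. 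Two standard concentration estimates for Gaussian matrices then finish the job: (i) for $\SB_{0:k}\in\R^{M\times k}$ with i.i.d.\ $\Ncal(0,1/M)$ entries and $k\le M/2$, $\sigma_{\min}(\SB_{0:k})\gtrsim 1$ with probability $1-e^{-\Omega(M)}$; and (ii) $\|\AB_k\|=\|\SB_{k:\infty}\HB_{k:\infty}\SB_{k:\infty}^\top\|\lesssim \tfrac{1}{M}\sum_{i>k}\lambda_i+\lambda_{k+1}+\sqrt{\tfrac{1}{M}\sum_{i>k}\lambda_i^2}$ with probability $1-e^{-\Omega(M)}$ — a matrix concentration bound of the same flavor as the one underlying \Cref{lm:power-law-main}, with mean $\tfrac{\tr(\HB_{k:\infty})}{M}\IB$ and fluctuation scale governed by $\lambda_{k+1}$ and $\sqrt{\tr(\HB_{k:\infty}^2)/M}$. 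Substituting yields the stated bound.

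The one genuinely nontrivial estimate — and hence the main obstacle — is item (ii): the operator-norm control of $\AB_k=\SB_{k:\infty}\HB_{k:\infty}\SB_{k:\infty}^\top$, a sum of independent rank-one PSD matrices weighted by the heavy-tailed spectrum $\HB_{k:\infty}$, which needs a careful matrix-Bernstein/Gaussian-concentration argument (and should be available from the paper's concentration section). Everything else is linear algebra — the variational recast, the head/tail split, the closed-form ridge solution, and the push-through identity — together with the routine smallest-singular-value bound. A minor bookkeeping point is that all the inverses invoked above exist almost surely, which is precisely what $r(\HB)\ge k+M$ (and $k\le M/2$ for the second claim) guarantees.
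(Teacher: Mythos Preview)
Your proposal is correct and takes a genuinely different route to the first (almost-sure) bound than the paper. The paper works directly with the exact projection formula \eqref{eq:approx_error_formula}: it splits the matrix $\HB^{1/2}\SB^\top(\SB\HB\SB^\top)^{-1}\SB\HB^{1/2}-\IB$ into a $2\times 2$ block form indexed by the head/tail coordinates, decouples the blocks by the PSD inequality $\bigl(\begin{smallmatrix}\appxa&\appxb\\\appxb^\top&\appxc\end{smallmatrix}\bigr)^2\preceq 2\bigl(\begin{smallmatrix}\appxa^2+\appxb\appxb^\top&0\\0&\appxc^2+\appxb^\top\appxb\end{smallmatrix}\bigr)$, and then computes each diagonal block explicitly via Woodbury's identity, establishing in particular the exact identities $\appxc^2+\appxb^\top\appxb=-\appxc$ and $\appxa^2+\appxb\appxb^\top=\HB_{0:k}^{-1/2}(\HB_{0:k}^{-1}+\SB_{0:k}^\top\AB_k^{-1}\SB_{0:k})^{-1}\HB_{0:k}^{-1/2}$. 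Your variational recast avoids this block calculus entirely: you bound the minimum by any $\vB$, decouple the tail with $\|\uB-\vB\|^2\le 2\|\uB\|^2+2\|\vB\|^2$, and then recognize the remaining head problem as a ridge regression with closed-form optimum. This is more elementary and more transparent conceptually, and it makes the origin of the $(\HB_{0:k}^{-1}+\SB_{0:k}^\top\AB_k^{-1}\SB_{0:k})^{-1}$ term obvious; the paper's block computation, on the other hand, yields exact equalities for the diagonal blocks, which it later reuses verbatim in the lower-bound proof (\Cref{lm:approx_lower_bound}). For the second (high-probability) part your argument and the paper's coincide: both drop $\HB_{0:k}^{-1}$, bound $\mu_{\min}(\SB_{0:k}^\top\AB_k^{-1}\SB_{0:k})$ via $\sigma_{\min}(\SB_{0:k})^2/\|\AB_k\|$, and invoke the Gaussian singular-value bound for $\SB_{0:k}$ together with the tail-concentration bound for $\|\AB_k\|$ (indeed stated as \Cref{lemma:sketch:tail} in the paper).
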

\begin{proof}[Proof of Lemma~\ref{lm:approx_lm_1}]
Write the singular value decomposition $\HB=\UB\LambdaB\UB^\top$, where $\LambdaB:=\diag\{\lambda_1,\lambda_2,\ldots,\lambda_d\}$ with $\lambda_1\geq\lambda_2\geq\ldots\lambda_d\geq0$ and $\UB\UB^\top=\IB$. Define $\tilde\SB:=\SB\UB,\tilde{\wB}^*:=\UB^\top{\wB^*}$. Then by Lemma~\ref{lemma:approximation} the approximation error $\approximation=\approximation(\SB,\HB,{\wB^*})$ satisfies
\begin{align*}
    \approximation(\SB,\HB,{\wB^*})
    &=\Big\| \Big(\IB - \HB^{\frac{1}{2}}\SB^\top \big(\SB\HB\SB^\top\big)^{-1}\SB\HB^{\frac{1}{2}}  \Big)\HB^{\frac{1}{2}}{\wB^*} \Big\|^2\\
    &= \Big\| \Big(\IB - \UB\LambdaB^{\frac{1}{2}}\tilde\SB^\top \big(\tilde\SB\LambdaB\tilde\SB^\top\big)^{-1}\SB\LambdaB^{\frac{1}{2}}\UB^\top  \Big)\UB\LambdaB^{\frac{1}{2}}\UB^\top{\wB^*} \Big\|^2\\
     &= \Big\| \UB\Big(\IB - \LambdaB^{\frac{1}{2}}\tilde\SB^\top \big(\tilde\SB\LambdaB\tilde\SB^\top\big)^{-1}\SB\LambdaB^{\frac{1}{2}} \Big)\LambdaB^{\frac{1}{2}}\UB^\top{\wB^*} \Big\|^2\\
      &= \Big\| \Big(\IB - \LambdaB^{\frac{1}{2}}\tilde\SB^\top \big(\tilde\SB\LambdaB\tilde\SB^\top\big)^{-1}\SB\LambdaB^{\frac{1}{2}} \Big)\LambdaB^{\frac{1}{2}}\tilde{\wB}^* \Big\|^2=\approximation(\tilde\SB,\LambdaB,\tilde{\wB}^*).
\end{align*}
Since $\tilde\SB\overset{d}{=}\SB$ by rotational invariance of standard Gaussian variables, it suffices to analyze the case where $\HB=\LambdaB$ is a diagonal matrix, as the results may transfer to general $\HB$ by replacing $\tilde{\wB}^*$ with ${\wB^*}$. 

Therefore, from now on we assume w.l.o.g. that $\HB$ is a diagonal matrix with non-increasing diagonal entries. Define $\AB:=\SB\HB\SB^\top$.

By definition of $\approximation$,  we have
\begin{align*}
    \approximation
    &=
    \Big\| \Big(\IB - \HB^{\frac{1}{2}}\SB^\top \big(\SB\HB\SB^\top\big)^{-1}\SB\HB^{\frac{1}{2}}  \Big)\HB^{\frac{1}{2}}{\wB^*} \Big\|^2\\
    &
    =\big\la[\HB^{1/2}\SB^\top\AB^{-1}\SB\HB^{1/2}-\IB_d]^{\otimes 2},\HB^{1/2}{\wB^*}{\wB^*}^\top\HB^{1/2}\big\ra.
    \end{align*}
Moreover, for any $k\in[d]$ 
    \begin{align}
    \HB^{1/2}\SB^\top\AB^{-1}\SB\HB^{1/2}-\IB_d
    \notag
    &=
    \begin{pmatrix}
        \HB^{1/2}_{\ka}\SB^\top_{\ka}\\\HB^{1/2}_{\kb}\SB^\top_{\kb}
    \end{pmatrix} 
    \AB^{-1}
    \begin{pmatrix}
\SB_{\kb}\HB^{1/2}_{\kb}&\SB_{\kb}\HB^{1/2}_{\kb}
    \end{pmatrix}-\IB_d\notag\\
    &=
    \begin{pmatrix}
      \HB^{1/2}_{\ka}\SB^\top_{\ka} \AB^{-1}   \SB_{\ka}\HB^{1/2}_{\ka}-\IB_k & \HB^{1/2}_{\ka}\SB^\top_{\ka} \AB^{-1}  \SB_{\kb}\HB^{1/2}_{\kb}\\
       \HB^{1/2}_{\kb}\SB^\top_{\kb} \AB^{-1}   \SB_{\ka}\HB^{1/2}_{\ka} &
        \HB^{1/2}_{\kb}\SB^\top_{\kb} \AB^{-1}   \SB_{\kb}\HB^{1/2}_{\kb}-\IB_{d-k}
    \end{pmatrix}\notag\\
    &=:\begin{pmatrix}
        \appxa &\appxb \\\appxb^\top &\appxc
    \end{pmatrix}\label{eq:pf_approx_defuvw}
\end{align}
Therefore
\begin{align*}
    [ \HB^{1/2}\SB^\top\AB^{-1}\SB\HB^{1/2}-\IB_p]^{\otimes 2}=
    \begin{pmatrix}
        \appxa^2+\appxb\appxb^\top &\appxa\appxb+\appxb\appxc\\
        \appxb^\top\appxa+\appxc\appxb^\top & \appxc^2+\appxb^\top\appxb
    \end{pmatrix}\preceq 
    2\begin{pmatrix}
        \appxa^2+\appxb\appxb^\top &\bzero\\
        \bzero & \appxc^2+\appxb^\top\appxb
    \end{pmatrix},
\end{align*}
and hence
\begin{align*}
    \approximation
    &\leq 2 \Bigg\la\begin{pmatrix}
        \appxa^2+\appxb\appxb^\top &\bzero\\
        \bzero & \appxc^2+\appxb^\top\appxb
    \end{pmatrix},\HB^{1/2}{\wB^*}{\wB^*}^\top\HB^{1/2}\Bigg\ra\\
    &= 2 \big\la
\appxa^2+\appxb\appxb^\top,\HB_{\ka}^{1/2}\wB_{*,{\ka}}\wB_{*,{\ka}}^\top\HB_{\ka}^{1/2}\big\ra+ 2 \big\la
\appxc^2+\appxb^\top\appxb,\HB_{\kb}^{1/2}\wB_{*,{\kb}}\wB_{*,{\kb}}^\top\HB_{\kb}^{1/2}\big\ra.
\end{align*}
We claim the following results which we will prove at the end of the proof.
\begin{align}
    \big\la
\appxc^2+\appxb^\top\appxb,\HB_{\kb}^{1/2}\wB_{*,{\kb}}\wB_{*,{\kb}}^\top\HB_{\kb}^{1/2}\big\ra 
&\leq
\|{\wB^*_{\kb}}\|^2_{\HB_{\kb}},\label{eq:approx_claim_1}\\
\big\la
\appxa^2+\appxb\appxb^\top,\HB_{\ka}^{1/2}\wB_{*,{\ka}}\wB_{*,{\ka}}^\top\HB_{\ka}^{1/2}\big\ra 
&=
\la[\HB_{\ka}^{-1}+\SB_{\ka}^\top \AB_k^{-1}\SB_{\ka}]^{-1},{\wB^*_{\ka}}{\wB^*_{\ka}}^\top\ra\label{eq:approx_claim_2}.
\end{align}
Note that in claim~\eqref{eq:approx_claim_2} the inverse $\AB_k^{-1}$ exists almost surely since $r(\HB_{\kb})\geq r(\HB)-k\geq M$ by our assumption and $\SB_\kb\in\Rbb^{M\times (d-k)}$ is a random gaussian projection onto $\R^{M}$. 
First part of Lemma~\ref{lm:approx_lm_1} follows immediately from combining claim~\eqref{eq:approx_claim_1}~and~\eqref{eq:approx_claim_2}. 

To prove the second part of Lemma~\ref{lm:approx_lm_1}, first note that  with probability $1-e^{-\Omega(M)}$ we have
\begin{align*}
    {\mu_{\min}}(\AB_k^{-1})=\|\AB_k\|^{-1}\geq c/\Big(\frac{\sum_{i>k}\lambda_i}{M}+\lambda_{k+1} + \sqrt{\frac{\sum_{i>k}\lambda_i^2}{M}}\Big)
\end{align*}forc some constant $c>0$
by Lemma~\ref{lemma:sketch:tail}. Moreover, by the concentration of the Gaussian variance matrix (see e.g., Theorem 6.1 in~\cite{wainwright2019high}), we have $\SB_{\ka}^\top\SB_{\ka}\succeq \IB_k/5$ with probability $1-e^{-\Omega(M)}$ when $M/k\geq 2$. Combining the last two arguments, we obtain
\begin{align*}
    \SB_{\ka}^\top \AB_k^{-1}\SB_{\ka}
    &\succeq c\SB_{\ka}^\top\SB_{\ka}/\Big(\frac{\sum_{i>k}\lambda_i}{M}+\lambda_{k+1} + \sqrt{\frac{\sum_{i>k}\lambda_i^2}{M}}\Big)\\
    &\gtrsim  \IB_k/\Big(\frac{\sum_{i>k}\lambda_i}{M}+\lambda_{k+1} + \sqrt{\frac{\sum_{i>k}\lambda_i^2}{M}}\Big),
\end{align*}
and therefore
\begin{align}
     \la[\HB_{\ka}^{-1}+\SB_{\ka}^\top \AB_k^{-1}\SB_{\ka}]^{-1},{\wB^*_{\ka}}{\wB^*_{\ka}}^\top\ra
     &\leq
      \la[\SB_{\ka}^\top \AB_k^{-1}\SB_{\ka}]^{-1},{\wB^*_{\ka}}{\wB^*_{\ka}}^\top\ra\notag\\
      &\leq 
       \|[\HB_{\ka}^{-1}+\SB_{\ka}^\top \AB_k^{-1}\SB_{\ka}]^{-1}\|\|{\wB^*_{\ka}}\|^2\notag\\
       &\lesssim
        \Big(\frac{\sum_{i>k}\lambda_i}{M}+\lambda_{k+1} + \sqrt{\frac{\sum_{i>k}\lambda_i^2}{M}}\Big) \|{\wB^*_{\ka}}\|^2 \label{eq:approx_lm_eq_1} 
\end{align}
with probability $1-e^{-\Omega(M)}$.
  Combining Eq.~\eqref{eq:approx_lm_eq_1} with the first part of Lemma~\ref{lm:approx_lm_1} completes the proof.

 \paragraph{Proof of claim~\eqref{eq:approx_claim_1}}
 Note that 
 \begin{align*}
  -\IB_{d-k} \preceq\appxc&= \HB^{1/2}_{\kb}\SB^\top_{\kb} \AB^{-1}   \SB_{\kb}\HB^{1/2}_{\kb}-\IB_{d-k} \\
 & =
  \HB^{1/2}_{\kb}\SB^\top_{\kb} (\SB_{\ka}\HB_{\ka}\SB^\top_{\ka}+\SB_{\kb}\HB_{\kb}\SB^\top_{\kb})^{-1}   \SB_{\kb}\HB^{1/2}_{\kb}-\IB_{d-k}\\
  &\preceq
   \HB^{1/2}_{\kb}\SB^\top_{\kb} (\SB_{\kb}\HB_{\kb}\SB^\top_{\kb})^{-1}   \SB_{\kb}\HB^{1/2}_{\kb}-\IB_{d-k}\preceq\bzero_{d-k},
 \end{align*}
 where the last inequality uses the fact that the norm of projection matrices is no greater than one. Therefore, we have $\|\appxc\|_2\leq1$. Now, it remains to show
 \begin{align}
     \appxc^2+\appxb^\top\appxb=-\appxc, \label{eq:pf_approx_claim_1_eq1}
 \end{align}
 as claim~\eqref{eq:approx_claim_1} is a direct consequence of Eq.~\eqref{eq:pf_approx_claim_1_eq1} and the fact that $\|\appxc\|\leq 1$.  

By definition of $\appxc$ in Eq.~\eqref{eq:pf_approx_defuvw}, we have
\begin{align*}
    \appxc^2
    &=(\HB^{1/2}_{\kb}\SB^\top_{\kb} \AB^{-1}   \SB_{\kb}\HB^{1/2}_{\kb}-\IB_{d-k})^2\\
    &=
    \IB_{d-k}-2\HB^{1/2}_{\kb}\SB^\top_{\kb} \AB^{-1}   \SB_{\kb}\HB^{1/2}_{\kb}+
    \HB^{1/2}_{\kb}\SB^\top_{\kb} \AB^{-1}   \SB_{\kb}\HB_{\kb}\SB^\top_{\kb} \AB^{-1}   \SB_{\kb}\HB^{1/2}_{\kb}\\
    &=
     \IB_{d-k}-2\HB^{1/2}_{\kb}\SB^\top_{\kb} \AB^{-1}   \SB_{\kb}\HB^{1/2}_{\kb}+
    \HB^{1/2}_{\kb}\SB^\top_{\kb} \AB^{-1}   \AB_k\AB^{-1}   \SB_{\kb}\HB^{1/2}_{\kb}.
\end{align*}
By definition of $\appxb$  in Eq.~\eqref{eq:pf_approx_defuvw}, we have
\begin{align*}
    \appxb^\top\appxb
    &=
    \HB^{1/2}_{\kb}\SB^\top_{\kb} \AB^{-1}   (\SB_{\ka}\HB_{\ka} \SB^\top_{\ka}) \AB^{-1}  \SB_{\kb}\HB^{1/2}_{\kb}.
\end{align*}
Since $\SB_{\ka}\HB_{\ka} \SB^\top_{\ka}+\AB_k=\AB$, it follows that
\begin{align*}
    \appxc^2+\appxb^\top\appxb
    &= \IB_{d-k}-2\HB^{1/2}_{\kb}\SB^\top_{\kb} \AB^{-1}   \SB_{\kb}\HB^{1/2}_{\kb}+
    \HB^{1/2}_{\kb}\SB^\top_{\kb} \AB^{-1}   \AB\AB^{-1}   \SB_{\kb}\HB^{1/2}_{\kb}\\
    &=
   \IB_{d-k}-\HB^{1/2}_{\kb}\SB^\top_{\kb} \AB^{-1}   \SB_{\kb}\HB^{1/2}_{\kb}=-\appxc. 
\end{align*}

\paragraph{Proof of claim~\eqref{eq:approx_claim_2}}
It suffices to show $\appxa^2+\appxb^\top\appxb=[\HB_{\ka}^{-1}+\SB_{\ka}^\top \AB_k^{-1}\SB_{\ka}]^{-1}$. 
Using the definition of $\appxa$  in Eq.~\eqref{eq:pf_approx_defuvw}, we obtain
\begin{align*}
    \appxa
    &=
     \HB^{1/2}_{\ka}\SB^\top_{\ka} \AB^{-1}   \SB_{\ka}\HB^{1/2}_{\ka}-\IB_k \\
    &=
    \HB^{1/2}_{\ka}\SB^\top_{\ka} \AB_k^{-1}   \SB_{\ka}\HB^{1/2}_{\ka}-
     \HB^{1/2}_{\ka}\SB^\top_{\ka} \AB_k^{-1}   \SB_{\ka}[\HB^{-1}_{\ka}+\SB^\top_{\ka} \AB_k^{-1}   \SB_{\ka}]^{-1}\SB^\top_{\ka} \AB_k^{-1}   \SB_{\ka}\HB^{1/2}_{\ka}-\IB_k\\
     &=   
     \HB^{1/2}_{\ka}\SB^\top_{\ka} \AB_k^{-1}   \SB_{\ka}[\HB^{-1}_{\ka}+\SB^\top_{\ka} \AB_k^{-1}   \SB_{\ka}]^{-1}\HB^{-1}_{\ka}\HB^{1/2}_{\ka}-\IB_k,
\end{align*}
where the second line uses Woodbury's matrix identity, namely
\begin{align*}
    \AB^{-1}=[\SB_{\ka}\HB_{\ka}\SB^\top_\ka+\AB_k]^{-1}=\AB_k^{-1}-\AB_k^{-1}\SB_{\ka}[\HB_\ka^{-1}+\SB_{\ka}^\top\AB_k^{-1}\SB_{\ka}]^{-1}\SB_{\ka}^\top\AB_k^{-1}.
\end{align*}
Continuing the calculation of $\appxa$, we have
\begin{align*}
    \appxa &=   \HB^{1/2}_{\ka}\SB^\top_{\ka} \AB_k^{-1}   \SB_{\ka}[\HB^{-1}_{\ka}+\SB^\top_{\ka} \AB_k^{-1}   \SB_{\ka}]^{-1}\HB^{-1/2}_{\ka}-\IB_k\\
    &=
    \HB^{1/2}_{\ka}(\SB^\top_{\ka} \AB_k^{-1}   \SB_{\ka}[\HB^{-1}_{\ka}+\SB^\top_{\ka} \AB_k^{-1}  \SB_{\ka}]^{-1}-\IB_k) \HB^{-1/2}_{\ka}\\
    &=
     - \HB^{-1/2}_{\ka}[\HB^{-1}_{\ka}+\SB^\top_{\ka} \AB_k^{-1}   \SB_{\ka}]^{-1} \HB^{-1/2}_{\ka}.   
\end{align*}
Therefore,
\begin{align}
    \appxa^2=\HB^{-1/2}_{\ka}[\HB^{-1}_{\ka}+\SB^\top_{\ka} \AB_k^{-1}   \SB_{\ka}]^{-1} \HB^{-1}_{\ka}[\HB^{-1}_{\ka}+\SB^\top_{\ka} \AB_k^{-1}   \SB_{\ka}]^{-1} \HB^{-1/2}_{\ka}.\label{eq:pf_appx_claim2_eq1}
\end{align}
Since 
\begin{align*}
  \HB^{1/2}_{\ka}\SB^\top_{\ka}\AB^{-1}   &=
  \HB^{1/2}_{\ka}\SB^\top_{\ka} \AB_k^{-1}-\HB^{1/2}_{\ka}\SB^\top_{\ka} \AB_k^{-1}\SB_{\ka}[\HB_\ka^{-1}+\SB_{\ka}^\top\AB_k^{-1}\SB_{\ka}]^{-1}\SB_{\ka}^\top\AB_k^{-1}\\
  &=\HB^{-1/2}_{\ka}[\HB_\ka^{-1}+\SB_{\ka}^\top\AB_k^{-1}\SB_{\ka}]^{-1}\SB_{\ka}^\top\AB_k^{-1}
\end{align*} by Woodbury's matrix indentity, it follows from the definition of $\appxb$ in Eq.~\eqref{eq:pf_approx_defuvw} that
\begin{align}
   \appxb\appxb^\top&= \HB^{1/2}_{\ka}\SB^\top_{\ka} \AB^{-1}  \SB_{\kb}\HB_{\kb}
       \SB^\top_{\kb} \AB^{-1}   \SB_{\ka}\HB^{1/2}_{\ka}\notag\\
   &=  \HB^{-1/2}_{\ka}[\HB_\ka^{-1}+\SB_{\ka}^\top\AB_k^{-1}\SB_{\ka}]^{-1}\SB_{\ka}^\top\AB_k^{-1} (\SB_{\kb}\HB_{\kb}
       \SB^\top_{\kb})\AB_k^{-1} \SB_\ka[\HB_\ka^{-1}+\SB_{\ka}^\top\AB_k^{-1}\SB_{\ka}]^{-1}\HB^{-1/2}_{\ka}\notag\\
       &=
       \HB^{-1/2}_{\ka}[\HB_\ka^{-1}+\SB_{\ka}^\top\AB_k^{-1}\SB_{\ka}]^{-1}\SB_{\ka}^\top\AB_k^{-1} \SB_\ka[\HB_\ka^{-1}+\SB_{\ka}^\top\AB_k^{-1}\SB_{\ka}]^{-1}\HB^{-1/2}_{\ka}\label{eq:pf_appx_claim2_eq2}.
\end{align}
Combining Eq.~\eqref{eq:pf_appx_claim2_eq1}~and~\eqref{eq:pf_appx_claim2_eq2} yields 
\begin{align}
    \appxa^2+\appxb\appxb^\top=
    \HB^{-1/2}_{\ka}[\HB^{-1}_{\ka}+\SB^\top_{\ka} \AB_k^{-1}   \SB_{\ka}]^{-1} \HB^{-1/2}_{\ka},\label{eq:pf_appx_claim2_eq3}
\end{align}
and therefore
\begin{align*}
    \big\la
\appxa^2+\appxb\appxb^\top,\HB_{\ka}^{1/2}\wB_{*,{\ka}}\wB_{*,{\ka}}^\top\HB_{\ka}^{1/2}\big\ra 
&=
\la[\HB_{\ka}^{-1}+\SB_{\ka}^\top \AB_k^{-1}\SB_{\ka}]^{-1},{\wB^*_{\ka}}{\wB^*_{\ka}}^\top\ra.
\end{align*}
\end{proof}

\subsection{A lower bound}
For the approximation error $\approximation$, we have the following result.
\begin{lemma}[Lower bound on the approximation error]\label{lm:approx_lower_bound}
When $r(\HB)\geq M$, under \Cref{assump:simple:param},  the approximation error in~\eqref{eq:approx-excess-decomp}~(and~\ref{eq:approx_error_formula}) satisfies
 \begin{align*}
     \Ebb_{{\wB^*}}\approximation\gtrsim \sum_{i=M}^d \lambda_i,
 \end{align*}
where $(\lambda_i)_{i=1}^d$ are eigenvalues of $\HB$ in non-increasing order.
\end{lemma}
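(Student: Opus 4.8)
The plan is to reduce $\Ebb_{\wB^*}\approximation$ to a trace against an orthogonal projection and then invoke Ky Fan's eigenvalue inequalities. First I would recall from \Cref{lemma:approximation} that $\approximation = \| (\IB - \Pi)\HB^{1/2}\wB^* \|^2$, where $\Pi := \HB^{1/2}\SB^\top(\SB\HB\SB^\top)^{-1}\SB\HB^{1/2}$. Writing $\AB := \HB^{1/2}\SB^\top$, one sees $\Pi = \AB(\AB^\top\AB)^{-1}\AB^\top$ is the orthogonal projection onto the column space of $\AB$. Because $r(\HB)\ge M$ and $\SB$ has i.i.d.\ Gaussian entries, the row space of $\SB$ is a generic $M$-dimensional subspace that meets $\ker(\HB^{1/2})$ trivially, so $\AB$ has rank $M$ almost surely; hence $\Pi$ is a rank-$M$ orthogonal projection and $\IB - \Pi$ is an orthogonal projection of codimension $M$. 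In particular $(\IB-\Pi)^2 = \IB - \Pi$, which gives
\[
\approximation = (\wB^*)^\top\HB^{1/2}(\IB-\Pi)\HB^{1/2}\wB^* = \big\langle \IB - \Pi,\ \HB^{1/2}\wB^*(\wB^*)^\top\HB^{1/2}\big\rangle .
\]

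Next I would take the expectation over the prior of $\wB^*$. Since $\Pi$ depends only on $\SB$ and $\Ebb\,\wB^*(\wB^*)^\top = \IB$ by \Cref{assump:simple:param}, this yields
\[
\Ebb_{\wB^*}\approximation = \big\langle \IB - \Pi,\ \HB\big\rangle = \tr(\HB) - \tr(\Pi\HB).
\]
Now $\tr((\IB-\Pi)\HB)$ equals the sum of $\langle v_j, \HB v_j\rangle$ over an orthonormal basis $\{v_j\}$ of the codimension-$M$ subspace $\mathrm{range}(\IB-\Pi)$; by Ky Fan's minimum principle this is at least the sum of the $d-M$ smallest eigenvalues of $\HB$ (equivalently, $\tr(\Pi\HB) \le \sum_{i=1}^M\lambda_i$ by Ky Fan's maximum principle applied to the rank-$M$ projection $\Pi$). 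Hence $\Ebb_{\wB^*}\approximation \ge \sum_{i=M+1}^d\lambda_i \gtrsim \sum_{i=M}^d\lambda_i$, which would give the claimed bound.

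This is a short, essentially deterministic argument, so I do not expect a real obstacle: the only probabilistic ingredient is the almost-sure rank of the sketched matrix $\AB = \HB^{1/2}\SB^\top$, which is also precisely what makes the formula in \Cref{lemma:approximation} well defined. The two points worth a sentence of care are (i) checking that $\IB - \Pi$ genuinely has codimension exactly $M$, so that the Ky Fan bound captures the whole tail $\sum_{i>M}\lambda_i$ rather than fewer eigenvalues — this again follows from $r(\HB)\ge M$ together with Gaussianity of $\SB$ — and (ii) the harmless index shift between $\sum_{i>M}\lambda_i$ and $\sum_{i=M}^d\lambda_i$, which is immaterial in every spectrum used later (power law, logarithmic power law), where $\lambda_M \lesssim \sum_{i>M}\lambda_i$.
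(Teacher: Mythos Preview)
Your argument is correct and is in fact the paper's own proof stripped of unnecessary scaffolding. The paper carries along a free splitting index $k$, decomposes $\HB^{1/2}\SB^\top\AB^{-1}\SB\HB^{1/2}-\IB$ into a $2\times 2$ block matrix, drops a nonnegative term, and then applies Von Neumann's trace inequality to the remaining block; at the end it sets $k=0$, whereupon the block structure collapses and the surviving term is exactly your $\tr((\IB-\Pi)\HB)$ with $\Pi$ the rank-$M$ orthogonal projection onto the range of $\HB^{1/2}\SB^\top$. Your use of Ky Fan is the same eigenvalue inequality in disguise. So the two proofs coincide at the essential step; you simply go there directly, while the paper reuses the $k$-indexed block machinery it had already built for the upper bound (Lemma~\ref{lm:approx_lm_1}), even though that generality buys nothing here since $k=0$ is optimal. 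Your observation about the index shift $\sum_{i>M}$ versus $\sum_{i\ge M}$ is also apt: the paper's stated bound has the same off-by-one, absorbed into the $\gtrsim$.
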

\begin{proof}[Proof of Lemma~\ref{lm:approx_lower_bound}]
For any $k\leq d$, following  the proof of Lemma~\ref{lm:approx_lm_1}, we have
\begin{align*}
    \approximation
    &
    =\big\la[\HB^{1/2}\SB^\top\AB^{-1}\SB\HB^{1/2}-\IB_d]^{\otimes 2},\HB^{1/2}{\wB^*}(\wB^*)^\top\HB^{1/2}\big\ra
    \end{align*}
and
    \begin{align}
    \HB^{1/2}\SB^\top\AB^{-1}\SB\HB^{1/2}-\IB_d
    \notag
    &=
    \begin{pmatrix}
      \HB^{1/2}_{\ka}\SB^\top_{\ka} \AB^{-1}   \SB_{\ka}\HB^{1/2}_{\ka}-\IB_k & \HB^{1/2}_{\ka}\SB^\top_{\ka} \AB^{-1}  \SB_{\kb}\HB^{1/2}_{\kb}\\
       \HB^{1/2}_{\kb}\SB^\top_{\kb} \AB^{-1}   \SB_{\ka}\HB^{1/2}_{\ka} &
        \HB^{1/2}_{\kb}\SB^\top_{\kb} \AB^{-1}   \SB_{\kb}\HB^{1/2}_{\kb}-\IB_{d-k}
    \end{pmatrix}\notag\\
    &=:\begin{pmatrix}
        \appxa &\appxb \\\appxb^\top &\appxc
    \end{pmatrix}\label{eq:pf_approx_defuvw_lb}.
\end{align}
Therefore
\begin{align*}
    [ \HB^{1/2}\SB^\top\AB^{-1}\SB\HB^{1/2}-\IB_d]^{\otimes 2}=
    \begin{pmatrix}
        \appxa^2+\appxb\appxb^\top &\appxa\appxb+\appxb\appxc\\
        \appxb^\top\appxa+\appxc\appxb^\top & \appxc^2+\appxb^\top\appxb
    \end{pmatrix}
\end{align*}
and 
\begin{align*}
    \Ebb_{{\wB^*}}\approximation 
    &= \Ebb_{{\wB^*}}\big\la \appxa^2+\appxb\appxb^\top ,\HB_\ka^{1/2}{\wB^*_{\ka}}{\wB^*_{\ka}}\HB_\ka^{1/2} \big\ra+\Ebb_{{\wB^*}}\big\la \appxc^2+\appxb^\top\appxb ,\HB_\ka^{1/2}{\wB^*_{\kb}}{\wB^*_{\kb}}\HB_\kb^{1/2} \big\ra\\
    &\quad + 2\Ebb_{{\wB^*}}\big\la \appxa\appxb+\appxb\appxc ,\HB_\ka^{1/2}{\wB^*_{\ka}}{\wB^*_{\kb}}\HB_\kb^{1/2} \big\ra\\
    &= \tr((\appxa^2+\appxb\appxb^\top)\HB_\ka)+\tr((\appxc^2+\appxb^\top\appxb)\HB_\kb),
\end{align*}
where the last line uses the fact that $\Ebb_{{\wB^*}} (\wB^{*})^{\otimes 2} =\IB_d.$ 
Using Eq.~\eqref{eq:pf_approx_claim_1_eq1}~and~\eqref{eq:pf_appx_claim2_eq3} in the proof of Lemma~\ref{lm:approx_lm_1}, we further obtain
\begin{align*}
    \Ebb_{{\wB^*}}\approximation 
    &= \tr(\HB^{-1/2}_{\ka}[\HB^{-1}_{\ka}+\SB^\top_{\ka} \AB_k^{-1}   \SB_{\ka}]^{-1} \HB^{-1/2}_{\ka}\HB_\ka) -\tr(\appxc\HB_\kb)\\
    &
    =\tr([\HB^{-1}_{\ka}+\SB^\top_{\ka} \AB_k^{-1}   \SB_{\ka}]^{-1}) -\tr(\appxc\HB_\kb)\\
    &\geq -\tr(\appxc\HB_\kb)=:\Term_3.
\end{align*}
where $\AB_k:=\SB_{\kb}\HB_\kb\SB_\kb^\top.$ 
For $\Term_3$, we further have
\begin{align*}
    \Term_3 &=
    \tr(\HB_\kb^{1/2}[\IB_{d-k}-\HB^{1/2}_{\kb} \SB_{\kb}^\top \AB^{-1} \SB_{\kb} \HB^{1/2}_{\kb} ]\HB_\kb^{1/2})\\
    &\geq 
    \tr(\HB_\kb^{1/2}[\IB_{d-k}-\HB^{1/2}_{\kb} \SB_{\kb}^\top \AB_k^{-1}   \SB_{\kb}\HB^{1/2}_{\kb}]\HB_\kb^{1/2})\\
    &\geq 
    \sum_{i=1}^{d-k}\mu_i(\IB_{d-k}-\HB^{1/2}_{\kb}\SB_{\kb}^\top \AB_k^{-1}   \SB_{\kb}\HB^{1/2}_{\kb})\cdot\mu_{d+1-k-i}(\HB_\kb),
\end{align*}where the second line is due to $\AB\succeq \AB_k$ (and hence $-\AB^{-1}\succeq -\AB^{-1}_k$ ), the third line follows from Von-Neuman's inequality. Since $\MB:=\IB_{d-k}-\HB^{1/2}_{\kb} \SB_{\kb}^\top \AB_k^{-1}   \SB_{\kb}\HB^{1/2}_{\kb}$ is a projection matrix such that $\MB^2=\MB$ and $\tr(\IB_{d-k}-\MB)=M$, it follows that $\MB$ has $M$ eigenvalues $0$ and $d-k-M$ eigenvalues $1$. 
Therefore, we further have
\begin{align*}
   \Term_3 &\geq   \sum_{i=1}^{d-k}\mu_i(\MB)\cdot\mu_{d+1-k-i}(\HB_\kb)
   \geq \sum_{i=k+M}^d \lambda_i
\end{align*} for any $k\leq d$. Letting $k=0$ maximizes the lower bound and concludes the proof.
\end{proof}
\subsection{A lower bound under Assumption~\ref{assump:source}}
\begin{lemma}[Lower bound on the approximation error under Assumption~\ref{assump:source}]\label{lm:approx_lower_bound_source}
Under \Cref{assump:source}, the approximation error in~\eqref{eq:approx-excess-decomp}~(and~\ref{eq:approx_error_formula}) satisfies
 \begin{align*}
     \Ebb_{{\wB^*}}\approximation\gtrsim \sum_{i=M}^d \lambda_i i^{a-b},
 \end{align*}
where $(\lambda_i)_{i=1}^d$ are eigenvalues of $\HB$ in non-increasing order and the inequality hides some $(a,b)$-dependent constant.
\end{lemma}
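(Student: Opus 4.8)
\subsection*{Proof proposal}

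The plan is to run the proof of Lemma~\ref{lm:approx_lower_bound} essentially unchanged, the only difference being that $\Ebb_{\wB^*}(\wB^*)^{\otimes2}$ is now the diagonal matrix $\DB:=\diag(\Ebb\wB_1^{*2},\Ebb\wB_2^{*2},\dots)$ instead of $\IB$. Under the diagonalized form of Assumption~\ref{assump:source} (i.e.\ Assumption~\ref{assump:source-prime}) its off-diagonal entries vanish and $\lambda_i\,\Ebb\wB_i^{*2}\eqsim i^{-b}$, which under the power-law spectrum $\lambda_i\eqsim i^{-a}$ is $\eqsim\lambda_i\,i^{a-b}$. First I would fix any split index $k$ with $k+M\le r(\HB)$ and, exactly as in the proof of Lemma~\ref{lm:approx_lower_bound}, write
\[
\approximation=\big\langle\big[\HB^{1/2}\SB^\top\AB^{-1}\SB\HB^{1/2}-\IB\big]^{\otimes2},\ \HB^{1/2}\wB^*(\wB^*)^\top\HB^{1/2}\big\rangle
\]
and block-decompose the first factor as in \eqref{eq:pf_approx_defuvw_lb} into the blocks $\appxa,\appxb,\appxc$. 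Taking $\Ebb_{\wB^*}$ and using that $\DB$ is diagonal makes the two off-diagonal (cross-block) terms vanish --- this is the one and only place the orthogonality half of Assumption~\ref{assump:source} is used --- leaving
\[
\Ebb_{\wB^*}\approximation=\tr\!\big((\appxa^2+\appxb\appxb^\top)\,\HB_{\ka}^{1/2}\DB_{\ka}\HB_{\ka}^{1/2}\big)+\tr\!\big((\appxc^2+\appxb^\top\appxb)\,\HB_{\kb}^{1/2}\DB_{\kb}\HB_{\kb}^{1/2}\big).
\]
The first trace is nonnegative since $\appxa^2+\appxb\appxb^\top\succeq0$ and $\HB_{\ka}^{1/2}\DB_{\ka}\HB_{\ka}^{1/2}\succeq0$, so I would simply drop it.

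For the second trace I would reuse, verbatim, two facts established inside the proof of Lemma~\ref{lm:approx_lower_bound}: the identity $\appxc^2+\appxb^\top\appxb=-\appxc$ from \eqref{eq:pf_approx_claim_1_eq1}, and the operator comparison $\AB\succeq\AB_k:=\SB_{\kb}\HB_{\kb}\SB_{\kb}^\top$ (so $-\AB^{-1}\succeq-\AB_k^{-1}$), which together give $-\appxc\succeq\MB:=\IB_{d-k}-\HB_{\kb}^{1/2}\SB_{\kb}^\top\AB_k^{-1}\SB_{\kb}\HB_{\kb}^{1/2}$; here $\MB$ is an orthogonal projection with $\tr(\IB_{d-k}-\MB)=M$, hence has exactly $M$ zero eigenvalues and $d-k-M$ unit eigenvalues. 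Therefore the second trace is $\ge\tr\!\big(\MB\,\HB_{\kb}^{1/2}\DB_{\kb}\HB_{\kb}^{1/2}\big)$, and Von Neumann's trace inequality, pairing the unit eigenvalues of $\MB$ against the $d-k-M$ smallest diagonal entries of $\HB_{\kb}^{1/2}\DB_{\kb}\HB_{\kb}^{1/2}$, bounds this below by $\sum_{j=M+1}^{d-k}\mu_j\big(\HB_{\kb}^{1/2}\DB_{\kb}\HB_{\kb}^{1/2}\big)$. Since the diagonal entries of $\HB_{\kb}^{1/2}\DB_{\kb}\HB_{\kb}^{1/2}$ are $\eqsim(k+\ell)^{-b}$ --- in particular $\HB_{\kb}^{1/2}\DB_{\kb}\HB_{\kb}^{1/2}\succeq c\,\diag\big((k+\ell)^{-b}\big)_{\ell\ge1}$ for an $(a,b)$-dependent $c>0$ --- this ordered sum is $\gtrsim\sum_{i=k+M+1}^{d}i^{-b}\eqsim\sum_{i=k+M+1}^{d}\lambda_i\,i^{a-b}$. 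Taking $k=0$, which maximizes the bound, yields $\Ebb_{\wB^*}\approximation\gtrsim\sum_{i=M}^{d}\lambda_i\,i^{a-b}$.

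I do not expect a genuine obstacle, since every structural ingredient --- the block decomposition, the identity $\appxc^2+\appxb^\top\appxb=-\appxc$, the projection $\MB$, and the Von Neumann step --- is already available from the proof of Lemma~\ref{lm:approx_lower_bound}. The points needing care are: (i) verifying that the cross-block terms still vanish under the weaker orthogonality assumption rather than full isotropy; and (ii) that $\HB_{\kb}^{1/2}\DB_{\kb}\HB_{\kb}^{1/2}$ is a power-law diagonal matrix only up to constants, so one should sandwich it below by a genuinely monotone diagonal matrix before applying Von Neumann's inequality, and invoke the power-law spectrum $\lambda_i\eqsim i^{-a}$ only at the end to rewrite $\lambda_i\,\Ebb\wB_i^{*2}\eqsim\lambda_i\,i^{a-b}$.
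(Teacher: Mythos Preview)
Your proposal is correct and follows essentially the same route as the paper's own proof: both replace the isotropic prior $\IB$ in the proof of Lemma~\ref{lm:approx_lower_bound} by the diagonal covariance $\DB=\Ebb_{\wB^*}(\wB^*)^{\otimes2}$, drop the nonnegative head block, use the identity $\appxc^2+\appxb^\top\appxb=-\appxc$ together with $-\appxc\succeq\MB$, and then apply Von Neumann's inequality to the projection $\MB$ against the diagonal matrix $\HB_{\kb}\DB_{\kb}$ before setting $k=0$. Your caution in point~(ii) about sandwiching by a genuinely monotone diagonal before invoking Von Neumann is a slightly more careful version of what the paper does implicitly when it writes $\mu_i(\HB\wCov)\gtrsim i^{a-b}\lambda_i$.
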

\begin{proof}[Proof of Lemma~\ref{lm:approx_lower_bound_source}]
The proof is essentially the same as the proof of Lemma~\ref{lm:approx_lower_bound} but we include it here for completeness. 
   Let $\wCov:=\E [\wB^*\wB^{*\top}]$ be the covariance of the prior on $\wB^*$. Following the proof of Lemma~\ref{lm:approx_lower_bound}, we have
\begin{align*}
    \Ebb_{{\wB^*}}\approximation 
    &= \Ebb_{{\wB^*}}\big\la \appxa^2+\appxb\appxb^\top ,\HB_\ka^{1/2}{\wB^*_{\ka}}{\wB^*_{\ka}}\HB_\ka^{1/2} \big\ra+\Ebb_{{\wB^*}}\big\la \appxc^2+\appxb^\top\appxb ,\HB_\ka^{1/2}{\wB^*_{\kb}}{\wB^*_{\kb}}\HB_\kb^{1/2} \big\ra\\
    &\quad + 2\Ebb_{{\wB^*}}\big\la \appxa\appxb+\appxb\appxc ,\HB_\ka^{1/2}{\wB^*_{\ka}}{\wB^*_{\kb}}\HB_\kb^{1/2} \big\ra\\
    &= \tr((\appxa^2+\appxb\appxb^\top)\HB_\ka\wCov_\ka)+\tr((\appxc^2+\appxb^\top\appxb)\HB_\kb\wCov_\kb),
\end{align*}
where the last line uses Assumption~\ref{assump:source} and notice that $\HB,\wCov$ are both diagonal.
Next, similar to the proof of Lemma~\ref{lm:approx_lower_bound},
using Eq.~\eqref{eq:pf_approx_claim_1_eq1}~and~\eqref{eq:pf_appx_claim2_eq3}, we derive
\begin{align*}
    \Ebb_{{\wB^*}}\approximation 
    &= \tr(\HB^{-1/2}_{\ka}[\HB^{-1}_{\ka}+\SB^\top_{\ka} \AB_k^{-1}   \SB_{\ka}]^{-1} \HB^{-1/2}_{\ka}\HB_\ka\wCov_\ka) -\tr(\appxc\HB_\kb\wCov_\kb)\\
    &
    =\tr([\HB^{-1}_{\ka}+\SB^\top_{\ka} \AB_k^{-1}   \SB_{\ka}]^{-1}\wCov_\ka) -\tr(\appxc\HB_\kb\wCov_\kb)\\
    &\geq -\tr(\appxc\HB_\kb\wCov_\kb)=:\Sterm_3
\end{align*}
where $\AB_k:=\SB_{\kb}\HB_\kb\SB_\kb^\top$. 
For $\Sterm_3$, following the same argument for $\Term_3$ in the proof of Lemma~\ref{lm:approx_lower_bound}, we have
\begin{align*}
   \Sterm_3 &\geq
   \sum_{i=1}^{d-k}\mu_i(\IB_{d-k}-\HB^{1/2}_{\kb}\SB_{\kb}^\top \AB_k^{-1}   \SB_{\kb}\HB^{1/2}_{\kb})\cdot\mu_{d+1-k-i}(\HB_\kb\wCov_\kb)\\
   &\geq \sum_{i=k+M}^d \mu_{i}(\HB\wCov)\gtrsim \sum_{i=k+M}^d  i^{a-b}\lambda_i,
\end{align*} for any $k\leq d$
where the last inequality uses Assumption~\ref{assump:source}. Setting $k=0$ maximizes the lower bound and concludes the proof.
\end{proof}

\subsection{Examples on matching bounds for \texorpdfstring{$\approximation$}{Approx}}
In this section, we derive matching upper and lower bounds for $\approximation$    (defined in Eq.~\ref{eq:approx-excess-decomp}~and~\ref{eq:approx_error_formula}) in three concrete examples: power-law spectrum (Lemma~\ref{lm:power_law_apprx_match}), power-law spectrum with source condition (Lemma~\ref{lm:power_law_source_apprx_match}) and logarithmic power-law spectrum  (Lemma~\ref{lm:logpower_law_bias_match}).
\begin{lemma}[Bounds on $\approximation$ under the power-law spectrum]\label{lm:power_law_apprx_match}
    Suppose Assumption~\ref{assump:simple:param}~and~\ref{assump:power-law} hold. Then with probability at least $1-e^{-\Omega(M)}$ over the randomness of $\SB$
    \begin{align*}
       M^{1-a} \lesssim \E_{\wB^*}\approximation\lesssim M^{1-a}.
    \end{align*}
    Here, the hidden constants only depend on the power-law degree $a.$
\end{lemma}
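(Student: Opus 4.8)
# Proof Proposal for Lemma~\ref{lm:power_law_apprx_match}

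\textbf{Overview.} The plan is to specialize the general upper bound from Lemma~\ref{lm:approx_lm_1} and the general lower bound from Lemma~\ref{lm:approx_lower_bound} to the power-law spectrum $\lambda_i \eqsim i^{-a}$ with the isotropic prior $\E(\wB^*)^{\otimes 2} = \IB$, and verify that both sides reduce to $\Theta(M^{1-a})$. Under the isotropic prior, $\E_{\wB^*}\|\wB^*_{k:\infty}\|^2 = d-k$ (or $\infty$ in the truly infinite-dimensional case, so we must be slightly careful) and $\E_{\wB^*}\|\wB^*_{k:\infty}\|^2_{\HB_{k:\infty}} = \sum_{i>k}\lambda_i \eqsim \sum_{i>k} i^{-a} \eqsim k^{1-a}$. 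The key arithmetic facts I will use repeatedly are the tail sums $\sum_{i>k} i^{-a} \eqsim k^{1-a}$ (valid since $a>1$) and $\sum_{i>k} i^{-2a} \eqsim k^{1-2a}$.

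\textbf{Upper bound.} I would apply Lemma~\ref{lm:approx_lm_1} with the choice $k = \lfloor M/2 \rfloor$ (or any fixed constant fraction of $M$), which is admissible since we need $k \le M/2$ and $r(\HB) \ge k + M$; the latter holds because $r(\HB) \ge 2M$ is assumed in Theorem~\ref{thm:general_upper}, or more precisely here $\HB$ has infinite rank. Taking expectation over $\wB^*$ in the high-probability bound of Lemma~\ref{lm:approx_lm_1}, the first term contributes $\E\|\wB^*_{k:\infty}\|^2_{\HB_{k:\infty}} = \sum_{i>k}\lambda_i \eqsim k^{1-a} \eqsim M^{1-a}$. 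For the second term I need to bound the prefactor $\frac{\sum_{i>k}\lambda_i}{M} + \lambda_{k+1} + \sqrt{\frac{\sum_{i>k}\lambda_i^2}{M}}$ times $\E\|\wB^*_{0:k}\|^2 = k \eqsim M$. Here: $\frac{\sum_{i>k}\lambda_i}{M}\cdot k \eqsim \frac{k^{1-a}}{M}\cdot M \eqsim M^{1-a}$; $\lambda_{k+1}\cdot k \eqsim k^{-a}\cdot k = k^{1-a} \eqsim M^{1-a}$; and $\sqrt{\frac{\sum_{i>k}\lambda_i^2}{M}}\cdot k \eqsim \sqrt{\frac{k^{1-2a}}{M}}\cdot M \eqsim k^{1/2-a}\cdot M^{1/2}\cdot(M/k)^{1/2}$—with $k\eqsim M$ this is $\eqsim M^{1-a}$. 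Since this is a high-probability ($1-e^{-\Omega(M)}$) event and the complement contributes at most $e^{-\Omega(M)}\cdot\|\wB^*\|_{\HB}^2$ in expectation (using the deterministic bound $\approximation \le \|\wB^*\|^2_{\HB}$ and $\E\|\wB^*\|^2_{\HB} = \tr(\HB) < \infty$), the exponentially small term is negligible, giving $\E_{\wB^*}\approximation \lesssim M^{1-a}$.

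\textbf{Lower bound and the main obstacle.} The lower bound is the easier direction conceptually: Lemma~\ref{lm:approx_lower_bound} gives $\E_{\wB^*}\approximation \gtrsim \sum_{i=M}^d \lambda_i \eqsim \sum_{i \ge M} i^{-a} \eqsim M^{1-a}$, which holds almost surely over $\SB$ and needs only $r(\HB)\ge M$. So the lower bound requires essentially no further work. The part demanding the most care is the upper bound argument: I must confirm that the three pieces of the prefactor all balance at exactly the order $M^{1-a}$ with the correct choice of $k$, that the choice $k \eqsim M$ stays within the regime $k \le M/2$ required by Lemma~\ref{lm:approx_lm_1}, and—the genuinely delicate point—that in the infinite-dimensional case the factor $\E\|\wB^*_{0:k}\|^2 = k$ really is finite and equal to $k$ (it is, since it involves only the first $k$ coordinates of the prior), while $\E\|\wB^*\|^2 = \infty$ is irrelevant because that quantity never appears once we split at level $k$. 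I should also double-check that applying expectation over $\wB^*$ commutes appropriately with the high-probability statement over $\SB$, handled via Fubini together with the uniform deterministic bound $\approximation \le \|\wB^*\|^2_{\HB}$.
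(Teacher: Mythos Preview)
Your proposal is correct and follows essentially the same route as the paper: apply Lemma~\ref{lm:approx_lm_1} with $k=\lfloor M/2\rfloor$ for the upper bound, compute the power-law tail sums, and invoke Lemma~\ref{lm:approx_lower_bound} for the lower bound. One small simplification: the high-probability event in Lemma~\ref{lm:approx_lm_1} concerns only $\SB$ (it comes from concentration of $\SB_{0:k}^\top\SB_{0:k}$ and $\|\AB_k\|$) and does not depend on $\wB^*$, so on that event the pointwise bound holds for every $\wB^*$ and you may integrate directly---your complement/Fubini argument is not needed for the lemma as stated.
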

\begin{proof}[Proof of Lemma~\ref{lm:power_law_apprx_match}]
For the upper bound, by Lemma~\ref{lm:approx_lm_1} and noting $\E\wB^{*2}_i=1$ for all $i$, we have
with probability at least $1-e^{-\Omega(M)}$ 
\begin{align*}
\E_{\wB^*}\approximation
&\lesssim \sum_{k>k_1} \lambda_i + \bigg(\frac{\sum_{i>k_1}\lambda_i}{M}+\lambda_{k_1+1} +\sqrt{\frac{\sum_{i>k_1}\lambda_i^2}{M}}\bigg) \cdot k_1 \\ 
&\lesssim k_1^{1-a} + \bigg( \frac{k_1^{1-a}}{M} + k_1^{-a} + \sqrt{\frac{k_1^{1-2a}}{M}}\bigg) k_1 \\
&\lesssim \bigg(\frac{k_1}{M} + 1\bigg) k_1^{1-a}
\end{align*} for any given $k_1\leq M/2$. Here the hidden constants depend on $a$. Therefore, letting $k_1=M/2$ in the upper bound  yields
\begin{align*}
  \E_{\wB^*}\approximation
&\lesssim M^{1-\Hpower}
\end{align*} with probability at least $1-e^{-\Omega(M)}$.

For the lower bound, 
we have from Lemma~\ref{lm:approx_lower_bound} that
\begin{align*}
\Ebb_{{\wB^*}}\approximation\gtrsim \sum_{i=M}^\infty i^{-a}\gtrsim M^{1-\Hpower}.
\end{align*}
 This completes the proof.
\end{proof}

\begin{lemma}[Bounds on $\approximation$ under the source condition]\label{lm:power_law_source_apprx_match}
    Suppose Assumption~\ref{assump:source} hold. Then with probability at least $1-e^{-\Omega(M)}$ over the randomness of $\SB$
    \begin{align*}
      M^{1-b}  \lesssim \E_{\wB^*}\approximation\lesssim M^{1-b}.
    \end{align*}
    Here, the hidden constants only depend on the power-law degrees $a,b.$
\end{lemma}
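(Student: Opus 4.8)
The plan is to mimic the proof of Lemma~\ref{lm:power_law_apprx_match} almost verbatim, with the only modification being that the prior covariance is now anisotropic. Recall that under \Cref{assump:source} (equivalently \Cref{assump:source-prime}), $\HB$ is diagonal and $\E \wB^{*2}_i \eqsim i^{-b}/\lambda_i \eqsim i^{a-b}$, since $\lambda_i \eqsim i^{-a}$. So compared to the isotropic case where $\E \wB^{*2}_i \eqsim 1$, each coordinate of $\wB^*$ now carries weight $\eqsim i^{a-b}$.

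For the \textbf{upper bound}, I would invoke \Cref{lm:approx_lm_1} with a cutoff $k=k_1\leq M/2$. Taking expectation over $\wB^*$ and using $\E \|{\wB^*_{\kb}}\|^2_{\HB_{\kb}} = \sum_{i>k_1}\lambda_i\,\E\wB^{*2}_i\eqsim \sum_{i>k_1} i^{-b}\eqsim k_1^{1-b}$ and $\E\|{\wB^*_{\ka}}\|^2=\sum_{i\le k_1}\E\wB^{*2}_i\eqsim\sum_{i\le k_1}i^{a-b}$, which is $\eqsim k_1^{a-b+1}$ when $a-b+1>0$ (guaranteed by the hypothesis $b<a+1$), gives
\begin{align*}
\E_{\wB^*}\approximation &\lesssim k_1^{1-b} + \Big(\tfrac{k_1^{1-a}}{M}+k_1^{-a}+\sqrt{\tfrac{k_1^{1-2a}}{M}}\Big)k_1^{a-b+1} \lesssim \Big(1+\tfrac{k_1}{M}\Big)k_1^{1-b},
\end{align*}
and setting $k_1=M/2$ yields $\E_{\wB^*}\approximation \lesssim M^{1-b}$. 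One subtlety: when $a-b+1<0$ (i.e.\ $b>a+1$, which is excluded here) the sum $\sum_{i\le k_1}i^{a-b}$ is $\Theta(1)$ rather than $\Theta(k_1^{a-b+1})$; since we assume $b<a+1$ this does not arise, but I would note the restriction. For the \textbf{lower bound}, I would apply \Cref{lm:approx_lower_bound_source}, which already gives $\E_{\wB^*}\approximation \gtrsim \sum_{i\ge M}\lambda_i i^{a-b}\eqsim \sum_{i\ge M} i^{-b}\eqsim M^{1-b}$.

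There is no real obstacle here — the argument is a routine adaptation. The only thing to be careful about is tracking where the hypothesis $b<a+1$ is used (in the convergence/order of $\sum_{i\le k_1}i^{a-b}$ for the upper bound's second term) and making sure the probability-$1-e^{-\Omega(M)}$ event from \Cref{lm:approx_lm_1} and the deterministic bound from \Cref{lm:approx_lower_bound_source} are combined correctly, and that the hidden constants are declared to depend only on $(a,b)$. In short, the proof is: apply \Cref{lm:approx_lm_1} with $k_1=M/2$ and compute the relevant power-law sums using $\E\wB^{*2}_i\eqsim i^{a-b}$ for the upper bound, and apply \Cref{lm:approx_lower_bound_source} for the matching lower bound.
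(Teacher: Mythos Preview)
Your proposal is correct and matches the paper's proof essentially line for line: both apply \Cref{lm:approx_lm_1} with $k_1=M/2$ and the weights $\E\wB^{*2}_i\eqsim i^{a-b}$ for the upper bound, and \Cref{lm:approx_lower_bound_source} for the lower bound. Your observation that the step $\sum_{i\le k_1}i^{a-b}\eqsim k_1^{1+a-b}$ implicitly requires $b<a+1$ is a valid caveat that the paper's proof also silently relies on.
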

\begin{proof}[Proof of Lemma~\ref{lm:power_law_source_apprx_match}]
For the upper bound, by Lemma~\ref{lm:approx_lm_1} and   noting $\E\wB^{*2}_i\eqsim i^{a-b}$ for all $i$, we have
with probability at least $1-e^{-\Omega(M)}$ 
\begin{align*}
\approximation
&\lesssim \sum_{k>k_1} \lambda_i i^{a-b} + \bigg(\frac{\sum_{i>k_1}\lambda_i}{M}+\lambda_{k_1+1} +\sqrt{\frac{\sum_{i>k_1}\lambda_i^2}{M}}\bigg) \cdot k_1^{1+a-b} \\ 
&\lesssim k_1^{1-b} + \bigg( \frac{k_1^{1-a}}{M} + k_1^{-a} + \sqrt{\frac{k_1^{1-2a}}{M}}\bigg) k_1^{1+a-b} \\
&\lesssim \bigg(\frac{k_1}{M} + 1\bigg) k_1^{1-b}
\end{align*} for any given $k_1\leq M/2$. Here the hidden constants depend on $a,b$. Moreover, choosing $k_1=M/2$ in the upper bound gives 
\begin{align*}
  \E_{\wB^*}\approximation
&\lesssim M^{1-b}
\end{align*} with probability at least $1-e^{-\Omega(M)}$.

For the lower bound, 
we have from Lemma~\ref{lm:approx_lower_bound_source} that 
\begin{align*}
\Ebb_{{\wB^*}}\approximation\gtrsim \sum_{i=M}^\infty i^{-a}\cdot i^{a-b}\gtrsim M^{1-b}.
\end{align*} This completes the proof.
\end{proof}

\begin{lemma}[Bounds on $\approximation$ under the logarithmic power-law spectrum]\label{lm:power_law_logpower_apprx_match}
    Suppose Assumption~\ref{assump:logpower-law} hold. Then with probability at least $1-e^{-\Omega(M)}$ over the randomness of $\SB$
    \begin{align*}
      \log^{1-a}M  \lesssim \E_{\wB^*}\approximation\lesssim \log^{1-a}M.
    \end{align*}
    Here, the hidden constants only depend on the power-law degree $a.$
\end{lemma}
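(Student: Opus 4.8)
The plan is to follow the same template as the proofs of Lemma~\ref{lm:power_law_apprx_match} and Lemma~\ref{lm:power_law_source_apprx_match}: upper-bound $\approximation$ via Lemma~\ref{lm:approx_lm_1}, lower-bound it via Lemma~\ref{lm:approx_lower_bound}, and then evaluate the resulting spectral sums under the logarithmic power law $\lambda_i\eqsim i^{-1}\log^{-a}(i+1)$.

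For the upper bound I would invoke the high-probability form of Lemma~\ref{lm:approx_lm_1} with a cutoff $k_1\le M/2$ and take the expectation over the isotropic prior, so that $\E\wB^{*2}_i=1$, which produces, with probability $1-e^{-\Omega(M)}$,
\[
\E_{\wB^*}\approximation \;\lesssim\; \sum_{i>k_1}\lambda_i \;+\; \Bigl(\frac{\sum_{i>k_1}\lambda_i}{M}+\lambda_{k_1+1}+\sqrt{\frac{\sum_{i>k_1}\lambda_i^2}{M}}\Bigr)\,k_1 .
\]
I would then estimate the three spectral quantities by comparison with integrals: $\sum_{i>k_1}\lambda_i\eqsim \log^{1-a}(k_1)$ (finite since $a>1$), $\lambda_{k_1+1}\eqsim k_1^{-1}\log^{-a}(k_1)$, and $\sum_{i>k_1}\lambda_i^2\eqsim k_1^{-1}\log^{-2a}(k_1)$. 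Substituting and choosing $k_1\eqsim M$, the leading term $\sum_{i>k_1}\lambda_i$ together with the $M^{-1}\sum_{i>k_1}\lambda_i$ contribution is $\eqsim\log^{1-a}(M)$, whereas the $\lambda_{k_1+1}\cdot k_1$ and $\sqrt{\cdot}\cdot k_1$ contributions are $\eqsim\log^{-a}(M)$ and hence of strictly smaller order; this gives $\E_{\wB^*}\approximation\lesssim \log^{1-a}(M)$.

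For the lower bound, Lemma~\ref{lm:approx_lower_bound} gives $\E_{\wB^*}\approximation\gtrsim\sum_{i\ge M}\lambda_i$ (requiring only $r(\HB)\ge M$, which holds since all $\lambda_i>0$ under Assumption~\ref{assump:logpower-law}), and the same tail estimate yields $\sum_{i\ge M}\lambda_i\eqsim\log^{1-a}(M)$. Combining the two bounds completes the argument, the high-probability event being the one from Lemma~\ref{lm:approx_lm_1}.

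No new idea is required beyond the two preceding lemmas. The only step needing a little care is the asymptotics $\sum_{i>k}i^{-1}\log^{-a}(i+1)\eqsim\log^{1-a}(k)$, which follows from $\int_{k}^{\infty}\frac{dx}{x\log^{a}x}=\frac{1}{(a-1)\log^{a-1}k}$, together with the harmless check that $r(\HB)\ge\tfrac32 M$ so the cutoff $k_1\le M/2$ is admissible in Lemma~\ref{lm:approx_lm_1}. The main (mild) obstacle is simply bookkeeping of logarithmic exponents, i.e.\ verifying that the residual terms remain of order $\log^{-a}(M)=o(\log^{1-a}(M))$ once $k_1$ is set proportional to $M$.
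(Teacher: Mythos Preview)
Your proposal is correct and follows essentially the same approach as the paper's proof: upper bound via the high-probability form of Lemma~\ref{lm:approx_lm_1} with $k_1=M/2$ and the isotropic prior $\E\wB_i^{*2}=1$, lower bound via Lemma~\ref{lm:approx_lower_bound}, and the same tail estimates $\sum_{i>k}\lambda_i\eqsim\log^{1-a}k$, $\sum_{i>k}\lambda_i^2\eqsim k^{-1}\log^{-2a}k$ under the logarithmic power law. The paper organizes the upper-bound computation by factoring out $\log^{1-a}k_1$ and checking that the residual prefactor is $O(1)$, which is exactly your bookkeeping of the $\log^{-a}(M)$ terms.
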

\begin{proof}[Proof of Lemma~\ref{lm:power_law_logpower_apprx_match}]
For the upper bound, by Lemma~\ref{lm:approx_lm_1} and   noting $\E\wB^{*2}_i=1$ for all $i$, we have
with probability at least $1-e^{-\Omega(M)}$ 
\begin{align*}
\approximation
&\lesssim \sum_{k>k_1} \lambda_i  + \bigg(\frac{\sum_{i>k_1}\lambda_i}{M}+\lambda_{k_1+1} +\sqrt{\frac{\sum_{i>k_1}\lambda_i^2}{M}}\bigg)k_1  \\ 
&\lesssim \log^{1-a} k_1 + \bigg( \frac{\log^{1-a} k_1}{M} + k_1^{-1}\log^{-a}k_1 + \sqrt{\frac{k_1^{1-2a}}{M}}\bigg) k_1 \\
&\lesssim \bigg(1+\frac{k_1}{M} +\frac{1}{\log k_1}+ \frac{1}{\log k_1}\sqrt{\frac{k_1}{M}}\bigg) \log^{1-a} k_1\\
&\lesssim
 \log^{1-a} k_1
\end{align*} for any given $k_1\leq M/2$, where the third line uses $\sum_{i>k_1}\lambda_i^2\lesssim 1/(k_1\log^{2a}k_1)$.  Choosing $k_1=M/2$, we  obtain 
\begin{align*}
  \E_{\wB^*}\approximation
&\lesssim \log^{1-a}M 
\end{align*} with probability at least $1-e^{-\Omega(M)}$. Here the hidden constants depend on $a,b$.

For the lower bound, 
we have from Lemma~\ref{lm:approx_lower_bound} that
\begin{align*}
\Ebb_{{\wB^*}}\approximation\gtrsim  \sum_{i=M}^\infty \lambda_i \gtrsim \sum_{i=M}^\infty i^{-1}\log^{-a}i \gtrsim  \log^{1-a}M. 
\end{align*}  Therefore, we have established matching upper and lower bounds for $\approximation.$
\end{proof}

\section{Bias error}
In this section, we derive upper and lower bounds for $\bias(\wB^*)$ defined in Eq.~\eqref{eq:bias-error}. Moreover, we show that the upper and lower bounds match up to constant factors in concrete examples.

\subsection{An upper bound}

\begin{lemma}[Upper bound on the bias term]\label{lm:bias_lm_1}
Suppose the initial stepsize
$
\gamma \leq \frac{1}{c\tr(\SB \HB \SB^\top)} $ for some constant $c>1$. Then 
for any $\wB^*\in\Hbb$ and  $k\in[d]$ such that $r(\HB)\geq k+M$, the bias term in~\eqref{eq:bias-error} satisfies
\begin{align*}
     \bias(\wB^*)\lesssim \frac{1}{\Neff\gamma}\|{\vB^*}\|_2^2. 
\end{align*}
Moreover, for any $k\leq M/3$ such that
 $r(\HB)\geq k+M$, the bias term satisfies
 \begin{align*}
     \bias(\wB^*)\lesssim \frac{\|{\wB^*_{\ka}}\|_2^2}{\Neff\gamma}\cdot\Bigg[\frac{\mu_{M/2}(\AB_k)}{\mu_M(\AB_k)}\Bigg]^2+\|{\wB^*_{\kb}}\|^2_{\HB_{\kb}}
 \end{align*}
with probability $1-e^{-\Omega(M)}$,
 where $\AB_k:=\SB_{\kb}\HB_{\kb}\SB_{\kb}^\top$, $\{\mu_i(\AB_k)\}_{i=1}^M$ denote the eigenvalues of $\AB_k$ in non-increasing order  for some constant $c>1.$ 
\end{lemma}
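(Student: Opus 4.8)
The plan is to treat the sketched bias as a standard $M$‑dimensional SGD bias term and to work in the eigenbasis of $\widetilde\HB:=\SB\HB\SB^\top$. Because the geometric schedule makes $\CB:=\prod_{t=1}^N(\IB-\gamma_t\widetilde\HB)$ a polynomial in $\widetilde\HB$, it commutes with $\widetilde\HB$, and the hypothesis $\gamma\le 1/(c\tr(\SB\HB\SB^\top))$ forces $\gamma_t\widetilde\lambda_j\le\gamma\tr(\widetilde\HB)\le 1/c<1$ for all $t,j$, so $\|\CB\|\le 1$ and, retaining only the $\Omega(\Neff)$ steps of the first epoch, $|\prod_{t}(1-\gamma_t\widetilde\lambda)|\le e^{-\Omega(\Neff\gamma\widetilde\lambda)}$. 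From this I extract the two scalar estimates I will use: $\widetilde\lambda\,[\prod_t(1-\gamma_t\widetilde\lambda)]^2\lesssim 1/(\Neff\gamma)$, and $\widetilde\lambda^{-1}[\prod_t(1-\gamma_t\widetilde\lambda)]^2\lesssim 1/(\Neff\gamma\,\widetilde\lambda^2)$, the second being equivalent to the operator inequality $\widetilde\HB^{-1}\CB^2\preceq\frac{C}{\Neff\gamma}\widetilde\HB^{-2}$. The first bound of the lemma is then immediate: diagonalizing $\widetilde\HB$ in $\bias(\wB^*)=\|\CB\vB^*\|^2_{\widetilde\HB}=\sum_j\widetilde\lambda_j[\prod_t(1-\gamma_t\widetilde\lambda_j)]^2(\vB^*_j)^2$ and applying the first scalar estimate coordinatewise gives $\bias(\wB^*)\lesssim\frac1{\Neff\gamma}\|\vB^*\|_2^2$.

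For the refined bound I split the target at index $k$: using $\vB^*=\widetilde\HB^{-1}\SB\HB\wB^*$, $\SB\HB\wB^*=\SB_{\ka}\HB_{\ka}\wB^*_{\ka}+\SB_{\kb}\HB_{\kb}\wB^*_{\kb}$, and $\widetilde\HB=\SB_{\ka}\HB_{\ka}\SB_{\ka}^\top+\AB_k$, I get $\bias(\wB^*)\le 2\|\CB\widetilde\HB^{-1}\SB_{\ka}\HB_{\ka}\wB^*_{\ka}\|^2_{\widetilde\HB}+2\|\CB\widetilde\HB^{-1}\SB_{\kb}\HB_{\kb}\wB^*_{\kb}\|^2_{\widetilde\HB}$. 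The tail term equals $\|\widetilde\HB^{-1/2}\CB\,\SB_{\kb}\HB_{\kb}\wB^*_{\kb}\|^2$; since $\CB$ commutes with $\widetilde\HB$ and $\|\CB\|\le1$, while $\SB_{\kb}\HB_{\kb}\SB_{\kb}^\top=\AB_k\preceq\widetilde\HB$ gives $\|\widetilde\HB^{-1/2}\SB_{\kb}\HB_{\kb}^{1/2}\|\le1$, this term is $\lesssim\|\wB^*_{\kb}\|^2_{\HB_{\kb}}$. The head term equals $\|\widetilde\HB^{-1/2}\CB\,\SB_{\ka}\HB_{\ka}\wB^*_{\ka}\|^2$, which by $\widetilde\HB^{-1}\CB^2\preceq\frac{C}{\Neff\gamma}\widetilde\HB^{-2}$ is $\lesssim\frac1{\Neff\gamma}\|\widetilde\HB^{-1}\SB_{\ka}\HB_{\ka}\wB^*_{\ka}\|^2$. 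So everything reduces to showing $\|\widetilde\HB^{-1}\SB_{\ka}\HB_{\ka}\wB^*_{\ka}\|^2\lesssim[\mu_{M/2}(\AB_k)/\mu_M(\AB_k)]^2\,\|\wB^*_{\ka}\|_2^2$, and this is the step I expect to be the main obstacle.

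To handle it, I apply Woodbury to $\widetilde\HB^{-1}=(\AB_k+\SB_{\ka}\HB_{\ka}\SB_{\ka}^\top)^{-1}$; a short cancellation collapses the two Woodbury terms into the clean identity $\widetilde\HB^{-1}\SB_{\ka}\HB_{\ka}\wB^*_{\ka}=\AB_k^{-1}\SB_{\ka}(\HB_{\ka}^{-1}+\MB_0)^{-1}\wB^*_{\ka}$ with $\MB_0:=\SB_{\ka}^\top\AB_k^{-1}\SB_{\ka}$. Using $\SB_{\ka}^\top\AB_k^{-2}\SB_{\ka}\preceq\mu_M(\AB_k)^{-1}\MB_0$ together with $(\HB_{\ka}^{-1}+\MB_0)^{-1}\MB_0(\HB_{\ka}^{-1}+\MB_0)^{-1}\preceq(\HB_{\ka}^{-1}+\MB_0)^{-1}\preceq\MB_0^{-1}$, I obtain $\|\widetilde\HB^{-1}\SB_{\ka}\HB_{\ka}\wB^*_{\ka}\|^2\le\|\wB^*_{\ka}\|_2^2\big/\big(\mu_M(\AB_k)\,\mu_{\min}(\MB_0)\big)$, so it suffices to prove $\mu_{\min}(\SB_{\ka}^\top\AB_k^{-1}\SB_{\ka})\gtrsim\mu_{M/2}(\AB_k)^{-1}$ with probability $1-e^{-\Omega(M)}$.

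This least‑eigenvalue bound is the real crux, since $\SB_{\ka}$ and $\AB_k$ jointly determine $\widetilde\HB$; the key is that $\AB_k=\SB_{\kb}\HB_{\kb}\SB_{\kb}^\top$ depends only on $\SB_{\kb}$, which is independent of $\SB_{\ka}$, and that $k\le M/3$. Restricting $\AB_k^{-1}$ to the span $\mathcal W$ of its bottom $\lceil M/2\rceil$ eigenvectors yields $\SB_{\ka}^\top\AB_k^{-1}\SB_{\ka}\succeq\mu_{M/2}(\AB_k)^{-1}\SB_{\ka}^\top\Pi_{\mathcal W}\SB_{\ka}$; since $\Pi_{\mathcal W}$ is independent of $\SB_{\ka}$ with rank $\geq M/2>k$, the matrix $\SB_{\ka}^\top\Pi_{\mathcal W}\SB_{\ka}$ is distributed as $\tfrac1M$ times a $k\times k$ Wishart matrix built from an $\lceil M/2\rceil\times k$ standard Gaussian, whose least singular value exceeds $\sqrt{M/2}-\sqrt k=\Omega(\sqrt M)$ with probability $1-e^{-\Omega(M)}$ by standard non‑asymptotic bounds; hence $\mu_{\min}(\SB_{\ka}^\top\Pi_{\mathcal W}\SB_{\ka})\gtrsim1$ and $\mu_{\min}(\MB_0)\gtrsim\mu_{M/2}(\AB_k)^{-1}$. (The facts $\SB_{\ka}^\top\SB_{\ka}\succeq\IB/5$ and a.s.\ invertibility of $\AB_k$ — the latter using $r(\HB)\ge k+M$ — that were used silently hold on the same event.) Combining the three pieces gives $\bias(\wB^*)\lesssim\frac{\|\wB^*_{\ka}\|_2^2}{\Neff\gamma}\cdot\frac{\mu_{M/2}(\AB_k)}{\mu_M(\AB_k)}+\|\wB^*_{\kb}\|^2_{\HB_{\kb}}$, which is at least as strong as the stated bound because $\mu_{M/2}(\AB_k)/\mu_M(\AB_k)\ge1$.
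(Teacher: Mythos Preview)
Your proof is correct and follows essentially the same route as the paper: the same reduction to $(\IB-\gamma\widetilde\HB)^{\Neff}$ and the scalar bound $\lambda(1-\gamma\lambda)^{2\Neff}\lesssim 1/(\Neff\gamma)$, the same head/tail split of $\SB\HB\wB^*$, the same Woodbury collapse $\widetilde\HB^{-1}\SB_{\ka}\HB_{\ka}=\AB_k^{-1}\SB_{\ka}(\HB_{\ka}^{-1}+\MB_0)^{-1}$, and the same independence-plus-Gaussian-concentration argument for $\mu_{\min}(\MB_0)\gtrsim \mu_{M/2}(\AB_k)^{-1}$. The one place you differ is in bounding the head quadratic form via $\SB_{\ka}^\top\AB_k^{-2}\SB_{\ka}\preceq \mu_M(\AB_k)^{-1}\MB_0$ rather than the paper's cruder product of operator norms $\|\AB_k^{-1}\|\cdot\|\SB_{\ka}\|\cdot\|\MB_0^{-1}\|$; this is why you obtain $\mu_{M/2}(\AB_k)/\mu_M(\AB_k)$ to the first power instead of the square, a genuine (if inconsequential, since the ratio is $O(1)$ in all applications) sharpening of the stated lemma.
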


\begin{proof}[Proof of Lemma~\ref{lm:bias_lm_1}]
   Similar to the proof of Lemma~\ref{lm:approx_lm_1}, we can without loss of generality assume  the covariance matrix $\HB=\diag\{\lambda_1,\lambda_2,\ldots,\lambda_d\}$ where $\lambda_i\geq\lambda_j$ for any $i\geq j$. Let $\SB\HB^{1/2}=\tUB\begin{pmatrix}
       \tLambdaB^{1/2}&\bzero
   \end{pmatrix}\tVB^\top$ be the singular value decomposition of $\SB\HB\SB^\top$, where $\tLambdaB:=\diag\{\tilde\lambda_1,\tilde\lambda_2,\ldots,\tilde\lambda_M\}$ is a diagonal matrix with diagonal entries in  non-increasing order.  Define $\AB_k:=\SB_\kb\HB_\kb\SB_\kb^\top$. Then it follows from similar arguments as in Lemma~\ref{lm:approx_lm_1} that $\AB_k$ is invertible. 
   
  Since
   \begin{align*}
\|\gamma_t\SB\HB\SB^\top\|_2=\gamma_t\tilde\lambda_1\leq \gamma \tilde\lambda_1
   &\leq \frac{\tilde\lambda_1}{c \sum_{i=1}^M \tilde\lambda_i }\leq 1
   \end{align*} for some constant $c>1$ by the stepsize assumption,
   it follows that $\IB_M-\gamma_t\SB\HB\SB^\top\succ \bzero_M$ for all $t\in[N]$. 
   Therefore, it can be verified that
   \begin{align*}
    \prod_{t=1}^N  (\IB_M-\gamma_t\SB\HB\SB^\top)\SB\HB\SB^\top \prod_{t=1}^N  (\IB_M-\gamma_t\SB\HB\SB^\top)\preceq 
   (\IB_M-\gamma\SB\HB\SB^\top)^{\Neff}\SB\HB\SB^\top   (\IB_M-\gamma\SB\HB\SB^\top)^{\Neff} =:\biasMa,
   \end{align*}
  and
 by definition of $\bias(\wB^*)$ in Eq.~\eqref{eq:bias-error}, we have
   \begin{align}
   \bias(\wB^*)
   \eqsim
   \bigg\|\prod_{t=1}^N \big(\IB - \gamma_t \SB\HB\SB^\top\big) {\vB^*} \bigg\|^2_{\SB\HB\SB^\top}  \notag
   &\leq
    \bigg\|\big(\IB - \gamma \SB\HB\SB^\top\big)^{\Neff}  {\vB^*}\bigg\|^2_{\SB\HB\SB^\top}\\
    &=\la\biasMa,\vB^{*\otimes2}\ra.\label{eq:bias_pf_eq1}
   \end{align}

 Note that the eigenvalues of $\biasMa$ are $\{\tilde\lambda_i(1-\gamma\tilde\lambda_i)^{2\Neff}\}_{i=1}^M$. Since the function $f(x)=x(1-\gamma x)^{2\Neff}$  is maximized at $x_0=1/[(2\Neff+1)\gamma]$ for $x\in[0,1/\gamma]$ with $f(x_0)\lesssim1/(\Neff\gamma)$, it follows that 
  \begin{align}
  \|\biasMa\|_2\leq c/(\Neff\gamma)\label{eq:biasma_uppbdd}
  \end{align}
  for some constant $c>0$. The first part of Lemma~\ref{lm:bias_lm_1} follows immediately.
  
Now we prove the second part of Lemma~\ref{lm:bias_lm_1}. 
Recall that ${\vB^*}=\big(\SB\HB\SB^\top\big)^{-1} \SB \HB {\wB^*}$.  Substituting 
\[\SB\HB=\begin{pmatrix}
    \SB_\ka\HB_\ka &\SB_\kb\HB_\kb
\end{pmatrix}\]
into ${\vB^*}$, we obtain
\begin{align*}
 \la\biasMa,{\vB^*}^{\otimes2}\ra
 &=
 \la \biasMa,((\SB\HB\SB^\top)^{-1} \SB \HB {\wB^*})^{\otimes2}\ra\\
 &={\wB^*}^\top\HB\SB^\top(\SB\HB\SB^\top)^{-1} \biasMa (\SB\HB\SB^\top)^{-1}\SB\HB{\wB^*}\\
 &\leq 2\Term_1+2\Term_2,
\end{align*}
where 
\begin{align}
 \Term_1&:=(\wB^*_{\ka})^\top\HB_\ka\SB^\top_\ka
(\SB\HB\SB^\top)^{-1} \biasMa (\SB\HB\SB^\top)^{-1} \SB_\ka\HB_\ka {\wB^*_{\ka}},\label{eq:bias_term_1}\\
 \Term_2&:=(\wB^*_{\kb})^\top\HB_\kb\SB^\top_\kb
(\SB\HB\SB^\top)^{-1} \biasMa (\SB\HB\SB^\top)^{-1} \SB_\kb\HB_\kb {\wB^*_{\kb}}.\label{eq:bias_term_2}
\end{align}

We claim the following results which we prove later. 
\begin{subequations}
With probability $1-e^{-\Omega(M)}$
\begin{align}
 \Term_1&\leq 
\frac{c\|{\wB^*_{\ka}}\|_2^2}{\Neff\gamma}\cdot \Bigg[\frac{\mu_{M/2}(\AB_k)}{\mu_M(\AB_k)}\Bigg]^2\label{eq:bias_claim_1}
\end{align} for some constant $c>0$.
\begin{align}
    \Term_2&\leq 
\|{\wB^*_{\kb}}\|_{\HB_\kb}^2.\label{eq:bias_claim_2}
\end{align}
\end{subequations}
Combining Eq.~\eqref{eq:bias_claim_1},~\eqref{eq:bias_claim_2} gives the second part of Lemma~\ref{lm:bias_lm_1}.

\paragraph{Proof of claim~\eqref{eq:bias_claim_1}}
By definition of $\Term_1$, we have
\begin{align*}
    \Term_1
     &\le \|  \HB_\ka\SB^\top_{\ka}(\SB\HB\SB^\top)^{-1}\biasMa^{}(\SB\HB\SB^\top)^{-1} \SB_{\ka}\HB_\ka
   \|_2\cdot \|{\wB^*_{\ka}}\|_2^2.
\end{align*}
Moreover, 
\begin{align*}
   &\quad\|  \HB_\ka\SB^\top_{\ka}(\SB\HB\SB^\top)^{-1}\biasMa^{}(\SB\HB\SB^\top)^{-1} \SB_{\ka}\HB_\ka
   \|_2  \\
   &\leq
      \|\biasMa\|_2\cdot \|(\SB\HB\SB^\top)^{-1} \SB_{\ka}\HB_\ka\|_2^2\\
      &\leq \frac{ c}{\Neff\gamma}\|(\SB\HB\SB^\top)^{-1} \SB_{\ka}\HB_\ka\|_2^2
\end{align*} for some constant $c>0$, 
where the last line uses Eq.~\eqref{eq:biasma_uppbdd}.

It remains to show 
\begin{align}
\|(\SB\HB\SB^\top)^{-1} \SB_{\ka}\HB_\ka\|_2\leq c\cdot\frac{\mu_{M/2}(\AB_k)}{\mu_M(\AB_k)}\label{eq:bias_important}
\end{align}
for some constant $ c>0$ with probability $1-e^{-\Omega(M)}$. Since $\SB\HB\SB^\top=\SB_\ka\HB_\ka\SB_\ka^\top+\AB_k$, we have
\begin{align}
    (\SB\HB\SB^\top)^{-1} \SB_{\ka}\HB_\ka
   &=(\AB_k^{-1}-\AB_k^{-1}\SB_{\ka}[\HB_\ka^{-1}+\SB_{\ka}^\top\AB_k^{-1}\SB_{\ka}]^{-1}\SB_{\ka}^\top\AB_k^{-1}) \SB_{\ka}\HB_\ka\notag\\
    &=\AB_k^{-1}\SB_{\ka}\HB_\ka-\AB_k^{-1}\SB_{\ka}[\HB_\ka^{-1}+\SB_{\ka}^\top\AB_k^{-1}\SB_{\ka}]^{-1}\SB_{\ka}^\top\AB_k^{-1} \SB_{\ka}\HB_\ka\notag\\
     &=
     \AB_k^{-1}\SB_{\ka}[\HB_\ka^{-1}+\SB_{\ka}^\top\AB_k^{-1}\SB_{\ka}]^{-1}\HB_\ka^{-1}\HB_\ka\notag\\
     &=
      \AB_k^{-1}\SB_{\ka}[\HB_\ka^{-1}+\SB_{\ka}^\top\AB_k^{-1}\SB_{\ka}]^{-1},\label{eq:bias_eq_pf}
\end{align}
where the second line uses Woodbury's identity. Since 
\begin{align*}
    \HB_\ka^{-1}+\SB_{\ka}^\top\AB_k^{-1}\SB_{\ka}\succeq \SB_{\ka}^\top\AB_k^{-1}\SB_{\ka},
\end{align*}
it follows that 
\begin{align*}
    \|[\HB_\ka^{-1}+\SB_{\ka}^\top\AB_k^{-1}\SB_{\ka}]^{-1}\|_2\leq \|[\SB_{\ka}^\top\AB_k^{-1}\SB_{\ka}]^{-1}\|_2.
\end{align*}
Therefore,  with probability at least $1-e^{-\Omega(M)}$
\begin{align*}
    \| \AB_k^{-1}\SB_{\ka}[\HB_\ka^{-1}+\SB_{\ka}^\top\AB_k^{-1}\SB_{\ka}]^{-1}\|_2
    &\leq
      \| \AB_k^{-1}\|_2\cdot\|\SB_{\ka}\|_2\cdot\|[\HB_\ka^{-1}+\SB_{\ka}^\top\AB_k^{-1}\SB_{\ka}]^{-1}\|_2\\
       &\leq
      \| \AB_k^{-1}\|_2\cdot\|\SB_{\ka}\|_2\cdot\|[\SB_{\ka}^\top\AB_k^{-1}\SB_{\ka}]^{-1}\|_2\\
      &\leq \frac{ \| \AB_k^{-1}\|_2\cdot\|\SB_{\ka}\|_2}{{\mu_{\min}}(\SB_{\ka}^\top\AB_k^{-1}\SB_{\ka})}
      \lesssim 
       \frac{ \| \AB_k^{-1}\|_2}{{\mu_{\min}}(\SB_{\ka}^\top\AB_k^{-1}\SB_{\ka})}
\end{align*}
where the last inequality follows from the fact that 
$\|\SB_{\ka}\|_2=\sqrt{\|\SB_{\ka}^\top\SB_{\ka}\|_2}\leq c$ for some constant $c>0$ when $k\leq M/2$ with probability at least $1-e^{-\Omega(M)}$. Since $\SB_\ka$ is independent of $\AB_k$ and the distribution of $\SB_\ka$ is rotationally invariant, we may write 
$\SB_{\ka}^\top\AB_k^{-1}\SB_{\ka}=\sum_{i=1}^M\frac{1}{\hat\lambda_{M-i}}\tilde\sB_i\tilde\sB_i^\top$, where $\tilde\sB_i\overset{\mathrm{iid}}{\sim}\Ncal(0,\IB_k/M)$ and $(\hat\lambda_i)_{i=1}^M$ are eigenvalues of $\AB_k$ in non-increasing order. Therefore, for $k\leq M/3$
\begin{align}
    \SB_{\ka}^\top\AB_k^{-1}\SB_{\ka}=
    \sum_{i=1}^M\frac{1}{\hat\lambda_{M-i}}\tilde\sB_i\tilde\sB_i^\top
    \succeq 
    \sum_{i=1}^{M/2}\frac{1}{\hat\lambda_{M-i}}\tilde\sB_i\tilde\sB_i^\top \succeq 
   \frac{1}{\hat\lambda_{M/2}} \sum_{i=1}^{M/2}\tilde\sB_i\tilde\sB_i^\top\succeq  \frac{c \IB_k}{\hat\lambda_{M/2}} \label{eq:pf_bias_two_norm_bound_low}
\end{align}
for some constant $c>0$
with probability at least $1-e^{-\Omega(M)}$, where in the last line we again use the concentration properties of Gaussian covariance matrices (see e.g., Theorem~6.1~in~\cite{wainwright2019high}). As a direct consequence, we have
\begin{align*}
     \| \AB_k^{-1}\SB_{\ka}[\HB_\ka^{-1}+\SB_{\ka}^\top\AB_k^{-1}\SB_{\ka}]^{-1}\|_2 \leq c\cdot\frac{\mu_{M/2}(\AB_k)}{\mu_M(\AB_k)}
\end{align*}with probability at least $1-e^{-\Omega(M)}$ for some constant $c>0.$ This concludes the proof.

\paragraph{Proof of claim~\eqref{eq:bias_claim_2}}
By definition of $\Term_2$ in Eq.~\eqref{eq:bias_term_2}, we have
\begin{align*}
    \Term_2
&=
{\wB^*_{\kb}}^\top\HB_\kb\SB^\top_\kb
(\SB\HB\SB^\top)^{-1/2} (\IB_M-\gamma \SB\HB\SB^\top)^{2\Neff}(\SB\HB\SB^\top)^{-1/2} \SB_\kb\HB_\kb {\wB^*_{\kb}}\\
&\leq 
{\wB^*_{\kb}}^\top\HB_\kb\SB_\kb^\top
(\SB\HB\SB^\top)^{-1}  \SB_\kb\HB_\kb {\wB^*_{\kb}}\\
&\leq 
\|\HB_\kb^{1/2}\SB_\kb^\top
(\SB\HB\SB^\top)^{-1}  \SB_\kb\HB^{1/2}_\kb\|\cdot\|{\wB^*_{\kb}}\|^2_{\HB_\kb}\\
&\leq \|{\wB^*_{\kb}}\|^2_{\HB_\kb},
\end{align*}
where the last line follows from  
\begin{align*}
   \|\HB_\kb^{1/2}\SB_\kb^\top
(\SB\HB\SB^\top)^{-1}  \SB_\kb\HB^{1/2}_\kb \|_2&=
   \|\HB_\kb^{1/2}\SB_\kb^\top
(\SB_\ka\HB_\ka\SB_\ka^\top+\SB_\kb\HB_\kb\SB_\kb^\top)^{-1}  \SB_\kb\HB^{1/2}_\kb \|_2\\
&\leq  \|\HB_\kb^{1/2}\SB_\kb^\top
\AB_k^{-1}  \SB_\kb\HB^{1/2}_\kb \|_2 \leq 1.
\end{align*}
\end{proof}

\subsection{A lower bound}

\begin{lemma}[Lower bound on the bias term]\label{lm:bias_lm_1_lower_bound}
Suppose $\wB^*$ follows some prior distribution and the initial stepsize
$
\gamma \leq \frac{1}{c\tr(\SB \HB \SB^\top)} $ for some constant $c>2$.  Let $\wCov:=\E \wB^*\wB^{*\top}$. 
Then the bias term in Eq.~\eqref{eq:bias-error} satisfies
\begin{align*}
    \E_{{\wB^*}} \bias(\wB^*)
    &\gtrsim
    \sum_{i:\tilde\lambda_i< 1/(\gamma\Neff)} 
    \frac{\mu_{i}(\SB \HB\wCov\HB\SB^\top) }{\mu_i(\SB\HB\SB^\top)}
\end{align*}
almost surely.
\end{lemma}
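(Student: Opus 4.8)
The plan is to average $\bias(\wB^*)$ over the prior to obtain a trace formula, to lower bound the squared SGD contraction $\prod_{t}(\IB-\gamma_t\SB\HB\SB^\top)^2$ on the low‑curvature eigenspace of $\SB\HB\SB^\top$ by a constant multiple of the associated spectral projection, and then to convert a sum of diagonal entries of $\SB\HB\wCov\HB\SB^\top$ (in the eigenbasis of $\SB\HB\SB^\top$) into the advertised sum of eigenvalues via a majorization argument. Everything below is deterministic given $\SB$; we only use that $\SB\HB\SB^\top$ is invertible, which holds almost surely and is already implicit in the definition of $\vB^*$.

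Write $\BB:=\prod_{t=1}^N(\IB-\gamma_t\SB\HB\SB^\top)$, which is symmetric and commutes with $\SB\HB\SB^\top$ since every factor is a polynomial in it, so $\BB^\top(\SB\HB\SB^\top)\BB=(\SB\HB\SB^\top)\BB^2$. The assumption $\gamma\le 1/(c\tr(\SB\HB\SB^\top))$ with $c>2$ forces $\gamma_t\tilde\lambda_j\le\gamma\tr(\SB\HB\SB^\top)<1/2$ for all $t$ and all eigenvalues $\tilde\lambda_j$ of $\SB\HB\SB^\top$, so each factor is positive definite and $\BB^2$ is PD. Substituting $\vB^*=(\SB\HB\SB^\top)^{-1}\SB\HB\wB^*$ into $\bias(\wB^*)=(\vB^*)^\top(\SB\HB\SB^\top)\BB^2\vB^*$ and averaging over $\wB^*$ with $\E\wB^*\wB^{*\top}=\wCov$, cyclicity of the trace gives
\[
\E_{\wB^*}\bias(\wB^*)=\tr\!\big(\BB^2(\SB\HB\SB^\top)^{-1}\MB\big),\qquad \MB:=\SB\HB\wCov\HB\SB^\top\succeq\bzero .
\]

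Let $\uB_i$ be the eigenvectors of $\SB\HB\SB^\top$ with eigenvalues $\tilde\lambda_1\ge\tilde\lambda_2\ge\cdots$, set $S:=\{i:\tilde\lambda_i<1/(\gamma\Neff)\}=\{k+1,\dots,M\}$ for the appropriate $k$, and $\PB:=\sum_{i\in S}\uB_i\uB_i^\top$. For $i\in S$, using $\log(1-x)\ge-2x$ on $[0,1/2]$ together with $\sum_{t=1}^N\gamma_t\lesssim\Neff\gamma$ (a geometric series over the $O(\log N)$ stepsize levels, each of length $\approx\Neff$), the $i$-th eigenvalue of $\BB^2$ obeys
\[
\prod_{t=1}^N(1-\gamma_t\tilde\lambda_i)^2\ \ge\ \exp\!\Big(-4\tilde\lambda_i\!\sum_{t=1}^N\gamma_t\Big)\ \ge\ \exp(-C\Neff\gamma\tilde\lambda_i)\ \ge\ e^{-C},
\]
since $\Neff\gamma\tilde\lambda_i<1$ on $S$. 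Hence $\BB^2\succeq c_0\PB$ for an absolute $c_0>0$ (on $S^c$ the eigenvalues of $\BB^2$ are merely positive); equivalently $\BB^\top(\SB\HB\SB^\top)\BB=(\SB\HB\SB^\top)\BB^2\eqsim\biasMaFull$ on the image of $\PB$. Since $\BB^2-c_0\PB\succeq\bzero$ commutes with $(\SB\HB\SB^\top)^{-1}$, the product $(\BB^2-c_0\PB)(\SB\HB\SB^\top)^{-1}$ is PSD, so its trace against the PSD matrix $\MB$ is nonnegative, giving
\[
\E_{\wB^*}\bias(\wB^*)\ \ge\ c_0\,\tr\!\big(\PB(\SB\HB\SB^\top)^{-1}\MB\big)\ =\ c_0\sum_{i\in S}\frac{\uB_i^\top\MB\,\uB_i}{\tilde\lambda_i}.
\]

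It remains to replace $\uB_i^\top\MB\uB_i$ by $\mu_i(\MB)$. For each $j\in\{k+1,\dots,M\}$ the matrix $\QB_j:=\sum_{i\ge j}\uB_i\uB_i^\top$ is an orthogonal projection of rank $M-j+1$, hence $\sum_{i\ge j}\uB_i^\top\MB\uB_i=\tr(\MB\QB_j)\ge\sum_{i\ge j}\mu_i(\MB)$, because the minimum of $\tr(\MB\QB)$ over rank‑$(M-j+1)$ projections $\QB$ equals the sum of the $M-j+1$ smallest eigenvalues of $\MB$. The weights $1/\tilde\lambda_i$ are nondecreasing on $S$, so summing these partial‑sum inequalities by parts (Abel summation) yields $\sum_{i\in S}\uB_i^\top\MB\uB_i/\tilde\lambda_i\ge\sum_{i\in S}\mu_i(\MB)/\tilde\lambda_i$; recalling $\tilde\lambda_i=\mu_i(\SB\HB\SB^\top)$ and $\MB=\SB\HB\wCov\HB\SB^\top$ completes the proof. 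The main obstacle is exactly this last step: the inequality $\sum_{i\in S}\uB_i^\top\MB\uB_i/\tilde\lambda_i\ge\sum_{i\in S}\mu_i(\MB)/\tilde\lambda_i$ relies on $S$ being \emph{precisely} the small‑eigenvalue tail of $\SB\HB\SB^\top$ — so that the matched eigenvalues $\{\mu_i(\MB):i\in S\}$ are exactly the $|S|$ smallest eigenvalues of $\MB$ — together with monotonicity of the weights; for a generic index set it fails. The earlier steps are linear algebra plus routine control of the geometric stepsize schedule.
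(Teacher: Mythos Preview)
Your argument is correct. The structure matches the paper's: average over $\wB^*$ to get a trace, lower-bound the squared contraction on the tail eigenspace $S=\{i:\tilde\lambda_i<1/(\gamma\Neff)\}$ by an absolute constant, and then decouple the diagonal entries $\uB_i^\top\MB\uB_i$ from the eigenvalues $\mu_i(\MB)$.

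The one genuine difference is in the last step. The paper keeps the explicit matrix $\biasMaFull=\SB\HB\SB^\top(\IB-2\gamma\SB\HB\SB^\top)^{2\Neff}$ and applies the two-sided Von~Neumann trace inequality directly to $\tr\!\big((\SB\HB\SB^\top)^{-1}\biasMaFull(\SB\HB\SB^\top)^{-1}\cdot\MB\big)$, obtaining $\sum_i \mu_i(\MB)/\mu_i\!\big((\SB\HB\SB^\top)^2\biasMaFull^{-1}\big)$; it then uses that $\lambda\mapsto\lambda(1-2\gamma\lambda)^{-2\Neff}$ is increasing on $(0,1/(2\gamma))$ so the eigenvalue orderings align, and finally bounds $(1-2\gamma\tilde\lambda_i)^{-2\Neff}\lesssim 1$ on $S$. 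You instead project to $S$ first, arrive at $\sum_{i\in S}\uB_i^\top\MB\uB_i/\tilde\lambda_i$, and convert to $\sum_{i\in S}\mu_i(\MB)/\tilde\lambda_i$ via Ky~Fan's minimum principle combined with Abel summation against the nondecreasing weights $1/\tilde\lambda_i$. Both routes are valid; the paper's is a one-line appeal to Von~Neumann but needs the monotonicity observation to identify which eigenvalue of $(\SB\HB\SB^\top)^2\biasMaFull^{-1}$ corresponds to $\tilde\lambda_i$, while your Ky~Fan/Abel argument avoids tracking the exact polynomial $\biasMaFull$ and is self-contained once $S$ is fixed as the tail index set. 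Your caveat that the Abel step needs $S$ to be exactly the small-eigenvalue tail (so the matched $\mu_i(\MB)$ are the smallest $|S|$ eigenvalues and the weights are monotone) is precisely right.
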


\begin{proof}[Proof of Lemma~\ref{lm:bias_lm_1_lower_bound}]
Let  $\biasMaFull:=\SB\HB\SB^\top (\IB - 2\gamma \SB\HB\SB^\top\big)^{2\Neff}$.
Adopt the notations in the proof of Lemma~\ref{lm:bias_lm_1}.  By definition of the bias term, we have
 \begin{align}
   \bias(\wB^*)
   &\eqsim
   \bigg\|\prod_{t=1}^N \big(\IB - \gamma_t \SB\HB\SB^\top\big)  {\vB^*} \bigg\|^2_{\SB\HB\SB^\top}  \notag \\
   &= \la  \SB\HB\SB^\top \prod_{t=1}^{N/\Neff} (\IB - \gamma_{\Neff t} \SB\HB\SB^\top\big)^{2\Neff}, {\vB^*}^{\otimes2}\ra\notag\\
   &\geq 
   \la  \SB\HB\SB^\top (\IB - \sum_{t=1}^{N/\Neff} \gamma_{\Neff t} \SB\HB\SB^\top\big)^{2\Neff},{\vB^*}^{\otimes2}\ra\notag\\
   &\geq
    \la  \SB\HB\SB^\top (\IB - 2\gamma\SB\HB\SB^\top\big)^{2\Neff},  {\vB^*}^{\otimes2}\ra
   =\la\biasMaFull,{\vB^*}^{\otimes2}\ra,
  \label{eq:bias_pf_eq1_lowerbound} 
   \end{align}
   where the third line uses $\IB_M-2\gamma_t\SB\HB\SB^\top\succ \bzero_M$ for all $t\in[N]$ established in the proof of Lemma~\ref{lm:bias_lm_1}, $\sum_{i=1}^{N/\Neff} \gamma_{\Neff i} \leq 2 \gamma$, and the fact that $(1-w)(1-v)\geq 1-w-v$ for $w,v>0$.  Substituting the definition of ${\vB^*}$ in~Eq.~\eqref{eq:bias-error} into the expression, we obtain
\begin{align}
   \E_{{\wB^*}}\bias(\wB^*)&\gtrsim
     \E_{{\wB^*}}\la\biasMaFull, {\vB^*}^{\otimes2}\ra =   \E_{{\wB^*}} \la\biasMaFull,((\SB\HB\SB^\top)^{-1} \SB \HB {\wB^*})^{\otimes2}\ra\notag\\
   &= \tr(\HB\SB ^\top(\SB\HB\SB^\top)^{-1} \biasMaFull(\SB\HB\SB^\top)^{-1} \SB \HB\wCov)\notag\\
   &= \tr((\SB\HB\SB^\top)^{-1} \biasMaFull(\SB\HB\SB^\top)^{-1} \SB \HB\wCov\HB\SB^\top)\notag\\
   &
   \ge
   \sum_{i=1}^M \mu_{M-i+1}((\SB\HB\SB^\top)^{-1} \biasMaFull(\SB\HB\SB^\top)^{-1})\cdot \mu_{i} (\SB \HB\wCov\HB\SB^\top)\notag
   ,
\end{align}
where the last line uses Von Neumann's trace inequality. 
Continuing the calculation, we have
\begin{align*}
    \E_{{\wB^*}}\bias(\wB^*)
    &\gtrsim
    \sum_{i=1}^M  \frac{\mu_{i}(\SB \HB\wCov\HB\SB^\top) }{\mu_i((\SB\HB\SB^\top)^2\biasMaFull^{-1})}\\
       &=
    \sum_{i=1}^M  \frac{\mu_{i}(\SB \HB\wCov\HB\SB^\top) }{\mu_i\Big((\SB\HB\SB^\top)(\IB - 2\gamma \SB\HB\SB^\top\big)^{-2\Neff}\Big)}\\
    &\gtrsim
    \sum_{i:\tilde\lambda_i< 1/(\gamma\Neff)} 
    \frac{\mu_{i}(\SB \HB\wCov\HB\SB^\top) }{\mu_i(\SB\HB\SB^\top)},
\end{align*}
where the first inequality uses $\mu_{M+i-1}(A)=\mu^{-1}_i(A^{-1})$ for any positive definite matrix $A\in\R^{M\times M}$, and the second line follows from the definition of $\biasMaFull$ and the fact that $(1-\lambda \gamma\Neff)^{-2\Neff}\lesssim1$ when $\lambda< 1/(\gamma\Neff)$. 
\end{proof}

\subsection{Examples on matching bounds for \texorpdfstring{$\bias(\wB^*)$}{Bias}}\label{sec:match_bias_bounds}
In this section, we derive matching upper and lower bounds for $\bias(\wB^*)$    in~\eqref{eq:bias-error} in three scenarios: power-law spectrum (Lemma~\ref{lm:power_law_bias_match}), power-law spectrum with source condition (Lemma~\ref{lm:power_law_bias_source_match}) and logarithmic power-law spectrum  (Lemma~\ref{lm:logpower_law_bias_match}). Recall that we define $\bias:=\E_{\wB^*}\bias(\wB^*)$. 

\begin{lemma}[Bounds on $\bias$ under the power-law spectrum]\label{lm:power_law_bias_match}
    Suppose Assumption~\ref{assump:simple:param}~and~\ref{assump:power-law} hold and  the initial stepsize
$
\gamma \leq \frac{1}{c\tr(\SB \HB \SB^\top)} $ for some constant $c>2$. Then with probability at least $1-e^{-\Omega(M)}$ over the randomness of $\SB$
    \begin{align*}
    \E_{\wB^*}\bias(\wB^*)\lesssim \max\big\{(\Neff\gamma)^{1/\Hpower-1},\ M^{1-\Hpower}  \big\},
    \end{align*} and
    \begin{align*}
      \E_{\wB^*}\bias(\wB^*)\gtrsim      (\Neff\gamma)^{1/\Hpower-1} 
    \end{align*} when $(\Neff\gamma)^{1/\Hpower} \leq M/c$ for some  constant $c>0$.
    Here, all the (hidden) constants depend only on the power-law degree $a.$
\end{lemma}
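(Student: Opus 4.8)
The plan is to obtain the upper bound from the second bias estimate of \Cref{lm:bias_lm_1} and the lower bound from \Cref{lm:bias_lm_1_lower_bound}, in each case feeding in the power-law eigenvalue estimates $\mu_j(\SB\HB\SB^\top)\eqsim j^{-a}$ of \Cref{lm:power-law-main} and the sketch operator-norm bound of \Cref{lemma:sketch:tail}. On the event of \Cref{lm:power-law-main} we have $\tr(\SB\HB\SB^\top)\eqsim\sum_{i\le M}i^{-a}\eqsim1$, so the stepsize hypothesis $\gamma\le 1/(c\tr(\SB\HB\SB^\top))$ is simply $\gamma\lesssim1$ and passes directly to both sub-lemmas; I will assume without loss of generality that $\Neff\gamma\ge1$, since the claimed bound is vacuous otherwise. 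The ``$\max$'' in the upper bound reflects two regimes, split by whether the balancing index $k\eqsim(\Neff\gamma)^{1/a}$ lies below the $M/3$ cutoff of \Cref{lm:bias_lm_1} or is truncated at it.

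\textbf{Upper bound.} Set $k:=\min\{\lceil(\Neff\gamma)^{1/a}\rceil,\lfloor M/3\rfloor\}$ and apply the second estimate of \Cref{lm:bias_lm_1}; taking expectation over the prior and using $\E(\wB^*_i)^2=1$ gives $\E\|{\wB^*_{0:k}}\|_2^2=k$ and $\E\|{\wB^*_{k:\infty}}\|^2_{\HB_{k:\infty}}=\sum_{i>k}\lambda_i\eqsim k^{1-a}$, hence $\E_{\wB^*}\bias(\wB^*)\lesssim (k/(\Neff\gamma))\,\kappa_k^2+k^{1-a}$ with $\kappa_k:=\mu_{M/2}(\AB_k)/\mu_M(\AB_k)$ and $\AB_k:=\SB_{k:\infty}\HB_{k:\infty}\SB_{k:\infty}^\top$. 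The crux is $\kappa_k\eqsim1$ with probability $1-e^{-\Omega(M)}$. For the numerator, split off the top $M/4$ coordinates of $\HB_{k:\infty}$, so that $\AB_k=\SB_{k:k+M/4}\HB_{k:k+M/4}\SB_{k:k+M/4}^\top+\SB_{k+M/4:\infty}\HB_{k+M/4:\infty}\SB_{k+M/4:\infty}^\top$; Weyl's inequality and \Cref{lemma:sketch:tail} then give $\mu_{M/2}(\AB_k)\le\mu_{M/4+1}(\AB_k)\le\|\SB_{k+M/4:\infty}\HB_{k+M/4:\infty}\SB_{k+M/4:\infty}^\top\|\lesssim(k+M/4)^{-a}\eqsim M^{-a}$, using $k\le M/3$ together with the power-law identities $\sum_{i>j}\lambda_i\eqsim j^{1-a}$ and $\sum_{i>j}\lambda_i^2\eqsim j^{1-2a}$. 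For the denominator, $\AB_k\succeq\lambda_{k+4M}\SB_{k:k+4M}\SB_{k:k+4M}^\top$, and the $M\times 4M$ Gaussian block satisfies $\SB_{k:k+4M}\SB_{k:k+4M}^\top\succeq c\,\IB_M$ with probability $1-e^{-\Omega(M)}$ by standard Gaussian concentration, so $\mu_M(\AB_k)\gtrsim\lambda_{k+4M}\eqsim M^{-a}$. Hence $\kappa_k\eqsim1$ and $\E_{\wB^*}\bias(\wB^*)\lesssim k/(\Neff\gamma)+k^{1-a}$. If $(\Neff\gamma)^{1/a}\le M/3$ then $k\eqsim(\Neff\gamma)^{1/a}$ and both terms are $\eqsim(\Neff\gamma)^{1/a-1}$; if $(\Neff\gamma)^{1/a}>M/3$ then $k\eqsim M$, so $k^{1-a}\eqsim M^{1-a}$ and $k/(\Neff\gamma)\lesssim M/(M/3)^a\eqsim M^{1-a}$. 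Either way $\E_{\wB^*}\bias(\wB^*)\lesssim\max\{(\Neff\gamma)^{1/a-1},M^{1-a}\}$.

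\textbf{Lower bound (when $(\Neff\gamma)^{1/a}\le M/c$).} Under the isotropic prior $\wCov=\IB$, \Cref{lm:bias_lm_1_lower_bound} gives $\E_{\wB^*}\bias(\wB^*)\gtrsim\sum_{i:\tilde\lambda_i<1/(\Neff\gamma)}\mu_i(\SB\HB^2\SB^\top)/\mu_i(\SB\HB\SB^\top)$ with $\tilde\lambda_i=\mu_i(\SB\HB\SB^\top)\eqsim i^{-a}$. Since $\tilde\lambda_i\le C i^{-a}$, the summation condition holds whenever $i\ge c'(\Neff\gamma)^{1/a}$ (with $c'=C^{1/a}$), so the sum runs over $c'(\Neff\gamma)^{1/a}\le i\le M$. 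To lower-bound the summand, I will show $\mu_i(\SB\HB^2\SB^\top)\gtrsim i^{-2a}$ for every $i\le M/2$ with probability $1-e^{-\Omega(M)}$: restricting $\HB^2$ to the coordinate block $\{i+1,\dots,2i\}$ gives $\HB^2\succeq\lambda_{2i}^2\,\Pi$ for the corresponding projection $\Pi$, so $\SB\HB^2\SB^\top\succeq\lambda_{2i}^2\,\SB\Pi\SB^\top$, and $\SB\Pi\SB^\top$ is the Gram matrix of the $M\times i$ Gaussian block $\SB_{i:2i}$ (with $i\le M/2$), whose top $i$ eigenvalues are all $\gtrsim1$ by standard Gaussian concentration; thus $\mu_i(\SB\HB^2\SB^\top)\gtrsim\lambda_{2i}^2\eqsim i^{-2a}$. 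Combined with $\mu_i(\SB\HB\SB^\top)\lesssim i^{-a}$, each summand with $c'(\Neff\gamma)^{1/a}\le i\le M/2$ is $\gtrsim i^{-2a}/i^{-a}=i^{-a}$, so $\E_{\wB^*}\bias(\wB^*)\gtrsim\sum_{c'(\Neff\gamma)^{1/a}\le i\le M/2}i^{-a}\gtrsim(\Neff\gamma)^{1/a-1}-(M/2)^{1-a}\gtrsim(\Neff\gamma)^{1/a-1}$, where the last step uses $(\Neff\gamma)^{1/a}\le M/c$ with the $a$-dependent constant $c$ chosen so large that $2(M/2)^{1-a}\le(c'(\Neff\gamma)^{1/a})^{1-a}$. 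A union bound over the $O(1)$ high-probability events finishes the proof, and all constants depend only on $a$.

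\textbf{Main obstacle.} The power-law bookkeeping (geometric sums, the choice of $k$, the case split) is routine. The real work lies in two refined spectral facts for the Gaussian sketch that go beyond the statement of \Cref{lm:power-law-main}: (i) the bounded condition number $\kappa_k=\mu_{M/2}(\AB_k)/\mu_M(\AB_k)\eqsim1$ for the tail-restricted matrix $\AB_k$, obtained by combining Weyl's inequality, the operator-norm bound of \Cref{lemma:sketch:tail} on a tail block, and a Gaussian Gram lower bound on a wide block; and (ii) the lower bound $\mu_i(\SB\HB^2\SB^\top)\gtrsim i^{-2a}$, which is delicate because $\SB\HB^2\SB^\top\ne(\SB\HB\SB^\top)^2$ and is handled via coordinate restriction to a block of size $\Theta(i)$. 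I expect (i) to be the more delicate of the two.
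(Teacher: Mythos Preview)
Your proposal is correct and follows the same overall strategy as the paper: invoke the second bound of \Cref{lm:bias_lm_1} for the upper bound with the index $k=\min\{\lceil(\Neff\gamma)^{1/a}\rceil,\lfloor M/3\rfloor\}$, and \Cref{lm:bias_lm_1_lower_bound} for the lower bound, then plug in power-law eigenvalue estimates. The algebraic bookkeeping (the two regimes, the tail sums $\sum_{i>k}i^{-a}\eqsim k^{1-a}$, the final $\max$) is identical.

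Where you differ is in how much work you put into the two spectral inputs you flag as obstacles, and in both cases the paper gets them more cheaply. For (i), the bounded condition number $\mu_{M/2}(\AB_k)/\mu_M(\AB_k)\lesssim 1$ is exactly \Cref{lm:power_condition_num}, which the paper proves once and reuses; your inline argument via Weyl, \Cref{lemma:sketch:tail}, and a wide Gaussian block is correct and is essentially how that lemma is proved, so this is not a material difference. For (ii), you overestimate the difficulty: $\HB^2$ has eigenvalues $\lambda_i^2\eqsim i^{-2a}$, which is again a power-law spectrum (degree $2a>1$), so \Cref{lm:power-law-main}/\Cref{lm:power_decay_1} applies verbatim to $\SB\HB^2\SB^\top$ and gives $\mu_i(\SB\HB^2\SB^\top)\eqsim i^{-2a}$ for all $i\le M$ in one shot. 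Your block-restriction argument is valid (and the union bound over $i$ is harmless since each event holds with probability $1-e^{-\Omega(M)}$), but it is unnecessary. With this simplification, the lower bound reduces to $\sum_{i:\tilde\lambda_i<1/(\Neff\gamma)}i^{-2a}/i^{-a}=\sum i^{-a}\gtrsim(\Neff\gamma)^{1/a-1}$ exactly as in the paper.
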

\begin{proof}[Proof of Lemma~\ref{lm:power_law_bias_match}]  
For the upper bound, using Lemma~\ref{lm:power_condition_num},~\ref{lm:bias_lm_1} and the assumption that $\E{\wB^{*2}_i}=1$ for all $i>0$, with probability at least $1-e^{-\Omega(M)}$,  we have
\begin{align*}
\E_{\wB^*}\bias(\wB^*)
&\lesssim \E_{\wB^*}\Bigg[\frac{\|{\wB^*_{\kab}}\|_2^2}{\Neff\gamma}+\|{\wB^*_{\kbb}}\|^2_{\HB_{\kbb}}\Bigg] \\
&\lesssim \frac{k_2}{\Neff \gamma} + \sum_{k>k_2}\lambda_i \\ 
&\eqsim \frac{k_2}{\Neff \gamma} + k_2^{1-a} \\
&\lesssim \max\big\{(\Neff\gamma)^{1/\Hpower-1},\ M^{1-\Hpower}  \big\},
\end{align*}
where in the last inequality, we  choose $k_2=[M/3]\wedge (\Neff\gamma)^{1/\Hpower}$ to minimize the upper bound.

 When $(\Neff\gamma)^{1/\Hpower}\leq M/3$,
combining Lemma~\ref{lm:bias_lm_1_lower_bound}~and~\ref{lm:power_decay_1} gives the lower bound
\begin{align*}
    \E_{{\wB^*}}\bias(\wB^*) &\gtrsim  
    \sum_{i:\tilde\lambda_i< 1/(\Neff\gamma)} 
    \frac{\mu_{i}(\SB \HB\wCov\HB\SB^\top) }{\mu_i(\SB\HB\SB^\top)}= \sum_{i:\tilde\lambda_i< 1/(\Neff\gamma)} 
    \frac{\mu_{i}(\SB \HB^2\SB^\top) }{\mu_i(\SB\HB\SB^\top)},
    \\
    &\gtrsim \sum_{\tilde\lambda_i<1/(\Neff\gamma),i\leq M}
    \frac{i^{-2a}}{i^{-a}}
    = \sum_{\lambda_i<1/(\Neff\gamma),i\leq M}i^{-a} \gtrsim(\Neff\gamma)^{1/\Hpower-1}
\end{align*}
 with probability at least $1-e^{-\Omega(M)}$. Here, the hidden constants depend only on $a.$
\end{proof}

\begin{lemma}[Bounds on $\bias$ under the source condition]\label{lm:power_law_bias_source_match}
    Suppose Assumption~\ref{assump:source} hold and  the initial stepsize
$
\gamma \leq \frac{1}{c\tr(\SB \HB \SB^\top)} $ for some constant $c>2$. Then with probability at least $1-e^{-\Omega(M)}$ over the randomness of $\SB$
    \begin{align*}
    \E_{\wB^*}\bias(\wB^*)\lesssim \max\big\{(\Neff\gamma)^{(1-b)/a},\ M^{1-b}  \big\},
    \end{align*}and
    \begin{align*}
      \E_{\wB^*}\bias(\wB^*)\gtrsim      (\Neff\gamma)^{(1-b)/a} 
    \end{align*} when $(\Neff\gamma)^{1/\Hpower} \leq M/c$ for some constant $c>0$.
    Moreover, when $b\geq a+1$, with probability at least $1-e^{-\Omega(M)}$ over the randomness of $\SB$
    \begin{align*}
     \E_{\wB^*}\bias(\wB^*)\lesssim \log \Neff\cdot\max\big\{(\Neff\gamma)^{(1-b)/a},\ M^{1-b}  \big\}     
    \end{align*}
   In all results, the hidden constants depend only on $a,b.$
\end{lemma}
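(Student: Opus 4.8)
The plan is to combine the general upper and lower bounds on the bias term, namely Lemma~\ref{lm:bias_lm_1} and Lemma~\ref{lm:bias_lm_1_lower_bound}, with the sharp spectral control of $\SB\HB\SB^\top$ afforded by Lemma~\ref{lm:power-law-main} (and its refinement Lemma~\ref{lm:power_decay_1}) and the conditioning estimate on submatrices $\AB_k = \SB_{\kbb}\HB_{\kbb}\SB_{\kbb}^\top$. Since Assumption~\ref{assump:source} is assumed in its diagonal form (Assumption~\ref{assump:source-prime}), we have $\E\wB_i^{*2} \eqsim \lambda_i^{-1} i^{-b} \eqsim i^{a-b}$ for each $i$, so $\wCov := \E\wB^*\wB^{*\top}$ is diagonal with $\wCov_{ii}\eqsim i^{a-b}$. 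This will feed into both bounds in essentially the same way as in the proof of Lemma~\ref{lm:power_law_bias_match}, with the exponent $a$ in the prior replaced by $b$.

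For the upper bound when $1<b<a+1$: I would apply the second part of Lemma~\ref{lm:bias_lm_1} with a cutoff $k = k_2 \le M/3$, giving (on the high-probability event)
\[
\bias(\wB^*) \lesssim \frac{\|\wB^*_{0:k_2}\|_2^2}{\Neff\gamma}\cdot\Bigl[\frac{\mu_{M/2}(\AB_{k_2})}{\mu_M(\AB_{k_2})}\Bigr]^2 + \|\wB^*_{k_2:\infty}\|_{\HB_{k_2:\infty}}^2.
\]
Taking expectation over $\wB^*$, the first term is $\lesssim (\Neff\gamma)^{-1}\sum_{i\le k_2} i^{a-b}$ times a bounded condition-number factor (controlled by Lemma~\ref{lm:power_condition_num}, which gives $\mu_{M/2}(\AB_{k_2})/\mu_M(\AB_{k_2}) \eqsim 1$ when $k_2 \lesssim M$), and the second term is $\lesssim \sum_{i>k_2}\lambda_i\, i^{a-b}\eqsim \sum_{i>k_2} i^{-b}\eqsim k_2^{1-b}$ provided $b>1$. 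The first term is $\eqsim k_2^{1+a-b}/(\Neff\gamma)$ when $1+a-b>0$, i.e. $b<a+1$; here is where the hypothesis $b<a+1$ is used, to ensure the partial sum $\sum_{i\le k_2} i^{a-b}$ grows polynomially rather than being dominated by its tail. Optimizing over $k_2\in[1,M/3]$ — balancing $k_2^{1+a-b}/(\Neff\gamma)$ against $k_2^{1-b}$ gives $k_2 \eqsim (\Neff\gamma)^{1/a}$ — and capping at $M/3$ yields $\bias \lesssim \max\{(\Neff\gamma)^{(1-b)/a}, M^{1-b}\}$. For the matching lower bound, I would invoke Lemma~\ref{lm:bias_lm_1_lower_bound}: with $\wCov$ diagonal, $\mu_i(\SB\HB\wCov\HB\SB^\top)/\mu_i(\SB\HB\SB^\top)$ is estimated via Lemma~\ref{lm:power_decay_1}-type bounds on $\SB\HB^c\SB^\top$ for $c=1,2$ and the spectral decay, giving a sum of the form $\sum_{i\lesssim (\Neff\gamma)^{1/a}, i\le M} i^{a-b-?}$; carrying through the power-law arithmetic produces $(\Neff\gamma)^{(1-b)/a}$ whenever $(\Neff\gamma)^{1/a}\le M/c$.

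The additional claim for $b\ge a+1$ is where the main obstacle lies. When $b\ge a+1$ the exponent $1+a-b\le 0$, so $\sum_{i\le k_2} i^{a-b}$ no longer grows like $k_2^{1+a-b}$ but instead converges (if $b>a+1$) or grows only logarithmically (if $b=a+1$). Consequently the first term in the Lemma~\ref{lm:bias_lm_1} bound degenerates and the naive optimization over $k_2$ no longer gives a clean power law — one loses a factor of $\log\Neff$ coming from the harmonic-type sum at the boundary case, and more care is needed since the simple "stepsize $\gamma$ held for $\Neff$ steps" relaxation in Lemma~\ref{lm:bias_lm_1} is itself lossy by a logarithmic factor when the relevant eigenvalues pile up. I would handle this by splitting the index set at $i\eqsim(\Neff\gamma)^{1/a}$ and bounding the head contribution $\sum_{i\le (\Neff\gamma)^{1/a}} i^{a-b}\lesssim \log(\Neff\gamma)$ directly, which after multiplication by $(\Neff\gamma)^{-1}\cdot(\Neff\gamma)^{?}$ yields the stated $\log\Neff \cdot \max\{(\Neff\gamma)^{(1-b)/a}, M^{1-b}\}$ upper bound; pinning down a matching lower bound in this regime is exactly the gap acknowledged in the discussion after Theorem~\ref{cor:source_cond} and is left for future work. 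The routine steps — verifying the stepsize condition $\gamma\le 1/(c\tr(\SB\HB\SB^\top))$ holds on the high-probability event via $\tr(\SB\HB\SB^\top)\eqsim\sum_i\lambda_i\eqsim 1$, and transferring expectations through the diagonal $\wCov$ — I would dispatch quickly by citing the corresponding steps in the proof of Lemma~\ref{lm:power_law_bias_match}.
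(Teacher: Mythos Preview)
Your proposal is correct and follows the paper's proof essentially line for line: Lemma~\ref{lm:bias_lm_1} together with Lemma~\ref{lm:power_condition_num} for the upper bound (optimizing $k_2=\min\{M/3,(\Neff\gamma)^{1/a}\}$), Lemma~\ref{lm:bias_lm_1_lower_bound} for the lower bound, and the same case split at $b=a+1$ for the degenerate head sum. One small clarification on the lower bound: since $\wCov_{ii}\eqsim i^{a-b}$, the matrix $\HB\wCov\HB$ has power-law spectrum of degree $a+b$ (not $2a$), so applying Lemma~\ref{lm:power_decay_1} to $\SB\HB\wCov\HB\SB^\top\eqsim\SB\HB^{(a+b)/a}\SB^\top$ gives $\mu_i\eqsim i^{-(a+b)}$, the ratio in the sum becomes $i^{-(a+b)}/i^{-a}=i^{-b}$, and hence $\sum_{i\gtrsim(\Neff\gamma)^{1/a}} i^{-b}\eqsim(\Neff\gamma)^{(1-b)/a}$ as claimed.
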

\begin{proof}[Proof of Lemma~\ref{lm:power_law_bias_source_match}]  
For the upper bound, using Lemma~\ref{lm:power_condition_num},~\ref{lm:bias_lm_1} and the assumption that (w.l.o.g.) $\E{\wB^{*2}_i}\eqsim i^{a-b}$ for all $i>0$, with probability at least $1-e^{-\Omega(M)}$,  we have
\begin{align*}
\E_{\wB^*}\bias(\wB^*)
&\lesssim \E_{\wB^*}\Bigg[\frac{\|{\wB^*_{\kab}}\|_2^2}{\Neff\gamma}+\|{\wB^*_{\kbb}}\|^2_{\HB_{\kbb}}\Bigg] \\
&\lesssim \frac{k_2^{1+a-b}}{\Neff \gamma} + \sum_{k>k_2}\lambda_i\cdot i^{a-b} \\ 
&\lesssim \frac{k_2^{1+a-b}}{\Neff \gamma} + k_2^{1-b} \\
&\lesssim \max\big\{(\Neff\gamma)^{(1-b)/a},\ M^{1-b}  \big\}
\end{align*}
when $b< a+1$,
where in the last inequality, we  choose $k_2=[M/3]\wedge (\Neff\gamma)^{1/a}$ to minimize the upper bound.

 When $(\Neff\gamma)^{1/\Hpower}\leq M/c$ for some  large constant $c>0$,
combining Lemma~\ref{lm:bias_lm_1_lower_bound}~and~\ref{lm:power_decay_1} yields the lower bound
\begin{align*}
    \E_{{\wB^*}}\bias(\wB^*) &\gtrsim  
    \sum_{i:\tilde\lambda_i< 1/(\Neff\gamma)} 
    \frac{\mu_{i}(\SB \HB\wCov\HB\SB^\top) }{\mu_i(\SB\HB\SB^\top)}\eqsim \sum_{i:\tilde\lambda_i< 1/(\Neff\gamma)} 
    \frac{\mu_{i}(\SB \HB^{(a+b)/a}\SB^\top) }{\mu_i(\SB\HB\SB^\top)},
    \\
    &\gtrsim \sum_{\tilde\lambda_i<1/(\Neff\gamma),i\leq M}
    \frac{i^{-(a+b)}}{i^{-a}}
    = \sum_{\lambda_i<1/(\Neff\gamma),i\leq M}i^{-b} \gtrsim(\Neff\gamma)^{(1-b)/\Hpower}
\end{align*}
 with probability at least $1-e^{-\Omega(M)}$. Here, the hidden constants depend only on $a,b.$

\paragraph{Upper bound when $b\geq a+1$.}
Following the previous derivations, when  $b=a+1$, we  have
with probability at least $1-e^{-\Omega(M)}$
\begin{align*}
\E_{\wB^*}\bias(\wB^*)
&\lesssim \E_{\wB^*}\Bigg[\frac{\|{\wB^*_{\kab}}\|_2^2}{\Neff\gamma}+\|{\wB^*_{\kbb}}\|^2_{\HB_{\kbb}}\Bigg] \\
&\lesssim \frac{\log k_2}{\Neff \gamma} + k_2^{1-b} \\
 &\lesssim \frac{ \log(\Neff\gamma)}{\Neff\gamma}  +M^{1-b}
\end{align*} where the last line follows by setting $k_2=[M/3]\wedge (\Neff\gamma)^{1/a}$.
When $b>a+1$,  we  have
with probability at least $1-e^{-\Omega(M)}$
\begin{align*}
  \E_{\wB^*}\bias(\wB^*)
&\lesssim \E_{\wB^*}\Bigg[\frac{\|{\wB^*_{\kab}}\|_2^2}{\Neff\gamma}+\|{\wB^*_{\kbb}}\|^2_{\HB_{\kbb}}\Bigg] \\
&\lesssim \frac{1}{\Neff \gamma} + k_2^{1-b} \\
&\lesssim
\frac{1}{\Neff \gamma} + M^{1-b},
\end{align*}
where the last follows by chooing $k_2=M/3$   to minimize the upper bound.

Note that there exist non-constant gaps between the upper and lower  bounds on the bias term (plus the approximation error term, Lemma~\ref{lm:power_law_source_apprx_match}) in the simple regime where $b\geq a+1$. We leave a more precise analysis of the bias term for future work.

\end{proof}

\begin{lemma}[Bounds on $\bias$ under the logarithmic power-law spectrum]\label{lm:logpower_law_bias_match}
    Suppose Assumption~\ref{assump:logpower-law} hold and  the initial stepsize
$
\gamma \leq \frac{1}{c\tr(\SB \HB \SB^\top)} $ for some constant $c>2$. Let $\kthres:=\inf\{k: k\log^a k\geq \Neff\gamma\}$. Then with probability at least $1-e^{-\Omega(M)}$ over the randomness of $\SB$
    \begin{align*}
    \E_{\wB^*}\bias(\wB^*)\lesssim \max\big\{\log^{1-a}(\Neff\gamma),\log^{1-\Hpower} M  \big\},
    \end{align*} and
    \begin{align*}
      \E_{\wB^*}\bias(\wB^*)\gtrsim       \log^{1-a}(\Neff\gamma)
    \end{align*} when $(\Neff\gamma) \leq M^c$ for some sufficiently small constant $c>0$.
    Here, all constants  depend only on the power-law degree $a.$
\end{lemma}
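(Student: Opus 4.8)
The plan is to follow the template of the proof of \Cref{lm:power_law_bias_match}, replacing the power-law spectral estimates (\Cref{lm:power-law-main}) with their logarithmic-spectrum counterpart \Cref{lemma:log-power-law}; the genuinely new work is only the bookkeeping of $\log\log$ factors and pinning down the constant $c$ in the hypothesis $\Neff\gamma\le M^c$. Throughout I assume, as is implicit in the statement, that $\Neff\gamma$ exceeds a fixed constant $>1$ (otherwise $\E_{\wB^*}\bias(\wB^*)\le \tr(\HB)\lesssim 1$ already gives the claimed bound).

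\textbf{Upper bound.} I would apply the second bound of \Cref{lm:bias_lm_1} with a free truncation level $k_2\le M/3$: with probability $1-e^{-\Omega(M)}$,
\[
\bias(\wB^*)\lesssim \frac{\|\wB^*_{0:k_2}\|_2^2}{\Neff\gamma}\left[\frac{\mu_{M/2}(\AB_{k_2})}{\mu_M(\AB_{k_2})}\right]^2+\|\wB^*_{k_2:\infty}\|^2_{\HB_{k_2:\infty}},\qquad \AB_{k_2}:=\SB_{k_2:\infty}\HB_{k_2:\infty}\SB_{k_2:\infty}^\top.
\]
Since $k_2\le M/3$, applying \Cref{lemma:log-power-law} to the tail covariance $\HB_{k_2:\infty}$ gives $\mu_j(\AB_{k_2})\eqsim\lambda_{k_2+j}\eqsim (k_2+j)^{-1}\log^{-a}(k_2+j)$, and because $k_2+M/2$ and $k_2+M$ are both $\Theta(M)$ the ratio $\mu_{M/2}(\AB_{k_2})/\mu_M(\AB_{k_2})$ is $\Theta(1)$. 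Taking expectations over $\wB^*$ (using $\E\wB^{*2}_i=1$ and $\sum_{i>k_2}\lambda_i\eqsim\log^{1-a}(k_2)$) reduces the bound to $\E_{\wB^*}\bias(\wB^*)\lesssim k_2/(\Neff\gamma)+\log^{1-a}(k_2)$. Choosing $k_2=\min\{\lfloor M/3\rfloor,\kthres\}$ finishes it: if $\kthres\le M/3$ then $k_2=\kthres$ and, using $\log\kthres\eqsim\log(\Neff\gamma)$, the bound is $\eqsim\log^{-a}(\Neff\gamma)+\log^{1-a}(\Neff\gamma)\eqsim\log^{1-a}(\Neff\gamma)$; otherwise $(M/3)\log^a(M/3)<\Neff\gamma$, so with $k_2=\lfloor M/3\rfloor$ the first term is $<\log^{-a}(M/3)\lesssim\log^{1-a}(M)$, the second is $\eqsim\log^{1-a}(M)$, and $\log^{1-a}(\Neff\gamma)\lesssim\log^{1-a}(M)$ as well. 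Either way the bound is $\lesssim\max\{\log^{1-a}(\Neff\gamma),\log^{1-a}(M)\}$.

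\textbf{Lower bound.} I would invoke \Cref{lm:bias_lm_1_lower_bound} with $\wCov=\IB$ (\Cref{assump:simple:param}), which gives $\E_{\wB^*}\bias(\wB^*)\gtrsim\sum_{i:\tilde\lambda_i<1/(\Neff\gamma)}\mu_i(\SB\HB^2\SB^\top)/\mu_i(\SB\HB\SB^\top)$. By \Cref{lemma:log-power-law}, $\mu_i(\SB\HB\SB^\top)\eqsim\lambda_i$ and $\mu_i(\SB\HB^2\SB^\top)\eqsim\lambda_i^2$ for all $i\le M$ with probability $1-e^{-\Omega(M)}$; hence each summand is $\eqsim\lambda_i$ and the index set is $\{i\le M:\lambda_i\lesssim 1/(\Neff\gamma)\}=\{i\le M: i\gtrsim\kthres\}$, so $\E_{\wB^*}\bias(\wB^*)\gtrsim\sum_{\kthres\lesssim i\le M}\lambda_i\eqsim\log^{1-a}(\kthres)-\log^{1-a}(M)$. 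Under $\Neff\gamma\le M^c$ one has $\kthres\lesssim M^c$, so $\log\kthres\lesssim c\log M$; taking $c$ small enough (of order slightly below $2^{-1/(a-1)}$) makes $\log^{1-a}(M)\le\log^{1-a}(\kthres)/2$, and together with $\log\kthres\eqsim\log(\Neff\gamma)$ this yields $\E_{\wB^*}\bias(\wB^*)\gtrsim\log^{1-a}(\Neff\gamma)$.

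\textbf{Main obstacle.} The delicate point is not a single estimate but the uniform control of the $\log\log$ slack: one must verify that $\log\kthres$ agrees with $\log(\Neff\gamma)$ up to a factor bounded away from $0$ and $\infty$, so that raising to the power $1-a<0$ costs only a constant, and one must make explicit how small $c$ in $\Neff\gamma\le M^c$ must be so that the $-\log^{1-a}(M)$ term in the lower bound is genuinely dominated. All the spectral inputs — the asymptotics $\mu_i(\SB\HB^s\SB^\top)\eqsim\lambda_i^s$ and the $\Theta(1)$ condition number of the tail-sketched covariance — come from \Cref{lemma:log-power-law} and are used exactly as in the power-law case, and the stepsize hypothesis is precisely what \Cref{lm:bias_lm_1,lm:bias_lm_1_lower_bound} require.
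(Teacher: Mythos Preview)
Your proposal follows the same strategy as the paper's proof, and both the upper and lower bounds are obtained via \Cref{lm:bias_lm_1} and \Cref{lm:bias_lm_1_lower_bound} with the logarithmic-spectrum estimates in place of the power-law ones. Two technical points deserve correction, however.

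For the upper bound, you justify $\mu_{M/2}(\AB_{k_2})/\mu_M(\AB_{k_2})=\Theta(1)$ by ``applying \Cref{lemma:log-power-law} to the tail covariance $\HB_{k_2:\infty}$.'' That lemma is stated only for the full $\HB$ under \Cref{assump:logpower-law}; the shifted spectrum $\lambda_{k_2+i}$ does not satisfy the same hypothesis with uniform constants, and even for the full matrix the lemma does \emph{not} assert $\mu_j\eqsim\lambda_j$ for all $j\le M$ (for $j>\mkthres\eqsim M/\log M$ the eigenvalues flatten to $\eqsim M^{-1}\log^{1-a}M$). The paper instead invokes the dedicated \Cref{lm:logpower_condition_num}, which bounds exactly this ratio for arbitrary tails $\SB_{k:\infty}\HB_{k:\infty}\SB_{k:\infty}^\top$.

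For the lower bound, your claim ``$\mu_i(\SB\HB\SB^\top)\eqsim\lambda_i$ for all $i\le M$'' is false for the same reason: beyond $\mkthres$ the eigenvalues of $\SB\HB\SB^\top$ plateau, so the summand $\mu_i(\SB\HB^2\SB^\top)/\mu_i(\SB\HB\SB^\top)$ is not $\eqsim\lambda_i$ there. The paper restricts the sum to $i\le \mkthres$, obtaining $\sum_{\tilde\lambda_i<1/(\Neff\gamma),\,i\le \mkthres} i^{-1}\log^{-a}i\gtrsim \log^{1-a}(\Neff\gamma)-c_1\log^{1-a}(M)$, from which the conclusion under $\Neff\gamma\le M^c$ follows just as you argue. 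Neither issue changes the outcome, but both steps as written are not justified by the lemmas you cite.
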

\begin{proof}[Proof of Lemma~\ref{lm:logpower_law_bias_match}]  
For the upper bound, using Lemma~\ref{lm:logpower_condition_num},~\ref{lm:bias_lm_1} and the assumption that $\E{\wB^{*2}_i}=1$ for all $i>0$, with probability at least $1-e^{-\Omega(M)}$,  we have
\begin{align*}
\E_{\wB^*}\bias(\wB^*)
&\lesssim \E_{\wB^*}\Bigg[\frac{\|{\wB^*_{\kab}}\|_2^2}{\Neff\gamma}+\|{\wB^*_{\kbb}}\|^2_{\HB_{\kbb}}\Bigg] \\
&\lesssim \frac{k_2}{\Neff \gamma} + \sum_{k>k_2}\lambda_i \\ 
&\eqsim \frac{k_2}{\Neff \gamma} + \log^{1-a}k_2  \\
&\lesssim \max\big\{\log^{1-\Hpower}(\Neff\gamma),\log^{1-\Hpower} M  \big\},
\end{align*}
where in the last inequality, we  choose $k_2=[M/3]\wedge \big[(\Neff\gamma)/\log^{\Hpower}(\Neff\gamma)\big]$ to minimize the upper bound.

Recall $\mkthres\eqsim M/\log M$ (for example we may define $\mkthres=\inf\{k: k\log k\geq M\}$) in Lemma~\ref{lemma:log-power-law}.
Combining Lemma~\ref{lm:bias_lm_1_lower_bound}~and~\ref{lemma:log-power-law} gives the lower bound
\begin{align*}
    \E_{{\wB^*}}\bias(\wB^*) &\gtrsim  
    \sum_{i:\tilde\lambda_i< 1/(\Neff\gamma)} 
    \frac{\mu_{i}(\SB \HB\wCov\HB\SB^\top) }{\mu_i(\SB\HB\SB^\top)}\eqsim \sum_{i:\tilde\lambda_i< 1/(\Neff\gamma)} 
    \frac{\mu_{i}(\SB \HB^{2}\SB^\top) }{\mu_i(\SB\HB\SB^\top)},
    \\
    &\gtrsim 
    \sum_{\tilde\lambda_i<1/(\Neff\gamma),i\leq \mkthres}
    \frac{i^{-2}\log^{-2a}i}{i^{-1}\log^{-a}i}
    = \sum_{\lambda_i<1/(\Neff\gamma),i\leq \mkthres} 
   i^{-1}\log^{-a}i\\
   &\gtrsim
   \sum_{i= \Neff\gamma}^\mkthres  i^{-1}\log^{-a}i\\
   &\gtrsim \log^{1-a}(\Neff\gamma)- \log^{1-a}(\mkthres)\\
   &\gtrsim
    \log^{1-a}(\Neff\gamma)- c_1\log^{1-a}(M)
\end{align*}
 with probability at least $1-e^{-\Omega(M)}$ for some constant $c_1>0$.
 Here, the (hidden) constants depend only on $a.$
 Therefore,
  when $(\Neff\gamma)^{1/\Hpower}\leq M^c$ for some sufficiently small constant $c>0$,
  we have
  \begin{align*}
       \E_{{\wB^*}}\bias(\wB^*) &\gtrsim 
    \log^{1-a}(\Neff\gamma)- c_1\log^{1-a}(M)
       \gtrsim \log^{1-a}(\Neff\gamma).
  \end{align*} with probability at least $1-e^{-\Omega(M)}$. 
\end{proof}

\section{Variance error}
In this section, we present matching upper and lower bounds on the variance term $\variance$ defined in~\eqref{eq:var-error} under the power-law or logarithmic power-law spectrum.
\begin{lemma}[Matching bounds on $\variance$ under power-law spectrum]\label{lm:bounds_on_var_powerlaw}
  Under Assumption~\ref{assump:power-law},  $\variance$ defined in Eq.~\eqref{eq:var-error} satisfies
  \begin{align*}
     \variance\eqsim 
      \frac{\min\{M ,\  (\Neff\gamma)^{1/\Hpower} \} }{\Neff}
  \end{align*}
   with probability at least $1-e^{-\Omega(M)}$ over the randomness of $\SB$. Here, the hidden constants only depend on $a$.
\end{lemma}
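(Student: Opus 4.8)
The plan is to reduce the claim to the spectral estimate of \Cref{lm:power-law-main}: on an event of probability at least $1-e^{-\Omega(M)}$ one has $\tilde\lambda_j \eqsim j^{-a}$ for all $1 \le j \le M$, where $\tilde\lambda_1 \ge \cdots \ge \tilde\lambda_M > 0$ denote the eigenvalues of $\SB\HB\SB^\top$ and the hidden constants depend only on $a$. I work on this event throughout, so that everything below is deterministic. Write $\tau := 1/(\Neff\gamma)$ and let $k^\dagger := \#\{j \le M : \tilde\lambda_j \ge \tau\}$ be the threshold index appearing in the definition \eqref{eq:var-error} of $\variance$. It suffices to establish (i) the two-sided bound $k^\dagger \eqsim \min\{M, (\Neff\gamma)^{1/a}\}$ and (ii) the upper bound $(\Neff\gamma)^2 \sum_{j = k^\dagger+1}^{M} \tilde\lambda_j^2 \lesssim \min\{M, (\Neff\gamma)^{1/a}\}$; together they pin down the numerator of \eqref{eq:var-error} up to $a$-dependent constants. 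I restrict throughout to the regime $\Neff\gamma \gtrsim 1$ relevant to the scaling-law applications, so that $\tilde\lambda_1 \eqsim 1 \ge \tau$ and $k^\dagger \ge 1$.

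To prove (i), write $c_1 j^{-a} \le \tilde\lambda_j \le c_2 j^{-a}$ for $a$-dependent $0 < c_1 \le c_2$. If $j \le (c_1\Neff\gamma)^{1/a}$ then $\tilde\lambda_j \ge c_1 j^{-a} \ge 1/(\Neff\gamma) = \tau$; conversely $\tilde\lambda_j \ge \tau$ forces $c_2 j^{-a} \ge 1/(\Neff\gamma)$, i.e.\ $j \le (c_2\Neff\gamma)^{1/a}$. Intersecting with $\{1,\dots,M\}$ gives $k^\dagger \eqsim \min\{M, (\Neff\gamma)^{1/a}\}$. Moreover, if $k^\dagger < M$ then the cap by $M$ is inactive and one reads off $k^\dagger \eqsim (\Neff\gamma)^{1/a} \lesssim M$, a fact used next.

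To prove (ii), suppose first $k^\dagger = M$; then the sum is empty and the bound is trivial. Otherwise $k^\dagger < M$, and since $2a > 1$ the tail of the convergent $p$-series is controlled by comparison with $\int_{k^\dagger}^{\infty} x^{-2a}\,\mathrm{d}x$:
\[
\sum_{j = k^\dagger + 1}^{M} \tilde\lambda_j^2 \ \le\ c_2^2 \sum_{j > k^\dagger} j^{-2a} \ \lesssim\ (k^\dagger)^{1-2a}.
\]
Multiplying by $(\Neff\gamma)^2$ and substituting $k^\dagger \eqsim (\Neff\gamma)^{1/a}$ from (i) yields $(\Neff\gamma)^2(k^\dagger)^{1-2a} \eqsim (\Neff\gamma)^{2+(1-2a)/a} = (\Neff\gamma)^{1/a} \eqsim k^\dagger = \min\{M, (\Neff\gamma)^{1/a}\}$, which is even stronger than needed.

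Putting (i) and (ii) together, the numerator of \eqref{eq:var-error} is $\eqsim \min\{M, (\Neff\gamma)^{1/a}\}$ in all cases, hence $\variance \eqsim \min\{M, (\Neff\gamma)^{1/a}\}/\Neff$ with constants depending only on $a$. I do not expect a real obstacle: the randomness has been fully isolated in \Cref{lm:power-law-main}, and what remains is to apply the two-sided estimate $\tilde\lambda_j \eqsim j^{-a}$ in the right direction on each side and to keep track of which of $M$ and $(\Neff\gamma)^{1/a}$ is binding.
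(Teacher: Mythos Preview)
Your proposal is correct and follows essentially the same route as the paper: invoke \Cref{lm:power-law-main} (the paper's Lemma~\ref{lm:power_decay_1}) to replace $\tilde\lambda_j$ by $j^{-a}$, then read off $\#\{\tilde\lambda_j\ge 1/(\Neff\gamma)\}\eqsim\min\{M,(\Neff\gamma)^{1/a}\}$ and bound the tail sum by the same quantity via the $p$-series estimate $(\Neff\gamma)^2\sum_{j>k^\dagger}j^{-2a}\eqsim(\Neff\gamma)^{2+(1-2a)/a}=(\Neff\gamma)^{1/a}$. Your explicit case split on $k^\dagger=M$ versus $k^\dagger<M$ and your restriction to $\Neff\gamma\gtrsim 1$ merely spell out what the paper's one-line computation leaves implicit.
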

\begin{proof}[Proof of Lemma~\ref{lm:bounds_on_var_powerlaw}]
    By the definition of $\variance$ in Eq.~\eqref{eq:var-error} and Lemma~\ref{lm:power_decay_1}, we have
    \begin{align*}
        \variance&=\frac{\#\{\tilde\lambda_j \ge 1/(\Neff \gamma)\} + (\Neff \gamma)^2 \sum_{\tilde \lambda_j < 1/(\Neff \gamma)}\tilde\lambda_j^2}{\Neff }\\
        &\eqsim \frac{\min\big\{M,\ (\Neff\gamma)^{1/\Hpower}+(\Neff\gamma)^2\cdot (\Neff\gamma)^{(1-2\Hpower)/\Hpower} \big\}}{\Neff} \\
&\eqsim \frac{\min\{M ,\  (\Neff\gamma)^{1/\Hpower} \} }{\Neff}
    \end{align*}
    with  probability at least $1-e^{-\Omega(M)}$ over the randomness of $\SB$. Here the hidden constants may depend on $a.$
\end{proof}

\begin{lemma}[Matching bounds on $\variance$ under logarithmic power-law spectrum]\label{lm:bounds_on_var_logpowerlaw}
  Under Assumption~\ref{assump:logpower-law},  $\variance$ defined in Eq.~\eqref{eq:var-error} satisfies
  \begin{align*}
     \variance\eqsim 
      \frac{\min\{M ,\ \kthres \} }{\Neff}\eqsim   \frac{\min\{M ,\ (\Neff\gamma)/\log^{a}(\Neff\gamma) \} }{\Neff}
  \end{align*}
   with probability at least $1-e^{-\Omega(M)}$ over the randomness of $\SB$, where $\kthres:=\inf\{k: k\log^a k \geq (\Neff\gamma)\}$ and $\eqsim$ hides constants that only depend on $a$.
\end{lemma}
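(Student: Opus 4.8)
The plan is to mirror the proof of \Cref{lm:bounds_on_var_powerlaw}: substitute the spectral estimates of \Cref{lemma:log-power-law} into the definition \eqref{eq:var-error} of $\variance$ and evaluate the two sums appearing there. First I would invoke \Cref{lemma:log-power-law} to obtain, with probability $1-e^{-\Omega(M)}$, that the eigenvalues $\tilde\lambda_j$ of $\SB\HB\SB^\top$ satisfy $\tilde\lambda_j\eqsim\lambda_j\eqsim j^{-1}\log^{-a}(j+1)$ for $j\le\mkthres$ with $\mkthres\eqsim M/\log M$, together with the accompanying control on the remaining (bulk) eigenvalues $\tilde\lambda_j$, $\mkthres<j\le M$; everything below is conditioned on this event.

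Next I would evaluate the counting term. Using $\tilde\lambda_j\eqsim j^{-1}\log^{-a}(j+1)$, the condition $\tilde\lambda_j\ge 1/(\Neff\gamma)$ is equivalent, up to $a$-dependent constants, to $j\log^a(j+1)\lesssim\Neff\gamma$, i.e.\ to $j\lesssim\kthres$ by the definition $\kthres=\inf\{k:k\log^a k\ge\Neff\gamma\}$; intersecting with $j\le M$ gives $\#\{j:\tilde\lambda_j\ge 1/(\Neff\gamma)\}\eqsim\min\{M,\kthres\}$, where the bulk control from \Cref{lemma:log-power-law} is what settles on which side of the threshold the eigenvalues with $\mkthres<j\le M$ lie in each regime. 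For the variance tail $(\Neff\gamma)^2\sum_{\tilde\lambda_j<1/(\Neff\gamma)}\tilde\lambda_j^2$: when $\kthres\ge M$ the sum is empty and the term vanishes; when $\kthres<M$ the surviving indices are $\kthres\lesssim j\le M$, and $\sum_{j\gtrsim\kthres}j^{-2}\log^{-2a}(j+1)\eqsim\kthres^{-1}\log^{-2a}(\kthres+1)$ by comparison with $\int_{\kthres}^{\infty}x^{-2}\log^{-2a}x\,dx$. Multiplying by $(\Neff\gamma)^2\eqsim(\kthres\log^a\kthres)^2$, again via the defining relation of $\kthres$, this term collapses to $\eqsim\kthres$, hence $\lesssim\min\{M,\kthres\}$ in all cases. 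Therefore the numerator of $\variance$ is $\eqsim\min\{M,\kthres\}$, and dividing by $\Neff$ yields the first asserted equivalence.

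Finally, for the second equivalence I would note that $\kthres\log^a\kthres\eqsim\Neff\gamma$ forces $\log\kthres\eqsim\log(\Neff\gamma)$ once $\Neff\gamma$ exceeds an $a$-dependent constant, so $\kthres\eqsim(\Neff\gamma)/\log^a(\Neff\gamma)$ and $\min\{M,\kthres\}\eqsim\min\{M,(\Neff\gamma)/\log^a(\Neff\gamma)\}$. The step I expect to be the crux is the counting term: unlike the pure power-law case, where \Cref{lm:power_decay_1} pins down $\tilde\lambda_j\eqsim\lambda_j$ for every $j\le M$, here the two-sided match stops at $\mkthres\eqsim M/\log M$, so one must lean on the finer information in \Cref{lemma:log-power-law} about the $\eqsim M$ bulk eigenvalues (which concentrate near $\log^{1-a}(M)/M$) to control $\#\{j:\tilde\lambda_j\ge 1/(\Neff\gamma)\}$ and $\sum_{\tilde\lambda_j<1/(\Neff\gamma)}\tilde\lambda_j^2$ down to the stated order with only $a$-dependent slack; the remaining manipulations are routine integral comparisons.
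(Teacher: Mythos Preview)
Your proposal is correct and follows essentially the same route as the paper: invoke \Cref{lemma:log-power-law}, compute the counting term $\#\{\tilde\lambda_j\ge 1/(\Neff\gamma)\}\eqsim\min\{M,\kthres\}$, bound the tail sum $(\Neff\gamma)^2\sum_{\tilde\lambda_j<1/(\Neff\gamma)}\tilde\lambda_j^2$ by the same order via $\sum_{j\gtrsim\kthres}j^{-2}\log^{-2a}(j+1)\eqsim\kthres^{-1}\log^{-2a}\kthres$, and then translate $\kthres\eqsim(\Neff\gamma)/\log^a(\Neff\gamma)$. The paper's proof is terser and does not spell out the bulk case, whereas you correctly flag that the eigenvalues with $\mkthres<j\le M$ (which sit at $\eqsim M^{-1}\log^{1-a}M$ rather than $j^{-1}\log^{-a}j$) require separate handling; this is indeed the only place where care is needed beyond the power-law argument.
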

\begin{proof}[Proof of Lemma~\ref{lm:bounds_on_var_logpowerlaw}]
Define $\mkthres=\inf\{k: k\log k\geq M\}$ and let \[\tilde D:=\#\{\tilde\lambda_j \ge 1/(\Neff \gamma)\} + (\Neff \gamma)^2 \sum_{\tilde \lambda_j < 1/(\Neff \gamma)}\tilde\lambda_j^2.\] By the definition of $\variance$ in \eqref{eq:var-error} and Lemma~\ref{lemma:log-power-law}, we have
    \begin{align*}
        \variance&=\frac{\#\{\tilde\lambda_j \ge 1/(\Neff \gamma)\} + (\Neff \gamma)^2 \sum_{\tilde \lambda_j < 1/(\Neff \gamma)}\tilde\lambda_j^2}{\Neff }\\
        &=\frac{\tilde D}{\Neff} \eqsim \frac{\min\{M ,\kthres\} }{\Neff}
    \end{align*}
    with  probability at least $1-e^{-\Omega(M)}$ over the randomness of $\SB$, where the second line follows from 
    \begin{align*}
 \tilde D
      &\gtrsim \#\{\tilde\lambda_j \ge 1/(\Neff \gamma)\}\eqsim \frac{\Neff\gamma}{\log^{a}(\Neff\gamma)}
   , ~~~~~~~~~~~~\text{ and }\\
      \tilde D
      &\lesssim \frac{\Neff\gamma}{\log^{a}(\Neff\gamma)}+\frac{(\Neff\gamma)^2}{\log^{2a}(\Neff\gamma)}\cdot\sum_{j:\tilde \lambda_j < 1/(\Neff \gamma)}\frac{1}{j^2}\lesssim \frac{\Neff\gamma}{\log^{a}(\Neff\gamma)}  \end{align*} when $\kthres\lesssim M.$ 
\end{proof}

\section{Expected risk of the average of (\ref{eq:sgd}) iterates}\label{sec:average_iterate}
In this section, we study the expected risk of the average of \eqref{eq:sgd} iterates. Namely, we consider a fixed stepsize ~\eqref{eq:sgd} procedure where $\gamma_t =\gamma$ and define $\bar \vB_N :=\sum^{N-1}_{i=0} \vB_i/N$. Our goal is to derive matching upper and lower bounds $\risk(\bar\vB_N)$ in terms of the sample size $N$ and model size $M.$ Compared with the last iterate of~\eqref{eq:sgd} with geometrically decaying stepsizes, we show that the average of~\eqref{eq:sgd} iterates with a fixed stepsize achieves a better risk, in the sense that the effective sample size $\Neff$ is replaced by $N$ in the bounds (c.f. Theorem~\ref{thm:scaling-law}). This may give improvement up to logarithmic factors. 

We start with invoking the following result in~\cite{zou2023benign}.

\begin{theorem}[A variant of Theorem 2.1~and~2.2 in \citep{zou2023benign}]\label{thm:exact_decomp_ave}
  Suppose Assumption~\ref{assump:simple} hold.  Consider an $M$-dimensional sketched predictor trained by fixed stepsize \Cref{eq:sgd} with $N$ samples.  Let $\sgdave_N:=\sum_{i=0}^{N-1}\vB_{i}/N$ be the average of the iterations of  SGD. Assume $\vB_0=\zeroB$ and $\sigma^2\gtrsim1$. Conditional on $\SB$ and suppose the stepsize $\gamma<1/(c\tr(\SB\HB\SB^\top))$ for some constant $c>0$, then there exist $\approximation,\bias,\variance$ such that
  \begin{align*}
      \Ebb \risk_M(\sgdave_N) -\sigma^2 \eqsim\E_{\wB^*}\approximation + \bias + \sigma^2\variance,
  \end{align*}
  where the expectation of $\risk_M$ is  over $\wB^*$ and $(\xB_i,y_i)_{i=1}^N$ and
  \begin{align*}
      \approximation 
      &:= \Ebb \xi^2 \\
      &= \Big\| \Big(\IB - \HB^{\frac{1}{2}}\SB^\top \big(\SB\HB\SB^\top\big)^{-1}\SB\HB^{\frac{1}{2}}  \Big)\HB^{\frac{1}{2}}{\wB^*} \Big\|^2,
  \end{align*}
  \begin{align*}
      \E_{\wB^*}(\term_1+\term_3)&\lesssim\bias \lesssim \E_{\wB^*}(\term_2+\term_4),\\
       \variance &\eqsim \frac{\Deffn}{N},
  \end{align*}
  and
  \begin{subequations}
  \begin{align}
   \term_1&:=\frac{1}{\gamma^2 N^2}\tr\Big(\Big(\IB-(\IB-\gamma\SB\HB\SB^\top)^{N/4}\Big)^2(\SB\HB\SB^\top)^{-1}\BB_0\Big) ,\label{eq:def_t1_ave}\\
       \term_2&:=\frac{1}{\gamma^2 N^2}\tr\Big(\Big(\IB-(\IB-\gamma\SB\HB\SB^\top)^{N}\Big)^2(\SB\HB\SB^\top)^{-1}\BB_0\Big),\label{eq:def_t2_ave}\\
      \term_3 &:=\frac{1}{\gamma N^2}\tr\Big(\Big(\IB-(\IB-\gamma\SB\HB\SB^\top)^{N/4}\Big)\BB_0\Big)  \cdot\tr\Big(\big(\IB-(\IB-\gamma\SB\HB\SB^\top)^{N/4}\big)^2\Big),\label{eq:def_t3_ave}\\
        \term_4 &:=\frac{1}{\gamma N}\tr\big(\BB_0-(\IB-\gamma\SB\HB\SB^\top)^{N}\BB_0(\IB-\gamma\SB\HB\SB^\top)^{N}\big)\cdot\frac{\Deffn}{N},\label{eq:def_t4_ave}\\
       \BB_0&:= {\vB^*}{\vB^*}^\top,\label{eq:def_b_0_ave}\\
      \Deffn
      &:= \#\{\tilde\lambda_j \ge 1/(N \gamma)\} + (N \gamma)^2 \sum_{\tilde \lambda_j < 1/(N \gamma)}\tilde\lambda_j^2,  \label{eq:def_deffn_ave}
  \end{align}
where $(\tilde\lambda_j)_{j=1}^M$ are eigenvalue of $\SB \HB \SB^\top$.
  \end{subequations}
  \end{theorem}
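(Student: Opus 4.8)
The plan is to obtain \Cref{thm:exact_decomp_ave} as a direct consequence of Theorems~2.1 and~2.2 in \citep{zou2023benign}, applied to the $M$-dimensional linear regression problem obtained by conditioning on the sketch matrix $\SB$. Conditioned on $\SB$, running fixed-stepsize \eqref{eq:sgd} with iterate average is exactly constant-stepsize SGD with averaging on the regression problem with covariates $\SB\xB$, data covariance $\SB\HB\SB^\top$, and population optimum $\vB^* = (\SB\HB\SB^\top)^{-1}\SB\HB\wB^*$ (\Cref{lemma:approximation}). So first I would verify that this conditional problem meets the hypotheses of \citep{zou2023benign}: by \Cref{lemma:sketched-feature:well-specified}, $\SB\xB\sim\Ncal(0,\SB\HB\SB^\top)$ (Gaussian design), the conditional model is well-specified with $\Ebb[y\mid\SB\xB]=\la\SB\xB,\vB^*\ra$, and its effective noise level is $\tilde\sigma^2 := \Ebb(y-\la\SB\xB,\vB^*\ra)^2 = \sigma^2 + \approximation \ge \sigma^2$; moreover the stepsize requirement $\gamma<1/(c\tr(\SB\HB\SB^\top))$ in \Cref{thm:exact_decomp_ave} is precisely the one imposed in \citep{zou2023benign} since the relevant covariance is $\SB\HB\SB^\top$.

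With these checks done, I would invoke Theorems~2.1 and~2.2 of \citep{zou2023benign} with $\vB_0=\zeroB$, so that $\BB_0 = (\vB_0-\vB^*)^{\otimes 2} = \vB^*\vB^{*\top}$. Their bounds give, conditionally on $\SB$ and for each fixed $\wB^*$, a two-sided estimate of the form
\begin{align*}
  \Ebb\,\excessRisk \eqsim \bias(\wB^*) + \tilde\sigma^2\cdot\frac{\Deffn}{N},
  \qquad
  \term_1 + \term_3 \lesssim \bias(\wB^*) \lesssim \term_2 + \term_4,
\end{align*}
where $\term_1,\dots,\term_4$, $\BB_0$, and $\Deffn$ are as in the statement, the sandwich being their bias estimate for averaged SGD. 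As in the proof of \Cref{thm:exact_decomp}, the lower bound in \citep{zou2023benign} is stated for Gaussian additive noise, but by the identical argument it extends to any well-specified noise with bounded fourth moment, which holds here by \Cref{lemma:sketched-feature:well-specified} together with \Cref{assump:simple:data}. Setting $\bias := \Ebb_{\wB^*}\bias(\wB^*)$ and $\variance := \Deffn/N$, and using the risk split $\risk_M(\sgdave_N) = \min\risk(\cdot) + \approximation + \excessRisk$ from \eqref{eq:approx-excess-decomp} to peel off the ($N$-independent) approximation term, I would then take the expectation over $\wB^*$ to arrive at $\Ebb\,\risk_M(\sgdave_N) - \sigma^2 \eqsim \Ebb_{\wB^*}\approximation + \bias + \Ebb_{\wB^*}[\tilde\sigma^2]\,\variance$, together with $\Ebb_{\wB^*}(\term_1+\term_3)\lesssim\bias\lesssim\Ebb_{\wB^*}(\term_2+\term_4)$.

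The final point is to replace the problem-dependent effective noise $\tilde\sigma^2$ by $\sigma^2$: taking expectation over the prior, $\Ebb_{\wB^*}\tilde\sigma^2 = \sigma^2 + \Ebb_{\wB^*}\approximation$, and by \Cref{lemma:approximation} and \Cref{assump:simple:param}, $\Ebb_{\wB^*}\approximation \le \Ebb_{\wB^*}\|\wB^*\|^2_\HB = \tr(\HB)$, which is a constant, so under the standing assumption $\sigma^2\gtrsim 1$ we get $\Ebb_{\wB^*}\tilde\sigma^2 \eqsim \sigma^2$ and the variance term becomes $\sigma^2\variance$ up to constants. I expect the main obstacle to be one of careful translation rather than of a new idea: matching the exact index conventions, stepsize normalization, and noise hypotheses of \citep{zou2023benign} to the sketched setup, and in particular confirming that their \emph{lower} bound survives replacing Gaussian additive noise by the well-specified (but non-additive-Gaussian) noise that the sketch induces — the analogue of the remark invoked in the proof of \Cref{thm:exact_decomp}. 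Beyond the conditioning reduction and this noise bookkeeping, nothing deeper is required.
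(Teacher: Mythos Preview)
Your proposal is correct and follows essentially the same route as the paper: condition on $\SB$, recognize the sketched problem as a well-specified $M$-dimensional regression with Gaussian covariates $\SB\xB$ and effective noise level $\tilde\sigma^2=\sigma^2+\approximation$ (via \Cref{lemma:sketched-feature:well-specified}), invoke Theorems~2.1--2.2 of \citep{zou2023benign} to get the bias/variance sandwich, and then absorb $\tilde\sigma^2$ into $\sigma^2$ using $\sigma^2\gtrsim 1$ and $\Ebb_{\wB^*}\approximation\le\tr(\HB)<\infty$. The paper's proof is terser but identical in substance; your explicit bookkeeping on the noise hypothesis for the lower bound (the analogue of the remark in the proof of \Cref{thm:exact_decomp}) is exactly the right point to flag.
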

  See  Section~\ref{sec:pf_thm:exact_decomp_ave} for the proof.
  
For $\Term_i (i=1,2,3,4)$, we also have the following upper (and lower) bounds.
  \begin{lemma}[Lower bound on $\term_1$]\label{lm:bound_term1_ave}
 Under the assumptions and notations  in Theorem~\ref{thm:exact_decomp_ave},  we have 
      \begin{align*}
     \E_{\wB^*} \Term_1\gtrsim \sum_{i:\tilde\lambda_i< 1/(\gamma N)} 
    \frac{\mu_{i}(\SB \HB^2\SB^\top) }{\mu_i(\SB\HB\SB^\top)}
      \end{align*}
    almost surely, where $(\tilde\lambda_i)_{i=1}^N$ are eigenvalues of $\SB\HB\SB^\top$ in non-increasing order.
  \end{lemma}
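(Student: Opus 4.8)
The plan is to follow the proof of Lemma~\ref{lm:bias_lm_1_lower_bound} essentially verbatim, specialized to the averaged‑SGD quantities. First I would take the expectation over $\wB^*$ inside the trace in~\eqref{eq:def_t1_ave}: using $\BB_0=\vB^*\vB^{*\top}$ with $\vB^*=(\SB\HB\SB^\top)^{-1}\SB\HB\wB^*$ (Lemma~\ref{lemma:approximation}), $\E_{\wB^*}[\wB^*\wB^{*\top}]=\IB$ (Assumption~\ref{assump:simple:param}), cyclicity of the trace, and the fact that every factor other than $\SB\HB^2\SB^\top$ is a function of $\SB\HB\SB^\top$ and hence they all commute, I get
\[\E_{\wB^*}\term_1=\frac{1}{\gamma^2 N^2}\tr\!\Big(\CB\,\SB\HB^2\SB^\top\Big),\qquad \CB:=\big(\IB-(\IB-\gamma\SB\HB\SB^\top)^{N/4}\big)^2(\SB\HB\SB^\top)^{-3},\]
which is PSD; note that $\gamma\tilde\lambda_1\le\gamma\tr(\SB\HB\SB^\top)<1$ by the stepsize assumption, so $\IB-\gamma\SB\HB\SB^\top\succ0$ and, almost surely over $\SB$, $\SB\HB\SB^\top\succ0$, so $\CB$ is well defined and invertible.

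Next I would apply the von Neumann / Ruhe trace inequality, $\tr(\CB\DB)\ge\sum_{i}\mu_i(\CB)\mu_{M-i+1}(\DB)=\sum_i \mu_i(\DB)/\mu_i(\CB^{-1})$ with $\DB=\SB\HB^2\SB^\top$, and then discard the (nonnegative) terms with $\tilde\lambda_i\ge 1/(\gamma N)$. The eigenvalues of $\CB^{-1}$ are $h(\tilde\lambda_k)$ with $h(\lambda):=\lambda^3/\big(1-(1-\gamma\lambda)^{N/4}\big)^2$, so I need $\mu_i(\CB^{-1})=h(\tilde\lambda_i)$; since $\tilde\lambda_i$ is non-increasing in $i$, this holds provided $h$ is non-decreasing on $(0,1/\gamma)$. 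To get the final estimate I then bound, for $\tilde\lambda_i<1/(\gamma N)$ (so $ku\le 1/4$ with $k=N/4$, $u=\gamma\tilde\lambda_i$), the quantity $1-(1-\gamma\tilde\lambda_i)^{N/4}\gtrsim N\gamma\tilde\lambda_i$, whence $h(\tilde\lambda_i)\lesssim \tilde\lambda_i/(N^2\gamma^2)$ and $\mu_i(\DB)/(\gamma^2N^2 h(\tilde\lambda_i))\gtrsim \mu_i(\SB\HB^2\SB^\top)/\tilde\lambda_i$; summing over $i$ with $\tilde\lambda_i<1/(\gamma N)$ and using $\tilde\lambda_i=\mu_i(\SB\HB\SB^\top)$ gives the claim.

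Both of the analytic inputs above reduce to one elementary fact: for real $k\ge1$ and $u\in(0,1)$, $1-(1-u)^k=\int_0^u k(1-t)^{k-1}\,dt\ge ku(1-u)^{k-1}$. Indeed, the monotonicity of $h$ is equivalent to $3\big(1-(1-u)^{N/4}\big)\ge 2\cdot\tfrac{N}{4}u(1-u)^{N/4-1}$, which follows since $1-(1-u)^k\ge ku(1-u)^{k-1}$ and $3>2$; and for the lower bound I combine this with $(1-u)^{k-1}\ge1-ku\ge3/4$ when $ku\le1/4$. I expect this monotonicity check to be the only nonroutine step — it is precisely what allows von Neumann's inequality to be used with the pairing that turns $\tr(\CB\DB)$ into $\sum_i\mu_i(\DB)/h(\tilde\lambda_i)$ rather than a strictly weaker expression — while the rest is bookkeeping already carried out in Lemma~\ref{lm:bias_lm_1_lower_bound}.
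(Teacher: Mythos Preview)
Your proposal is correct and follows essentially the same route as the paper: take the expectation over $\wB^*$ to reduce $\term_1$ to a trace of the form $\tr(\CB\,\SB\HB^2\SB^\top)$, apply von Neumann's trace inequality, and then bound the eigenvalue ratios for indices with $\tilde\lambda_i<1/(\gamma N)$. You are in fact more careful than the paper in explicitly verifying the monotonicity of $h(\lambda)=\lambda^3/(1-(1-\gamma\lambda)^{N/4})^2$ on $(0,1/\gamma)$---this is exactly what justifies the identification $\mu_i(\CB^{-1})=h(\tilde\lambda_i)$ needed for the pairing, and the paper leaves this step implicit.
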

See the proof in Section~\ref{sec:pf_lm:bound_term1_ave}.

\begin{lemma}[Upper bound on $\term_2$]\label{lm:bound_term2_ave}
  Under the assumptions and notations  in Theorem~\ref{thm:exact_decomp_ave},  for any $k\leq M/3$ such that $r(\HB)\geq k+M$,
      we have with probability at least $1-e^{-\Omega(M)}$ that
        \begin{align*}
            \term_2\lesssim \frac{1}{N\gamma}\Big[ \frac{\mu_{M/2}(\AB_k)}{\mu_{M}(\AB_k)}\Big]^2\cdot\|{\wB^*_{\ka}}\|^2+\|{\wB^*_{\kb}}\|_{\HB_\kb}^2,
        \end{align*}
        where $\AB_k:=\SB_\kb\HB_\kb\SB_\kb^\top$. 
    \end{lemma}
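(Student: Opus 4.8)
The plan is to mirror the second part of the proof of Lemma~\ref{lm:bias_lm_1}. Write $\AB := \SB\HB\SB^\top$, with eigenvalues $(\tilde\lambda_j)_{j=1}^M$, and $\AB_k := \SB_\kb\HB_\kb\SB_\kb^\top$ (invertible almost surely since $r(\HB)\ge k+M$), and introduce the symmetric PSD matrix
\[
\MB := \frac{1}{\gamma^2 N^2}\big(\IB - (\IB-\gamma\AB)^N\big)^2\AB^{-1},
\]
which commutes with $\AB$; by cyclicity of the trace and $\BB_0=\vB^*\vB^{*\top}$ we have $\term_2 = (\vB^*)^\top\MB\vB^*$. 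First I would record two elementary spectral facts about $\MB$. Under the stepsize hypothesis of Theorem~\ref{thm:exact_decomp_ave} (taking the constant large enough) one has $\gamma\tilde\lambda_1<1$, hence $\IB-\gamma\AB\succ\bzero$, so Bernoulli's inequality gives $0\le 1-(1-\gamma\tilde\lambda_j)^N\le\min\{1,\,N\gamma\tilde\lambda_j\}$ for every $j$. Consequently the $j$-th eigenvalue of $\MB$ is at most $\min\{\tilde\lambda_j,\,1/(\gamma^2N^2\tilde\lambda_j)\}\le 1/(\gamma N)$ by AM--GM, i.e.\ $\|\MB\|\lesssim 1/(N\gamma)$; and the $j$-th eigenvalue of $\AB^{-1}\MB\AB^{-1}$ equals $(1-(1-\gamma\tilde\lambda_j)^N)^2/(\gamma^2N^2\tilde\lambda_j^3)\le 1/\tilde\lambda_j$, i.e.\ $\AB^{-1}\MB\AB^{-1}\preceq\AB^{-1}$.

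Next I would substitute $\vB^* = \AB^{-1}\SB\HB\wB^* = \AB^{-1}\big(\SB_\ka\HB_\ka\wB^*_\ka + \SB_\kb\HB_\kb\wB^*_\kb\big)$ and, using $(\uB+\vB)^\top\CB(\uB+\vB)\le 2\uB^\top\CB\uB+2\vB^\top\CB\vB$ for PSD $\CB$, split $\term_2 \le 2P_1 + 2P_2$, where $P_1 := (\wB^*_\ka)^\top\HB_\ka\SB_\ka^\top\AB^{-1}\MB\AB^{-1}\SB_\ka\HB_\ka\wB^*_\ka$ and $P_2 := (\wB^*_\kb)^\top\HB_\kb\SB_\kb^\top\AB^{-1}\MB\AB^{-1}\SB_\kb\HB_\kb\wB^*_\kb$. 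For the tail term, $\AB^{-1}\MB\AB^{-1}\preceq\AB^{-1}\preceq\AB_k^{-1}$ (the last step since $\AB\succeq\AB_k$), together with the fact that $\HB_\kb^{1/2}\SB_\kb^\top\AB_k^{-1}\SB_\kb\HB_\kb^{1/2}$ is an orthogonal projection, gives $P_2 \le \|\wB^*_\kb\|_{\HB_\kb}^2$, exactly as in the proof of claim~\eqref{eq:bias_claim_2}. For the head term, $P_1 = (\wB^*_\ka)^\top\big(\HB_\ka\SB_\ka^\top\AB^{-1}\big)\MB\big(\AB^{-1}\SB_\ka\HB_\ka\big)\wB^*_\ka \le \|\MB\|\,\|\AB^{-1}\SB_\ka\HB_\ka\|^2\,\|\wB^*_\ka\|^2 \lesssim \frac{1}{N\gamma}\|\AB^{-1}\SB_\ka\HB_\ka\|^2\|\wB^*_\ka\|^2$, and I would then invoke the spectral estimate~\eqref{eq:bias_important} established inside the proof of Lemma~\ref{lm:bias_lm_1}, namely $\|\AB^{-1}\SB_\ka\HB_\ka\|\lesssim \mu_{M/2}(\AB_k)/\mu_M(\AB_k)$ with probability $1-e^{-\Omega(M)}$ whenever $k\le M/3$, to conclude $P_1 \lesssim \frac{1}{N\gamma}\big[\mu_{M/2}(\AB_k)/\mu_M(\AB_k)\big]^2\|\wB^*_\ka\|^2$. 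Adding the two bounds yields the claim.

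The argument is essentially bookkeeping once the two spectral facts about $\MB$ are in hand, so I do not anticipate a substantive obstacle. The two points meriting care are: (i) checking $\IB-\gamma\AB\succ\bzero$ under the stepsize hypothesis of Theorem~\ref{thm:exact_decomp_ave}, so that the Bernoulli and AM--GM estimates on the eigenvalues of $\MB$ are legitimate; and (ii) re-using the concentration-based bound~\eqref{eq:bias_important} verbatim --- it already packages Woodbury's identity and the Gaussian covariance concentration inequality (Theorem~6.1 in~\cite{wainwright2019high}) --- rather than re-deriving it.
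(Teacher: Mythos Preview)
Your proposal is correct and follows essentially the same route as the paper's proof: the same head/tail split of $\vB^*$, the same operator-norm bound $\|\MB\|\le 1/(N\gamma)$ on the head (the paper obtains it via the geometric-series factorization $f_2(\AB)=N^{-2}[\sum_{i=0}^{N-1}(\IB-\gamma\AB)^i]^2\AB^{-1}$, which is just a rewriting of your $\AB^{-1}\MB\AB^{-1}$), the same invocation of~\eqref{eq:bias_important}, and the same projection argument $\HB_\kb^{1/2}\SB_\kb^\top\AB^{-1}\SB_\kb\HB_\kb^{1/2}\preceq\HB_\kb^{1/2}\SB_\kb^\top\AB_k^{-1}\SB_\kb\HB_\kb^{1/2}\preceq\IB$ on the tail. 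The only cosmetic difference is that the paper bounds $\|f_2(\AB)\AB\|\le 1$ directly rather than via your Bernoulli-based $\AB^{-1}\MB\AB^{-1}\preceq\AB^{-1}$, but these are equivalent statements.
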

 See the proof in Section~\ref{sec:pf_lm:bound_term2_ave}.

    \begin{lemma}[Lower bound on $\term_3$]\label{lm:bound_term3_ave}
    Under the assumptions and notations in Theorem~\ref{thm:exact_decomp_ave}, we have
    \begin{align*}
       \E_{\wB^*} \term_3 \gtrsim \frac{\Deffn}{N}\cdot \sum_{i:\tilde\lambda_i< 1/(\gamma N)} 
    \frac{\mu_{i}(\SB \HB^2\SB^\top) }{\mu_i(\SB\HB\SB^\top)}
    \end{align*}
    almost surely, 
        where  $(\tilde\lambda_i)_{i=1}^M$ are eigenvalues of $\SB\HB\SB^\top$ in non-increasing order. 
    \end{lemma}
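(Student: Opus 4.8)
The plan is to lower bound the two trace factors that make up $\term_3$ separately, following the same strategy as the proof of Lemma~\ref{lm:bias_lm_1_lower_bound}. Write $P := \tr\big((\IB-(\IB-\gamma\SB\HB\SB^\top)^{N/4})\BB_0\big)$ and $Q := \tr\big((\IB-(\IB-\gamma\SB\HB\SB^\top)^{N/4})^2\big)$, so that $\term_3 = PQ/(\gamma N^2)$; since $Q$ does not involve $\wB^*$, $\E_{\wB^*}\term_3 = (\E_{\wB^*}P)\,Q/(\gamma N^2)$, and it suffices to lower bound $\E_{\wB^*}P$ and $Q$. Let $(\tilde\lambda_j)_{j=1}^M$ be the eigenvalues of $\SB\HB\SB^\top$ in non-increasing order; the stepsize assumption forces $\gamma\tilde\lambda_j \le \gamma\,\tr(\SB\HB\SB^\top)<1$ for all $j$. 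Two elementary scalar facts will be used repeatedly: for $0\le x<1/N$, Bernoulli's inequality gives $1-(1-x)^{N/4}\ge (N/8)x$ (since then $(N/4-1)x<1/2$), while for $x\ge 1/N$ one has $(1-x)^{N/4}\le e^{-Nx/4}\le e^{-1/4}$, hence $1-(1-x)^{N/4}\gtrsim 1$.

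For $Q=\sum_{j=1}^M(1-(1-\gamma\tilde\lambda_j)^{N/4})^2$, I would apply these facts with $x=\gamma\tilde\lambda_j$: the $j$-th summand is $\gtrsim 1$ when $\tilde\lambda_j\ge 1/(\gamma N)$ and $\gtrsim (N\gamma\tilde\lambda_j)^2$ when $\tilde\lambda_j<1/(\gamma N)$, so summing and comparing with~\eqref{eq:def_deffn_ave} yields $Q\gtrsim\Deffn$.

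For $\E_{\wB^*}P$, substituting $\BB_0=\vB^*\vB^{*\top}$, $\vB^*=(\SB\HB\SB^\top)^{-1}\SB\HB\wB^*$ and $\E\,\wB^*\wB^{*\top}=\IB$ (Assumption~\ref{assump:simple:param}) gives $\E_{\wB^*}P=\tr\big((\SB\HB\SB^\top)^{-1}(\IB-(\IB-\gamma\SB\HB\SB^\top)^{N/4})(\SB\HB\SB^\top)^{-1}\cdot\SB\HB^2\SB^\top\big)$, a trace of a product of two PSD matrices (positivity of the first factor uses $\gamma\tilde\lambda_j<1$). Then, exactly as in Lemma~\ref{lm:bias_lm_1_lower_bound}, I would apply Von Neumann's trace inequality in its lower-bound form, $\tr(AB)\ge\sum_i\mu_{M-i+1}(A)\mu_i(B)=\sum_i\mu_i(B)/\mu_i(A^{-1})$, with $A^{-1}=(\SB\HB\SB^\top)^2(\IB-(\IB-\gamma\SB\HB\SB^\top)^{N/4})^{-1}=g(\SB\HB\SB^\top)$, where $g(t):=t^2/(1-(1-\gamma t)^{N/4})$. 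Restricting the resulting sum to indices $i$ with $\tilde\lambda_i<1/(\gamma N)$, using that $g$ is increasing on $(0,1/\gamma)$ (so $\mu_i(g(\SB\HB\SB^\top))=g(\tilde\lambda_i)$) together with the Bernoulli estimate $g(\tilde\lambda_i)\le 8\tilde\lambda_i/(N\gamma)$, each surviving summand is $\ge (N\gamma/8)\,\mu_i(\SB\HB^2\SB^\top)/\tilde\lambda_i$. Hence $\E_{\wB^*}P\gtrsim N\gamma\sum_{i:\tilde\lambda_i<1/(\gamma N)}\mu_i(\SB\HB^2\SB^\top)/\mu_i(\SB\HB\SB^\top)$, and combining with $Q\gtrsim\Deffn$,
\[
\E_{\wB^*}\term_3=\frac{(\E_{\wB^*}P)\,Q}{\gamma N^2}\;\gtrsim\;\frac{1}{\gamma N^2}\cdot N\gamma\sum_{i:\tilde\lambda_i<1/(\gamma N)}\frac{\mu_i(\SB\HB^2\SB^\top)}{\mu_i(\SB\HB\SB^\top)}\cdot\Deffn\;=\;\frac{\Deffn}{N}\sum_{i:\tilde\lambda_i<1/(\gamma N)}\frac{\mu_i(\SB\HB^2\SB^\top)}{\mu_i(\SB\HB\SB^\top)},
\]
which is the claim; it holds for every realization of $\SB$ with $\SB\HB\SB^\top$ invertible, i.e.\ almost surely.

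The argument is essentially routine and parallels Lemma~\ref{lm:bias_lm_1_lower_bound} line by line; the only step that needs a little care is the eigenvalue identification $\mu_i(g(\SB\HB\SB^\top))=g(\tilde\lambda_i)$, which requires checking that $g(t)=t^2/(1-(1-\gamma t)^{N/4})$ is increasing on $(0,1/\gamma)$. This follows because $\phi(t):=1-(1-\gamma t)^{N/4}$ is increasing and concave with $\phi(0)=0$, so $\phi(t)\ge t\phi'(t)$ and hence $g'(t)=t\big(2\phi(t)-t\phi'(t)\big)/\phi(t)^2\ge t^2\phi'(t)/\phi(t)^2>0$. I do not anticipate any genuine obstacle beyond this bookkeeping.
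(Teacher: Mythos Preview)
Your proposal is correct and follows essentially the same route as the paper: split $\term_3$ into the two trace factors, lower-bound the $\BB_0$-factor via Von Neumann's trace inequality (exactly as in Lemma~\ref{lm:bias_lm_1_lower_bound}) and the second factor by $\Deffn$ using elementary bounds on $1-(1-\gamma\tilde\lambda_j)^{N/4}$, then combine. Your verification that $g(t)=t^2/(1-(1-\gamma t)^{N/4})$ is increasing is in fact a detail the paper glosses over, so your write-up is slightly more careful on that point.
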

   See the proof in Section~\ref{sec:pf_lm:bound_term3_ave}.

    \begin{lemma}[Upper bound on $\term_4$]\label{lm:bound_term4_ave}
   Under the assumptions and notations in Theorem~\ref{thm:exact_decomp_ave} and assume  $r(\HB)\geq M$, we have 
        \begin{align*}
            \term_4\lesssim \|{\wB^*}\|_{\HB}^2\cdot\frac{\Deffn}{N}
        \end{align*}almost surely, 
        where $\AB_k:=\SB_\kb\HB_\kb\SB_\kb^\top$. 
    \end{lemma}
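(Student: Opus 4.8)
The plan is to strip off the common factor $\Deffn/N$ in the definition of $\term_4$ and reduce the claim to the deterministic estimate
\[
\frac{1}{\gamma N}\tr\big(\BB_0-(\IB-\gamma\SB\HB\SB^\top)^{N}\BB_0(\IB-\gamma\SB\HB\SB^\top)^{N}\big)\ \lesssim\ \|\wB^*\|_\HB^2 ,
\]
where $\BB_0=\vB^*\vB^{*\top}$ and $\vB^*=(\SB\HB\SB^\top)^{-1}\SB\HB\wB^*$ (well defined since $r(\HB)\ge M$). First I would use that $\MB:=(\IB-\gamma\SB\HB\SB^\top)^{N}$ is symmetric, together with cyclicity of the trace, to rewrite $\tr(\BB_0-\MB\BB_0\MB)=\tr\big(\BB_0(\IB-\MB^2)\big)=\tr\big(\BB_0\,(\IB-(\IB-\gamma\SB\HB\SB^\top)^{2N})\big)$, and then, since $\BB_0$ is rank one, as the quadratic form $\vB^{*\top}\big(\IB-(\IB-\gamma\SB\HB\SB^\top)^{2N}\big)\vB^*$.

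Next I would note that the stepsize assumption $\gamma<1/\big(c\tr(\SB\HB\SB^\top)\big)$ forces $\bzero\preceq\gamma\SB\HB\SB^\top\preceq\IB$ (exactly as in the proof of Lemma~\ref{lm:bias_lm_1}), so the scalar inequality $1-(1-x)^{2N}\le 2Nx$ for $x\in[0,1]$ — that is, Bernoulli's inequality $(1-x)^{2N}\ge 1-2Nx$ — upgrades, via the spectral calculus applied to $\SB\HB\SB^\top$, to the matrix inequality $\IB-(\IB-\gamma\SB\HB\SB^\top)^{2N}\preceq 2N\gamma\,\SB\HB\SB^\top$. Substituting this bound gives
\[
\frac{1}{\gamma N}\tr\big(\BB_0-(\IB-\gamma\SB\HB\SB^\top)^{N}\BB_0(\IB-\gamma\SB\HB\SB^\top)^{N}\big)\ \le\ 2\,\vB^{*\top}\SB\HB\SB^\top\vB^*\ =\ 2\|\vB^*\|_{\SB\HB\SB^\top}^2 .
\]

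It then remains to bound $\|\vB^*\|_{\SB\HB\SB^\top}^2$, and here I would reuse the computation already carried out in the proof of Theorem~\ref{thm:exact_decomp}/Lemma~\ref{lemma:approximation}: writing $\vB^*=(\SB\HB\SB^\top)^{-1}\SB\HB\wB^*$ one gets $\|\vB^*\|_{\SB\HB\SB^\top}^2=\big\|\HB^{1/2}\SB^\top(\SB\HB\SB^\top)^{-1}\SB\HB^{1/2}\cdot\HB^{1/2}\wB^*\big\|^2\le\|\HB^{1/2}\wB^*\|^2=\|\wB^*\|_\HB^2$, since $\HB^{1/2}\SB^\top(\SB\HB\SB^\top)^{-1}\SB\HB^{1/2}$ is an orthogonal projection and therefore has operator norm at most $1$. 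Multiplying the reduced estimate back by $\Deffn/N$ yields $\term_4\le 2\|\wB^*\|_\HB^2\cdot\Deffn/N\lesssim\|\wB^*\|_\HB^2\cdot\Deffn/N$, which is the claim. I do not expect a genuine obstacle here: the only steps that need a moment's care are verifying $\gamma\SB\HB\SB^\top\preceq\IB$ from the stepsize condition and the elementary inequality $1-(1-x)^{2N}\le 2Nx$ on $[0,1]$; everything else is the projection bound already established for the approximation error.
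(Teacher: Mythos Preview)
Your proof is correct and follows essentially the same approach as the paper's: both reduce to the quadratic form $\vB^{*\top}\big(\IB-(\IB-\gamma\SB\HB\SB^\top)^{2N}\big)\vB^*/(N\gamma)$, bound it by $2\|\vB^*\|_{\SB\HB\SB^\top}^2$, and then apply the projection bound $\|\vB^*\|_{\SB\HB\SB^\top}^2\le\|\wB^*\|_\HB^2$. The only cosmetic difference is that the paper obtains the factor $2$ via the telescoping identity $\IB-(\IB-\gamma A)^{2N}=\gamma A\sum_{i=0}^{2N-1}(\IB-\gamma A)^i$ and bounds each summand by $\IB$, whereas you phrase the same step as Bernoulli's inequality $1-(1-x)^{2N}\le 2Nx$; these are equivalent.
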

    See the proof in Section~\ref{sec:pf_lm:bound_term4_ave}.

With these results at hand, we are ready to derive upper and lower bounds for the risk of the average of~\eqref{eq:sgd} iterates.    

\subsection{Matching bounds for the average of (\ref{eq:sgd}) iterates under power-law spectrum}
In this section, we derive upper and lower bounds for the expected risk under the power-law spectrum. Our main result (Theorem~\ref{thm:ave_powerlaw}) follows directly from Theorem~\ref{thm:exact_decomp_ave} and the bounds on $\term_i (i=1,2,3,4)$ in~\Cref{lm:bound_term1_ave,lm:bound_term2_ave,lm:bound_term3_ave,lm:bound_term4_ave}.
\begin{theorem}
[Scaling law for average iterates of~\ref{eq:sgd}]\label{thm:ave_powerlaw}
    Suppose Assumption~\ref{assump:simple}~and~\ref{assump:power-law} hold and $\sigma^2\lesssim1$. Then there exists some $a$-dependent constant $c>0$ such that when $\gamma\leq c$,  with probability at least $1-e^{-\Omega(M)}$ over the randomness of the sketch matrix $\SB$, we have
\begin{align*}
    \Ebb\vRisk(\bar \vB_N) = \sigma^2 + \Theta\big( M^{1-a} \big) + \Theta\big((N \gamma)^{1/a-1}\big),
\end{align*}
where the expectation is over the randomness of $\wB^*$ and $(\xB_i,y_i)_{i=1}^N$, and $\Theta(\cdot)$ hides constants that may depend on $a$.
\end{theorem}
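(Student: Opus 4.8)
The plan is to follow the template of the proof of Theorem~\ref{thm:scaling-law}, replacing the last-iterate bias--variance decomposition by the averaged-iterate one (Theorem~\ref{thm:exact_decomp_ave}), so that the effective sample size $\Neff$ gets replaced by $N$ everywhere. As in part~1 of the proof of Theorem~\ref{thm:scaling-law}, Assumption~\ref{assump:simple:noise} immediately gives $\min\risk(\cdot)=\risk(\wB^*)=\sigma^2$. Next I would discharge the stepsize hypothesis of Theorem~\ref{thm:exact_decomp_ave}: by Lemma~\ref{lm:power-law-main}, with probability $1-e^{-\Omega(M)}$ we have $\tr(\SB\HB\SB^\top)=\sum_{i=1}^M\mu_i(\SB\HB\SB^\top)\eqsim\sum_{i=1}^M i^{-a}=\Theta(1)$, so for any constant $c>0$ there is an $a$-dependent $c_0>0$ with $\gamma\le c_0$ forcing $\gamma<1/(c\,\tr(\SB\HB\SB^\top))$. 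On this event Theorem~\ref{thm:exact_decomp_ave} yields
\[
\Ebb\vRisk(\bar\vB_N)-\sigma^2 \;\eqsim\; \E_{\wB^*}\approximation+\bias+\sigma^2\variance,
\]
with $\E_{\wB^*}(\term_1+\term_3)\lesssim\bias\lesssim\E_{\wB^*}(\term_2+\term_4)$ and $\variance\eqsim\Deffn/N$.

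I would then evaluate the three pieces using the power-law spectral estimates of Lemma~\ref{lm:power-law-main} (equivalently Lemma~\ref{lm:power_decay_1}). The approximation error is given directly by Lemma~\ref{lm:power_law_apprx_match}: $\E_{\wB^*}\approximation\eqsim M^{1-a}$. For the variance, substituting $\mu_j(\SB\HB\SB^\top)\eqsim j^{-a}$ into the definition of $\Deffn$ exactly as in Lemma~\ref{lm:bounds_on_var_powerlaw} (with $N$ in place of $\Neff$) gives $\Deffn\eqsim\min\{M,(N\gamma)^{1/a}\}$, hence $\sigma^2\variance\eqsim\min\{M,(N\gamma)^{1/a}\}/N\lesssim(N\gamma)^{1/a-1}$ since $\gamma\lesssim1$ and $\sigma^2\lesssim1$. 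For the bias, the upper bound comes from Lemmas~\ref{lm:bound_term2_ave} and~\ref{lm:bound_term4_ave}: invoking Lemma~\ref{lm:power-law-main} to get $\mu_{M/2}(\AB_k)/\mu_M(\AB_k)\eqsim1$ for $\AB_k=\SB_\kb\HB_\kb\SB_\kb^\top$ (as in Lemma~\ref{lm:power_condition_num}), taking expectation over $\wB^*$ with $\E\wB^{*2}_i=1$, and optimizing over $k=\lfloor M/3\rfloor\wedge(N\gamma)^{1/a}$ exactly as in the proof of Lemma~\ref{lm:power_law_bias_match}, gives $\E_{\wB^*}\term_2\lesssim\max\{(N\gamma)^{1/a-1},M^{1-a}\}$, while $\E_{\wB^*}\term_4\lesssim\|\wB^*\|_\HB^2\cdot\Deffn/N\lesssim(N\gamma)^{1/a-1}$ using $\E\|\wB^*\|_\HB^2=\sum_i\lambda_i=\Theta(1)$. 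The lower bound comes from Lemmas~\ref{lm:bound_term1_ave} and~\ref{lm:bound_term3_ave}: $\bias\gtrsim\E_{\wB^*}\term_1\gtrsim\sum_{i:\mu_i(\SB\HB\SB^\top)<1/(\gamma N)}\mu_i(\SB\HB^2\SB^\top)/\mu_i(\SB\HB\SB^\top)\gtrsim\sum_{i\gtrsim(N\gamma)^{1/a},\,i\le M}i^{-a}\gtrsim(N\gamma)^{1/a-1}$ whenever $(N\gamma)^{1/a}\lesssim M$, again mirroring Lemma~\ref{lm:power_law_bias_match}.

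Finally I would assemble: with $\sigma^2\eqsim1$ (combining the hypothesis $\sigma^2\lesssim1$ with the requirement $\sigma^2\gtrsim1$ of Theorem~\ref{thm:exact_decomp_ave}), the displayed decomposition reads $\Ebb\vRisk(\bar\vB_N)-\sigma^2\eqsim\E_{\wB^*}\approximation+\bias+\sigma^2\variance$. Since $\variance\lesssim(N\gamma)^{1/a-1}\lesssim\bias+\E_{\wB^*}\approximation$---the variance is of strictly higher order, just as in Theorem~\ref{thm:scaling-law}, using the bias lower bound when $(N\gamma)^{1/a}\lesssim M$ and the bound $(N\gamma)^{1/a-1}\lesssim M^{1-a}\lesssim\E_{\wB^*}\approximation$ otherwise---the variance contribution is absorbed and one concludes $\Ebb\vRisk(\bar\vB_N)=\sigma^2+\Theta(M^{1-a})+\Theta((N\gamma)^{1/a-1})$, with probability $1-e^{-\Omega(M)}$ over $\SB$.

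The main obstacle is that Theorem~\ref{thm:exact_decomp_ave} only sandwiches $\bias$ between $\E_{\wB^*}(\term_1+\term_3)$ and $\E_{\wB^*}(\term_2+\term_4)$ rather than pinning it down, so one must verify that under the power law all four of these terms collapse to the common order $\Theta(\max\{(N\gamma)^{1/a-1},M^{1-a}\})$. This is precisely where the two-sided spectral bounds $\mu_j(\SB\HB\SB^\top)\eqsim\mu_j(\HB)\eqsim j^{-a}$ (and the corresponding estimate for $\SB\HB^2\SB^\top$) together with the $\Theta(1)$ condition-number estimate for $\AB_k$ are essential, and where one must check that the extra factor $\Deffn/N$ carried by $\term_3$ and $\term_4$ does not dominate---it does not, since $\Deffn/N\eqsim\variance$ is itself of strictly higher order than $\bias+\E_{\wB^*}\approximation$.
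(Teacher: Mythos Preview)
Your proposal is correct and follows essentially the same route as the paper's proof: verify the stepsize condition via Lemma~\ref{lm:power_decay_1}, invoke Theorem~\ref{thm:exact_decomp_ave}, bound $\approximation$ by Lemma~\ref{lm:power_law_apprx_match}, bound $\variance$ by the $N$-analogue of Lemma~\ref{lm:bounds_on_var_powerlaw}, upper-bound $\bias$ via Lemmas~\ref{lm:bound_term2_ave} and~\ref{lm:bound_term4_ave} together with the condition-number estimate of Lemma~\ref{lm:power_condition_num} (optimizing over $k$ exactly as you describe), lower-bound $\bias$ via Lemma~\ref{lm:bound_term1_ave}, and then absorb the variance into the approximation-plus-bias as in Theorem~\ref{thm:scaling-law}. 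The only minor difference is that the paper's assembly uses just $\term_1$ for the bias lower bound rather than $\term_1+\term_3$, but since $\term_3\ge 0$ this is immaterial.
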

See the proof in Section~\ref{sec:pf_thm:ave_powerlaw}. 

Compared with Theorem~\ref{thm:scaling-law}, Theorem~\ref{thm:ave_powerlaw} suggests that the average of ~\eqref{eq:sgd} achieves a smaller risk in the sketched linear model---the $(\Neff\gamma)^{1/a}$ is replaced by $(N\gamma)^{1/a}$ in the bound for the bias term. This is intuitive since the sum of stepsizes $\sum_t\gamma_t\eqsim \Neff \gamma$ for the geometrically decaying stepsize scheduler while $\sum_t\gamma_t\eqsim N \gamma$ for the fixed stepsize scheduler. 

We also verify the observations in Theorem~\ref{thm:ave_powerlaw} via simulations. We adopt the same model and setup as in Section~\ref{sec:exp} but use the average of iterates of fixed stepsize ~\eqref{eq:sgd}  (denoted by $\bar\vB_N$) as the predictor. From Figure~\ref{fig:2d_ave}~and~\ref{fig:one_dim_ave} we see that the expected risk $\E\risk(\bar \vB_N)$ also scales following a power-law  relation in both sample size $N$ and model size $M$. Moreover,   the fitted exponents match our theoretical predictions in Theorem~\ref{thm:ave_powerlaw}.
\begin{figure}
    \centering
     \includegraphics[width=0.45\linewidth]{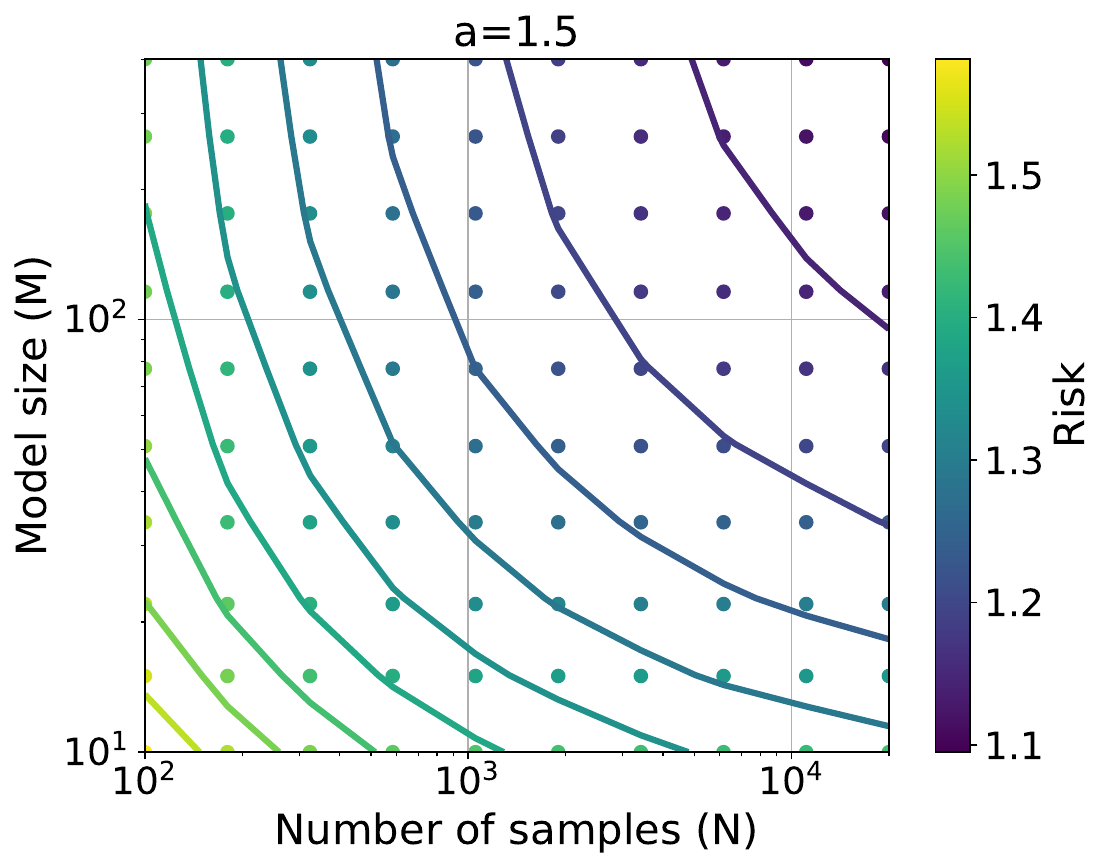}
   \includegraphics[width=0.45\linewidth]{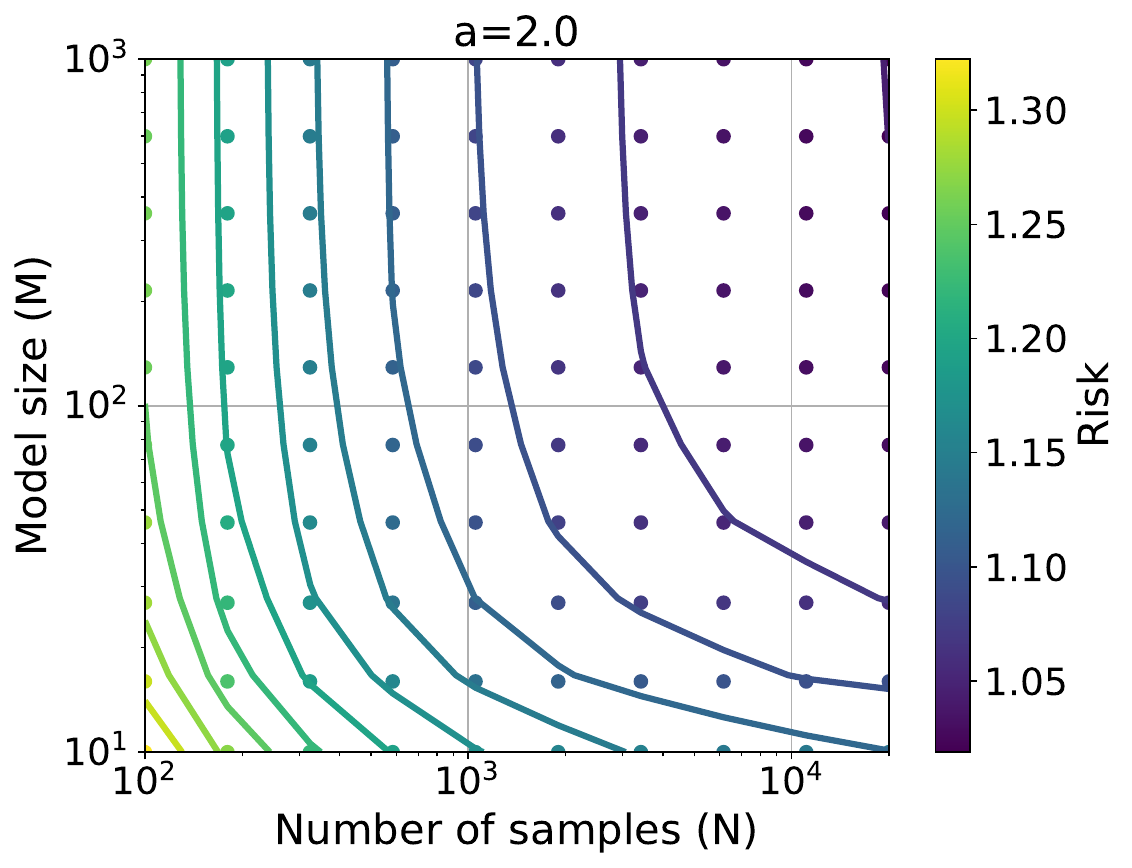}
    \caption{The expected risk (Risk) of the average of iterates of \eqref{eq:sgd} versus the sample size $N$ and the model size $M$ for different power-law degrees $a$.  The expected risk is computed by averaging over $1000$ independent samples of $(\wB^*,\SB)$. We fit the expected risk using the formula $\text{Risk}\sim\sigma^2+c_1/M^{\Hpower_1}+c_2/N^{\Hpower_2}$ via minimizing the Huber loss as in~\cite{hoffmann2022training}. Parameters: $\sigma=1,\gamma =0.1$. Left: For $a=1.5$, $d=20000$, the fitted exponents are $(\Hpower_1,\Hpower_2)=(0.59,0.33)\approx(0.5,0.33)$. 
    Right: For $a=2$, $d=2000$, the fitted exponents are $(\Hpower_1,\Hpower_2)=(1.09,0.49)\approx(1.0,0.5)$.
    Note that the values of $(\Hpower_1,\Hpower_2)$ are close to our theoretical predictions $(\Hpower-1,1-1/\Hpower)$ in both cases. 
    }\label{fig:2d_ave}
\end{figure}

\begin{figure}
   \centering
   \begin{minipage}[b]{0.24\linewidth}
       \centering
       \includegraphics[width=\linewidth]{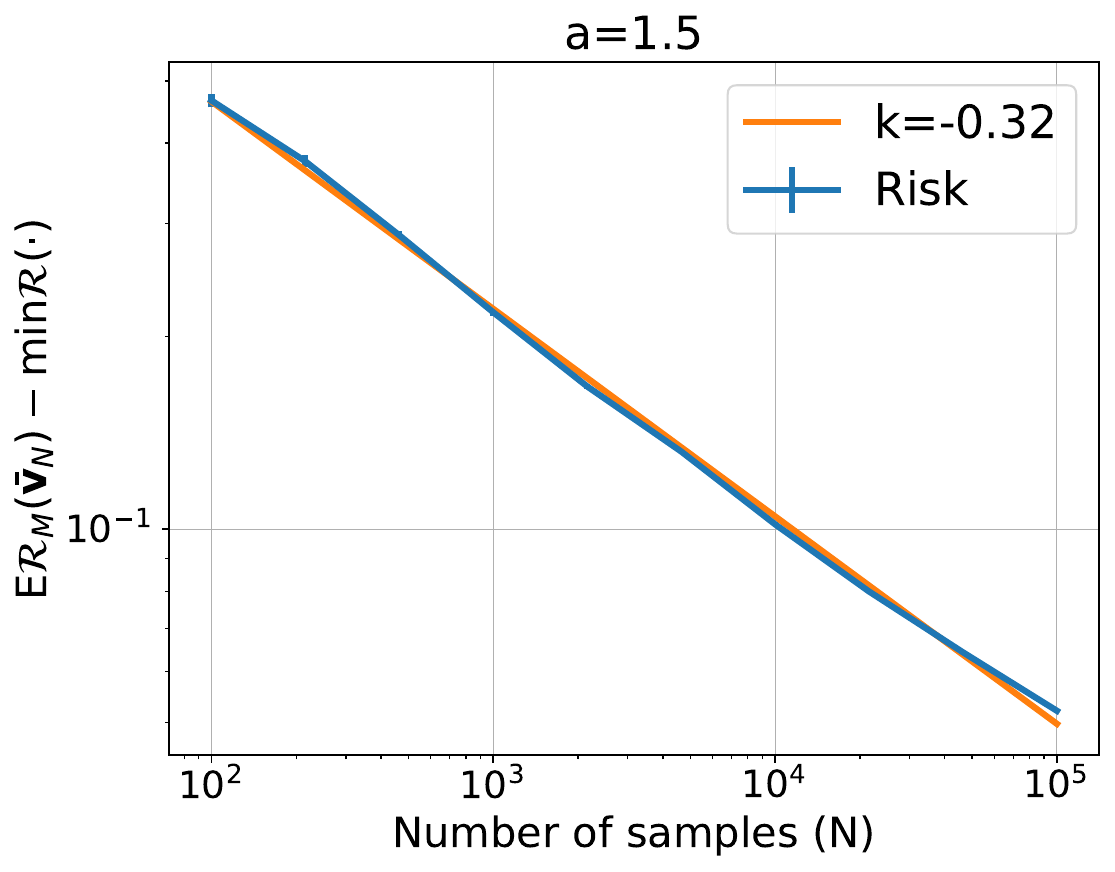}
       \caption*{(a) $a=1.5$}
   \end{minipage}
   \hfill
   \begin{minipage}[b]{0.25\linewidth}
       \centering
       \includegraphics[width=\linewidth]{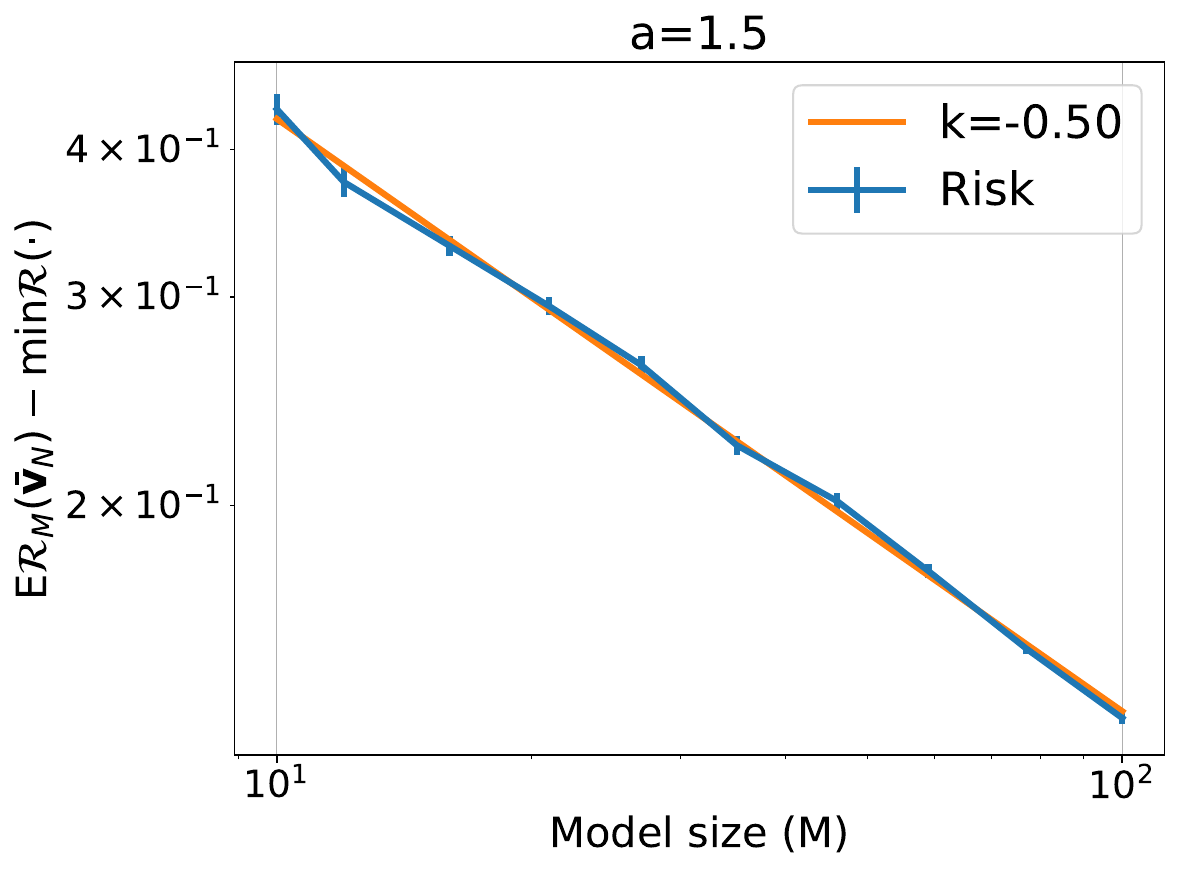}
       \caption*{(b) $a=1.5$}
   \end{minipage}
   \hfill
   \begin{minipage}[b]{0.24\linewidth}
       \centering
       \includegraphics[width=\linewidth]{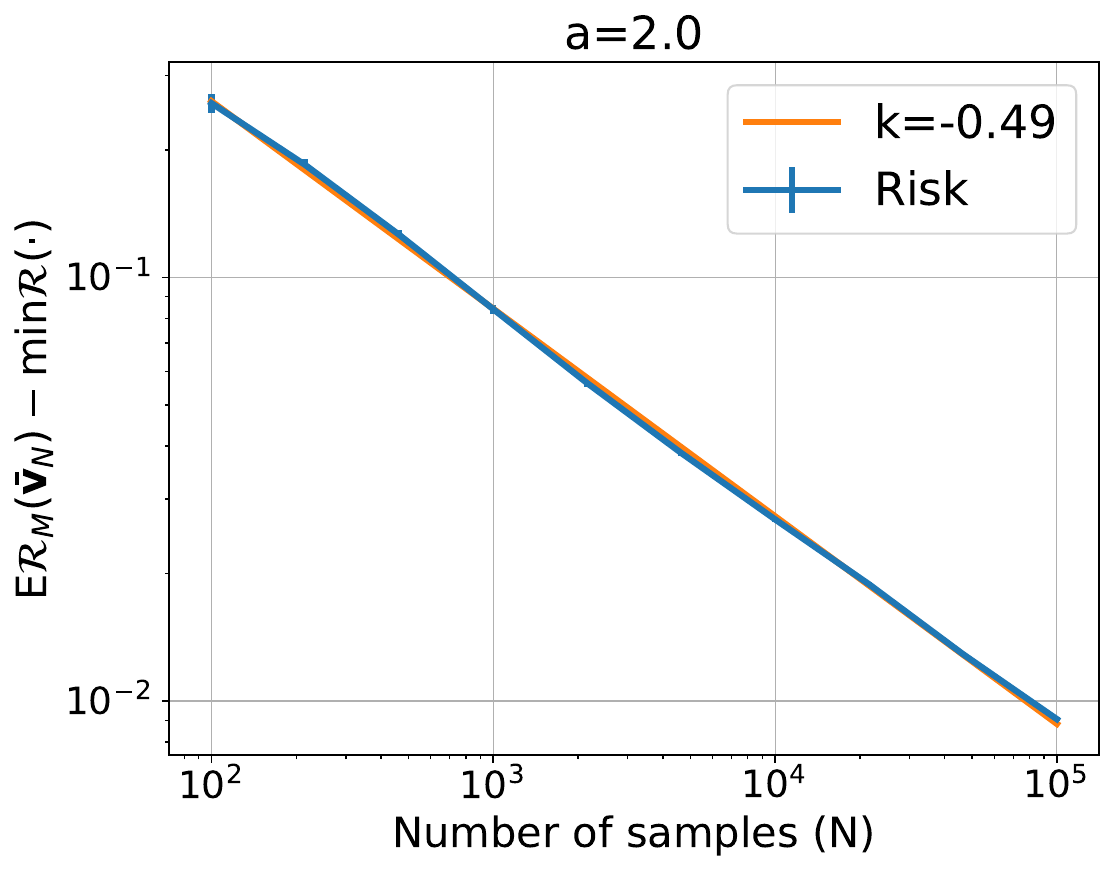}
       \caption*{(c) $a=2$}
   \end{minipage}
   \hfill
   \begin{minipage}[b]{0.24\linewidth}
       \centering
       \includegraphics[width=\linewidth]{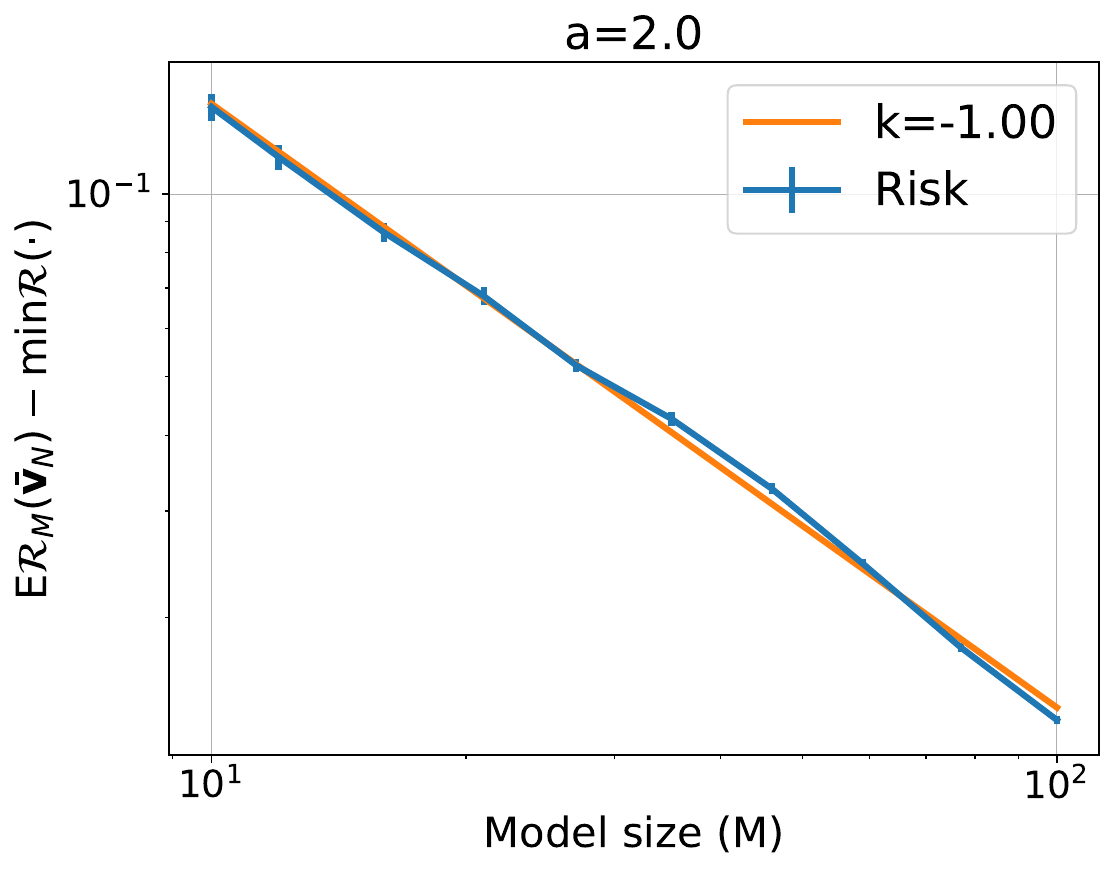}
       \caption*{(d) $a=2$}
   \end{minipage}
   \caption{The expected risk of the average of iterates of ~\eqref{eq:sgd} minus the irreducible risk versus the effective sample size and model size. Parameters $\sigma=1,\gamma=0.1$. (a),~(b): $a=1.5,d=10000$; (c),~(d): $a=2,d=1000$. The errobars denotes the $\pm1$ standard deviation of estimating the expected risk using $100$ independent samples of $(\wB^*,\SB)$. We use linear functions to fit the expected risk under log-log scale and report the slope of the fitted lines (denoted by $k$).}
   \label{fig:one_dim_ave}
\end{figure}

\subsection{Proofs}

  \subsubsection{Proof of Theorem~\ref{thm:exact_decomp_ave}}\label{sec:pf_thm:exact_decomp_ave}
      Similar to the proof of Theorem~\ref{thm:exact_decomp}, we have the decomposition
      \begin{align*}
      \risk(\sgdave_N)
      &=\sigma^2+\approximation+\|\sgdave_N - {\vB^*} \|_{\SB \HB \SB^\top}^2.
  \end{align*} Note that  $(\vB_t)_{t=1}^N$ can also be viewed as the SGD iterates on the model 
  $  y = \la \SB \xB, {\vB^*}\ra + \xi + \epsilon$, where the noise satisfies 
  \begin{align*}
      \Ebb (\xi+\epsilon)^2 = \risk({\vB^*}) = \Ebb \xi^2 + \sigma^2. 
  \end{align*}Therefore, the upper and lower bounds on $\bias,\variance$ follow directly from the proof of Theorem~2.1,~2.2 and related lemmas (Lemma~B.6,~B.11,~C.3,~C.5) in~\cite{zou2023benign}.

  \subsubsection{Proof of Lemma~\ref{lm:bound_term1_ave}}\label{sec:pf_lm:bound_term1_ave}
  For any positive definite matrix $A\in\R^{M\times M}$,
  let \[f_1(A):=(\IB-(\IB-\gamma A)^{N/4})^2A^{-1}/\gamma^2/N^2.\] Since $\gamma\leq 1/(c\tr(\SB\HB\SB^\top))$, we have $f_1(\SB\HB\SB^\top)\succeq \zeroB.$
      By definition of $\term_1$ and recalling ${\vB^*}=(\SB\HB\SB^\top)^{-1}\SB\HB{\wB^*}$, we have
      with probability at least $1-e^{-\Omega(M)}$ that
      \begin{align*}
  \E_{{\wB^*}}\term_1
  &=\E_{{\wB^*}}[{\wB^*}^\top\HB\SB^\top(\SB\HB\SB^\top)^{-1} f_1(\SB\HB\SB^\top)(\SB\HB\SB^\top)^{-1}\SB\HB{\wB^*}]\\
 &=
 \tr\big([(\SB\HB\SB^\top)^{-1}f_1(\SB\HB\SB^\top)(\SB\HB\SB^\top)^{-1}](\SB\HB^2\SB^\top)\big).
   \end{align*} 

Following the proof of Lemma~\ref{lm:bias_lm_1_lower_bound} (by Von Neumann's trace inequality), we have
\begin{align*}
    \E_{\wB^*}\Term_1
    &\geq
    \sum_{i=1}^M  \frac{\mu_{i}(\SB \HB^2\SB^\top) }{\mu_i\Big((\SB\HB\SB^\top)^2f_1(\SB\HB\SB^\top)^{-1}\Big)}\\
    &\geq
     \sum_{i:\tilde\lambda_i< 1/(\gamma N)}   \frac{\mu_{i}(\SB \HB^2\SB^\top) }{\mu_i\Big((\SB\HB\SB^\top)^2f_1(\SB\HB\SB^\top)^{-1}\Big)}\\
    &\gtrsim
    \sum_{i:\tilde\lambda_i< 1/(\gamma N)} 
    \frac{\mu_{i}(\SB \HB^2\SB^\top) }{\mu_i(\SB\HB\SB^\top)},
\end{align*}
where the third inequality is due to 
\begin{align*}
    \lambda/f_1(\lambda)
    &\lesssim
    \frac{\lambda^{2}\gamma^2 N^2}{(1-(1-\gamma \lambda)^{N/4})^2}\lesssim  \frac{ N^2}{(\sum_{i=0}^{N/4-1}(1-\gamma \lambda)^{i})^2}\lesssim \frac{1}{(1-\gamma \lambda)^{2N}}\lesssim 1
\end{align*}
when $\lambda <1/(N\gamma)$.

   \subsubsection{Proof of Lemma~\ref{lm:bound_term2_ave}}\label{sec:pf_lm:bound_term2_ave}
    By definition of $\term_2$, the fact that $1-x^N=(1-x)\sum_{i=0}^{N-1}x^i$, and recalling ${\vB^*}=(\SB\HB\SB^\top)^{-1}\SB\HB{\wB^*}$, we have
    \begin{align*}
    \term_2&={\wB^*}^\top\HB\SB^\top f_2(\SB\HB\SB^\top)\SB\HB{\wB^*},\\
    &\leq 2 [\underbrace{{\wB^*_{\ka}}^\top\HB_\ka\SB_\ka^\top f_2(\SB\HB\SB^\top)\SB_\ka\HB_\ka{\wB^*_{\ka}}}_{\term_{21}}+\underbrace{{\wB^*_{\kb}}^\top\HB_\kb\SB_\kb^\top f_2(\SB\HB\SB^\top)\SB_\kb\HB_\kb{\wB^*_{\kb}}}_{\term_{22}}],
    \end{align*} 
    where $f_2(A):=[\sum_{i=0}^{N-1}(\IB-\gamma A)^{i}]^2/A/N^2$ for any symmetric matrix $A\in\R^{M\times M}$. Moreover,
    we have
    \begin{align*}
      \term_{21}&={\wB^*_{\ka}}^\top\HB_\ka\SB_\ka^\top f_2(\SB\HB\SB^\top)\SB_\ka\HB_\ka{\wB^*_{\ka}}\\
      &\leq 
    \| f_2(\SB\HB\SB^\top)(\SB\HB\SB^\top)^2\|\cdot\|(\SB\HB\SB^\top)^{-1}\SB_\ka\HB_\ka{\wB^*_{\ka}}  \|^2.
    \end{align*}
    Using the assumption on the stepsize that $\gamma\leq 1/(c\tr(\SB\HB\SB^\top))$, we have
    \begin{align}
        \| f_2(\SB\HB\SB^\top)(\SB\HB\SB^\top)^2\|
        &\leq \max_{\lambda\in[0,1/\gamma]} \frac{1}{N^2}\Big[\sum_{i=0}^{N-1}(1-\gamma \lambda)^{i}\Big]^2\lambda\notag\\
        &=
        \max_{\lambda\in[0,1/\gamma]} \frac{1}{N^2\gamma}\Big[\sum_{i=0}^{N-1}(1-\gamma \lambda)^{i}\Big]\cdot (1-(1-\gamma \lambda)^{N})\notag\\
        &\leq \frac{1}{N^2\gamma}\cdot N\cdot 1 = \frac{1}{N\gamma}.\label{eq:pf_term_2_ave_1}
    \end{align}
    Combining Eq.~\eqref{eq:pf_term_2_ave_1} with Eq.~\eqref{eq:bias_important} in the proof of Lemma~\ref{lm:bias_lm_1} (note that we assume $k\leq M/3$),
     we  obtain
     \begin{align*}
         \term_{21}\leq c\frac{1}{N\gamma}\Big[ \frac{\mu_{M/2}(\AB_k)}{\mu_{M}(\AB_k)}\Big]^2\cdot \|{\wB^*_{\ka}}\|^2
     \end{align*}
     for some constant $c>0$ with probability at least $1-e^{-\Omega(M)}$. 
     For $\term_{22}$, we have
     \begin{align*}
        \term_{22}&= {\wB^*_{\kb}}^\top\HB_\kb\SB_\kb^\top f_2(\SB\HB\SB^\top)\SB_\kb\HB_\kb{\wB^*_{\kb}}\\
        &\leq  \|f_2(\SB\HB\SB^\top)\SB\HB\SB^\top\|\cdot\|(\SB\HB\SB^\top)^{-1/2}(\SB_\kb\HB_\kb^{1/2})\HB_\kb^{1/2}{\wB^*_{\kb}}\|^2\\
        &\leq 
         \|f_2(\SB\HB\SB^\top)\SB\HB\SB^\top\|\cdot \|(\SB\HB\SB^\top)^{-1/2}\SB_\kb\HB_\kb^{1/2}\|^2\cdot \|{\wB^*_{\kb}}\|_{\HB_\kb}^2.
     \end{align*}
     Since $\|f_2(\SB\HB\SB^\top)\SB\HB\SB^\top\|=\|[\sum_{i=0}^{N-1}(\IB-\gamma \SB\HB\SB^\top )^{i}]^2/N^2\|\leq 1$ by the assumption $\gamma\leq 1/(c\tr(\SB\HB\SB^\top))$, and 
     \begin{align*}
     \|(\SB\HB\SB^\top)^{-1/2}\SB_\kb\HB_\kb^{1/2}\|^2&= \|\HB_\kb^{1/2}\SB_\kb^\top(\SB\HB\SB^\top)^{-1}\SB_\kb\HB_\kb^{1/2}\|\\
         &
         =\|\HB_\kb^{1/2}\SB_\kb^\top(\SB_\ka\HB_\ka\SB_\ka^\top+\SB_\kb\HB_\kb\SB_\kb^\top)^{-1}\SB_\kb\HB_\kb^{1/2}\|
         \\
         &\leq \|\HB_\kb^{1/2}\SB_\kb^\top(\SB_\kb\HB_\kb\SB_\kb^\top)^{-1}\SB_\kb\HB_\kb^{1/2}\|=1,
     \end{align*}
     it follows that $\term_{22}\leq \|{\wB^*_{\kb}}\|_{\HB_\kb}^2$. Combining the bounds on $\term_{21},\term_{22}$ completes the proof.

  \subsubsection{Proof of Lemma~\ref{lm:bound_term3_ave}}\label{sec:pf_lm:bound_term3_ave}
      Let $f_3(A):=(\IB-(\IB-\gamma\AB)^{N/4})/\gamma/N^2$ for any positive definite matrix $A\in\R^{M\times M}$. Following the same arguments as in the proof of Lemma~\ref{lm:bound_term1_ave},  we have $f_3(\SB\HB\SB^\top)\succeq \zeroB$ and 
        \begin{align}
    &\qquad\E_{{\wB^*}}\Big[\frac{1}{\gamma N^2}\tr\Big(\Big(\IB-(\IB-\gamma\SB\HB\SB^\top)^{N/4}\Big)\BB_0\Big) \Big]\notag\\
    &=\E_{{\wB^*}}[{\wB^*}^\top\HB\SB^\top(\SB\HB\SB^\top)^{-1} f_3(\SB\HB\SB^\top)(\SB\HB\SB^\top)^{-1}\SB\HB{\wB^*}]\notag\\
&=\tr((\SB\HB\SB^\top)^{-1} f_3(\SB\HB\SB^\top)(\SB\HB\SB^\top)^{-1}\SB\HB^2\SB^\top)\notag.
    \end{align}
    Moreover,
    \begin{align}
    \E_{\wB^*}\Term_1
    &\geq
    \sum_{i=1}^M  \frac{\mu_{i}(\SB \HB^2\SB^\top) }{\mu_i\Big((\SB\HB\SB^\top)^2f_3(\SB\HB\SB^\top)^{-1}\Big)}\notag\\
    &\geq
     \sum_{i:\tilde\lambda_i< 1/(\gamma N)}   \frac{\mu_{i}(\SB \HB^2\SB^\top) }{\mu_i\Big((\SB\HB\SB^\top)^2f_3(\SB\HB\SB^\top)^{-1}\Big)}\notag\\
    &\gtrsim \frac{1}{N}
    \sum_{i:\tilde\lambda_i< 1/(\gamma N)} 
    \frac{\mu_{i}(\SB \HB^2\SB^\top) }{\mu_i(\SB\HB\SB^\top)},\label{eq:pf_lm_term3_ave_1}
\end{align}
where the third inequality is due to 
\begin{align*}
    \lambda/f_3(\lambda)
    &\lesssim
    \frac{\lambda\gamma N^2}{1-(1-\gamma \lambda)^{N/4}}\lesssim  \frac{ N^2}{\sum_{i=0}^{N/4-1}(1-\gamma \lambda)^{i}}\lesssim \frac{N}{(1-\gamma \lambda)^{N}}\lesssim N
\end{align*}
when $\lambda <1/(N\gamma)$.
    Note that 
    \begin{align}
    1 - (1 - \gamma \tilde\lambda_i)^{\frac{N}{4}} & \geq \begin{cases}
        1 - (1 - \frac{1}{N})^{\frac{N}{4}}  \geq 1 - e^{-\frac{1}{4}} \geq \frac{1}{5}, & \tilde\lambda_i \geq \frac{1}{\gamma N}, \\
        \frac{N}{4} \cdot \gamma \tilde\lambda_i - \frac{N(N-4)}{32} \cdot \gamma^2 \tilde \lambda_i^2  \geq \frac{N}{5} \cdot \gamma  \tilde\lambda_i, &  \tilde\lambda_i < \frac{1}{\gamma N},
    \end{cases}\phantom{eee}\geq \frac{1}{5}\min\{N\gamma \tilde\lambda_i,1\}. \label{eq:two_cases_num_lowerbound}
    \end{align}We thus have
    \begin{align}
      \tr\Big(\big(\IB-(\IB-\gamma\SB\HB\SB^\top)^{N/4}\big)^2\Big) 
      &=\sum_{i=1}^M [1 - (1 - \gamma \tilde\lambda_i)^{\frac{N}{4}}]^2\gtrsim \sum_{i=1}^M \min\{(N\gamma \tilde\lambda_i)^2,1\}\notag\\
      &= \# \{\tilde\lambda_i\geq \frac{1}{N\gamma}\}+N^2\gamma^2\sum_{\tilde\lambda_i<1/(N\gamma)}\tilde\lambda_i^2=\Deffn.\label{eq:pf_lm_term3_ave_32}
    \end{align}
  Combining Eq.~\eqref{eq:pf_lm_term3_ave_32}~and~\eqref{eq:pf_lm_term3_ave_1} completes the proof.

    \subsubsection{Proof of Lemma~\ref{lm:bound_term4_ave}}
    \label{sec:pf_lm:bound_term4_ave}
    Substituting ${\vB^*}=(\SB\HB\SB^\top)^{-1}\SB\HB{\wB^*}$ in the expression of $\term_4$ and noting $\vB_0=\zeroB$, we have
    \begin{align}
    \term_4
    &=
    \frac{1}{\gamma N }\tr\big(\BB_0-(\IB-\gamma\SB\HB\SB^\top)^{N}\BB_0(\IB-\gamma\SB\HB\SB^\top)^{N}\big)\cdot\frac{\Deffn}{N}\notag\\
    &=
    \frac{1}{\gamma  N}\tr\big({\wB^*}^\top\HB\SB^\top(\SB\HB\SB^\top)^{-1}\big[\IB-(\IB-\gamma\SB\HB\SB^\top)^{2N}\big] (\SB\HB\SB^\top)^{-1}\SB\HB{\wB^*}\big)\cdot\frac{\Deffn}{N}\notag\\
    &=: \tr\big({\wB^*}^\top\HB\SB^\top f_4(\SB\HB\SB^\top)\SB\HB{\wB^*}\big)\cdot\frac{\Deffn}{N},\label{eq:pf_t4_ave_1}
    \end{align} 
    where $f_4(A):=A^{-1}\big[\IB-(\IB-\gamma A)^{2N}\big] A^{-1}/(N\gamma)$ for any symmetric matrix $A\in\R^{M\times M}$. 
    Moreover, 
    \begin{align*}
       &\qquad\tr\big({\wB^*}^\top\HB\SB^\top f_4(\SB\HB\SB^\top)\SB\HB{\wB^*}\big)\\
       &\leq
        \|f_4(\SB\HB\SB^\top)\SB\HB\SB^\top\|\cdot \|(\SB\HB\SB^\top)^{-1/2}\SB\HB^{1/2}\|^2\cdot\|\wB_{*}\|_{\HB}^2\\
        &\leq  \|f_4(\SB\HB\SB^\top)\SB\HB\SB^\top\|\cdot\|\wB_{*}\|_{\HB}^2.
    \end{align*} 
    Since 
    \begin{align*}
        \|f_4(\SB\HB\SB^\top)\SB\HB\SB^\top\|= \frac{1}{N}\|\sum_{i=0}^{2N-1}(\IB-\gamma\SB\HB\SB^\top)^i\|
        \leq 2
    \end{align*} by our assumption on the stepsize, it follows that
    \begin{align}
    \tr\big({\wB^*}^\top\HB\SB^\top f_4(\SB\HB\SB^\top)\SB\HB{\wB^*}\big)\lesssim \|\wB_{*}\|_{\HB}^2.\label{eq:pf_t4_ave_2} 
    \end{align}
    Combining Eq.~\eqref{eq:pf_t4_ave_1}~and~\eqref{eq:pf_t4_ave_2} we find
    \begin{align*}
        \term_4\lesssim \|\wB_{*}\|_{\HB}^2\cdot\frac{\Deffn}{N}.
    \end{align*}

\subsubsection{Proof of Theorem~\ref{thm:ave_powerlaw}}\label{sec:pf_thm:ave_powerlaw}
First, by Lemma~\ref{lm:power_decay_1} we have
$1/\tr(\SB\HB\SB^\top)\gtrsim c_1$ for some $a$-dependent $c_1>0$ with probability at least $1-e^{-\Omega(M)}$. Therefore we may choose $c$ sufficiently small so that $\gamma\leq c$ implies $\gamma\lesssim 1/\tr(\SB\HB\SB^\top)$ with probability at least $1-e^{-\Omega(M)}$. Now, suppose we have $\gamma \lesssim 1/\tr(\SB\HB\SB^\top)$. Following the notations in Theorem~\ref{thm:exact_decomp_ave}, we claim the following bounds on $\approximation,\bias,\variance$: 
\begin{subequations}
\begin{align}
    \E\approximation &\eqsim M^{1-a}\label{eq:claim_apprx_ave1}\\
    \variance &
\eqsim {\min\big\{ M, \ (N \gamma)^{1/a}\big\}}/{N}.
\label{eq:claim_var_ave1}\\
   \bias &
   \lesssim \max\big\{ M^{1-a},\ (N \gamma)^{1/a-1}\big\},\label{eq:claim_bias_ave1}\\
 \bias &
 \gtrsim (N \gamma)^{1/a-1}\text { when } (N\gamma)^{1/a}\leq  M/c \text { for some constant } c>0, \label{eq:claim_bias_ave2}
\end{align} with probability at least $1-e^{-\Omega(M)}$. 
\end{subequations}
Putting the bounds together yields Theorem~\ref{thm:ave_powerlaw}. 
\paragraph{Proof of claim~\eqref{eq:claim_apprx_ave1}}
Note that our definition of $\approximation$ in Thereom~\ref{thm:exact_decomp_ave} is the same as that in Eq.~\eqref{eq:approx-excess-decomp}~(and~\ref{eq:approx_error_formula}). Therefore the claim follows immediately from Lemma~\ref{lm:power_law_apprx_match}.

\paragraph{Proof of claim~\eqref{eq:claim_var_ave1}}
This follows from the proof of Lemma~\ref{lm:bounds_on_var_powerlaw}  with  $\Neff$ replaced by $N.$
\paragraph{Proof of claim~\eqref{eq:claim_bias_ave1}}
By Theorem~\ref{thm:exact_decomp_ave}, Lemma~\ref{lm:bound_term2_ave}~and~\ref{lm:bound_term4_ave}, we have 
\begin{align*}
      \bias
       &\lesssim \E_{\wB^*}\frac{\|{\wB^*_{\kab}}\|_2^2}    
       {N\gamma}\cdot \Bigg[\frac{\mu_{M/2}(\SB_{\kbb}\HB_\kbb\SB^\top_\kbb)}{\mu_M(\SB_{\kbb}\HB_\kbb\SB^\top_\kbb)}\Bigg]^2
       +\E_{\wB^*}\|{\wB^*_{\kbb}}\|^2_{\HB_{\kbb}}+\sigma^2 \frac{\Deffn}{N},\\
       &\lesssim
       \frac{k_2}    
       {N\gamma}
       \Bigg[\frac{\mu_{M/2}(\SB_{\kbb}\HB_\kbb\SB^\top_\kbb)}{\mu_M(\SB_{\kbb}\HB_\kbb\SB^\top_\kbb)}\Bigg]^2+k_2^{1-a}+\frac{\Deffn}{N}
\end{align*}  with probability at least $1-e^{-\Omega(M)}$ for any $k_2\leq M/3$. Choosing $k_2=\min\{M/3,(N\gamma)^{1/a}\}$ and using Lemma~\ref{lm:power_decay_1} and claim~\eqref{eq:claim_var_ave1}, we obtain
\begin{align*}
  \bias
       &\lesssim 
        \max\big\{ M^{1-a},\ (N \gamma)^{1/a-1}\big\} +\frac {\min\big\{ M, \ (N \gamma)^{1/a}\big\}}{N}
        \lesssim 
        \max\big\{ M^{1-a},\ (N \gamma)^{1/a-1}\big\} +
        (N \gamma)^{1/a-1}\\
        &\lesssim 
        \max\big\{ M^{1-a},\ (N \gamma)^{1/a-1}\big\}
\end{align*}
with probability at least $1-e^{-\Omega(M)}$.
\paragraph{Proof of claim~\eqref{eq:claim_bias_ave2}}
By Theorem~\ref{thm:exact_decomp_ave}~and~ Lemma~\ref{lm:bound_term1_ave}, we have
\begin{align*}
    \E_{\wB^*}\bias &\gtrsim
    \sum_{i:\tilde\lambda_i< 1/(\gamma N)} 
    \frac{\mu_{i}(\SB \HB^2\SB^\top) }{\mu_i(\SB\HB\SB^\top)}.
\end{align*}
When $(N\gamma)^{1/a}\leq M/c$ for some large constant $c>0$, we have from Lemma~\ref{lm:power_decay_1} that
\begin{align*}
     \E_{\wB^*}\bias &\gtrsim
    \sum_{i:\tilde\lambda_i< 1/(\gamma N)} 
    \frac{i^{-2a} }{i^{-a}}= \sum_{i:\tilde\lambda_i< 1/(\gamma N)} 
    {i^{-a}} \gtrsim [(N\gamma)^{1/a}]^{1-a}=(N\gamma)^{1/a-1}
\end{align*}
with probability at least $1-e^{-\Omega(M)}$.

\section{Concentration lemmas}\label{sec:concentration}

\subsection{General concentration results}
\begin{lemma}\label{lm:sketch}
Suppose that $\SB \in\Rbb^{M\times d}$ is such that \footnote{We allow $d=\infty$.} 
\[\SB_{ij} \sim\Ncal(0, 1/M).\]
Let $(\lambda_i)_{i\ge 1}$ be the eigenvalues of $\HB$ in non-increasing order.
Let $(\tilde\lambda_i)_{i= 1}^M$ be the eigenvalues of $\SB\HB\SB^\top$ in non-increasing order.
Then there exists a constant $c>1$ such that for every $M\ge 0$ and every $0\le k\le M$, with probability $\ge 1-e^{-\Omega(M)}-e^{-\Omega(k)}$, we have
\begin{align*}
\text{for every}\ j\le M,\quad 
   \bigg| \tilde\lambda_j - \bigg(\lambda_j + \frac{\sum_{i>k}\lambda_i}{M}\bigg) \bigg|  \le c\cdot \Bigg( \sqrt{\frac{k}{M}}\cdot \lambda_j + \lambda_{k+1} + \sqrt{\frac{\sum_{i>k}\lambda_i^2}{M}} \Bigg).
\end{align*}
As a direct consequence, for $k\le  M / c^2$, we have
\begin{align*}
\text{for every}\ j\le M,\quad 
   \bigg| \tilde\lambda_j - \bigg(\lambda_j + \frac{\sum_{i>k}\lambda_i}{M}\bigg) \bigg|  \le \frac{1}{2}\cdot \bigg(\lambda_j +  \frac{\sum_{i>k}\lambda_i}{M}\bigg) + c_1\cdot \lambda_{k+1},
\end{align*}
where $c_1 = c+2c^2$.
\end{lemma}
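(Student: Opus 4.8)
The plan is to prove the main concentration estimate for all eigenvalues $\tilde\lambda_j$ simultaneously, and then deduce the "direct consequence" by an elementary algebraic manipulation. First I would split $\HB$ according to the threshold $k$: write $\HB = \HB_{0:k} + \HB_{k:\infty}$ where $\HB_{0:k}$ carries the top $k$ eigenvalues and $\HB_{k:\infty}$ the tail. Correspondingly $\SB\HB\SB^\top = \SB_{0:k}\HB_{0:k}\SB_{0:k}^\top + \SB_{k:\infty}\HB_{k:\infty}\SB_{k:\infty}^\top$, and the two pieces are independent Gaussian objects. The head term has rank at most $k$; I would control its operator norm by $\lambda_1$ times $\|\SB_{0:k}\|^2$, and since $\SB_{0:k}\in\mathbb{R}^{M\times k}$ has i.i.d.\ $\mathcal N(0,1/M)$ entries, standard Gaussian matrix concentration (e.g.\ Theorem 6.1 in Wainwright, as already invoked elsewhere in the paper) gives $\|\SB_{0:k}\| \lesssim 1 + \sqrt{k/M}$ with probability $1 - e^{-\Omega(M)}$; more carefully, one wants $\|\SB_{0:k}\HB_{0:k}\SB_{0:k}^\top - \mathrm{diag}\|$-type bounds, but for a rank-$k$ object the contribution to each $\tilde\lambda_j$ is at most its operator norm, which feeds the $\sqrt{k/M}\,\lambda_j + \lambda_{k+1}$ terms once one is careful about whether $j\le k$ or $j>k$.

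The heart of the argument is the tail term $\SB_{k:\infty}\HB_{k:\infty}\SB_{k:\infty}^\top$. Here I would show it concentrates around its mean $\frac{\tr(\HB_{k:\infty})}{M}\IB = \frac{\sum_{i>k}\lambda_i}{M}\IB$ in operator norm, with deviation $\lesssim \lambda_{k+1} + \sqrt{\sum_{i>k}\lambda_i^2/M}$. This is a Bernstein-type / Hanson–Wright estimate for a sum $\sum_{i>k}\lambda_i \sB_i\sB_i^\top$ of independent rank-one Gaussian matrices: the "max" scale is $\lambda_{k+1}$ (the largest coefficient, times $\|\sB_i\sB_i^\top\|\asymp 1$) and the "variance" scale is $\sqrt{\sum_{i>k}\lambda_i^2}/\sqrt M$ (the $\ell_2$ aggregate of the coefficients, with the $1/M$ from the normalization). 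The matrix Bernstein inequality gives exactly this with probability $1 - e^{-\Omega(M)}$, provided $M$ is the relevant dimension; some care is needed because the sum is over infinitely many terms, so I would either truncate the tail (the discarded part is negligible since $\sum_i\lambda_i<\infty$) or use a version of matrix Bernstein valid for operator-valued series. Then Weyl's inequality combines the head and tail bounds: $\tilde\lambda_j$ lies within $\|\text{head}\| + \|\text{tail deviation}\|$ of $\lambda_j + \frac{\sum_{i>k}\lambda_i}{M}$ — more precisely one uses that the head perturbs the $j$-th eigenvalue by at most $\lambda_{j}\|\SB_{0:k}\|^2$ for $j\le k$ and shifts things appropriately, but since all bounds are stated as two-sided $|\cdot|$ estimates with a generous absolute constant $c$, a clean application of Weyl plus the triangle inequality suffices. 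Taking a union bound over the two events (head, tail) — both of probability $1 - e^{-\Omega(M)}$ — finishes the first display.

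For the direct consequence: assume $k \le M/c^2$, so that $\sqrt{k/M} \le 1/c$, hence the $c\sqrt{k/M}\,\lambda_j$ term is at most $\lambda_j \le \lambda_j + \frac{\sum_{i>k}\lambda_i}{M}$, but we want a factor $\tfrac12$, so I would instead absorb it: $c\sqrt{k/M}\,\lambda_j \le \tfrac12\lambda_j$ once $k \le M/(2c)^2$; to keep the stated constant $M/c^2$ I would simply note $c\sqrt{k/M} \le 1$, which already gives a bound of the form $\lambda_j + \lambda_{k+1} + \sqrt{\sum_{i>k}\lambda_i^2/M}$, and then observe $\sqrt{\sum_{i>k}\lambda_i^2/M} \le \sqrt{\lambda_{k+1}\sum_{i>k}\lambda_i/M} \le \tfrac12\big(\lambda_{k+1} + \frac{\sum_{i>k}\lambda_i}{M}\big)$ by AM–GM. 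Collecting the $\lambda_j$ and $\frac{\sum_{i>k}\lambda_i}{M}$ contributions into the $\tfrac12(\lambda_j + \frac{\sum_{i>k}\lambda_i}{M})$ slot and the remaining $\lambda_{k+1}$'s into a single $c_1\lambda_{k+1}$ with $c_1 = c + 2c^2$ gives the claimed form. The main obstacle I anticipate is getting the \emph{dimension-free} (in $d$) Bernstein bound for the tail sum genuinely correct — in particular justifying that the infinite sum of rank-one terms concentrates with the stated scales and that no $\log d$ or ambient-dimension factor sneaks in; once that estimate is in hand, the head term and the Weyl/AM–GM bookkeeping are routine.
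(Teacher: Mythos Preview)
Your overall architecture matches the paper's: split $\SB\HB\SB^\top$ into a head piece $\SB_{0:k}\HB_{0:k}\SB_{0:k}^\top$ and a tail piece $\SB_{k:\infty}\HB_{k:\infty}\SB_{k:\infty}^\top$, center the tail at $\frac{\sum_{i>k}\lambda_i}{M}\IB$, bound the tail deviation in operator norm, and combine via Weyl. The AM--GM step $\sqrt{\sum_{i>k}\lambda_i^2/M}\le \sqrt{\lambda_{k+1}\cdot\sum_{i>k}\lambda_i/M}$ for the consequence is exactly what the paper does.

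There is one soft spot in your head--term treatment. Bounding the head by its operator norm gives something of size $\lambda_1$, not $\lambda_j$; this is far too crude for $j$ away from $1$ and does \emph{not} ``feed the $\sqrt{k/M}\,\lambda_j$ term.'' What you actually need is eigenvalue-by-eigenvalue multiplicative control: $|\mu_j(\SB_{0:k}\HB_{0:k}\SB_{0:k}^\top)-\lambda_j|\lesssim\sqrt{k/M}\,\lambda_j$ for each $j\le k$. The paper obtains this by the identity $\mathrm{spec}(\SB_{0:k}\HB_{0:k}\SB_{0:k}^\top)=\mathrm{spec}(\HB_{0:k}^{1/2}\SB_{0:k}^\top\SB_{0:k}\HB_{0:k}^{1/2})$ together with the \emph{multiplicative} bound $\|\SB_{0:k}^\top\SB_{0:k}-\IB_k\|\lesssim\sqrt{k/M}$, which immediately yields $(1\pm c\sqrt{k/M})\lambda_j$ for each $j\le k$. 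For $k<j\le M$ the head eigenvalue is exactly $0$, and in the lower-bound direction one uses $0\ge \lambda_j-\lambda_{k+1}$; this is where the $\lambda_{k+1}$ term in the error genuinely comes from, not from any operator-norm slack. Your ``$\|\SB_{0:k}\HB_{0:k}\SB_{0:k}^\top-\mathrm{diag}\|$-type'' phrasing is off target because that matrix is $M\times M$ and has no natural diagonal to subtract.

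On the tail: your worry about dimension dependence is legitimate for matrix Bernstein (and the summands $\lambda_i\sB_i\sB_i^\top$ are unbounded). The paper sidesteps both issues by fixing a unit vector $\vB\in\mathbb{S}^{M-1}$, applying scalar Bernstein to the sub-exponential sum $\sum_{i>k}\lambda_i(\sB_i^\top\vB)^2$, and then taking a net over $\mathbb{S}^{M-1}$; the net contributes only an additive $M$ inside the exponent, which is exactly what turns the bound into probability $1-e^{-\Omega(M)}$ with no logarithmic factors.
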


\begin{proof}[Proof of Lemma~\ref{lm:sketch}]
We have the following decomposition motivated by \citet{swartworth2023optimal} (see their Section 3.4, Proof of Theorem 1).
\begin{align*}
\SB \HB\SB^\top 
&= \SB_\ka \HB_{0:k} \SB_\ka^\top + \SB_\kb \HB_{k:\infty} \SB_\kb^\top \\
&= \SB_\ka \HB_{0:k} \SB_\ka^\top + \frac{\sum_{i>k}\lambda_i}{M}\cdot \IB_M + 
\SB_\kb \HB_{k:\infty} \SB_\kb^\top - \frac{\sum_{i>k}\lambda_i}{M}\cdot \IB_M.
\end{align*}
We remark that this decomposition idea has been implicitly used in \citet{bartlett2020benign} to control the eigenvalues of a Gram matrix. 
In fact, we will use techniques from \citet{bartlett2020benign} to obtain a sharper bound than that presented in \citet{swartworth2023optimal}.

For the upper bound, we have 
\begin{align*}
\mu_j \big( \SB \HB\SB^\top \big)
&\le \mu_j    \bigg( \SB_\ka \HB_{0:k} \SB_\ka^\top + \frac{\sum_{i>k}\lambda_i}{M}\cdot \IB_M \bigg) +  \bigg\| \SB_\kb \HB_{k:\infty} \SB_\kb^\top - \frac{\sum_{i>k}\lambda_i}{M}\cdot \IB_M\bigg\|_2 \\
&= \mu_j  \big( \SB_\ka \HB_{0:k} \SB_\ka^\top  \big) +   \frac{\sum_{i>k}\lambda_i}{M}\cdot \IB_M  +  \bigg\| \SB_\kb \HB_{k:\infty} \SB_\kb^\top - \frac{\sum_{i>k}\lambda_i}{M}\cdot \IB_M\bigg\|_2 \\
&\le \mu_j  \big( \SB_\ka \HB_{0:k} \SB_\ka^\top  \big) +   \frac{\sum_{i>k}\lambda_i}{M}\cdot \IB_M  + c_1 \cdot \Bigg( \lambda_{k+1} + \sqrt{\frac{\sum_{i>k}\lambda_i^2}{M}} \Bigg),
\end{align*}
where the last inequality is by Lemma \ref{lemma:sketch:tail}.
For $j\le k$, using Lemma \ref{lemma:sketch:head}, we have 
\begin{align*}
\mu_j  \big( \SB_\ka \HB_{0:k} \SB_\ka^\top  \big) \le \lambda_j + c_2 \cdot \sqrt{\frac{k}{M}}\cdot \lambda_j.
\end{align*}
For $k<j\le M$, we have 
\begin{align*}
 \mu_j  \big( \SB_\ka \HB_{0:k} \SB_\ka^\top  \big) = 0 \le \lambda_j + c_2 \cdot \sqrt{\frac{k}{M}}\cdot \lambda_j.
\end{align*}
Putting these together, we have the following for every $j=1,\dots,M$:
\begin{align*}
\mu_j \big( \SB \HB\SB^\top \big)
&\le \mu_j  \big( \SB_\ka \HB_{0:k} \SB_\ka^\top  \big) +   \frac{\sum_{i>k}\lambda_i}{M}\cdot \IB_M  + c_1 \cdot \Bigg( \lambda_{k+1} + \sqrt{\frac{\sum_{i>k}\lambda_i^2}{M}} \Bigg) \\
&\le \lambda_j + \frac{\sum_{i>k}\lambda_i}{M}\cdot \IB_M + c \cdot \Bigg(\sqrt{\frac{k}{M}}\cdot \lambda_j + \lambda_{k+1} + \sqrt{\frac{\sum_{i>k}\lambda_i^2}{M}} \Bigg).
\end{align*}

Similarly, we can show the lower bound. By the decomposition, we have
\begin{align*}
\mu_j \big( \SB \HB\SB^\top \big)
&\ge \mu_j    \bigg( \SB_\ka \HB_{0:k} \SB_\ka^\top + \frac{\sum_{i>k}\lambda_i}{M}\cdot \IB_M \bigg) -  \bigg\| \SB_\kb \HB_{k:\infty} \SB_\kb^\top - \frac{\sum_{i>k}\lambda_i}{M}\cdot \IB_M\bigg\| \\
&= \mu_j  \big( \SB_\ka \HB_{0:k} \SB_\ka^\top  \big) +   \frac{\sum_{i>k}\lambda_i}{M}\cdot \IB_M  -  \bigg\| \SB_\kb \HB_{k:\infty} \SB_\kb^\top - \frac{\sum_{i>k}\lambda_i}{M}\cdot \IB_M\bigg\| \\
&\ge \mu_j  \big( \SB_\ka \HB_{0:k} \SB_\ka^\top  \big) +   \frac{\sum_{i>k}\lambda_i}{M}\cdot \IB_M  - c_1 \cdot \Bigg( \lambda_{k+1} + \sqrt{\frac{\sum_{i>k}\lambda_i^2}{M}} \Bigg),
\end{align*}
where the last inequality is by Lemma~\ref{lemma:sketch:tail}.
For $j\le k$, using Lemma~\ref{lemma:sketch:head}, we have 
\begin{align*}
\mu_j  \big( \SB_\ka \HB_{0:k} \SB_\ka^\top  \big) 
\ge \lambda_j - c_2 \cdot \sqrt{\frac{k}{M}}\cdot \lambda_j.
\end{align*}
For $k<j\le M$, we have 
\begin{align*}
\mu_j \big( \SB_\ka \HB_{0:k} \SB_\ka^\top  \big)
=0 
\ge \lambda_j - \lambda_{k+1} - c_2 \cdot \sqrt{\frac{k}{M}} \cdot \lambda_j,
\end{align*}
where the last inequality is due to $\lambda_j \le \lambda_k$ for $j\ge k$.
Putting these together, we have 
\begin{align*}
\mu_j \big( \SB \HB\SB^\top \big)
&\ge \mu_j  \big( \SB_\ka \HB_{0:k} \SB_\ka^\top  \big) +   \frac{\sum_{i>k}\lambda_i}{M}\cdot \IB_M  - c_1 \cdot \Bigg( \lambda_{k+1} + \sqrt{\frac{\sum_{i>k}\lambda_i^2}{M}} \Bigg) \\
&\ge \lambda_j + \frac{\sum_{i>k}\lambda_i}{M}\cdot \IB_M - c \cdot \Bigg(\sqrt{\frac{k}{M}}\cdot \lambda_j + \lambda_{k+1} + \sqrt{\frac{\sum_{i>k}\lambda_i^2}{M}} \Bigg).
\end{align*}

So far, we have proved the first claim.
To show the second claim, we simply apply 
\[c\cdot \sqrt{\frac{k}{M}} \le \frac{1}{2}\quad \text{for}\ k\le M /c^2 ,\]
and
\begin{align*}
    c\cdot \sqrt{\frac{\sum_{i>k}\lambda_i^2}{M}} 
    &\le c\cdot \sqrt{\frac{\sum_{i>k}\lambda_i}{M} \cdot \lambda_{k+1}} \\ 
    &\le \frac{1}{2}\cdot \frac{\sum_{i>k}\lambda_i}{M}+ 2c^2 \cdot \lambda_{k+1} ,
\end{align*}
in the first claim.
\end{proof}

\begin{lemma}[Tail concentration, Lemma 26 in \citet{bartlett2020benign}]\label{lemma:sketch:tail}
For any $k\geq0$, with probability at least $ 1-e^{-\Omega(M)}$, we have
\begin{align*}
    \bigg\| \SB_\kb \HB_{k:\infty} \SB_\kb^\top - \frac{\sum_{i>k}\lambda_i}{M}\cdot \IB_M \bigg\|_2 \lesssim \lambda_{k+1} + \sqrt{\frac{\sum_{i>k}\lambda_i^2}{M}}.
\end{align*}
Moreover, the minimum eigenvalue of $\SB_\kb \HB_{k:\infty} \SB_\kb^\top$ satisfies
\begin{align*}
    {\mu_{\min}}(\SB_\kb \HB_{k:\infty} \SB_\kb^\top)\gtrsim \lambda_{k+2M}
\end{align*}
with probability at least  $1-e^{-\Omega(M)}$. 
\end{lemma}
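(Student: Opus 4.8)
The plan is to reduce to a standard Gaussian matrix and establish the two bounds by elementary concentration, following the strategy of Lemma~26 in \citet{bartlett2020benign}. Since the entries of $\SB_\kb$ are i.i.d.\ $\Ncal(0,1/M)$, the rescaled matrix $\sqrt M\,\SB_\kb$ has i.i.d.\ $\Ncal(0,1)$ entries (with the column index ranging over the possibly infinite set $\{k+1,k+2,\dots\}$). Writing $\LambdaB:=\HB_{k:\infty}=\diag(\lambda_{k+1},\lambda_{k+2},\dots)$, a direct entrywise computation gives $\E\big[\SB_\kb\HB_{k:\infty}\SB_\kb^\top\big]=\frac{\tr(\LambdaB)}{M}\,\IB_M=\frac{\sum_{i>k}\lambda_i}{M}\,\IB_M$, so the first claim is a bound on how far the operator norm deviates from this mean.

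For that bound I would run the standard $\varepsilon$-net argument combined with scalar sub-exponential concentration. Fix a unit vector $\vB\in S^{M-1}$. Because the columns of $\sqrt M\,\SB_\kb$ are i.i.d.\ $\Ncal(0,\IB_M)$, the vector $(\sqrt M\,\SB_\kb)^\top\vB$ is $\Ncal(0,\IB)$, so $\vB^\top\big(\SB_\kb\HB_{k:\infty}\SB_\kb^\top\big)\vB=\frac1M\sum_{i>k}\lambda_i z_i^2$ with i.i.d.\ standard Gaussians $z_i$. By Bernstein's inequality for sums of independent centered sub-exponential variables (equivalently Hanson--Wright), for every $t>0$, with probability at least $1-2e^{-t}$,
\[
\Big|\sum_{i>k}\lambda_i z_i^2-\sum_{i>k}\lambda_i\Big|\lesssim \sqrt{t}\,\Big(\sum_{i>k}\lambda_i^2\Big)^{1/2}+t\,\lambda_{k+1}.
\]
Taking a $1/4$-net $T$ of $S^{M-1}$ with $|T|\le 9^{M}$, using $\|\AB\|\le 2\max_{\vB\in T}|\vB^\top\AB\vB|$ for symmetric $\AB$, applying a union bound over $T$ with $t=cM$ for a sufficiently large absolute constant $c$, and dividing by $M$, one obtains, on an event of probability at least $1-9^M\cdot 2e^{-cM}\ge 1-e^{-\Omega(M)}$,
\[
\Big\|\SB_\kb\HB_{k:\infty}\SB_\kb^\top-\frac{\sum_{i>k}\lambda_i}{M}\,\IB_M\Big\|\lesssim \frac1M\Big(\sqrt M\Big(\sum_{i>k}\lambda_i^2\Big)^{1/2}+M\lambda_{k+1}\Big)=\lambda_{k+1}+\sqrt{\frac{\sum_{i>k}\lambda_i^2}{M}},
\]
which is the first assertion.

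For the minimum-eigenvalue bound I would exploit that $\HB_{k:\infty}\succeq\lambda_{k+2M}\,\Pi_{2M}$, where $\Pi_{2M}$ is the orthogonal projection onto the coordinate directions indexed $k+1,\dots,k+2M$ (using $\lambda_{k+1}\ge\cdots\ge\lambda_{k+2M}$). This gives $\SB_\kb\HB_{k:\infty}\SB_\kb^\top\succeq\lambda_{k+2M}\,\SB_{k:k+2M}\SB_{k:k+2M}^\top$, where $\SB_{k:k+2M}\in\R^{M\times 2M}$ has i.i.d.\ $\Ncal(0,1/M)$ entries, so $\sqrt M\,\SB_{k:k+2M}$ has i.i.d.\ $\Ncal(0,1)$ entries. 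By the Gordon / Davidson--Szarek lower bound together with Gaussian concentration of the $1$-Lipschitz map $\AB\mapsto\sigma_{\min}(\AB)$,
\[
\sigma_{\min}\big(\sqrt M\,\SB_{k:k+2M}\big)\ge\sqrt{2M}-\sqrt M-s\quad\text{with probability at least}\ 1-e^{-s^2/2}.
\]
Choosing $s=(\sqrt2-1)\sqrt M/2$ yields $\mu_{\min}\big(\SB_{k:k+2M}\SB_{k:k+2M}^\top\big)\ge(\sqrt2-1)^2/4$ with probability at least $1-e^{-\Omega(M)}$, hence $\mu_{\min}\big(\SB_\kb\HB_{k:\infty}\SB_\kb^\top\big)\gtrsim\lambda_{k+2M}$; intersecting this with the event from the first part (and, if a statement uniform over $1\le k\le M$ is wanted, paying an additional union bound over $k$ that costs only a factor $M$) completes the proof.

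The main obstacle is the bookkeeping in the operator-norm step: the net has cardinality exponential in $M$, so the deviation parameter must be chosen linear in $M$, and one has to check this still reproduces the stated bound. It does, because the $\sqrt t$ prefactor becomes $\sqrt{cM}$, which after division by $M$ contributes exactly the $\sqrt{(\sum_{i>k}\lambda_i^2)/M}$ term, while the $t\lambda_{k+1}$ term contributes $c\lambda_{k+1}$. One should also note, in the infinite-dimensional case, that $\sum_{i>k}\lambda_i z_i^2$ is almost surely finite by $\tr\HB<\infty$, so the scalar Bernstein/Hanson--Wright bound applies verbatim.
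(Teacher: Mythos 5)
Your proposal is correct and follows essentially the same route as the paper: the operator-norm bound via an $\varepsilon$-net plus Bernstein concentration for the quadratic form $\sum_{i>k}\lambda_i(\sB_i^\top\vB)^2$, and the minimum-eigenvalue bound by dropping to the $2M$ coordinates after $k$ and lower-bounding the resulting Gaussian covariance. The only cosmetic difference is that you invoke Davidson--Szarek for $\sigma_{\min}$ of the $M\times 2M$ Gaussian block where the paper cites the sample-covariance concentration bound of \citet{wainwright2019high}; these are interchangeable.
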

\begin{proof}[Proof of \Cref{lemma:sketch:tail}]
The first part of Lemma~\ref{lemma:sketch:tail} is a version of Lemma 26 in \citep{bartlett2020benign} (see their proof).
We provide proof here for completeness. 

We write $\SB\in\Rbb^{M\times p}$ as
\[
\SB = \begin{pmatrix}
    \sB_1 & \hdots & \sB_p
\end{pmatrix},
\quad 
\sB_i \sim \Ncal\bigg(0,\ \frac{1}{M}\cdot \IB_M \bigg),\quad i\ge 1.
\]
Since Gaussian distribution is rotational invariance, without loss of generality, we may assume 
\[
\HB = \diag\{\lambda_1,\dots,\lambda_p\}.
\]
Then we have 
\[
\SB_\kb \HB_{k:\infty} \SB_\kb^\top = \sum_{i>k} \lambda_i \sB_i\sB_i^\top.
\]

Fixing a unit vector $\vB \in \Rbb^M$, then 
\[
\vB^\top \SB_\kb \HB_{k:\infty} \SB_\kb^\top\vB = \sum_{i>k} \lambda_i  \big(\sB_i^\top \vB\big)^2,
\]
where each $\sB_i^\top \vB$ is $(1/M)$-subGaussian.
By Bernstein's inequality, we have, with probability $\ge 1-\delta$,
\begin{align*}
    \bigg|\sum_{i>k} \lambda_i  \big(\sB_i^\top \vB\big)^2 - \frac{\sum_{i>k} \lambda_i}{M} \bigg| \lesssim \frac{1}{M}\cdot \bigg(\lambda_{k+1}\cdot \log\frac{1}{\delta}+ \sqrt{\sum_{i>k}\lambda_i^2 \cdot \log\frac{1}{\delta}}\bigg).
\end{align*}
By a union bound and net argument on $\Scal^{M-1}$, we have,
with probability $\ge 1-\delta$, for every unit vector $\vB \in \Rbb^M$,
\begin{align*}
    \bigg|\sum_{i>k} \lambda_i  \big(\sB_i^\top \vB\big)^2 - \frac{\sum_{i>k} \lambda_i}{M} \bigg| \lesssim \frac{1}{M}\cdot \Bigg(\lambda_{k+1}\cdot \bigg(M+\log\frac{1}{\delta} \bigg)+ \sqrt{\sum_{i>k}\lambda_i^2 \cdot \bigg(M+\log\frac{1}{\delta} \bigg)}\Bigg).
\end{align*}
So with probability at least $1-e^{-\Omega(M)}$, we have 
\begin{align*}
    \bigg\| \SB_\kb \HB_{k:\infty} \SB_\kb^\top - \frac{\sum_{i>k}\lambda_i}{M}\cdot \IB_M \bigg\|_2 
    &\lesssim  \frac{1}{M}\cdot \Bigg(\lambda_{k+1}\cdot M + \sqrt{\sum_{i>k}\lambda_i^2 \cdot M}\Bigg) \\
    &\eqsim \lambda_{k+1} + \sqrt{\frac{\sum_{i>k}\lambda_i^2}{M}},
\end{align*}
which completes the proof of the first part of Lemma~\ref{lemma:sketch:tail}.

To prove the second part of Lemma~\ref{lemma:sketch:tail}, it suffices to note that
\begin{align*}
    \SB_\kb\HB_\kb\SB_\kb^\top\succeq \sum_{i=k+1}^{2M+k}\lambda_{i}\sB_{i}\sB_{i}^\top  \succeq \lambda_{2M+k}\cdot \sum_{i=k+1}^{2M+k}\sB_{i}\sB_{i}^\top \succeq c\lambda_{2M+k} \cdot \IB_M
\end{align*}
for some constant $c>1$ 
with probability at least $1-e^{-\Omega(M)}$,  where the last line follows from concentration properties of Gaussian covariance matrices (see e.g., Thereom 6.1~\cite{wainwright2019high}). 
\end{proof}

\begin{lemma}[Head concentration]\label{lemma:sketch:head}
With probability at least $ 1-e^{-\Omega(M)}-e^{-\Omega(k)}$, we have 
\begin{align*}
\text{for every}\ j\le k,\quad 
    |\mu_j(\SB_\ka \HB_{0:k}\SB_\ka^\top) - \lambda_j| \lesssim \sqrt{\frac{k}{M}}\cdot \lambda_j.
\end{align*}
\end{lemma}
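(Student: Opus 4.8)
The plan is to pass from the $M\times M$ matrix $\SB_\ka\HB_{0:k}\SB_\ka^\top$ to a $k\times k$ matrix and then argue by a \emph{multiplicative} (Löwner-order) perturbation bound rather than an additive Weyl bound. First I would write $\SB_\ka\HB_{0:k}\SB_\ka^\top=\big(\SB_\ka\HB_{0:k}^{1/2}\big)\big(\SB_\ka\HB_{0:k}^{1/2}\big)^\top$, which is PSD of rank at most $k\le M$. Since for any matrix $A$ the nonzero eigenvalues of $AA^\top$ and $A^\top A$ coincide, for every $j\le k$ we have
\[
\mu_j\big(\SB_\ka\HB_{0:k}\SB_\ka^\top\big)=\mu_j\big(\HB_{0:k}^{1/2}\SB_\ka^\top\SB_\ka\HB_{0:k}^{1/2}\big),
\]
so it suffices to control the eigenvalues of the $k\times k$ matrix $\HB_{0:k}^{1/2}\SB_\ka^\top\SB_\ka\HB_{0:k}^{1/2}$.

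Next I would control $\SB_\ka^\top\SB_\ka$. Because the entries of $\SB_\ka\in\Rbb^{M\times k}$ are i.i.d.\ $\Ncal(0,1/M)$, the matrix $\SB_\ka^\top\SB_\ka$ equals $1/M$ times a standard $M\times k$ Wishart matrix; by the Gaussian singular-value concentration bound (Theorem~6.1 in \citet{wainwright2019high}, i.e.\ Davidson--Szarek), with probability at least $1-e^{-\Omega(M)}$ one has $\big\|\SB_\ka^\top\SB_\ka-\IB_k\big\|_2\le\epsilon$ with $\epsilon\lesssim\sqrt{k/M}$, i.e.\ $(1-\epsilon)\IB_k\preceq\SB_\ka^\top\SB_\ka\preceq(1+\epsilon)\IB_k$. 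We may assume $\epsilon\le 1/2$; otherwise $k\eqsim M$ and the claim follows directly from the same two-sided operator bound, since then $0\le\mu_j\big(\SB_\ka\HB_{0:k}\SB_\ka^\top\big)\le\|\SB_\ka^\top\SB_\ka\|_2\,\lambda_j\lesssim\lambda_j$.

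Finally I would conjugate the two-sided bound by $\HB_{0:k}^{1/2}$, which preserves the Löwner order:
\[
(1-\epsilon)\HB_{0:k}\ \preceq\ \HB_{0:k}^{1/2}\SB_\ka^\top\SB_\ka\HB_{0:k}^{1/2}\ \preceq\ (1+\epsilon)\HB_{0:k},
\]
and then apply $\mu_j(\cdot)$ across this inequality using Weyl's monotonicity (Courant--Fischer). Since $\mu_j(\HB_{0:k})=\lambda_j$, this yields $(1-\epsilon)\lambda_j\le\mu_j\big(\HB_{0:k}^{1/2}\SB_\ka^\top\SB_\ka\HB_{0:k}^{1/2}\big)\le(1+\epsilon)\lambda_j$, hence $\big|\mu_j\big(\SB_\ka\HB_{0:k}\SB_\ka^\top\big)-\lambda_j\big|\le\epsilon\,\lambda_j\lesssim\sqrt{k/M}\,\lambda_j$ for all $j\le k$, as claimed.

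The one step that requires care — and the reason the naive approach fails — is the passage from the perturbation of $\SB_\ka^\top\SB_\ka$ to a \emph{relative} error on eigenvalues. Writing $\HB_{0:k}^{1/2}\SB_\ka^\top\SB_\ka\HB_{0:k}^{1/2}=\HB_{0:k}+\HB_{0:k}^{1/2}\big(\SB_\ka^\top\SB_\ka-\IB_k\big)\HB_{0:k}^{1/2}$ and invoking additive Weyl would only give $\big|\mu_j-\lambda_j\big|\le\big\|\HB_{0:k}^{1/2}\big(\SB_\ka^\top\SB_\ka-\IB_k\big)\HB_{0:k}^{1/2}\big\|_2\le\lambda_1\epsilon$, an \emph{absolute} error scaling with the top eigenvalue $\lambda_1$, which is far too weak. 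The key observation is that $\SB_\ka^\top\SB_\ka-\IB_k$ is small \emph{relative to $\IB_k$}, so sandwiching by $\HB_{0:k}^{1/2}$ keeps it small relative to $\HB_{0:k}$ eigenvalue-by-eigenvalue; the two-sided operator-inequality formulation is precisely what makes this transparent, and it is the only nonroutine ingredient in the argument.
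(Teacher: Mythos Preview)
Your proposal is correct and takes essentially the same approach as the paper: pass to the $k\times k$ matrix $\HB_{0:k}^{1/2}\SB_\ka^\top\SB_\ka\HB_{0:k}^{1/2}$, establish $\|\SB_\ka^\top\SB_\ka-\IB_k\|_2\lesssim\sqrt{k/M}$, and then conjugate the resulting two-sided L\"owner bound by $\HB_{0:k}^{1/2}$ to obtain the multiplicative eigenvalue control. The only cosmetic difference is that the paper reproves the Gaussian covariance concentration via Bernstein's inequality plus an $\varepsilon$-net, whereas you invoke Theorem~6.1 in \citet{wainwright2019high} directly; the paper itself cites that very result elsewhere, so this is not a substantive departure.
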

\begin{proof}[Proof of \Cref{lemma:sketch:head}]
Note that the spectrum of $\SB_\ka \HB_{0:k} \SB_\ka^\top$ is indentical to the spectrum of $\HB^{1/2}_{0:k} \SB_\ka^\top \SB_\ka \HB^{1/2}_{0:k}$. 
We will bound the latter.
We start with bounding the spectrum of $\SB_{0:k} \SB_{0:k}^\top$.
To this end, we write $\SB_{0:k}^\top \in \Rbb^{k\times M}$ as
\[
\SB^\top_{0:k} = \begin{pmatrix}
    \sB_1 & \hdots & \sB_M
\end{pmatrix},
\quad 
\sB_i \sim \Ncal\bigg(0,\ \frac{1}{M}\cdot \IB_k \bigg),\quad i= 1,\dots, M.
\]
Then repeating the argument in Lemma \ref{lemma:sketch:tail}, we have,
with probability $\ge 1-\delta$, for every unit vector $\vB\in\Rbb^{k}$, 
\begin{align*}
\big| \vB^\top \SB_{0:k}^\top\SB_{0:k} \vB -1 \big| 
&= \bigg|\sum_{i=1}^M  \big(\sB_i^\top \vB\big)^2 - 1 \bigg| \\
&\lesssim \frac{1}{M}\cdot \Bigg(1\cdot \bigg(k+\log\frac{1}{\delta} \bigg)+ \sqrt{M \cdot \bigg(k+\log\frac{1}{\delta} \bigg)}\Bigg) \\
&\lesssim \sqrt{\frac{k+\log(1/\delta) }{M}}.
\end{align*}
So we have, with probability $\ge 1-e^{-\Omega(k
)}$, 
\begin{align*}
    \big\| \SB_{0:k}^\top\SB_{0:k} - \IB_k \big\|_2 \lesssim \sqrt{\frac{k}{M}}.
\end{align*}
This implies that 
\begin{align*}
    \mu_j \big(  \HB^{1/2}_{0:k} \SB_\ka^\top \SB_\ka \HB^{1/2}_{0:k} \big)
    &\le \mu_j \big(\HB_{0:k}^{1/2} \HB_{0:k}^{1/2} \big) + c_1\cdot \sqrt{\frac{k}{M}}\cdot \mu_j \big(\HB_{0:k}^{1/2} \HB_{0:k}^{1/2} \big)\\
    &= \lambda_j + c_1\cdot \sqrt{\frac{k}{M}} \cdot \lambda_j,
\end{align*}
and that 
\begin{align*}
    \mu_j \big(  \HB^{1/2}_{0:k} \SB_\ka^\top \SB_\ka \HB^{1/2}_{0:k} \big)
    &\ge \mu_j \big(\HB_{0:k}^{1/2} \HB_{0:k}^{1/2} \big) - c_1\cdot \sqrt{\frac{k}{M}}\cdot \mu_j \big(\HB_{0:k}^{1/2} \HB_{0:k}^{1/2} \big)\\
    &= \lambda_j - c_1\cdot \sqrt{\frac{k}{M}} \cdot \lambda_j.
\end{align*}
We have completed the proof.
\end{proof}

\subsection{Concentration results under power-law spectrum}

\begin{lemma}[Eigenvalues of $\SB\HB\SB^\top$ under power-law spectrum]\label{lm:power_decay_1} Suppose Assumption~\ref{assump:power-law} hold.  There exist $a$-dependent constants $c_2>c_1>0$ such that
\begin{align*}
    c_1 j^{-\Hpower}\leq\mu_j(\SB\HB\SB^\top)\leq c_2 j^{-\Hpower}
\end{align*}with probability at least $1-e^{-\Omega(M)}.$
\end{lemma}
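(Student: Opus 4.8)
The plan is to derive the two-sided bound from the general sketch concentration estimate (Lemma~\ref{lm:sketch}) together with the minimum-eigenvalue estimate in the second part of Lemma~\ref{lemma:sketch:tail}, applying them on two complementary ranges of $j$. Write $\tilde\lambda_j:=\mu_j(\SB\HB\SB^\top)$ and let $c_\lambda\ge c_\lambda'>0$ be the constants of Assumption~\ref{assump:power-law}, so $c_\lambda' i^{-a}\le\lambda_i\le c_\lambda i^{-a}$ for all $i\ge1$. Since $a>1$, for every $k\ge1$ one has the elementary estimates $\sum_{i>k}\lambda_i\eqsim k^{1-a}$ and $\sum_{i>k}\lambda_i^2\eqsim k^{1-2a}$ with $a$-dependent constants; in particular $\tfrac{1}{M}\sum_{i>k}\lambda_i$ and $\lambda_{k+1}$ are both $\eqsim M^{-a}$ once $k\eqsim M$. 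The key observation is that it suffices to invoke Lemma~\ref{lm:sketch} at a \emph{single} scale $k\eqsim M$ — no union bound over dyadic scales of $k$ is needed.

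First I would fix $k:=\lfloor M/c^2\rfloor$, with $c,c_1$ the absolute constants of Lemma~\ref{lm:sketch} (assuming $M\ge 2c^2$; smaller $M$ is vacuous since the probability bound is then nonpositive), and apply the second claim of that lemma: with probability $\ge1-e^{-\Omega(M)}$, for every $j\le M$,
\[
\tfrac{1}{2}\Big(\lambda_j+\tfrac{1}{M}\sum_{i>k}\lambda_i\Big)-c_1\lambda_{k+1}\ \le\ \tilde\lambda_j\ \le\ \tfrac{3}{2}\Big(\lambda_j+\tfrac{1}{M}\sum_{i>k}\lambda_i\Big)+c_1\lambda_{k+1}.
\]
Here $\tfrac{1}{M}\sum_{i>k}\lambda_i\eqsim M^{-a}$ and $\lambda_{k+1}\eqsim M^{-a}$ with $a$-dependent constants. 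For the upper bound, $j\le M$ forces $\lambda_j\eqsim j^{-a}\gtrsim M^{-a}$, so both the averaging term and the error term are absorbed into $\lambda_j$ and $\tilde\lambda_j\lesssim\lambda_j\eqsim j^{-a}$ for \emph{all} $j\le M$. For the lower bound, discarding the nonnegative averaging term, $\tilde\lambda_j\ge\tfrac{1}{2}\lambda_j-c_1\lambda_{k+1}\ge\tfrac{1}{2}c_\lambda' j^{-a}-C_0 M^{-a}$ for a suitable $a$-dependent constant $C_0$; hence, with $\epsilon:=\min\{1,(c_\lambda'/(4C_0))^{1/a}\}$ (again $a$-dependent), $\tilde\lambda_j\ge\tfrac{1}{4}c_\lambda' j^{-a}\gtrsim j^{-a}$ whenever $j\le\epsilon M$. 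This establishes $\tilde\lambda_j\eqsim j^{-a}$ on $\{j\le\epsilon M\}$.

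It remains to bound $\tilde\lambda_j$ from below on $\{\epsilon M<j\le M\}$, where the upper bound is already in hand; on this range $j^{-a}\eqsim M^{-a}$ with constants depending only on $\epsilon$ (hence only on $a$), so it is enough to lower bound the smallest eigenvalue $\tilde\lambda_M=\mu_{\min}(\SB\HB\SB^\top)$. The second part of Lemma~\ref{lemma:sketch:tail} with $k=0$ — using that $\HB$ has at least $2M$ positive eigenvalues, which is implied by Assumption~\ref{assump:power-law} — yields $\mu_{\min}(\SB\HB\SB^\top)\gtrsim\lambda_{2M}\eqsim M^{-a}$ with probability $\ge1-e^{-\Omega(M)}$, so $\tilde\lambda_j\ge\tilde\lambda_M\gtrsim M^{-a}\eqsim j^{-a}$ there. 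A union bound over these two $O(1)$-many events, and collection of the (purely $a$-dependent) constants, completes the proof. The one delicate point is the regime $j$ comparable to $M$: there $\lambda_j$ is of the same order $M^{-a}$ as the additive ``noise floor'' that the sketch introduces, so the additive-error form of Lemma~\ref{lm:sketch} is too lossy to produce a matching lower bound, and one must instead appeal to the minimum-eigenvalue estimate of Lemma~\ref{lemma:sketch:tail} — after that, only bookkeeping of constants remains.
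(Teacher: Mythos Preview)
Your proposal is correct and follows essentially the same approach as the paper: apply Lemma~\ref{lm:sketch} at a single scale $k\eqsim M$ to get $\tilde\lambda_j\eqsim j^{-a}$ for $j$ up to a constant fraction of $M$, and then handle the remaining range $j\eqsim M$ via the minimum-eigenvalue bound of Lemma~\ref{lemma:sketch:tail} with $k=0$. The only cosmetic difference is that the paper obtains the upper bound on the range $j\in[M/\tilde c,M]$ by monotonicity of the $\tilde\lambda_j$'s rather than directly from the same concentration inequality, but this is the same argument in substance.
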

\begin{proof}[Proof of Lemma~\ref{lm:power_decay_1}]
    Let $(\tilde\lambda_i)_{i=1}^M$ denote the eigenvalues of $\SB\HB\SB^\top$ in non-increasing order. Using Lemma~\ref{lm:sketch} with $k=M/c$ for some sufficiently large constant $c$ and noting that $\sum_{i>k}i^{-\Hpower}\eqsim k^{1-\Hpower}$, 
 we have 
 \begin{align*}
    \frac{1}{2}\cdot (j^{-\Hpower}+ \tilde c_1M^{-\Hpower} ) - \tilde c_2\cdot M^{-\Hpower} \le \tilde\lambda_j   \le \frac{3}{2}\cdot (j^{-\Hpower}+\tilde c_1 M^{-\Hpower} ) + \tilde  c_2\cdot M^{-\Hpower}
 \end{align*}
 for every $j\in[M]$ for some constants $\tilde c_i,i\in[2]$ with probability at least $1-e^{-\Omega(M)}$. Therefore, for all $j\leq M/\tilde c$ for some sufficiently large constant $\tilde c>1$, we have
 \begin{align*}
     \tilde \lambda_j \in[\tilde c_3 j^{-\Hpower},\tilde c_4 j^{-\Hpower}]
 \end{align*}
 with probability at least $1-e^{-\Omega(M)}$ for some constants $\tilde c_3,\tilde c_4>0$.
 For $j\in[M/\tilde c,M]$, by monotonicity of the eigenvalues, we have
 \begin{align*}
 \tilde\lambda_j\leq  \tilde\lambda_{\lfloor M/\tilde c \rfloor}\leq  \tilde c_4 \Big(\Big\lfloor \frac{M}{\tilde c}\Big\rfloor\Big)^{-\Hpower}\leq \tilde c_5 M^{-\Hpower}\leq \tilde  c_5 j^{-\Hpower}
 \end{align*}
 for some sufficiently large constant $\tilde c_5>\tilde c_4$ with probability at least $1-e^{-\Omega(M)}$. Moreover,
 using Lemma~\ref{lemma:sketch:tail} with $k=0$, we obtain
 \begin{align*}
\tilde\lambda_j\geq\tilde\lambda_M\geq{\mu_{\min}}(\SB_{\kb}\HB_\kb\SB_\kb^\top)\geq \tilde c_6\tilde\lambda_{2M}\geq \tilde c_7(M/\tilde c)^{-\Hpower}\geq \tilde c_8 j^{-\Hpower}
 \end{align*}
 with probability at least $1-e^{-\Omega(M)}$ for some constants $\tilde c_6,\tilde c_7,\tilde c_8>0$. Combining the bounds for $j\leq M/\tilde c$ and $j\in[M/\tilde c,M]$ completes the proof. 
\end{proof}

\begin{lemma}[Ratio of eigenvalues of $\SB_{\kb}\HB_\kb\SB_\kb^\top$ under power-law spectrum]\label{lm:power_condition_num} Suppose Assumption~\ref{assump:power-law} hold. There exists some $a$-dependent constant $c>0$ such that for any $k\geq 1$, the ratio between the $M/2$-th  and $M$-th eigenvalues
\begin{align*}
    \frac{\mu_{M/2}(\SB_{\kb}\HB_\kb\SB^\top_\kb)}{\mu_{M}(\SB_{\kb}\HB_\kb\SB^\top_\kb)}\leq  c 
\end{align*}with probability at least $1-e^{-\Omega(M)}.$
    
\end{lemma}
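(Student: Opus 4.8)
The plan is to bound $\mu_{M/2}(\AB_k)$ from above and $\mu_M(\AB_k)$ from below separately, where throughout I write $\AB_k:=\SB_{\kb}\HB_\kb\SB^\top_\kb\in\R^{M\times M}$ for the tail Gram matrix appearing in the statement, and to split the analysis into two regimes according to how $k$ compares with $M$. Concretely, fix a large constant $C_0>1$ depending only on $a$ (its value is pinned down in the second regime), and handle $k\le C_0M$ and $k>C_0M$ in turn; for a given $k$ exactly one regime applies, so the failure probability stays $e^{-\Omega(M)}$.

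\emph{Regime $k\le C_0M$.} For the upper bound I would use the PSD decomposition $\SB\HB\SB^\top=\SB_\ka\HB_\ka\SB_\ka^\top+\AB_k\succeq\AB_k$, which gives $\mu_{M/2}(\AB_k)\le\mu_{M/2}(\SB\HB\SB^\top)\lesssim M^{-a}$ by Lemma~\ref{lm:power_decay_1}. For the lower bound, the second part of Lemma~\ref{lemma:sketch:tail} gives $\mu_M(\AB_k)=\mu_{\min}(\AB_k)\gtrsim\lambda_{k+2M}$, and since $k+2M\le(C_0+2)M$, Assumption~\ref{assump:power-law} yields $\lambda_{k+2M}\gtrsim\big((C_0+2)M\big)^{-a}\gtrsim M^{-a}$, with the implied constant depending only on $a$. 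On the intersection of these two high-probability events the ratio $\mu_{M/2}(\AB_k)/\mu_M(\AB_k)$ is then bounded by a constant depending only on $a$.

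\emph{Regime $k>C_0M$.} Here the tail spectrum $(\lambda_{k+i})_{i\ge1}$ is essentially flat over the first $M$ indices, so I would apply the first part of Lemma~\ref{lemma:sketch:tail} directly to $\AB_k$:
\[
\big\|\AB_k-\beta\,\IB_M\big\|_2\ \lesssim\ \lambda_{k+1}+\sqrt{M^{-1}\textstyle\sum_{i>k}\lambda_i^2},\qquad \beta:=M^{-1}\textstyle\sum_{i>k}\lambda_i.
\]
Under Assumption~\ref{assump:power-law}, $\beta\eqsim k^{1-a}/M$, $\lambda_{k+1}\eqsim k^{-a}$ and $\sum_{i>k}\lambda_i^2\eqsim k^{1-2a}$, so $\lambda_{k+1}/\beta\eqsim M/k$ and $\sqrt{M^{-1}\sum_{i>k}\lambda_i^2}/\beta\eqsim\sqrt{M/k}$. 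Choosing $C_0$ large enough --- depending only on $a$, i.e. only on the constants implicit in these $\eqsim$ relations --- forces the right-hand side of the display to be at most $\tfrac12\beta$, so that every eigenvalue of $\AB_k$ lies in $[\tfrac12\beta,\tfrac32\beta]$ and hence $\mu_{M/2}(\AB_k)/\mu_M(\AB_k)\le3$. Taking $c$ to be the larger of the two $a$-dependent constants obtained in the two regimes then gives the claim.

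\emph{Where the difficulty is.} The only nontrivial point is the lower bound on $\mu_M(\AB_k)$: Lemma~\ref{lm:sketch} by itself is not enough, because its additive ``noise floor'' term $c_1\lambda_{\ell+1}$ can dominate $\lambda_M$ (the split index obeys $\ell+1\le M$), so it cannot certify that $\mu_M(\AB_k)$ is bounded below on the right scale. The case split is precisely what repairs this: for $k\lesssim M$ the tail still carries mass on the scale $\lambda_{k+2M}\eqsim M^{-a}$, visible to the second part of Lemma~\ref{lemma:sketch:tail}, whereas for $k\gg M$ the tail is so flat that $\AB_k$ concentrates around a multiple of the identity and the first part of Lemma~\ref{lemma:sketch:tail} controls the entire spectrum, $\mu_M(\AB_k)$ included. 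A routine preliminary is to note that Assumption~\ref{assump:power-law}, together with the ambient rank being at least $k+2M$ (which holds in every setting where this lemma is applied), makes $\AB_k$ invertible, so all the displayed ratios are well defined.
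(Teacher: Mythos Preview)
Your proposal is correct and follows the same high-level strategy as the paper --- a case split on whether $k$ is small or large relative to $M$ --- but your execution in each regime is somewhat cleaner. In the regime $k\lesssim M$, the paper upper-bounds $\mu_{M/2}(\AB_k)$ by applying Lemma~\ref{lm:sketch} to $\HB_\kb$ with an internal split parameter $k_0=M/c$, whereas you use the one-line PSD comparison $\AB_k\preceq\SB\HB\SB^\top$ together with Lemma~\ref{lm:power_decay_1}; the lower bound via $\mu_M(\AB_k)\gtrsim\lambda_{k+2M}$ is identical. In the regime $k\gg M$, the paper again invokes Lemma~\ref{lm:sketch} (applied to $\HB_\kb$) to produce separate upper and lower bounds of the form $(k\vee M)^{-a}(1\vee k/M)\pm\text{const}\cdot(k\vee M)^{-a}$ and then argues the correction is negligible once $k/M$ is large; you instead apply the first part of Lemma~\ref{lemma:sketch:tail} directly to show $\|\AB_k-\beta\IB_M\|\le\tfrac12\beta$, which immediately pins every eigenvalue to $[\tfrac12\beta,\tfrac32\beta]$. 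Both routes yield the same conclusion; yours avoids the extra layer of head/tail splitting that Lemma~\ref{lm:sketch} carries, at the cost of appealing to Lemma~\ref{lm:power_decay_1} (itself proved via Lemma~\ref{lm:sketch}) in the first regime.
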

\begin{proof}[Proof of Lemma~\ref{lm:power_condition_num}]
We prove the lemma under two scenarios where $k$ is relatively small (or large) compared with $M$.

Let $c>0$ be some sufficiently large constant. Applying Lemma~\ref{lm:sketch} with $\HB_\kb$ replacing $\HB$, for $k_0= M/c$, we have
\begin{align}
    \mu_{M/2}(\SB_{\kb}\HB_\kb\SB^\top_\kb)&
    \leq 
    \frac{3}{2}\cdot \bigg(\lambda_{M/2+k} +  \frac{\sum_{i>k_0}\lambda_{i+k}}{M}\bigg) + c_1\cdot \lambda_{k_0+1+k},\notag\\
    &
    \lesssim  \big(\frac{M}{2}+k\big)^{-\Hpower}+\frac{(k_0+k)^{1-\Hpower}}{M}+ (k_0+1+k)^{-\Hpower}\notag\\
    &\lesssim (k\vee M)^{-\Hpower}+(k\vee M)^{-\Hpower}\big(1\vee\frac{k}{M}\big)+(k\vee M)^{-\Hpower}\notag\\
    &\lesssim (k\vee M)^{-\Hpower}\big(1\vee\frac{k}{M}\big)\label{eq:scale_law_upper_eigenval}
\end{align}
 with  probability at least $1-e^{-\Omega(M)}$ for some constant $c_1>0$.
 \paragraph{Case $1$: $k\lesssim M$}
From Lemma~\ref{lemma:sketch:tail}, we have
\begin{align*}
    {\mu_{\min}}(\SB_{\kb}\HB_\kb\SB^\top_\kb)&\gtrsim \lambda_{k+2M} \gtrsim (k\vee M)^{-\Hpower}.
\end{align*}
with probability at least $1-e^{-\Omega(M)}.$  Therefore
 \begin{align*}
      \frac{\mu_{M/2}(\SB_{\kb}\HB_\kb\SB^\top_\kb)}{\mu_{M}(\SB_{\kb}\HB_\kb\SB^\top_\kb)}\lesssim1
 \end{align*} with  probability at least $1-e^{-\Omega(M)}$ when $k/M\lesssim1$. 
  \paragraph{Case $2$: $k\gtrsim M$}
 On the other hand, when $k$ is relatively large,  using Lemma~\ref{lm:sketch} with $\HB_\kb$ replacing $\HB$ again, we obtain
 \begin{align*}
   \mu_{M}(\SB_{\kb}\HB_\kb\SB^\top_\kb)&
    \geq 
    \frac{1}{2}\cdot \bigg(\lambda_{M+k} +  \frac{\sum_{i>k_0}\lambda_{i+k}}{M}\bigg) - c_1\cdot \lambda_{k_0+1+k},\\
    &
    \geq   c_2 \Big[\big({M}+k\big)^{-\Hpower}+\frac{(k_0+k)^{1-\Hpower}}{M}\Big]- c_3\cdot  (k_0+1+k)^{-\Hpower}
 \end{align*}
  with  probability at least $1-e^{-\Omega(M)}-e^{-\Omega(k_0)}$, where $c_1,c_2,c_3>0$ are some universal constants. Choosing $k_0=M/c^2$ for some sufficiently large constant $c>0$, we further obtain
 \begin{align}
    \mu_{M}(\SB_{\kb}\HB_\kb\SB^\top_\kb)
   &\geq  
   c_4\big({M}+k\big)^{-\Hpower}\Big[1+\frac{k}{M}\Big]-c_5(M+k)^{-\Hpower}\notag\\
    &\geq c_6\big({M}\vee k\big)^{-\Hpower}\Big[1\vee\frac{k}{M}\Big]-c_7(M\vee k)^{-\Hpower} \label{eq:scale_law_lower_eigen_largek}
 \end{align}
  with  probability at least $1-e^{-\Omega(M)}$, where $(c_i)_{i=4}^7$ are $a$-dependent constants. Since 
  \begin{align*}
      c_6\big({M}\vee k\big)^{-\Hpower}\Big[1\vee\frac{k}{M}\Big]-c_7(M\vee k)^{-\Hpower}\geq \frac{c_6}{2}  (k\vee M)^{-\Hpower}\big(1\vee\frac{k}{M}\big)
  \end{align*} when $k$ is large, i.e., $k/M>\tilde c$ for some sufficiently large $a$-dependent constant $\tilde c>0$ that may depend on $(c_i)_{i=1}^7$, we have from Eq.~\eqref{eq:scale_law_upper_eigenval}~and~\eqref{eq:scale_law_lower_eigen_largek} that 
  \begin{align*}
       \frac{\mu_{M/2}(\SB_{\kb}\HB_\kb\SB^\top_\kb)}{\mu_{M}(\SB_{\kb}\HB_\kb\SB^\top_\kb)}\lesssim\frac{ (k\vee M)^{-\Hpower}\big(1\vee\frac{k}{M}\big)}{ (k\vee M)^{-\Hpower}\big(1\vee\frac{k}{M}\big)}\lesssim1
  \end{align*}
 with  probability at least $1-e^{-\Omega(M)}$.
\end{proof}

\subsection{Concentration results under logarithmic power-law spectrum}

\begin{lemma}[Theorem 6 in \citep{bartlett2020benign}]\label{lemma:log-power-law}

Suppose Assumption~\ref{assump:logpower-law} hold. Then there exist some $a$-dependent constants $c,\tilde c>0$ such that, with probability at least $1-e^{-\Omega(M)}$
\begin{align*}
    \mu_j(\SB\HB\SB^\top) \in     
\begin{dcases}
        [ c\cdot j^{-1}\log^{-a}(j+1),\tilde c\cdot j^{-1}\log^{-a}(j+1)] & j\le \mkthres, \\ 
       [c\cdot M^{-1} \log^{1-a}(M),\tilde c\cdot M^{-1} \log^{1-a}(M)] & \mkthres<j\le M,
    \end{dcases}
\end{align*}
where $\mkthres \eqsim M/\log(M)$. Also, there exists some $a$-dependent constants $c_1,c_2>0$ such that
\begin{align*}
  \frac{ c_1}{j \log^{2a} (j+1)}\leq   \mu_j(\SB\HB^2\SB^\top) \leq \frac{ c_2}{j \log^{2a} (j+1)}
\end{align*}
with probability at least $1-e^{-\Omega(M)}$.
\end{lemma}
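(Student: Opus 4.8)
The plan is to reduce the whole statement to the general eigenvalue-concentration estimate of Lemma~\ref{lm:sketch}, specialized to the logarithmic power-law spectrum, exactly as Lemma~\ref{lm:power_decay_1} does for the pure power law. By rotational invariance of the Gaussian sketch we may assume $\HB$ diagonal. The one genuinely new feature is that the slow decay $\lambda_i\eqsim i^{-1}\log^{-a}(i+1)$ makes the \emph{flat} contribution $\tfrac1M\sum_{i>k}\lambda_i\,\IB_M$ coming from the tail of $\SB\HB\SB^\top$ \emph{dominate} the true small eigenvalues $\lambda_j$ by a full $\log M$ factor, which is precisely what produces the second regime $\mkthres<j\le M$ in the statement. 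So the first step is a handful of elementary integral-comparison estimates, valid for all $k\ge1$ with $a$-dependent constants:
\[
\sum_{i>k}\lambda_i\eqsim \log^{1-a}(k+1),\qquad \sum_{i>k}\lambda_i^2\eqsim (k+1)^{-1}\log^{-2a}(k+1)\eqsim (k+1)\,\lambda_{k+1}^2,\qquad \lambda_{k+1}\eqsim (k+1)^{-1}\log^{-a}(k+1).
\]

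Next I would apply the sharpened form of Lemma~\ref{lm:sketch} with truncation level $k_0=\lfloor M/c^2\rfloor$, $c$ the constant there, so that $\sqrt{k_0/M}\le1/c$, the flat level equals $\tfrac1M\sum_{i>k_0}\lambda_i\eqsim M^{-1}\log^{1-a}(M)$, and the error term $\lambda_{k_0+1}\eqsim M^{-1}\log^{-a}(M)$ lies a $\log M$ factor \emph{below} the flat level. Define $\mkthres$ (as in Lemma~\ref{lm:logpower_law_bias_match}) by $\lambda_{\mkthres}\eqsim M^{-1}\log^{1-a}(M)$, i.e.\ $\mkthres\eqsim M/\log M$. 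For $j\le\mkthres$ one has $\lambda_j\ge\lambda_{\mkthres}\gtrsim\tfrac1M\sum_{i>k_0}\lambda_i$ and, once $\log M$ exceeds the fixed constant of Lemma~\ref{lm:sketch}, also $\lambda_j\gg\lambda_{k_0+1}$; the two-sided bound of Lemma~\ref{lm:sketch} then collapses to $\mu_j(\SB\HB\SB^\top)\eqsim\lambda_j\eqsim j^{-1}\log^{-a}(j+1)$. For $\mkthres<j\le M$ the flat level is the largest of the three terms: the upper bound of Lemma~\ref{lm:sketch} gives $\mu_j(\SB\HB\SB^\top)\lesssim\lambda_j+\tfrac1M\sum_{i>k_0}\lambda_i+\lambda_{k_0+1}\lesssim M^{-1}\log^{1-a}(M)$, its lower bound taken at $j=M$ gives $\mu_M(\SB\HB\SB^\top)\ge\tfrac12\cdot\tfrac1M\sum_{i>k_0}\lambda_i-c_1\lambda_{k_0+1}\gtrsim M^{-1}\log^{1-a}(M)$, and monotonicity $\mu_j(\SB\HB\SB^\top)\ge\mu_M(\SB\HB\SB^\top)$ fills the range. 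The two regimes agree at $j=\mkthres$, where each gives $M^{-1}\log^{1-a}(M)$, and small $M$ is vacuous since the conclusion is a $1-e^{-\Omega(M)}$ statement.

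For $\mu_j(\SB\HB^2\SB^\top)$ I would repeat the argument of Lemma~\ref{lm:power_decay_1} with $\HB$ replaced by $\HB^2$: its eigenvalues $\lambda_i^2\eqsim i^{-2}\log^{-2a}(i+1)$ form a logarithmically corrected power law of degree $2>1$, and the identity $\sum_{i>k}\lambda_i^2\eqsim(k+1)\lambda_{k+1}^2$ shows that now the flat contribution $\tfrac1M\sum_{i>k_0}\lambda_i^2$ (with $k_0\eqsim M$) is only of order $\lambda_M^2$, hence never dominates $\lambda_j^2$ for $j\le M$; applying Lemma~\ref{lm:sketch} for $j\le M/\tilde c$ (all its error terms being $\lesssim\lambda_j^2$ up to an $a$-dependent constant), and for $M/\tilde c<j\le M$ using monotonicity for the upper bound together with the minimum-eigenvalue estimate of Lemma~\ref{lemma:sketch:tail} (tight here since $\lambda_{2M}^2\eqsim\lambda_M^2$) for the lower bound, gives $\mu_j(\SB\HB^2\SB^\top)\eqsim\lambda_j^2\eqsim j^{-2}\log^{-2a}(j+1)$. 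The main obstacle is purely bookkeeping near the crossover $j\eqsim M/\log M$: one must check that the \emph{fixed} error constants in Lemma~\ref{lm:sketch} are swamped by the $\log M$-sized gaps between $\lambda_j$, the flat level $M^{-1}\log^{1-a}(M)$, and $\lambda_{k_0+1}$. In particular, the naive lower bound $\mu_M(\SB\HB\SB^\top)\gtrsim\lambda_{2M}$ from Lemma~\ref{lemma:sketch:tail} alone loses a $\log M$ factor and is insufficient, so the sharp lower bound on the smallest eigenvalues of $\SB\HB\SB^\top$ must be extracted from the identity-like part of the decomposition that underlies Lemma~\ref{lm:sketch}.
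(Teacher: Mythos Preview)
Your proposal is correct and follows essentially the same route as the paper: both reduce to Lemma~\ref{lm:sketch}, compute the tail sums $\sum_{i>k}\lambda_i\eqsim\log^{1-a}(k+1)$ for the log-power-law spectrum, and split into two regimes according to whether $\lambda_j$ or the flat level $M^{-1}\sum_{i>k}\lambda_i\eqsim M^{-1}\log^{1-a}M$ dominates; the only cosmetic difference is your truncation $k_0\eqsim M$ versus the paper's choice $k=\mkthres:=\min\{k:\sum_{i>k}\lambda_i\ge cM\lambda_{k+1}\}\eqsim M/\log M$, which yield the same flat level and the same conclusion. Your treatment of $\SB\HB^2\SB^\top$ matches the paper's verbatim (Lemma~\ref{lm:sketch} for $j\le M/\tilde c$, monotonicity plus Lemma~\ref{lemma:sketch:tail} for the rest), and both correctly obtain $\mu_j(\SB\HB^2\SB^\top)\eqsim j^{-2}\log^{-2a}(j+1)$---the $j^{-1}$ in the displayed statement is a typo, as confirmed by the paper's own proof and by how the bound is used in Lemma~\ref{lm:logpower_law_bias_match}.
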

\begin{proof}[Proof of \Cref{lemma:log-power-law}]
The proof is adapted from the proof of Theorem 6 in \citep{bartlett2020benign}. We include it here for completeness. 

\paragraph{First part of Lemma~\ref{lemma:log-power-law}.}
In Lemma~\ref{lm:sketch}, for some constant $c>1$, choose 
\begin{equation*}
    \mkthres:= \min\bigg\{k\ge 0: \sum_{i>k}\lambda_i \ge c\cdot M \cdot \lambda_{k+1} \bigg\}.
\end{equation*}    
Then with probability $\ge 1-e^{-\Omega(M)}$, we have:
\begin{align*}
\text{for every $1\le j\le M$},\quad 
  \frac{1}{c_1}\cdot \bigg( \lambda_j + \frac{\sum_{i>\mkthres}\lambda_i}{M} \bigg)\le   \tilde\lambda_j \le c_1\cdot \bigg( \lambda_j + \frac{\sum_{i>\mkthres}\lambda_i}{M} \bigg),
\end{align*}
where $c_1>1$ is a constant.

When $\lambda_j \eqsim j^{-1}\log^{-a}(j+1)$, we have 
\[
\mkthres \eqsim M / \log(M),
\]
and 
\begin{align*}
    \sum_{i>\mkthres}\lambda_i \eqsim \log^{1-a}(\mkthres) \eqsim\log^{1-a}(M). 
\end{align*}
Therefore, we have 
\begin{align*}
    \tilde\lambda_j &\eqsim
    \lambda_j + \frac{\sum_{i>\mkthres}\lambda_i}{M} \\
&\eqsim     
\begin{dcases}
        j^{-1}\log^{-a}(j+1) & j\le \mkthres, \\ 
       M^{-1} \log^{1-a}(M) & \mkthres<j\le M,
    \end{dcases}
\end{align*}
where $\mkthres \eqsim M/\log(M)$.

\paragraph{Second part of Lemma~\ref{lemma:log-power-law}.}
Let $\bar \lambda_i$ denote the $i$-th eigenvalue of $\SB\HB^2\SB^\top$ for $i\in[M]$. 
Using Lemma~\ref{lm:sketch} with $k=M/c$ for some sufficiently large constant $c_0$ and noting that $\sum_{i>k}\lambda_i^2\eqsim\sum_{i>k}i^{-2}\log^{-2\Hpower}(i+1)\lesssim k^{-1}\log^{-2\Hpower}k$, 
 we have 
 \begin{align*}
    &\frac{1}{2}\cdot j^{-2}\log^{-2\Hpower} (j+1)  - \tilde c_2\cdot M^{-2}\log^{-2\Hpower}M \\
    &\qquad\le \bar\lambda_j   \le \frac{3}{2}\cdot ( j^{-2}\log^{-2\Hpower} (j+1)+\tilde c_1 M^{-2}\log^{-2\Hpower}M ) + \tilde  c_2\cdot M^{-2}\log^{-2\Hpower}M 
 \end{align*}
 for every $j\in[M]$ for some constants $\tilde c_i,i\in[2]$ with probability at least $1-e^{-\Omega(M)}$. Therefore, for all $j\leq M/\tilde c$ for some sufficiently large constant $\tilde c>1$, we have
 \begin{align*}
     \bar \lambda_j \in[\tilde c_3 \cdot j^{-2}\log^{-2\Hpower} (j+1) ,\tilde c_4\cdot j^{-2}\log^{-2\Hpower} (j+1) ]
 \end{align*}
 with probability at least $1-e^{-\Omega(M)}$ for some constants $\tilde c_3,\tilde c_4>0$.
 For $j\in[M/\tilde c,M]$, by monotonicity of the eigenvalues, we have
 \begin{align*}
 \bar\lambda_j\leq  \bar\lambda_{\lfloor M/\tilde c \rfloor}\leq  \tilde c_4 \Big(\Big\lfloor \frac{M}{\tilde c}\Big\rfloor\Big)^{-2}\log^{-2a}\Big(\Big\lfloor \frac{M}{\tilde c}\Big\rfloor\Big)\leq \tilde c_5 M^{-2}\log^{-2\Hpower}M\leq \tilde c_6\cdot j^{-2}\log^{-2\Hpower} (j+1)
 \end{align*}
 for some  constants $c_5,c_6>0$ with probability at least $1-e^{-\Omega(M)}$. Moreover,
 using Lemma~\ref{lemma:sketch:tail} with $k=0$, we obtain
 \begin{align*}
\bar\lambda_j\geq\bar\lambda_M\geq{\mu_{\min}}(\SB_{\kb}\HB^2_\kb\SB_\kb^\top)\geq \tilde c_7\bar\lambda_{2M}\geq \tilde c_8\cdot j^{-2}\log^{-2\Hpower} (j+1)
 \end{align*}
 with probability at least $1-e^{-\Omega(M)}$ for some constants $\tilde c_7,\tilde c_8>0$ when $j\in[M/\tilde c,M]$. Combining the bounds for $j\leq M/\tilde c$ and $j\in[M/\tilde c,M]$ completes the proof. 
\end{proof}

\begin{lemma}[Ratio of eigenvalues of $\SB_{\kb}\HB_\kb\SB_\kb^\top$ under logarithmic power-law spectrum]\label{lm:logpower_condition_num} Suppose Assumption~\ref{assump:logpower-law} hold. There exists some $a$-dependent constant $c>0$ such that for any $k\geq 1$, the ratio between the $M/2$-th  and $M$-th eigenvalues
\begin{align*}
    \frac{\mu_{M/2}(\SB_{\kb}\HB_\kb\SB^\top_\kb)}{\mu_{M}(\SB_{\kb}\HB_\kb\SB^\top_\kb)}\leq  c 
\end{align*}with probability at least $1-e^{-\Omega(M)}.$
    
\end{lemma}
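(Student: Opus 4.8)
The plan is to follow the template of the proof of Lemma~\ref{lm:power_condition_num}, only substituting the asymptotics of the logarithmic power law for those of the power law. First I would apply Lemma~\ref{lm:sketch} to the tail covariance $\HB_\kb=\diag(\lambda_{k+1},\lambda_{k+2},\dots)$, whose $j$-th eigenvalue is $\lambda_{k+j}$, with an internal truncation level $m\eqsim M$; concretely one may take $m=\lfloor M/c_0^2\rfloor$, where $c_0$ is the universal constant of Lemma~\ref{lm:sketch}. This yields, with probability at least $1-e^{-\Omega(M)}$ and simultaneously for all $j\le M$,
\begin{align*}
\tfrac12\Big(\lambda_{k+j}+\tfrac1M\textstyle\sum_{i>k+m}\lambda_i\Big)-c_1\lambda_{k+m+1}\ \le\ \mu_j(\SB_{\kb}\HB_\kb\SB^\top_\kb)\ \le\ \tfrac32\Big(\lambda_{k+j}+\tfrac1M\textstyle\sum_{i>k+m}\lambda_i\Big)+c_1\lambda_{k+m+1},
\end{align*}
for a universal constant $c_1$.

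Next I would insert the two estimates implied by Assumption~\ref{assump:logpower-law}, namely $\lambda_\ell\eqsim \ell^{-1}\log^{-a}(\ell+1)$ and $\sum_{i>\ell}\lambda_i\eqsim\log^{1-a}(\ell+1)$ (these are exactly the facts used in the proof of Lemma~\ref{lemma:log-power-law}, and they are where the $a$-dependence of the final constant enters). The key point---special to this slowly decaying spectrum---is that the ``noise floor'' $M^{-1}\sum_{i>k+m}\lambda_i\eqsim M^{-1}\log^{1-a}(k+m+1)$ dominates both $\lambda_{k+j}$ for every $j\ge m$ (so in particular $j=M/2$ and $j=M$) and the error term $c_1\lambda_{k+m+1}$: since $\lambda$ is non-increasing and $m+1\le M/2$ (for $M$ at least an absolute constant),
\begin{align*}
\frac{\lambda_{k+M/2}+c_1\lambda_{k+m+1}}{M^{-1}\sum_{i>k+m}\lambda_i}\ \lesssim\ \frac{\lambda_{k+m+1}}{M^{-1}\sum_{i>k+m}\lambda_i}\ \eqsim\ \frac{M}{(k+m)\log(k+m+1)}\ \le\ \frac{c_0^2}{\log(M/c_0^2)},
\end{align*}
where the last step uses $k+m\ge m\gtrsim M/c_0^2$. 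Hence, once $M$ exceeds an $a$-dependent threshold, this ratio is below a small absolute constant for \emph{every} $k\ge1$, so both $\mu_{M/2}(\SB_{\kb}\HB_\kb\SB^\top_\kb)$ and $\mu_M(\SB_{\kb}\HB_\kb\SB^\top_\kb)$ are $\eqsim M^{-1}\sum_{i>k+m}\lambda_i$; dividing gives the asserted bound. For the finitely many small (even) values of $M$ below the threshold I would instead use the crude bounds $\mu_{M/2}\le\|\SB_{\kb}\HB_\kb\SB^\top_\kb\|\lesssim\lambda_{k+1}+\sqrt{\sum_{i>k}\lambda_i^2/M}$ and $\mu_M={\mu_{\min}}(\SB_{\kb}\HB_\kb\SB^\top_\kb)\gtrsim\lambda_{k+2M}$ from Lemma~\ref{lemma:sketch:tail}, together with $\lambda_{k+1}/\lambda_{k+2M}\lesssim 1$ uniformly over bounded $M$; this keeps the ratio controlled uniformly in $k\ge1$ at the cost of enlarging the $a$-dependent constant.

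The step I expect to be the main obstacle is securing uniformity over the full range $k\ge1$ with a single (rather than case-split) argument. In the power-law analogue (Lemma~\ref{lm:power_condition_num}) the noise floor only dominates the head eigenvalues when $k\gtrsim M$, which forces a two-regime analysis and an appeal to the sharp lower eigenvalue estimate of Lemma~\ref{lm:sketch} in the large-$k$ regime; here I would need to verify carefully that $M/\big((k+m)\log(k+m+1)\big)\lesssim 1$ genuinely holds for \emph{all} $k\ge1$---which it does, because $k+m\ge m\gtrsim M$---so that the slowly decaying spectrum makes the noise floor dominate across the entire range of $k$ and a one-shot application of Lemma~\ref{lm:sketch} suffices. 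The rest---verifying the two asymptotic identities for $\lambda_\ell$ and $\sum_{i>\ell}\lambda_i$, and keeping track of the $1-e^{-\Omega(M)}$ events---is routine and parallels the proof of Lemma~\ref{lemma:log-power-law}.
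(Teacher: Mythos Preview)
Your proposal is correct and uses the same core tool as the paper (Lemma~\ref{lm:sketch} applied to $\HB_\kb$), but it exploits a structural feature of the logarithmic power law that the paper's proof does not: because $\sum_{i>\ell}\lambda_i\eqsim\log^{1-a}(\ell+1)$ decays only logarithmically while $\lambda_{\ell+1}\eqsim\ell^{-1}\log^{-a}(\ell+1)$ decays like $1/\ell$, the ratio of the error term to the noise floor is $\eqsim M/((k+m)\log(k+m))\lesssim 1/\log M$ uniformly in $k\ge1$. This lets you absorb the error into the noise floor in one shot, whereas the paper mechanically mirrors the power-law proof (Lemma~\ref{lm:power_condition_num}) and splits into the two regimes $k\lesssim M$ and $k\gtrsim M$, bounding $\mu_{M/2}$ and $\mu_M$ separately by $\log^{1-a}(M+k)/M$ in each case. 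Your argument is cleaner and makes explicit why the case split that is genuinely needed for the power law (where the analogous ratio is $M/(k+m)$, not small when $k\lesssim M$) is unnecessary here; the paper's version obscures this. Your treatment of the finitely many small $M$ via Lemma~\ref{lemma:sketch:tail} is also sound, and the paper's proof implicitly needs the same large-$M$ proviso for the $\log M$ factor to beat the constants.
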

\begin{proof}[Proof of Lemma~\ref{lm:logpower_condition_num}]
Similar to the proof of Lemma~\ref{lm:power_condition_num},
we prove the lemma under two scenarios where $k$ is relatively small (or large) compared with $M$.

Let $c>0$ be some sufficiently large constant. Applying Lemma~\ref{lm:sketch} with $\HB_\kb$ replacing $\HB$, for $k_0= M/c$, we have
\begin{align}
    \mu_{M/2}(\SB_{\kb}\HB_\kb\SB^\top_\kb)&
    \leq 
    \frac{3}{2}\cdot \bigg(\lambda_{M/2+k} +  \frac{\sum_{i>k_0}\lambda_{i+k}}{M}\bigg) + c_1\cdot \lambda_{k_0+1+k},\notag\\
    &
    \lesssim  \big(\frac{M}{2}+k\big)^{-1}\log^{-a}\big(\frac{M}{2}+k\big)+\frac{\log^{1-\Hpower}(k_0+k)}{M}+ \frac{\log^{-\Hpower}(k_0+1+k)}{k_0+1+k}\notag\\
    &\lesssim \frac{\log^{-a}(M+k)}{(M+k)}+\frac{\log^{1-\Hpower}(M+k)}{M}
    \lesssim
    \frac{\log^{1-\Hpower}(M+k)}{M}
    \label{eq:scale_law_upper_eigenval_logpower}
\end{align}
 with  probability at least $1-e^{-\Omega(M)}$ for some constant $c_1>0$.
 \paragraph{Case $1$: $k\lesssim M$.}
Applying Lemma~\ref{lm:sketch} with $\HB_\kb$ replacing $\HB$, for $k_0= M/c$, we have
\begin{align}
    \mu_{M}(\SB_{\kb}\HB_\kb\SB^\top_\kb)&
    \gtrsim 
    \frac{1}{2}\cdot \bigg(\lambda_{M+k} +  \frac{\sum_{i>k_0}\lambda_{i+k}}{M}\bigg) - c_1\cdot \lambda_{k_0+1+k},\notag\\
    &
   \gtrsim   \big({M}+k\big)^{-1}\log^{-a}\big({M}+k\big)+\frac{\log^{1-\Hpower}(k_0+k)}{M}-c \frac{\log^{-\Hpower}(k_0+1+k)}{k_0+1+k}\notag\\
    &\gtrsim  \frac{\log^{-a}(M+k)}{(M+k)}+\frac{\log^{1-\Hpower}(M+k)}{M}-\tilde c \frac{\log^{-\Hpower}(M)}{M}\notag\\
    &
    \gtrsim
    \frac{\log^{1-\Hpower}M}{M}
   \notag 
\end{align}
 with  probability at least $1-e^{-\Omega(M)}$.
 Therefore, 
 \begin{align*}
      \frac{\mu_{M/2}(\SB_{\kb}\HB_\kb\SB^\top_\kb)}{\mu_{M}(\SB_{\kb}\HB_\kb\SB^\top_\kb)}\lesssim    \Big[\frac{\log^{1-\Hpower}(M+k)}{M}\Big]\Big/\Big[\frac{\log^{1-\Hpower}M}{M}\Big]
      \lesssim1
 \end{align*} with  probability at least $1-e^{-\Omega(M)}$ when $k/M\lesssim1$. 
  \paragraph{Case $2$: $k\gtrsim M$.}
 On the other hand, when $k$ is relatively large,  using Lemma~\ref{lm:sketch} with $\HB_\kb$ replacing $\HB$  and $k_0=M/c$ again, we obtain
 \begin{align*}
   \mu_{M}(\SB_{\kb}\HB_\kb\SB^\top_\kb)&
    \geq 
    \frac{1}{2}\cdot \bigg(\lambda_{M+k} +  \frac{\sum_{i>k_0}\lambda_{i+k}}{M}\bigg) - c_1\cdot \lambda_{k_0+1+k},\\
    &
      \gtrsim   \big({M}+k\big)^{-1}\log^{-a}\big({M}+k\big)+\frac{\log^{1-\Hpower}(k_0+k)}{M}-c_2 \frac{\log^{-\Hpower}(k_0+1+k)}{k_0+1+k}\notag\\
       &
      \gtrsim   k^{-1}\log^{-a}\big(k\big)+\frac{\log^{1-\Hpower}(M+k)}{M}-c_3 \frac{\log^{-\Hpower}(M+k)}{M+k}\notag\\
      &\gtrsim  \frac{\log^{1-\Hpower}(M+k)}{M}
 \end{align*} 
  with  probability at least $1-e^{-\Omega(M)}$, where $c_1,c_2,c_3>0$ are some $a$-dependent  constants. Therefore,
  \begin{align*}
       \frac{\mu_{M/2}(\SB_{\kb}\HB_\kb\SB^\top_\kb)}{\mu_{M}(\SB_{\kb}\HB_\kb\SB^\top_\kb)}\lesssim    \Big[\frac{\log^{1-\Hpower}(M+k)}{M}\Big]\Big/\Big[\frac{\log^{1-\Hpower}(M+k)}{M}\Big]
      \lesssim1
  \end{align*}
 with  probability at least $1-e^{-\Omega(M)}$ when $k/M\gtrsim1$.
\end{proof}

\end{document}